\PassOptionsToPackage{unicode}{hyperref}
\PassOptionsToPackage{hyphens}{url}
\PassOptionsToPackage{dvipsnames,svgnames,x11names}{xcolor}
\documentclass[
  12pt]{article}

\usepackage{amsmath,amssymb}
\usepackage{iftex}
\ifPDFTeX
  \usepackage[T1]{fontenc}
  \usepackage[utf8]{inputenc}
  \usepackage{textcomp} 
\else 
  \usepackage{unicode-math}
  \defaultfontfeatures{Scale=MatchLowercase}
  \defaultfontfeatures[\rmfamily]{Ligatures=TeX,Scale=1}
\fi
\usepackage{lmodern}
\ifPDFTeX\else  
\fi
\IfFileExists{upquote.sty}{\usepackage{upquote}}{}
\IfFileExists{microtype.sty}{
  \usepackage[]{microtype}
  \UseMicrotypeSet[protrusion]{basicmath} 
}{}
\makeatletter
\@ifundefined{KOMAClassName}{
  \IfFileExists{parskip.sty}{%
    \usepackage{parskip}
  }{
    \setlength{\parindent}{0pt}
    \setlength{\parskip}{6pt plus 2pt minus 1pt}}
}{
  \KOMAoptions{parskip=half}}
\makeatother
\usepackage{xcolor}
\makeatletter
\ifx\paragraph\undefined\else
  \let\oldparagraph\paragraph
  \renewcommand{\paragraph}{
    \@ifstar
      \xxxParagraphStar
      \xxxParagraphNoStar
  }
  \newcommand{\xxxParagraphStar}[1]{\oldparagraph*{#1}\mbox{}}
  \newcommand{\xxxParagraphNoStar}[1]{\oldparagraph{#1}\mbox{}}
\fi
\ifx\subparagraph\undefined\else
  \let\oldsubparagraph\subparagraph
  \renewcommand{\subparagraph}{
    \@ifstar
      \xxxSubParagraphStar
      \xxxSubParagraphNoStar
  }
  \newcommand{\xxxSubParagraphStar}[1]{\oldsubparagraph*{#1}\mbox{}}
  \newcommand{\xxxSubParagraphNoStar}[1]{\oldsubparagraph{#1}\mbox{}}
\fi
\makeatother

\usepackage{longtable,booktabs,array}
\usepackage{calc} 
\usepackage{etoolbox}
\makeatletter
\patchcmd\longtable{\par}{\if@noskipsec\mbox{}\fi\par}{}{}
\makeatother
\IfFileExists{footnotehyper.sty}{\usepackage{footnotehyper}}{\usepackage{footnote}}
\makesavenoteenv{longtable}
\usepackage{graphicx}
\makeatletter
\def\maxwidth{\ifdim\Gin@nat@width>\linewidth\linewidth\else\Gin@nat@width\fi}
\def\maxheight{\ifdim\Gin@nat@height>\textheight\textheight\else\Gin@nat@height\fi}
\makeatother
\setkeys{Gin}{width=\maxwidth,height=\maxheight,keepaspectratio}
\makeatletter
\def\fps@figure{htbp}
\makeatother

\makeatletter
\@ifpackageloaded{caption}{}{\usepackage{caption}}
\AtBeginDocument{%
\ifdefined\contentsname
  \renewcommand*\contentsname{Table of contents}
\else
  \newcommand\contentsname{Table of contents}
\fi
\ifdefined\listfigurename
  \renewcommand*\listfigurename{List of Figures}
\else
  \newcommand\listfigurename{List of Figures}
\fi
\ifdefined\listtablename
  \renewcommand*\listtablename{List of Tables}
\else
  \newcommand\listtablename{List of Tables}
\fi
\ifdefined\figurename
  \renewcommand*\figurename{Figure}
\else
  \newcommand\figurename{Figure}
\fi
\ifdefined\tablename
  \renewcommand*\tablename{Table}
\else
  \newcommand\tablename{Table}
\fi
}
\@ifpackageloaded{float}{}{\usepackage{float}}
\floatstyle{ruled}
\@ifundefined{c@chapter}{\newfloat{codelisting}{h}{lop}}{\newfloat{codelisting}{h}{lop}[chapter]}
\floatname{codelisting}{Listing}

\makeatother
\makeatletter
\@ifpackageloaded{caption}{}{\usepackage{caption}}
\makeatother

\ifLuaTeX
  \usepackage{selnolig}  
\fi
\usepackage[]{natbib}
\usepackage{bookmark}

\IfFileExists{xurl.sty}{\usepackage{xurl}}{} 
\hypersetup{
  pdftitle={Manifold-Aligned Generative Transport},
  pdfauthor={},
  pdfkeywords={manifold learning; diffusion; flows; high fidelity; synthetic data generation},
  colorlinks=true,
  linkcolor={black},
  filecolor={black},
  citecolor={black},
  urlcolor={black},
  pdfcreator={LaTeX}}

\usepackage{enumerate}
\usepackage{url}
\usepackage{tabularx}
\usepackage{array}
\usepackage{ragged2e}
\usepackage{caption}
\usepackage{algorithm}
\usepackage{algpseudocode}
\usepackage{tikz} 
\usepackage{booktabs}
\usepackage{makecell}
\usepackage{multirow}
\usepackage{bm}
\IfFileExists{orcidlink.sty}{\usepackage{orcidlink}}{\newcommand{\orcidlink}[1]{}}

\usepackage{float}
\IfFileExists{placeins.sty}{\usepackage{placeins}}{\newcommand{\FloatBarrier}{\clearpage}}

\usepackage{amsthm,booktabs,mathtools}
\newtheorem{assumption}{Assumption}
\theoremstyle{definition} 

\newtheorem{definition}{Definition} 
\newtheorem{theorem}{Theorem}

\newtheorem{proposition}{Proposition} 
\newtheorem{lemma}{Lemma}

\newcommand{\R}{\mathbb{R}}

\newcommand{\M}{\mathcal{M}}

\newcommand{\U}{\bm U}
\newcommand{\Y}{\bm Y}
\newcommand{\Z}{\bm Z}
\newcommand{\y}{\bm y}

\newcommand{\z}{\bm z}

\usepackage{amsmath,amssymb,amsthm}
\makeatletter
\@ifundefined{KL}{}{}
\@ifundefined{Law}{}{}
\makeatother

\providecommand{\E}{\mathbb{E}}
\newcommand{\Unif}{\mathrm{Unif}}

\providecommand{\method}{MAGT} 

\newcommand{\cN}{\mathcal{N}}

\newcommand{\safeincludegraphics}[2][]{%
  \IfFileExists{#2}{\includegraphics[#1]{#2}}{%
    \fbox{\parbox[c]{0.95\linewidth}{\centering\small Missing figure file: \texttt{\detokenize{#2}}}}%
  }%
}

\newcommand{\anon}{1}

\newcommand{\beginsupplement}{%
  \setcounter{theorem}{0}%
  \setcounter{lemma}{0}%
  \setcounter{assumption}{0}%
  \setcounter{definition}{0}%
  \setcounter{proposition}{0}%
  \setcounter{figure}{0}%
  \setcounter{table}{0}%
  \setcounter{equation}{0}%

  \renewcommand{\thetheorem}{S\arabic{theorem}}%
  \renewcommand{\thelemma}{S\arabic{lemma}}%
  \renewcommand{\theassumption}{S\arabic{assumption}}%
  \renewcommand{\thedefinition}{S\arabic{definition}}%
  \renewcommand{\theproposition}{S\arabic{proposition}}%
  \renewcommand{\thefigure}{S\arabic{figure}}%
  \renewcommand{\thetable}{S\arabic{table}}%
  \renewcommand{\theequation}{S\arabic{equation}}%
}

\begin{document}

\def\spacingset#1{\renewcommand{\baselinestretch}%
{#1}\small\normalsize} \spacingset{1}


\if1\anon
{
\hypersetup{pdfauthor={Xinyu Tian and Xiaotong Shen}}
\newcommand{\magtfunding}{This work was supported in part by the National
Science Foundation (NSF) under Grant DMS-2513668, by the National Institutes
of Health (NIH) under Grants R01AG069895, R01AG065636, R01AG074858, and
U01AG073079, and by the Minnesota Supercomputing Institute.
(Corresponding author: Xiaotong Shen.)}
\author{Xinyu Tian\thanks{Xinyu Tian is with the School of Statistics,
University of Minnesota, MN, 55455 USA (email: tianx@umn.edu)},
Xiaotong Shen\textsuperscript{\orcidlink{0000-0003-1300-1451}}\thanks{Xiaotong
Shen is with the School of Statistics, University of Minnesota, MN,
55455 USA (email: xshen@umn.edu)}}
\date{}

  \title{\bf{Manifold-Aligned Generative Transport}\thanks{\magtfunding}}
  \maketitle
} \fi

\if0\anon
{
  \bigskip
  \bigskip
  \bigskip
  \begin{center}
    {\LARGE\bf Manifold-Aligned Generative Transport}
\end{center}
  \medskip
} \fi

\bigskip
\begin{abstract}
{Many high-dimensional datasets concentrate near a low-dimensional
structure embedded in the ambient space.  Generative models for such data
must control off-support mass while remaining computationally practical.
Diffusion models use iterative denoising at inference, whereas standard
normalizing flows require invertible, dimension-preserving maps.  We propose
\method{} (Manifold-Aligned Generative Transport), a direct transport from a
low-dimensional base distribution to the data space.  Its core objective
compares the data and generator-induced scores at a selected Gaussian
smoothing level.  A posterior
identity expresses this score through a latent conditional mean, which is
approximated by self-normalized importance sampling over a finite anchor set.
After training, generation requires one evaluation of the transport, whose
image also carries an intrinsic density with respect to manifold volume.  We
establish a minimax-optimal Wasserstein convergence rate for an explicitly
constructed localized spline-RePU coordinate transport estimator, and treat finite-anchor
approximation separately.
Experiments on synthetic, image,
and tabular benchmarks compare fidelity, support alignment, and sampling cost
with diffusion, flow-matching, and adversarial baselines.}
\end{abstract}

\noindent%
{\it Keywords:}  Manifold learning, Diffusion, Flows,
High fidelity, Synthetic data generation.
\medskip
\spacingset{1}

\section{Introduction}

{Generative models balance sample quality against the cost and
structure of the sampling mechanism.  Diffusion models produce samples by
iterative denoising, so inference usually requires repeated network
evaluations even with improved solvers or distillation
\citep{dhariwal2021beatgans,karras2022edm,song2023consistency,lu2022dpmsolver}.
Normalizing flows built from compositions of invertible maps permit direct
sampling and change-of-variables likelihoods
\citep{dinh2016realnvp,kingma2018glow,papamakarios2021nf}.  Continuous
normalizing flows instead define transport through an ODE
\citep{huang2020universalflows}.  Flow matching
trains the velocity field without solving that ODE during training, while
probability-flow ODEs, rectified flow, and stochastic interpolants provide
related continuous-time constructions
\citep{lipman2022flow,song2021score,liu2022flow,albergo2023stochastic}.
Sampling from these continuous-time models still requires numerical
integration.  In their standard ambient formulations, both discrete and
continuous normalizing flows are dimension preserving.}

{This dimension-preserving formulation motivates a different
construction in the manifold regime, where observations concentrate near a
lower-dimensional set embedded in a
high-dimensional ambient space, as often occurs for images, biological
measurements, and learned feature representations.  A generative model for
such data must capture both the geometry of the support and the distribution
of mass along it.  Existing geometry-aware diffusion and flow methods
incorporate manifold structure into their dynamics
\citep{debortoli2022riemannian,huang2022riemannian}.  Our aim is instead to
represent the lower-dimensional distribution directly by a map from intrinsic
coordinates, while retaining a smooth ambient-space training criterion and
direct sampling.}

{We propose \method{} (\emph{Manifold-Aligned Generative
Transport}) for this setting.  The model is a deterministic map
$h:\mathbb R^d\to\mathbb R^D$ with $d<D$; for a $d$-dimensional base distribution
$\pi$, the pushforward distribution $h_{\#}\pi$ is supported on
the image of $h$.  Training perturbs both the data and candidate distributions by
Gaussian noise, where they have smooth positive densities and comparable
scores.  A posterior identity expresses the candidate score through a latent
conditional mean.  Finite latent anchors and self-normalized importance
sampling make this conditional mean computable, following the connection
between denoising and score matching
\citep{hyvarinen2005scorematching,vincent2011scoredae}.  The smoothing is used
for training only.  Once $h$ is learned, an unsmoothed sample requires one draw
from the base distribution and one evaluation of $h$.}

{On the theoretical side, we establish a general single-level
transfer theorem that translates the empirical score equation at a fixed
Gaussian smoothing level into a Wasserstein error bound between the
corresponding unsmoothed data and generator distributions.  This result
highlights the roles of smoothing and underlying manifold geometry in
determining generation error.  We also analyze fixed-level score-matching
risk minimization over the localized spline-RePU coordinate class and obtain
a minimax-optimal nonasymptotic rate governed by the intrinsic dimension.
Supplementary results quantify the finite-anchor approximation error in the
fixed-proposal setting.}

{Our contributions are as follows.
\begin{enumerate}
\item \textbf{Methodology.} We introduce a dimension-mismatched transport
trained by a single-level denoising score criterion.  The posterior score
identity and finite latent anchors give a computable objective, while the
learned map provides direct sampling and an intrinsic density on its image;
see Table~\ref{tab:summary}.

\item \textbf{Theory.} For a regular $C^{\eta+1}$ transport class, the localized
spline-RePU coordinate estimator in Definition~\ref{def:repu-construction} attains the $W_2$ rate
$n^{-(\eta+1)/(2\eta+d^*)}$ when $\eta>2$ and $d^*>2$.  This rate is minimax
optimal.

\item \textbf{Computation.} We develop Monte Carlo, quasi--Monte Carlo, and
 MAP-based proposals for the same posterior-mean score estimator.
 Supplementary Appendix~\ref{sec:mcapprox} bounds the resulting finite-anchor
 approximation errors in the posterior mean, score, and loss.

\item \textbf{Empirical evidence.} Experiments on synthetic manifolds and on
image, tabular, and genomic data compare support fidelity, sample quality, and
sampling cost with diffusion, flow-matching, and adversarial baselines; see
Tables~\ref{tab:wd_time} and~\ref{tab:mnist_cifar10}.
\end{enumerate}}

{The remainder of the paper is organized as follows.
Section~\ref{sec:magnet} introduces the \method{} framework, including the
transport-based score identity, the resulting one-shot sampler, and practical
considerations for likelihood evaluation.  Section~\ref{sec:risk} develops
nonasymptotic risk bounds that relate single-level score estimation error to
Wasserstein generation accuracy.  Section~\ref{sec:experiments} presents
empirical results on synthetic manifolds and real image/tabular datasets.
Section~\ref{sec:discussion} concludes with a brief discussion.  The appendix
provides proofs, auxiliary lemmas, practical implementation details for
training \method{}, and additional experimental information; in particular,
Appendix~\ref{sec:mcapprox} presents practical Monte Carlo and quasi--Monte
Carlo schemes for approximating the conditional expectations in the \method{}
score estimator.}
\section{MAGT: Manifold-aligned generative transport}
\label{sec:magnet}

\subsection{Dimension alignment via perturbation}
\label{subsec:dim}

Consider generative modeling where observations $\Y_0 \in \mathbb{R}^D$ concentrate near a low-dimensional
manifold $\mathcal{M} \subset \mathbb{R}^D$ of intrinsic dimension $d \ll D$.
We aim to {learn} a deterministic transport (generator)
\( h : \mathbb{R}^{d} \to \mathbb{R}^{D} \) such that, for a latent variable
\( \U \) drawn from a {base probability law \( \pi \)}
(e.g., standard Gaussian on $\R^d$ or uniform on $[0,1]^d$),
the generated sample \( \Y_0 = h(\U) \) follows the data distribution
\( {P_{\Y_0}} \). {The map $h$ need not be invertible,
and the latent and data dimensions may differ, which is essential when the
target distribution concentrates on or near a lower-dimensional manifold.}
{Throughout, $P_X$ denotes the law of a random variable $X$,
whereas $p_X$ denotes its density when one exists.  We use $\pi$ for the
base law and, by the usual abuse of notation, also for its density.}

{If $P_{\Y_0}$ is supported on a manifold, it can be singular with respect to Lebesgue measure on $\mathbb{R}^D$,
so an ambient density and score for $\Y_0$ may be ill-defined.
\method{} resolves this by working at a fixed smoothing level $t$: we add Gaussian noise in the ambient space so that the corrupted
variable $\Y_t$ has an everywhere-positive density and a well-defined score $\nabla_{\y_t}\log p_{\Y_t}(\y_t)$.
This smoothed score admits a posterior/mixture representation in terms of $h$ and the base distribution $\pi$, which we approximate with a finite
set of latent ``anchors'' and then convert into a one-shot transport map.}

\paragraph*{Ambient Gaussian perturbations.}
We introduce a noise schedule $(\alpha_t,\sigma_t)_{t\in[0,1]}$ and define,
for each $t$, a perturbed observation
\begin{equation}
\label{eq:ambient-forward}
\Y_t = \alpha_t \Y_0 + \sigma_t \Z_t,
\qquad \Z_t \sim \mathcal{N}(0, I_D).
\end{equation}
This construction defines a dimension-preserving Gaussian corruption of $\Y_0$ in the ambient space, thereby linking the law of $\Y_t\in\R^D$ to that of the clean data $\Y_0$, which may be supported on a $d$-dimensional manifold.  All subsequent definitions use the marginal Gaussian corruption in \eqref{eq:ambient-forward}.

\subsection{Score matching and generator}
\label{subsec:magnet-score}

\begingroup

\method{} uses the perturbed data $\Y_t$ to define a score-matching objective
that learns $h$ through the candidate smoothed score
$\nabla_{\y_t}\log p_t^h(\y_t)$ in the ambient space.  The noise level is
controlled by $t$ through the schedule $(\alpha_t,\sigma_t)$.

For a candidate transport $h$, let
$\Y_t^h=\alpha_t h(\U)+\sigma_t\Z_t$.  Its conditional density given
$\U=\bm u$ is
$
p_t^h(\y_t\mid \bm u)=\phi\!\left(\y_t;\alpha_t h(\bm u),\sigma_t^2 I_D\right)$,
where $\phi(\y;\bm m,\bm\Sigma)$ denotes the density of
$\mathcal N(\bm m,\bm\Sigma)$ and $\pi$ is the base density of $\U$.
The candidate marginal density is the continuous mixture
$p_t^h(\y_t)=\int \phi\!\left(\y_t;\alpha_t h(\bm u),\sigma_t^2 I_D\right)\,\pi(\bm u)\,\mathrm{d}\bm u$.
Differentiating $\log p_t^h(\y_t)$ yields the mixture score
\begin{align}
\label{eq:mixture-score}
\nabla_{\y_t}\log p_t^h(\y_t)
&=
\mathbb{E}_h\!\left[
\nabla_{\y_t}\log p_t^h(\y_t\mid \U)\mid \Y_t^h=\y_t
\right]
=
\frac{1}{\sigma_t^2}\Bigl(\alpha_t\,\mathbb E_h[h(\U)\mid \Y_t^h=\y_t] - \y_t\Bigr),
\end{align}
where $\nabla_{\y_t}\log p_t^h(\y_t\mid \bm u)=-(\y_t-\alpha_t h(\bm u))/\sigma_t^2$ and the conditional expectation is under
$p_h(\bm u\mid \y_t)\propto \pi(\bm u)\,p_t^h(\y_t\mid \bm u)$; the
subscript in $\mathbb E_h$ refers to this candidate posterior.
This identity shows that the mixture score depends on the posterior mean 
$\mathbb E_h[h(\U)\mid \Y_t^h=\y_t]$, which encodes the geometry of the
latent space and the generator $h$.

\noindent\textbf{Transport-based score estimator.} In
\eqref{eq:mixture-score}, score estimation at noise level $t$ requires the
candidate posterior mean $\mathbb E_h[h(\U)\mid \Y_t^h=\y_t]$ under
$p_h(\bm u\mid \y_t)\propto\pi(\bm u)\,\phi\bigl(\y_t;\alpha_t h(\bm u),\sigma_t^2 I_D\bigr)$.
We approximate this conditional expectation by self-normalized importance
sampling. Specifically, let $\tilde\pi(\cdot\mid \y_t)$ be a proposal
distribution on the latent space, possibly depending on $\y_t$. We draw
$\U^{(1)},\dots,\U^{(K)}\stackrel{\text{i.i.d.}}{\sim}\tilde\pi(\cdot\mid \y_t)$, and form the unnormalized importance weights
\begin{equation}
\label{eq:is-weights}
\omega_t^{(j)}(\y_t)
\;:=\;
\frac{\pi\bigl(\U^{(j)}\bigr)}{\tilde\pi\bigl(\U^{(j)}\mid \y_t\bigr)}
\;\phi\!\left(\y_t;\alpha_t h\!\left(\U^{(j)}\right),\sigma_t^2 \bm{I}_D\right).
\end{equation}
Then, the candidate posterior mean is approximated by
\begin{equation}
\label{eq:posterior-mean-estimator}
\widetilde m_{t,K}(\y_t)
\;:=\;
\frac{\sum_{j=1}^K \omega_t^{(j)}(\y_t)\,h(\U^{(j)})}
     {\sum_{j=1}^K \omega_t^{(j)}(\y_t)}.
\end{equation}
We then define the corresponding transport-based score estimator
\begin{align}
\label{eq:score-estimator}
\widetilde s_{t,K}(\y_t;h,\pi,\tilde\pi)
\;:=\;
\frac{1}{\sigma_t^2}\Bigl(\alpha_t\,\widetilde m_{t,K}(\y_t)-\y_t\Bigr).
\end{align}
When $\tilde\pi(\cdot\mid \y_t)\equiv \pi(\cdot)$ (i.e., we sample anchors from the generative base), the importance ratio cancels and
$\omega_t^{(j)}(\y_t)\propto \phi\!\left(\y_t;\alpha_t h(\U^{(j)}),\sigma_t^2 \bm{I}_D\right)$, recovering the finite-mixture form.
This estimator is ``transport-based'' because it is an explicit functional of the learned map $h$ and the base distribution, with the conditional
expectation in~\eqref{eq:mixture-score} approximated by a weighted set of anchors $\{\U^{(j)}\}_{j=1}^K$.
\endgroup

\paragraph*{{Base law and computational proposal.}}
{For score-trained \method{} with a specified base law \(\pi\),
the generative model is \(P_h=h_\#\pi\).  The distribution
\(\tilde\pi(\cdot\mid y_t)\) in~\eqref{eq:is-weights} serves as a computational
proposal for approximating the posterior mean; the generative law remains \(P_h\).
The ratio \(\pi/\tilde\pi\) is defined with respect to a common dominating
 measure.  The MC, QMC, and MAP proposal constructions are described in
 Supplementary Appendix~\ref{sec:mcapprox}; the data-adapted implementations
 used in the experiments are described in Section~\ref{sec:experiments} and
 Supplementary Appendix~\ref{sec:supp-real-data}, including
 Algorithm~\ref{alg:magt-map-bank}.}

\paragraph*{Training loss.}
Given the transport-based score estimator~\eqref{eq:score-estimator}, we learn the transport map $h$ by minimizing a \emph{single-level} denoising score-matching objective. {For each training draw $\Y_0^{\,i}\sim P_{\Y_0}$, write $\y_0^{\,i}$ for its observed value,} draw $\z^{\,i}\sim\cN(0,I_D)$, and form the corrupted observation $\y_t^{\,i}=\alpha_t \y_0^{\,i}+\sigma_t \z^{\,i}$. We then define
\begin{equation}
\label{eq:ellK-def}
\ell_K(\y_t,\y_0;h)
:=
\Bigl\|
  \widetilde s_{t,K}(\y_t;h,\pi,\tilde\pi)
  - \nabla_{\y_t} \log p(\y_t\mid \y_0)
\Bigr\|_2^2,
\end{equation}
where $p(\y_t\mid \y_0)$ denotes the Gaussian density
$\cN(\alpha_t\y_0,\sigma_t^2 I_D)$.  {Write
$s_t(\cdot;h):=\nabla\log p_t^h$ for the exact candidate score.  The exact,
corruption-averaged loss used in the theory is
\begin{equation}
\label{eq:ideal-clean-loss}
\ell_t(x;h)
:=\mathbb E_Z\left\|
s_t(\alpha_t x+\sigma_tZ;h)+\frac Z{\sigma_t}
\right\|_2^2,\qquad
R_t(h):={\mathbb E_{P_{Y_0}}\ell_t(Y_0;h)}.
\end{equation}
For independent clean observations $X_1,\ldots,X_n$, write
\begin{equation}\label{eq:ideal-empirical-loss}
 L_{n,t}(h):=\frac1n\sum_{i=1}^n\ell_t(X_i;h).
\end{equation}
}
Given data $\{(\y_t^{\,i},\y_0^{\,i})\}_{i=1}^n$, we solve the empirical risk minimization problem
\begin{equation}
\label{risk-minimization}
L_{n,K}(h)
:=
\frac{1}{n}\sum_{i=1}^n \ell_K\!\bigl(\y_t^{\,i},\y_0^{\,i};h\bigr),
\qquad
\hat h_{\lambda}\in\arg\min_{h\in\mathcal{H}} L_{n,K}(h),
\end{equation}
over a prescribed hypothesis class $\mathcal{H}$.  The multiscale localized
RePU-coordinate class analyzed below is specified in
Section~\ref{subsec:neural-realization}.
When $\mathcal{H}$ is instantiated by a neural network family $\{h_{\theta}:\theta\in\Theta\}$, we equivalently optimize over parameters $\theta$ and obtain
$\hat\theta_{\lambda}\in\arg\min_{\theta\in\Theta} L_{n,K}(h_{\theta})$, with the learned transport defined as $\hat h_{\lambda}:=h_{\hat\theta_{\lambda}}$.
{For brevity, we suppress the dependence on $\hat\theta_{\lambda}$ and write
$\hat h_{\lambda}$ for the learned function.  Here
$\lambda := (t,d,K)$ collects the tuning parameters: $t$ controls the smoothing
level, $d$ controls the dimension of the latent representation, and $K$ controls
the accuracy of the Monte Carlo approximation in $\ell_K$.  Choosing $d$ too
small can omit intrinsic variation, whereas an unnecessarily large $d$ adds
redundant coordinates and increases estimation cost.  We select
$\hat\lambda=(\hat t,\hat d,\hat K)$ by minimizing a generative criterion (e.g.,
an estimated Wasserstein distance) on an independent validation set and report
the resulting generator $\hat h_{\hat\lambda}$.  Supplementary
Algorithm~\ref{algorithm_1} in Appendix~\ref{sec:implementation} displays the
search over $t$ for fixed $(d,K)$; the same
held-out criterion is used for the outer comparison over candidate values of
$d$ and $K$.}

{The well-specified theory below takes $d=d^*$, where $d^*$ denotes
the true intrinsic dimension.  If $d<d^*$, a
rank-$d$ image cannot represent a generic $d^*$-dimensional intrinsic density;
if $d>d^*$, the full-rank $d$-dimensional area formula and theorem class no
longer apply without an effective-rank or stratified analysis.  Thus $d$ is a
model dimension selected on held-out data in applications.}

{To make the role of this objective explicit, under the
well-specified model $P_{Y_0}=P_{h^*}$ let
\(p_t^*:=p_t^{h^*}\) be the smoothed data density and let \(p_t^h\) be the
smoothed density induced by a candidate \(h\). The Gaussian conditional-score
target in~\eqref{eq:ellK-def} is observed directly from \((\Y_0,\Y_t)\), and
\[
\E\!\left[\nabla_{\Y_t}\log p(\Y_t\mid\Y_0)\mid\Y_t\right]
=\nabla\log p_t^*(\Y_t).
\]
The exact risk in~\eqref{eq:ideal-clean-loss} satisfies the orthogonal decomposition
\[
R_t(h)-R_t(h^*)
=\E_{\Y\sim p_t^*}\!\left\|\nabla\log p_t^h(\Y)-\nabla\log p_t^*(\Y)\right\|_2^2.
\]
Thus the fixed-level denoising loss is an exact Fisher discrepancy between the data and generator laws after smoothing. The generator-induced representation
\(
s_t(y;h)=\{\alpha_t\E_h[h(\U)\mid \Y_t^h=y]-y\}/\sigma_t^2
\)
makes this discrepancy a functional of \(h\), which remains the statistical target. Section~\ref{sec:risk} relates the excess risk to unsmoothed \(W_2\) error, while Supplementary~Appendix~\ref{sec:mcapprox} controls the finite-anchor approximation in \(\ell_K\).}

\noindent{\textbf{Sample generation.}}
{Given the selected score-trained generator
\(\widehat h_{\widehat\lambda}\), draw \(u\sim\pi\) and output
\(\widehat h_{\widehat\lambda}(u)\).  Generation therefore requires one
evaluation of the learned map.}

\paragraph*{Intrinsic density and likelihood evaluation.}
Assume $h:\mathbb{R}^{d}\to\mathbb{R}^D$ is $C^1$ and has rank $d$ almost everywhere, and let $\M:=h(\mathcal{U})$ denote its image over a latent domain $\mathcal{U}\subset\mathbb{R}^{d}$ that contains the support of $\pi$.
Assume further that $\pi$ admits a density on $\mathcal{U}$ with respect to Lebesgue measure.
By the classical area formula~{\citep{federer1969gmt}}, the pushforward measure $h_{\#}\pi$ is absolutely continuous with respect to the $d$-dimensional Hausdorff measure $\mathcal H^{d}$ restricted to~$\M$, and for $\mathcal H^{d}$-a.e.\ $\y_0\in\M$,
\begin{equation}
\label{eq:manifold-likelihood}
 p_\M(\y_0)\;=\;\sum_{\bm u\in h^{-1}(\{\y_0\})} \frac{\pi(\bm u)}{J_h(\bm u)}\,,
 \qquad \text{where}\quad J_h(\bm u)
 \;:=\; \sqrt{\det\!\bigl(\,Dh(\bm u)^\top Dh(\bm u)\,\bigr)}.
\end{equation}
Here $p_\M$ is the intrinsic density of $\Y_0=h(\U)$ with respect to $\mathcal H^d|_{\M}$, $Dh(\bm u)\in\mathbb{R}^{D\times d}$ is the Jacobian matrix, and $J_h(\bm u)$ is its $d$-dimensional Jacobian.

If $h$ is injective on $\mathcal U$ (e.g., a $C^1$ embedding), then $h^{-1}(\{\y_0\})$ is a singleton for $\mathcal H^d$-a.e.\ $\y_0\in\M$, and \eqref{eq:manifold-likelihood} reduces to the familiar chart formula
\begin{equation}
\label{eq:manifold-ll}
\log p_\M\bigl(h(\bm u)\bigr)
\;=\;
\log \pi(\bm u)\;-\;\tfrac{1}{2}\log\det\!\bigl(Dh(\bm u)^\top Dh(\bm u)\bigr).
\end{equation}
More generally, if $h$ has bounded multiplicity, the sum in \eqref{eq:manifold-likelihood} contains finitely many terms; evaluating $p_\M(\y_0)$ requires identifying and summing all contributing preimages.
Equation~\eqref{eq:manifold-ll} gives the branchwise chart contribution associated with a specific preimage $u$.
For generated samples $\y_0=h(\bm u)$, this branchwise log-density is directly computable from $(\bm u,h(\bm u))$ and coincides with $\log p_\M(\y_0)$ whenever the fiber is a singleton (in particular, under injectivity).
If $h$ is not injective and additional preimages exist, \eqref{eq:manifold-ll} should be interpreted as a local contribution unless the remaining preimages are recovered and included in \eqref{eq:manifold-likelihood}.
For an observed point $\y_0\in\M$, evaluating $p_\M(\y_0)$ requires identifying one or more latent preimages solving $h(\bm u)=\y_0$ (or approximately minimizing $\|h(\bm u)-\y_0\|_2$), using a separate encoder or numerical optimization.  Sampling follows the forward construction $\bm u\sim\pi$, $\y_0=h(\bm u)$.

\subsection{Comparisons with diffusion and flow models}
\label{sec:comparison}

\begingroup

\method{} is computationally closest to one-shot latent-to-data generators
and is fitted by a distinct criterion.  It minimizes a denoising loss at
one Gaussian smoothing level, with the candidate score determined by the
transport and approximated through latent anchors.  Discrete normalizing flows
use an invertible, dimension-preserving map and train by change-of-
variables likelihood.  Continuous normalizing flows and most flow-matching
models define an ODE transport and require numerical integration at sampling
time.  Adversarial generators such as WGAN-GP sample in one pass and fit the
generator through a critic-based min--max objective.  Table~\ref{tab:summary}
summarizes the main distinctions.

Wasserstein GANs such as WGAN-GP estimate a Wasserstein integral probability metric through a
learned Lipschitz critic
\citep{arjovsky2017wasserstein,gulrajani2017improved}.  Inferential WGAN jointly
learns an encoder and generator in a Wasserstein adversarial formulation
\citep{chen2022inferential}, while the Wasserstein auto-encoder minimizes a
reconstruction objective with an aggregated-posterior penalty
\citep{tolstikhin2018wasserstein}.  Score-trained \method{} optimizes the
transport $h$ through a generator-induced score, without a critic or encoder.
Its population excess risk is exactly the Fisher divergence between the
smoothed data and candidate laws, which is linked to unsmoothed $W_2$ error in
Section~\ref{sec:risk}.  Finite-anchor posterior integration is the
corresponding additional training cost and is reported explicitly in
Section~\ref{sec:experiments}.

\begin{table}[H]
\centering

\caption{Method-level comparison.  NFE denotes the number of function
evaluations used to generate a sample.}
\label{tab:summary}
\footnotesize
\setlength{\tabcolsep}{2.0pt}
\begin{tabularx}{\linewidth}{@{}>{\raggedright\arraybackslash}p{0.15\linewidth}
>{\raggedright\arraybackslash}X >{\raggedright\arraybackslash}X
>{\raggedright\arraybackslash}X@{}}
\toprule
& \textbf{\method{}} & \textbf{Diffusion/flow matching}
& \textbf{Discrete normalizing flows} \\
\midrule
Training & Fixed-level denoising loss & Multi-level score or path loss
& Change-of-variables likelihood \\
Sampling & One evaluation of $h$ & Numerical SDE/ODE solve; multiple NFE
& One inverse-map evaluation \\
Support & Image of a dimension-mismatched map & Typically an ambient
approximation & Full-dimensional ambient density \\
Map constraint & Need not be invertible & Score or velocity field
& Invertible; dimensions match \\
Likelihood & Intrinsic on the image; smoothed mixture at $t>0$
& Model-dependent likelihood estimates & Exact ambient likelihood \\
\bottomrule
\end{tabularx}
\end{table}

\paragraph*{Sampling and support.}
After training, \method{} draws $U\sim\pi$ and returns $h(U)$ in one forward
evaluation.  Its generated distribution is supported on the image of $h$, and the
latent and ambient dimensions may differ.  Diffusion, flow-matching, and
continuous-flow samplers generally evaluate a learned field along a numerical
path, whereas discrete flows retain one-pass sampling by imposing
invertibility.  During training, each \method{} score evaluation includes the finite-anchor posterior
calculation in~\eqref{eq:score-estimator}.

\paragraph*{Likelihoods and evaluation.}
The two likelihood objects available to \method{} were defined above: the
area formula gives an intrinsic density on the learned image, and the anchor
mixture approximates the ambient density after smoothing at the selected noise
level.  The former is defined with respect to manifold volume, while the latter
is an ambient density indexed by $t$.  Section~\ref{sec:experiments} evaluates
image generation with FID and Inception Score under a common protocol.

\endgroup

\begingroup

\section{Theory}\label{sec:risk}
\newcommand{\thetah}{\theta_{\rm h}}
\newcommand{\thetao}{\theta_{\rm o}}

We study the law $P_h=h_\#\pi$ produced by the transport in \method{}.
We first give an architecture-independent score-ERM bound, then apply it
to a localized spline-RePU coordinate estimator to obtain a minimax-optimal
rate. Supplementary Appendices~\ref{sec:vs-main-proof} and~\ref{sec:mlsn-proof}
prove the two theorems in this order; the construction and technical
estimates follow in Appendices~\ref{sec:multiscale-tools}--\ref{sec:pilot-verification}.

\subsection{Model and population score risk}

Let $d=d^*$ and let $\mathcal U$ be a fixed compact, connected, smooth
$d^*$-dimensional manifold without boundary.
Throughout the asymptotics, $d^*$, $D$, the source atlas, and all regularity
constants are fixed.
For $r>0$, write $C^r(\mathcal U)$ for the order-$r$ H\"older class on
$\mathcal U$. Function-space conventions, geometric notation, and uniform
regularity bounds are given at the beginning of
Supplementary Appendix~\ref{sec:proofs}.

\begin{samepage}
\begin{assumption}[Regular source law]\label{G1}
For some $\eta>2$, the base density satisfies
$\pi\in C^\eta(\mathcal U)$ and
$0<\underline\pi\le\pi\le\overline\pi<\infty$ with respect to the fixed
Riemannian volume on $\mathcal U$.
\end{assumption}
\end{samepage}

\begin{samepage}
\begin{assumption}[Regular transport class]\label{G2}
Let $\beta=(\eta+2)/2$.  The truth class $\mathcal H_*$ is the full nonempty
class of $C^{\eta+1}(\mathcal U;\mathbb R^D)$ embeddings with uniformly bounded
$C^{\eta+1}$ norm whose $C^{\beta+1}$ norm, Jacobian singular values, and
image reach satisfy fixed uniform bounds with a common positive margin.
It has a nonempty $C^{\eta+1}$ interior.
It contains a strictly interior reference $h_0$ whose image is smooth and whose
induced density with respect to image volume is positive and $C^{\eta+1}$.
\end{assumption}
\end{samepage}

The fixed regularity bounds and the common interior margin are given in Supplementary
equations~\eqref{eq:uniform-regularity-bounds}--\eqref{eq:uniform-interiority-bounds}.
The Jacobian bounds prevent local collapse
or excessive expansion, while positive reach provides a common normal tube and
rules out excessive curvature or nearly touching image sheets.  The
$C^{\eta+1}$ smoothness controls the resolution bias, and the fixed interior margin
keeps projections and local perturbations of the truth admissible.

{For a truth $h^*\in\mathcal H_*$ and the VP schedule
$(\alpha_t^2,\sigma_t^2)=(1-t,t)$, write}
$\tau(t)=t/(1-t)$ and let $p_\tau^h$ be the density of
$h(U)+\sqrt\tau Z$, where $U\sim\pi$ and $Z\sim\mathcal N(0,I_D)$; set
$p_\tau^*:=p_\tau^{h^*}$.  With $t=\tau/(1+\tau)$, write
$R_\tau(h):=(1-t)R_t(h)$ for the same ideal risk in additive-noise units.

\subsection{General score-ERM transfer}

Fix a smoothing level $\tau=\tau(t)$.  Write $\theta=(\thetah,\thetao)$
for the parameters of a candidate transport $h_\theta$.
The hidden component $\thetah$ comprises the network weights and biases
that determine the feature map; the output component $\thetao\in\mathbb R^p$
gives the coefficients of these features in the law-coordinate representation.
For example, in the simple feature expansion $a^\top\rho(Wu+b)$ with
componentwise activation $\rho$, $\thetah=(W,b)$ determines the features
and $\thetao=a$ gives their linear coefficients.  In the spline-RePU
construction below, the output coefficients combine the learned features
to describe density perturbations and normal displacements.
Both components are fitted jointly.
Let $\Theta$ be a candidate parameter set with compact convex output
slices, and let
\[
 \widehat\theta=(\widehat\theta_{\rm h},\widehat\theta_{\rm o})
 \in\arg\min_{\theta\in\Theta}L_{n,t}(h_\theta),
 \qquad \widehat h:=h_{\widehat\theta},
\]
be a measurable joint minimizer of the exact empirical loss
in~\eqref{eq:ideal-empirical-loss}.  The population comparator
$\theta^\dagger=(\widehat\theta_{\rm h},\theta_{\rm o}^\dagger)$ minimizes
$R_\tau(h_\theta)$ over the output slice at the fitted hidden configuration,
with an interior output component.

Let $\bm M$ be a positive-definite comparison matrix and write
$\|a\|_{\bm M}^2=a^\top\bm M a$.  For deterministic constants
$b_\tau$ and $C_{\bm M}$, assume almost surely that
\begin{equation}\label{eq:general-chart-w2}
\begin{aligned}
 W_2(P_{h^*},P_{h_{\theta^\dagger}})&\le b_\tau,\\
 W_2(P_{\widehat h},P_{h_{\theta^\dagger}})
 &\le C_{\bm M}
 \|\widehat\theta_{\rm o}-\theta_{\rm o}^\dagger\|_{\bm M}.
\end{aligned}
\end{equation}
Let $g_{n,\tau}$ be the centered empirical gradient of $(1-t)L_{n,t}$ at
$\theta^\dagger$, and let $\widetilde{\bm H}_{n,\tau}$ be its Hessian averaged
along the segment from $\theta^\dagger$ to $\widehat\theta$.
Both derivatives are taken only in the output coordinates.
The comparator depends on the fitted hidden configuration, and the
comparison matrices may depend measurably on the fitting data, including
the empirical path Hessian. Explicit definitions and a uniform
conditional version are given in Supplementary Appendix~\ref{sec:vs-main-proof}.

\begin{samepage}
\begin{theorem}[Joint score-ERM to Wasserstein transfer]
\label{thm:local-erm-transfer}
Under the preceding setup and~\eqref{eq:general-chart-w2}, suppose
$W_2(P_{h^*},P_h)\le B$ over the candidate class, and let
$\bm A_\tau$ be a measurable positive-semidefinite matrix.  If,
on an event $\mathcal E_n$ of probability at least $1-\varepsilon_n$, the
population and empirical output minimizers are interior score roots,
$\widetilde{\bm H}_{n,\tau}$ is invertible, and
\begin{equation}\label{eq:general-path-stability}
 \|\widetilde{\bm H}_{n,\tau}^{-1}z\|_{\bm M}^2
 \le z^\top\bm A_\tau z,
 \qquad z\in\mathbb R^p,
\end{equation}
then the joint ERM estimator $\widehat h$ satisfies
\begin{equation}\label{eq:general-local-erm}
 \mathbb EW_2(P_{h^*},P_{\widehat h})
 \le b_\tau+C_{\bm M}
 \left[\mathbb E\left\{
 \mathbf1_{\mathcal E_n}g_{n,\tau}^\top\bm A_\tau
 g_{n,\tau}\right\}\right]^{1/2}
 +B\varepsilon_n.
\end{equation}
\end{theorem}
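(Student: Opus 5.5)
The argument combines a triangle inequality, a mean-value identity for the empirical score equation, and a split over $\mathcal E_n$ and its complement.

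\emph{Decomposition.} For every realization,
\[
W_2(P_{h^*},P_{\widehat h})\le W_2(P_{h^*},P_{h_{\theta^\dagger}})+W_2(P_{h_{\theta^\dagger}},P_{\widehat h}).
\]
By \eqref{eq:general-chart-w2} the first term is at most $b_\tau$, and the second is at most $C_{\bm M}\|\widehat\theta_{\rm o}-\theta_{\rm o}^\dagger\|_{\bm M}$. Taking expectations, I split according to $\mathcal E_n$. On $\mathcal E_n^c$, I use the crude bound $W_2(P_{h^*},P_{\widehat h})\le B$, valid over the whole candidate class. This contributes at most $B\,\mathbb P(\mathcal E_n^c)\le B\varepsilon_n$. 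On $\mathcal E_n$, I keep the decomposition, bounding the bias part by $b_\tau\mathbf 1_{\mathcal E_n}\le b_\tau$. It then remains to show
\[
\mathbb E\{\mathbf 1_{\mathcal E_n}\|\widehat\theta_{\rm o}-\theta_{\rm o}^\dagger\|_{\bm M}\}\le [\mathbb E\{\mathbf 1_{\mathcal E_n} g_{n,\tau}^\top\bm A_\tau g_{n,\tau}\}]^{1/2}.
\]

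\emph{Mean-value identity on $\mathcal E_n$.} Let $G(\theta_{\rm o})$ denote the output-gradient of $(1-t)L_{n,t}(h_{(\widehat\theta_{\rm h},\theta_{\rm o})})$. On $\mathcal E_n$, $\widehat\theta_{\rm o}$ is an interior minimizer over the output slice at the fixed hidden configuration $\widehat\theta_{\rm h}$, so $G(\widehat\theta_{\rm o})=0$. Since the slice is convex, the segment from $\theta^\dagger_{\rm o}$ to $\widehat\theta_{\rm o}$ lies in it, and the fundamental theorem of calculus gives
\[
0=G(\widehat\theta_{\rm o})=G(\theta^\dagger_{\rm o})+\widetilde{\bm H}_{n,\tau}(\widehat\theta_{\rm o}-\theta_{\rm o}^\dagger).
\]
Here $\widetilde{\bm H}_{n,\tau}$ is exactly the segment-averaged Hessian. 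The comparator $\theta^\dagger_{\rm o}$ is an interior population score root, so the population gradient of $R_\tau$ vanishes there. Hence $G(\theta^\dagger_{\rm o})$ equals its centered version $g_{n,\tau}$, centered conditionally on the hidden configuration, as in the uniform conditional definition of the Supplement. Invertibility then gives $\widehat\theta_{\rm o}-\theta_{\rm o}^\dagger=-\widetilde{\bm H}_{n,\tau}^{-1}g_{n,\tau}$. Condition \eqref{eq:general-path-stability} yields
\[
\|\widehat\theta_{\rm o}-\theta^\dagger_{\rm o}\|_{\bm M}^2\le g_{n,\tau}^\top\bm A_\tau g_{n,\tau}\quad\text{on }\mathcal E_n.
\]

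\emph{Conclusion.} Multiplying by $\mathbf 1_{\mathcal E_n}$ and applying Jensen (Cauchy--Schwarz) gives
\[
\mathbb E\{\mathbf 1_{\mathcal E_n}\|\cdot\|_{\bm M}\}\le(\mathbb E\{\mathbf 1_{\mathcal E_n}g^\top\bm A_\tau g\})^{1/2}.
\]
Assembling the three pieces yields \eqref{eq:general-local-erm}.

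The step I expect to be the main obstacle is the middle one: justifying that the gradient at $\theta^\dagger$ is genuinely the centered quantity $g_{n,\tau}$. The trouble is that $\theta^\dagger$ depends on the fitted hidden parameters $\widehat\theta_{\rm h}$, which are themselves data dependent. I would handle this with the definition adopted in the setup, where centering is taken relative to the population risk at the fitted hidden configuration, so that population stationarity of $\theta^\dagger_{\rm o}$ gives the identity exactly. I would also need measurability of $\theta^\dagger$, $\widetilde{\bm H}_{n,\tau}$ and $\bm A_\tau$ so that all the expectations are defined; this is assumed in the statement. Differentiability of $L_{n,t}$ along the segment is needed for the mean-value step, and the smoothness of the exact loss in the output coefficients supplies it.
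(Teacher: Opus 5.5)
Your proposal is correct and matches the paper's proof step for step. The paper uses the same ingredients: the identity $0=g_{n,\tau}+\widetilde{\bm H}_{n,\tau}(\widehat\theta_{\rm o}-\theta_{\rm o}^\dagger)$ on $\mathcal E_n$ from output optimality and population stationarity, the path-stability bound, the triangle inequality with \eqref{eq:general-chart-w2}, Cauchy--Schwarz on $\mathcal E_n$, and the crude bound $B\varepsilon_n$ on the complement; your handling of the centering at the data-selected hidden configuration is the paper's convention that population centering precedes evaluation at $\widehat\theta_{\rm h}$.
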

\end{samepage}

The first term $b_\tau$ bounds the population approximation error between
the true law and the law induced by the population comparator at smoothing
level $\tau$.  The second term controls statistical estimation error:
$\bm A_\tau$ accounts for the amplification of empirical score fluctuations
by the inverse Hessian, and $C_{\bm M}$ translates coefficient error into
Wasserstein distance.  The final term $B\varepsilon_n$ bounds the expected
error on the exceptional event $\mathcal E_n^c$, where the score-root or
stability conditions may fail.
For the spline-RePU estimator below, Supplementary
Appendix~\ref{sec:mlsn-proof} bounds these three terms uniformly over the
hidden configurations and applies this theorem to the joint ERM.
The statistical term is controlled by integrated second-moment and
weak-density estimates.

\subsection{Localized spline-RePU coordinate estimator}
\label{subsec:neural-realization}

A degree-$q$ rectified power unit (RePU) uses the activation
$\rho_q(x)=(x_+)^q$; ReLU corresponds to $q=1$.
We take an integer $q>\eta+4$ and put $M=q+1$.  For each
law chart $m$, choose order-$M$ density and normal spline multiwavelet banks
$\{\psi_{jk}^{\rm den},\psi_{jk}^{\rm nor}\}$
of size $Q_J\asymp2^{Jd^*}$ \citep{dahmen1999wavelets,jouini2007wavelet}.
Their chartwise spline pieces admit exact fixed-depth RePU realizations
\citep{li2020powernet}; Supplementary Appendix~\ref{sec:multiscale-tools} gives the
construction and dual filters.  Density prototypes have physical mean zero
by a fixed coarse correction; the boxes below apply to their independent
nonpivot coefficients.

\begin{definition}[Localized spline-RePU coordinate estimator]\label{def:repu-construction}
\leavevmode
\begin{enumerate}
\setlength{\itemsep}{3pt}
\setlength{\parsep}{0pt}
\item \textbf{Geometric reconstruction.}
Let $\bar P_m=\bar f_m\bar\nu_m$ be a reference law, where $\bar\nu_m$
is the volume measure on its support.  For an admissible mean-zero density
perturbation $r$ and normal displacement $v$, write
$\mathfrak h_m(r,v):\mathcal U\to\mathbb R^D$ for the embedding fixed by the
reconstruction in Supplementary Proposition~\ref{prop:canonical-law-sieve}.
It satisfies
\[
 \bigl(\mathfrak h_m(r,v)\bigr)_\#\pi
 =(I+v)_\#\{(\bar f_m+r)\bar\nu_m\}.
\]
\item \textbf{Basis and parameters.}
Let $\psi_{jk}^{\rm den}$ and $\psi_{jk}^{\rm nor}$ be fixed mean-zero
density splines and normal spline fields, with scale $j$ and within-scale
index $k$.  Write $\theta=(\thetah,\thetao)$: $\thetah\in\Theta_{\rm h}$
contains all weights and biases of a fixed-depth, fixed-width tangent
RePU network $F_{\thetah}$, while $\thetao$ contains the output coefficients
$\theta_{jk}^{\rm den},\theta_{jk}^{\rm nor}$.
The set $\Theta_{\rm h}$ is a fixed compact box containing $0$.
\item \textbf{Feature transformation.}
For fixed, sufficiently small $c_{\rm flow}>0$, let $U_{\thetah}^{\rm den}$
and $U_{\thetah}^{\rm nor}$ be the mass-preserving density and normal-field
pullback operators induced by the time-one flow of
$c_{\rm flow}\tau_JF_{\thetah}$ (Supplementary
equation~\eqref{eq:mlsn-geometric-pullbacks}).  Their action on the fixed
splines gives the transformed features in
\[
 h_{m,\theta}:=\mathfrak h_m\left(
 \sum_{j,k}\theta_{jk}^{\rm den}U_{\thetah}^{\rm den}\psi_{jk}^{\rm den},
 \sum_{j,k}\theta_{jk}^{\rm nor}U_{\thetah}^{\rm nor}\psi_{jk}^{\rm nor}\right),
 \qquad \tau_J=c_t2^{-2J}.
\]
\item \begin{minipage}[t]{\linewidth}\textbf{Candidate class.}
The class $\mathcal H_J^{\rm NN}$ consists of these maps with
$|\theta_{jk}^{\rm den}|\le A2^{-j(\beta+d^*/2)}$ and
$|\theta_{jk}^{\rm nor}|\le A2^{-j(\beta+1+d^*/2)}$, with fixed coarse
constraints enforcing positivity and the normal-tube condition.
\end{minipage}
\end{enumerate}
\end{definition}

For the candidate transports in $\mathcal H_J^{\rm NN}$, regularity bounds
are derived from the fixed reference atlas and coefficient boxes in
Supplementary Proposition~\ref{prop:canonical-law-sieve}.  These bounds may be
looser than the truth-class bounds but remain independent of $n$ and $J$.
The candidate smoothness bound supplies the uniform derivatives needed for the
local score-information analysis.

\paragraph*{Design rationale.}
The geometry-compatible class uses high-order RePU units for score-Hessian
smoothness and spline multiresolution for approximation and scale-wise
complexity.  The
$O(\tau_J)$ scaling, coefficient constraints, and pilot localization jointly
preserve the Jacobian and reach bounds.  The proof profiles over the
output-coordinate components of $\theta$ while allowing its hidden components
to be optimized jointly.  This avoids requiring identifiable raw weights and
yields the sharp bias--variance balance.

Here $m\in\{1,\ldots,M_0\}$ indexes one of the finitely many reference law
charts constructed in Supplementary Proposition~\ref{prop:canonical-law-sieve}.
Let $\widehat m$ and $\mathcal S_n$ be the chart selector and safety event
obtained from the independent pilot in Supplementary
Lemma~\ref{lem:vs-observable-pilot}.  Set
$\Theta_{n,J}^{\rm NN}:=\Theta_{n,J,\widehat m}^{\rm NN}$ using
Supplementary equation~\eqref{eq:mlsn-joint-domain}; this domain includes both
the coefficient boxes and the fixed coarse constraints in
Definition~\ref{def:repu-construction}.
For $\tau_J=c_t2^{-2J}=\tau(t_J)$, if $\mathcal S_n$ holds and
$\Theta_{n,J}^{\rm NN}\ne\varnothing$, define a measurable joint ERM
on the final subsample by
\[
 \widehat\theta_J
 \in\arg\min_{\theta\in\Theta_{n,J}^{\rm NN}}
 L_{n,t_J}(h_{\widehat m,\theta}),
 \qquad \widehat h_J^{\rm NN}:=h_{\widehat m,\widehat\theta_J}.
\]
Otherwise return a fixed reference transport $h_{\rm ref}$.
Here $n$ is the final-subsample size.  The precise selection and fallback
rules are given in Supplementary Appendix~\ref{sec:repu-class}.

\begin{theorem}[MAGT minimax rate]
\label{thm:acc}
Suppose Assumptions~\ref{G1}--\ref{G2} hold, use the estimator in
Definition~\ref{def:repu-construction}, and let $d^*>2$.  There is $c_*>0$
such that, for every fixed $c_t\in(0,c_*]$, there is a constant
$C=C(c_t)>0$ for which, with
\[
 J_n=\left\lfloor\frac{\log_2 n}{2\eta+d^*}\right\rfloor,
 \qquad \tau(t_{J_n})=c_t2^{-2J_n},
\]
for all sufficiently large $n$,
\[
 \sup_{h^*\in\mathcal H_*}\mathbb E W_2(P_{h^*},P_{\widehat h_{J_n}^{\rm NN}})
 \le C n^{-(\eta+1)/(2\eta+d^*)}.
\]
\end{theorem}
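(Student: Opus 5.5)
The plan is to apply Theorem~\ref{thm:local-erm-transfer} to the joint ERM over $\Theta_{n,J_n}^{\rm NN}$. We bound its three terms separately at $\tau_J=c_t2^{-2J}$ with $2^{J_n}\asymp n^{1/(2\eta+d^*)}$, and do so uniformly over $h^*\in\mathcal H_*$ and over hidden configurations $\theta_{\rm h}\in\Theta_{\rm h}$.

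\textbf{Step 1: pilot event.} First I condition on the independent pilot. On the safety event $\mathcal S_n$, the selected chart $\widehat m$ has a reference law close enough to $P_{h^*}$ that the truth lies strictly inside the coefficient boxes of $\mathcal H_J^{\rm NN}$ (via Supplementary Lemma~\ref{lem:vs-observable-pilot} and Proposition~\ref{prop:canonical-law-sieve}). On $\mathcal S_n^c$ the fallback $h_{\rm ref}$ is used. Since images lie in a fixed compact set, $W_2\le B$ there, and it suffices to show $P(\mathcal S_n^c)=O(n^{-(\eta+1)/(2\eta+d^*)})$. I expect this from exponential concentration of a fixed-resolution pilot.

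\textbf{Step 2: bias $b_\tau$.} I write $P_{h^*}=(I+v^*)_\#\{(\bar f_m+r^*)\bar\nu_m\}$. Here $r^*\in C^{\eta}$ (the density, including the Jacobian of an $C^{\eta+1}$ map) and $v^*\in C^{\eta+1}$, so their multiwavelet coefficients satisfy the box bounds.

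Truncating at level $J$ gives $\|r^*-r_J\|_{H^{-1}}\lesssim 2^{-J(\eta+1)}$ and $\|v^*-v_J\|_{L^2}\lesssim 2^{-J(\eta+1)}$. The pullbacks $U_{\theta_{\rm h}}$ are flows of size $O(\tau_J)$ that preserve mass, so the same holds after transformation. The $W_2$ distance between the two pushforwards is controlled by the normal displacement in $L^2$ plus the density perturbation in the negative Sobolev norm $\dot H^{-1}$ (by Peyre's inequality, using the lower density bound). Hence there is a truncated parameter with law error $\lesssim 2^{-J(\eta+1)}$.

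The comparator $\theta^\dagger$ minimizes the smoothed Fisher risk rather than $W_2$. I therefore need a local equivalence: $R_\tau(h_\theta)-R_\tau(h^*)$ is comparable to a weighted quadratic form $\|\theta_{\rm o}-\theta^*_{\rm o}\|_{\bm M}^2$ in which scale-$j$ coefficients below $\tau^{-1/2}$ resolution are weighted by about $2^{2j}$. This gives $b_\tau\lesssim 2^{-J(\eta+1)}$. The second inequality of~\eqref{eq:general-chart-w2} follows from the same $H^{-1}$/$L^2$ chart estimate, with $C_{\bm M}$ constant.

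\textbf{Step 3: variance and stability.} This is the main obstacle. I must show that the averaged path Hessian $\widetilde{\bm H}_{n,\tau}$ dominates the information matrix at scale $\tau_J$ uniformly on a high-probability event, which gives~\eqref{eq:general-path-stability}. The plan is:
\begin{enumerate}
\item Compute the population Hessian of $R_\tau$ as the Fisher information $\E\|\partial_\theta s_\tau\|^2$. For localized features of scale $j\le J$ smoothed at $\tau\asymp 2^{-2J}$, this acts on scale-$j$ coefficients like $2^{2j}$ times the $L^2$ Gram matrix, up to constants depending on $c_t$ (hence $c_t\le c_*$).
\item Pass from the population to the empirical Hessian by matrix Bernstein over the $Q_J\asymp 2^{Jd^*}$ coordinates, with a union or chaining bound over the compact $\Theta_{\rm h}$ using Lipschitz dependence.
\item Use the smoothness of the high-order RePU networks to control the variation of the Hessian along the path.
\end{enumerate}

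With these in place, $\E\, g^\top\bm A_\tau g\lesssim n^{-1}\sum_{j\le J}2^{jd^*}2^{-2j}\cdot 2^{2j}\cdot\tau^{-1}$-type terms. Carefully, each coordinate of the centered score gradient has variance $\lesssim 2^{2j}/n$ times the inverse weight, giving total $\lesssim 2^{J(d^*-2)}/n$. Here $d^*>2$ makes the top scale dominate, so the statistical term is $(2^{J(d^*-2)}/n)^{1/2}$. Balancing this against $2^{-J(\eta+1)}$ yields $2^{J}\asymp n^{1/(2\eta+d^*)}$ and the rate $n^{-(\eta+1)/(2\eta+d^*)}$.

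Finally I check that the score-root interiority holds on $\mathcal E_n$ because the box margins exceed the estimation error. I would attack the Hessian lower bound first through an explicit Gaussian-convolution computation in normal coordinates inside the reach tube.
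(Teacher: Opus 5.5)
Your overall structure matches the paper: Theorem~\ref{thm:local-erm-transfer}, a bias of order $h^{\eta+1}$ with $h=2^{-J}$, a statistical term $(n^{-1}h^{2-d^*})^{1/2}$, an exceptional-event term, and balancing. The gap is that the mechanisms you propose for the two leading terms only control the energy norm $\|(a,b)\|_E=\|a\|_{H^1}+\tau_J^{-1}\|b\|_{L^2}$. That loses a factor $h^{-2}$ in the density block of both terms; the normal block is fine, since $\|v\|_{L^2}\le\tau_J\|\cdot\|_E$.

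\begin{itemize}
\item \emph{Bias.} A risk that is merely \emph{comparable} to a non-diagonal $E$-type quadratic form, minimized over levels $j\le J$, gives only quasi-optimality $\|\theta^\dagger-\theta^*\|_E\lesssim h^{\eta-1}$. Hence you get only $\|r^\dagger-r^*\|_{H^{-1}}\lesssim h^{\eta-1}$, not $h^{\eta+1}$.
\item \emph{Variance.} A Loewner lower bound $\widetilde{\bm H}_{n,\tau}\succeq c\,I$, which is what matrix Bernstein delivers, controls $z^\top\widetilde{\bm H}_{n,\tau}^{-1}z$. It does not control $\|\widetilde{\bm H}_{n,\tau}^{-1}z\|_{\bm W_J}$, because $X\mapsto X^{-1}\bm W_JX^{-1}$ is not Loewner-monotone. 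Going through the energy norm, the best you get is $\mathbb E\|\Delta\|_{\bm W_J}^2\lesssim n^{-1}\operatorname{tr}(I^{-1}G)$, which is of order $n^{-1}h^{-d^*-2}$.
\item \emph{Net effect.} These two bounds give $h^{\eta-1}+n^{-1/2}h^{-1-d^*/2}$, i.e.\ the rate $n^{-(\eta-1)/(2\eta+d^*)}$ at your $J_n$. Your coordinatewise ``variance times inverse weight'' count hides this by treating the information as diagonal in the spline wavelet basis.
\end{itemize}

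The missing idea is an Aubin--Nitsche duality argument (Lemma~\ref{lem:mlsn-consequences}):
\begin{itemize}
\item For an $H^1$ test $g$, solve $-2\operatorname{div}(q\nabla t)=q(g-\int qg)$, which gives $\|t\|_{H^3}\lesssim\|g\|_{H^1}$.
\item Test the \emph{interior} score equation with a retained approximation of $q\,t$, accurate to $O(h^2\|t\|_{H^3})$ in $H^1$ (Lemma~\ref{lem:r25-product}).
\item Replace the smoothed information by $2\int q\nabla u\cdot\nabla t$ up to $O(\tau_J)$, via \eqref{eq:r25-weak-comparison}.
\item Isolate a fixed-truth process $Z_n$ in $(H^3)^*$ with second moment $O(n^{-1}h^{2-d^*})$.
\end{itemize}
This yields $\mathcal W\le C\{h^{\eta+1}+h^2W_n+\|Z_n\|_{(H^3)^*}\}$. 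With $W_n=Z_n=0$ at the population root, the same argument turns its energy error $h^{\eta-1}$ into the bias $b_\tau\lesssim h^{\eta+1}$. The paper then simply takes $\bm A_\tau=\widetilde{\bm H}^{-\top}\bm W_J\widetilde{\bm H}^{-1}$ on $\mathcal E_n$. With that choice \eqref{eq:general-path-stability} holds with equality, and the whole statistical term is this weak second moment.

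The second gap is localization. The exact smoothed loss is $E$-convex only in a shrinking neighbourhood of the truth. Lemma~\ref{lem:mlsn-uniform-hessian} gives $|D^2L_{n,J}(\theta)[w,w]-I_\theta(w,w)|\lesssim(\mathfrak r_\theta+\delta_n)\epsilon^{-1}\|w\|_E^2$, where $\mathfrak r_\theta$ is the candidate's $C^0\times C^1$ distance from the truth and $\epsilon=\sqrt{\tau_J}\to0$. So control of the path Hessian, and uniqueness and interiority of both roots, require the whole output fiber to lie within $o(\epsilon)$ of $P^*$ in $C^0\times C^1$.
\begin{itemize}
\item Smoothness of the RePU features cannot give this, because the $\epsilon^{-1}$ comes from the Gaussian smoothing, not the network. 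Bernstein plus chaining only handles empirical fluctuation at given parameters.
\item In the paper, the pilot is not a fixed-resolution chart selector. It is a level-$J$ wavelet estimator of $(f^*,v^*)$ with $C^0\times C^1$ accuracy $r^{\rm pil}_{n,J}\asymp h^{\eta-1}\log n$ (Lemma~\ref{lem:vs-observable-pilot}), and every fiber is intersected with this pilot ball.
\item The requirement $r^{\rm pil}_{n,J}/\epsilon\asymp h^{\eta-2}\log n\to0$ is exactly where $\eta>2$ is used. Your argument never uses that hypothesis, which signals something is missing.
\item Interiority needs sup-norm control of the correction. An $L^2$/energy error only bounds the level-$J$ coefficient-to-box ratio by $h^{(\eta-2-d^*)/2}$, which need not vanish. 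The paper instead uses a finite-$p$ inverse with $p>d^*$ (Morrey) and a Brouwer fixed-point argument (Lemma~\ref{lem:pm-newton-margins}) to get the ratio $h^{\eta-\beta-d^*/p}\to0$.
\end{itemize}
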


Applying Theorem~\ref{thm:local-erm-transfer} in Supplementary
Appendix~\ref{sec:mlsn-proof} gives, under the resolution conditions of
Lemmas~\ref{lem:vs-observable-pilot} and~\ref{lem:pm-newton-margins},
\[
 \mathbb E W_2(P_{h^*},P_{\widehat h_J^{\rm NN}})
 \lesssim 2^{-J(\eta+1)}+n^{-1/2}2^{J(d^*-2)/2}+n^{-2}.
\]
The leading terms are approximation bias and statistical estimation error;
the last term covers exceptional events. Uniform observation-space and
fixed-truth dual-process bounds give the stochastic order without a
logarithmic loss. Balancing the leading terms gives the choice in
Theorem~\ref{thm:acc}.

\paragraph*{Remark on choosing the smoothing level.}
Smaller $t$ retains finer-scale distinctions in the score criterion,
but can concentrate the latent posterior weights and make the finite-anchor
score estimate less stable.  Larger $t$ can stabilize posterior averaging, at
the cost of attenuating these distinctions.  We therefore select $t$ jointly
with the latent dimension $d$ and anchor budget $K$ using held-out performance.
For the exact-score estimator, Theorem~\ref{thm:acc} uses
$\tau(t_J)=c_t2^{-2J}$; fixed-proposal finite-anchor approximation is analyzed
separately in Supplementary Appendix~\ref{sec:mcapprox}.

\paragraph*{Minimax comparison.}
Theorem~\ref{thm:acc} attains the minimax exponent established by
\citet{nilesweed2022minimax} for Wasserstein estimation on a fixed support.
Supplementary Proposition~\ref{prop:vs-minimax-transfer} transfers the
corresponding lower bound to an interior subclass of regular transports,
establishing minimax optimality over this subclass.
As summarized in Table~\ref{tab:theory-generative-summary}, MAGT achieves an
intrinsic-dimension rate in the stronger $W_2$ metric without logarithmic
loss.  Manifold-adaptive diffusion attains the same smoothness--dimension
form of the exponent for $W_1$, up to logarithmic factors.
In contrast, the displayed full-dimensional KDE flow-matching bound involves
the ambient dimension $D$, highlighting the role of manifold structure in
reducing dimensional dependence.  These distinctions are specific to the
assumptions and estimator classes listed in the table.

\begin{table}[H]
\centering
\caption{Representative statistical guarantees.  Rates are stated in the
$d^*>2$ regime used above, and logarithmic factors are displayed.}
\label{tab:theory-generative-summary}
\renewcommand{\baselinestretch}{1}
\footnotesize
\setlength{\tabcolsep}{3.5pt}
\renewcommand{\arraystretch}{1.2}
\begin{tabularx}{\linewidth}{@{}>{\raggedright\arraybackslash}p{0.24\linewidth}
>{\raggedright\arraybackslash}p{0.26\linewidth}
>{\raggedright\arraybackslash}p{0.25\linewidth}
>{\raggedright\arraybackslash}X@{}}
\toprule
\textbf{Method} & \textbf{Rate and metric} & \textbf{Distribution class} & \textbf{Estimator}\\
\midrule
Manifold-adaptive diffusion\newline \citep{tang2024adaptivity} &
$W_1:$\newline $\widetilde O\{n^{-(\alpha+1)/(2\alpha+d^*)}\}$ &
$\alpha$-smooth density on a boundaryless manifold & Reverse diffusion\\
\addlinespace[6pt]
KDE flow matching\newline \citep{kunkeltrabs2025fm_kde} &
$W_1:$\newline $(n/\log^2 n)^{-(\alpha+1)/(2\alpha+D)}$ &
Compactly supported Besov density; $0<\alpha\le1$ & KDE velocity field\\
\addlinespace[6pt]
MAGT\newline Theorem~\ref{thm:acc} &
$W_2:$\newline $n^{-(\eta+1)/(2\eta+d^*)}$ &
Regular $C^{\eta+1}$ transport; positive intrinsic density &
Localized spline-RePU coordinate estimator\\
\bottomrule
\end{tabularx}
\end{table}

\paragraph*{Proof strategy.}
Condition on the independent pilot and compare the joint ERM with the
population output minimizer at its fitted hidden configuration. Interior
score roots and output curvature give the path identity in
Theorem~\ref{thm:local-erm-transfer}. A common energy and weak-density
argument bounds the population bias and the empirical--population
coordinate second moment, uniformly over hidden configurations.
The statistical bound includes a same-order approximation remainder
at the chosen resolution. Bounded support controls the exceptional
events. Substitution into Theorem~\ref{thm:local-erm-transfer} and
balancing the resolution complete the proof in Supplementary
Appendix~\ref{sec:mlsn-proof}.

\endgroup

\section{\texorpdfstring{{Experiments}}{Experiments}}
\label{sec:experiments}

{We evaluate three aspects of generation: distributional fidelity,
sampling cost, and concentration near the data support.  The synthetic suite
contains manifolds with known geometry, making it possible to report both
Wasserstein distance and off-manifold mass.  The real-data suite covers images,
tabular measurements, and genomic sequences, for which we use FID, Inception
Score, or Wasserstein distance as appropriate.  Comparisons include diffusion
samplers, flow matching, and WGAN-GP, spanning iterative score or velocity
models and a direct adversarial generator.  Sampling times exclude training and
metric evaluation and are measured for a common generated batch size within
each benchmark.}

\subsection{Synthetic benchmarks: low-dimensional manifolds}
\label{sec:experiments-synth}

We first evaluate on controlled synthetic distributions where both the ground-truth distribution and (for manifold datasets) the underlying manifold are known.
Table~\ref{tab:wd_time} reports six representative benchmarks: four non-Gaussian distributions in $\R^2$ (\textit{rings2d}, \textit{spiral2d}, \textit{moons2d}, \textit{checker2d}) and two thin manifolds embedded in $\R^3$ (\textit{helix3d}, \textit{torus3d}).
{For the parameterized manifold datasets, we sample latent
parameters, map them through a nonlinear embedding
into the ambient space, and add small i.i.d.\ Gaussian jitter;} full simulation
details are provided in Supplementary Appendix~\ref{sec:supp-toy-details}.

\noindent \textbf{Data splits and evaluation protocol.}
Across all synthetic benchmarks, the training set contains $n=10{,}000$ observations.
We use an independent validation set of size $5{,}000$ for hyperparameter selection and a held-out test set for computing $W_2$ and the off-manifold rate.
Each method generates $10{,}000$ samples for evaluation.

\noindent \textbf{Experimental configuration and baselines.}
{For each dataset, we use the latent dimension $d$ shown in
Table~\ref{tab:wd_time} and a standard Gaussian base.  The \method{}
transport is a five-hidden-layer ReLU MLP of width 512, trained with the
\method{}--MC score estimator and proposal $\tilde\pi=\pi$.  We select
$t\in\{0.1,0.2,\ldots,0.9\}$ by the held-out generative metric in
Supplementary Algorithm~\ref{algorithm_1} in
Appendix~\ref{sec:implementation} and use
$K=1024$ unless stated otherwise.  Figure~\ref{fig:knt} varies $K$ and $t$.

The diffusion score network uses the same MLP backbone with a 32-dimensional
time embedding.  DDIM~\citep{song2020ddim} uses 1000 steps,
$\eta_{\rm DDIM}=1.0$, and
$t\in[0.05,0.90]$.  DPM-Solver++~\citep{lu2022dpmsolver} uses either 20
first-order evaluations or 40 second-order midpoint evaluations.  Flow
matching uses the same backbone and time embedding; {midpoint integration with
step size $0.05$ uses 20 steps (40 velocity-field evaluations) over $t\in[0,1]$.}

For WGAN-GP~\citep{gulrajani2017improved}, the generator and critic are
five-hidden-layer ReLU MLPs of width 512.  Training uses five critic updates per
generator update, gradient-penalty coefficient 10, and Adam with learning rate
$10^{-4}$ and $(\beta_1,\beta_2)=(0.5,0.999)$.  M-DDIM reuses the fitted
transport and anchor-based score for 205 DDIM refinement steps from $0.90$ to
$0.05$ ($\eta_{\rm DDIM}=1.0$), with one cached anchor bank used throughout.

Table~\ref{tab:wd_time} reports lower $W_2$ for \method{} than for the diffusion
samplers on all six benchmarks and lower $W_2$ than flow matching on five; the
two methods are nearly tied on \textit{checker2d}.  On \textit{helix3d} and
\textit{torus3d}, the \method{} off-manifold rates are $0.094$ and $0.034$,
compared with $0.167$ and $0.556$ for flow matching and
$0.333$--$0.386$ and $0.634$--$0.670$ for the diffusion samplers.  \method{}
uses one transport evaluation and takes about $10^{-3}$ seconds for $10{,}000$
samples in this experiment.}

\begin{table}[t]
\caption{Empirical $W_2$, off-manifold rate (fraction of samples with distance $>0.1$), and wall-clock sampling time (seconds) to generate $10{,}000$ samples. Parentheses report standard deviations across runs. Boldface indicates the best-performing method for each metric.}
\label{tab:wd_time}
\centering
\scriptsize
\resizebox{\textwidth}{!}{%
\begin{tabular}{l*{6}{cc}}
\toprule
& \multicolumn{2}{c}{rings2d ($d=1$)}
& \multicolumn{2}{c}{spiral2d ($d=1$)}
& \multicolumn{2}{c}{moons2d ($d=2$)}
& \multicolumn{2}{c}{checker2d ($d=2$)}
& \multicolumn{2}{c}{helix3d ($d=1$)}
& \multicolumn{2}{c}{torus3d ($d=2$)} \\
\cmidrule(lr){2-3}\cmidrule(lr){4-5}\cmidrule(lr){6-7}\cmidrule(lr){8-9}\cmidrule(lr){10-11}\cmidrule(lr){12-13}
& $W_2$ & Out rate
& $W_2$ & Out rate
& $W_2$ & Out rate
& $W_2$ & Out rate
& $W_2$ & Out rate
& $W_2$ & Out rate \\
\midrule

\makecell{\method{}\\\scriptsize{NFE= 1}}
& \makecell{\textbf{0.0592}\\\scriptsize(0.0070)} & \makecell{\textbf{0.1406}\\\scriptsize(0.0255)}
& \makecell{\textbf{0.0767}\\\scriptsize(0.0185)} & \makecell{0.4150\\\scriptsize(0.1501)}
& \makecell{\textbf{0.0331}\\\scriptsize(0.0036)} & \makecell{\textbf{0.0360}\\\scriptsize(0.0103)}
& \makecell{0.0608\\\scriptsize(0.0040)} & \makecell{\textbf{0.0064}\\\scriptsize(0.0029)}
& \makecell{\textbf{0.0425}\\\scriptsize(0.0022)} & \makecell{\textbf{0.0938}\\\scriptsize(0.0501)}
& \makecell{\textbf{0.0717}\\\scriptsize(0.0049)} & \makecell{\textbf{0.0341}\\\scriptsize(0.0182)} \\
\textit{Time}
& \multicolumn{2}{c}{\makecell{\textbf{0.001}}}
& \multicolumn{2}{c}{\makecell{\textbf{0.001}}}
& \multicolumn{2}{c}{\makecell{\textbf{0.001}}}
& \multicolumn{2}{c}{\makecell{\textbf{0.001}}}
& \multicolumn{2}{c}{\makecell{\textbf{0.001}}}
& \multicolumn{2}{c}{\makecell{\textbf{0.001}}} \\
\cmidrule(lr){1-13}

\makecell{M-DDIM\\\scriptsize{NFE= 205}}
& \makecell{0.0613\\\scriptsize(0.0100)} & \makecell{0.2014\\\scriptsize(0.0211)}
& \makecell{0.0836\\\scriptsize(0.0054)} & \makecell{\textbf{0.3740}\\\scriptsize(0.0927)}
& \makecell{0.0733\\\scriptsize(0.0219)} & \makecell{0.2199\\\scriptsize(0.0709)}
& \makecell{0.0640\\\scriptsize(0.0063)} & \makecell{0.0088\\\scriptsize(0.0040)}
& \makecell{0.0536\\\scriptsize(0.0086)} & \makecell{0.2812\\\scriptsize(0.0740)}
& \makecell{0.0922\\\scriptsize(0.0123)} & \makecell{0.1208\\\scriptsize(0.0223)} \\
\textit{Time}
& \multicolumn{2}{c}{0.615}
& \multicolumn{2}{c}{0.613}
& \multicolumn{2}{c}{0.668}
& \multicolumn{2}{c}{0.615}
& \multicolumn{2}{c}{0.664}
& \multicolumn{2}{c}{0.616} \\
\cmidrule(lr){1-13}

\makecell{DPM++ (1s)\\\scriptsize{NFE= 20}}
& \makecell{0.1052\\\scriptsize(0.0228)} & \makecell{0.6905\\\scriptsize(0.0283)}
& \makecell{0.1992\\\scriptsize(0.0300)} & \makecell{0.7584\\\scriptsize(0.0261)}
& \makecell{0.1199\\\scriptsize(0.0083)} & \makecell{0.1672\\\scriptsize(0.0293)}
& \makecell{0.0944\\\scriptsize(0.0213)} & \makecell{0.0228\\\scriptsize(0.0205)}
& \makecell{0.1115\\\scriptsize(0.0162)} & \makecell{0.3338\\\scriptsize(0.0230)}
& \makecell{0.1650\\\scriptsize(0.0223)} & \makecell{0.6453\\\scriptsize(0.0656)} \\
\textit{Time}
& \multicolumn{2}{c}{0.153}
& \multicolumn{2}{c}{0.132}
& \multicolumn{2}{c}{0.164}
& \multicolumn{2}{c}{0.141}
& \multicolumn{2}{c}{0.183}
& \multicolumn{2}{c}{0.137} \\
\cmidrule(lr){1-13}

\makecell{DPM++ (2m)\\\scriptsize{NFE= 40}}
& \makecell{0.1057\\\scriptsize(0.0239)} & \makecell{0.6922\\\scriptsize(0.0188)}
& \makecell{0.1976\\\scriptsize(0.0290)} & \makecell{0.7644\\\scriptsize(0.0259)}
& \makecell{0.1222\\\scriptsize(0.0115)} & \makecell{0.1614\\\scriptsize(0.0200)}
& \makecell{0.0914\\\scriptsize(0.0216)} & \makecell{0.0257\\\scriptsize(0.0195)}
& \makecell{0.1056\\\scriptsize(0.0128)} & \makecell{0.3356\\\scriptsize(0.0275)}
& \makecell{0.1640\\\scriptsize(0.0221)} & \makecell{0.6344\\\scriptsize(0.0588)} \\
\textit{Time}
& \multicolumn{2}{c}{0.300}
& \multicolumn{2}{c}{0.256}
& \multicolumn{2}{c}{0.327}
& \multicolumn{2}{c}{0.261}
& \multicolumn{2}{c}{0.362}
& \multicolumn{2}{c}{0.276} \\
\cmidrule(lr){1-13}

\makecell{DDIM\\\scriptsize{NFE= 1000}}
& \makecell{0.1107\\\scriptsize(0.0172)} & \makecell{0.7057\\\scriptsize(0.0188)}
& \makecell{0.1494\\\scriptsize(0.0292)} & \makecell{0.7767\\\scriptsize(0.0254)}
& \makecell{0.0920\\\scriptsize(0.0311)} & \makecell{0.1843\\\scriptsize(0.0146)}
& \makecell{0.1108\\\scriptsize(0.0276)} & \makecell{0.0860\\\scriptsize(0.0407)}
& \makecell{0.0883\\\scriptsize(0.0162)} & \makecell{0.3861\\\scriptsize(0.0300)}
& \makecell{0.1603\\\scriptsize(0.0269)} & \makecell{0.6699\\\scriptsize(0.0658)} \\
\textit{Time}
& \multicolumn{2}{c}{3.190}
& \multicolumn{2}{c}{2.720}
& \multicolumn{2}{c}{3.392}
& \multicolumn{2}{c}{2.760}
& \multicolumn{2}{c}{3.709}
& \multicolumn{2}{c}{2.934} \\
\cmidrule(lr){1-13}

\makecell{WGAN-GP\\\scriptsize{NFE= 1}}
& \makecell{0.2904\\\scriptsize(0.1112)} & \makecell{0.7316\\\scriptsize(0.0622)}
& \makecell{0.3603\\\scriptsize(0.0472)} & \makecell{0.8110\\\scriptsize(0.0769)}
& \makecell{0.4562\\\scriptsize(0.1310)} & \makecell{0.9292\\\scriptsize(0.0829)}
& \makecell{0.2470\\\scriptsize(0.0646)} & \makecell{0.3107\\\scriptsize(0.3267)}
& \makecell{0.2352\\\scriptsize(0.1335)} & \makecell{0.8212\\\scriptsize(0.1933)}
& \makecell{0.4036\\\scriptsize(0.0285)} & \makecell{0.7916\\\scriptsize(0.0337)} \\
\textit{Time}
& \multicolumn{2}{c}{\makecell{\textbf{0.001}}}
& \multicolumn{2}{c}{\makecell{\textbf{0.001}}}
& \multicolumn{2}{c}{\makecell{\textbf{0.001}}}
& \multicolumn{2}{c}{\makecell{\textbf{0.001}}}
& \multicolumn{2}{c}{\makecell{\textbf{0.001}}}
& \multicolumn{2}{c}{\makecell{\textbf{0.001}}} \\
\cmidrule(lr){1-13}

\makecell{FM\\\scriptsize{{NFE= 40}}}
& \makecell{0.0712\\\scriptsize(0.0026)} & \makecell{0.5838\\\scriptsize(0.0142)}
& \makecell{0.0954\\\scriptsize(0.0030)} & \makecell{0.6422\\\scriptsize(0.0162)}
& \makecell{0.0470\\\scriptsize(0.0191)} & \makecell{0.0556\\\scriptsize(0.0168)}
& \makecell{\textbf{0.0592}\\\scriptsize(0.0015)} & \makecell{0.0147\\\scriptsize(0.0047)}
& \makecell{0.0533\\\scriptsize(0.0072)} & \makecell{0.1673\\\scriptsize(0.0294)}
& \makecell{0.1130\\\scriptsize(0.0074)} & \makecell{0.5557\\\scriptsize(0.0268)} \\
\textit{Time}
& \multicolumn{2}{c}{0.091}
& \multicolumn{2}{c}{0.090}
& \multicolumn{2}{c}{0.091}
& \multicolumn{2}{c}{0.090}
& \multicolumn{2}{c}{0.099}
& \multicolumn{2}{c}{0.092} \\
\bottomrule
\end{tabular}%
}
\end{table}
\FloatBarrier

{In Table~\ref{tab:wd_time}, \method{} and WGAN-GP generate with
one map evaluation, whereas the diffusion and ODE baselines evolve ambient
samples over multiple steps.  M-DDIM applies 205 refinement evaluations using
the fitted \method{} transport and anchor-induced score.  Direct \method{}
records lower $W_2$ than M-DDIM on all six benchmarks.}

\begin{figure}[H]
 \centering
 \safeincludegraphics[width=0.75\linewidth]{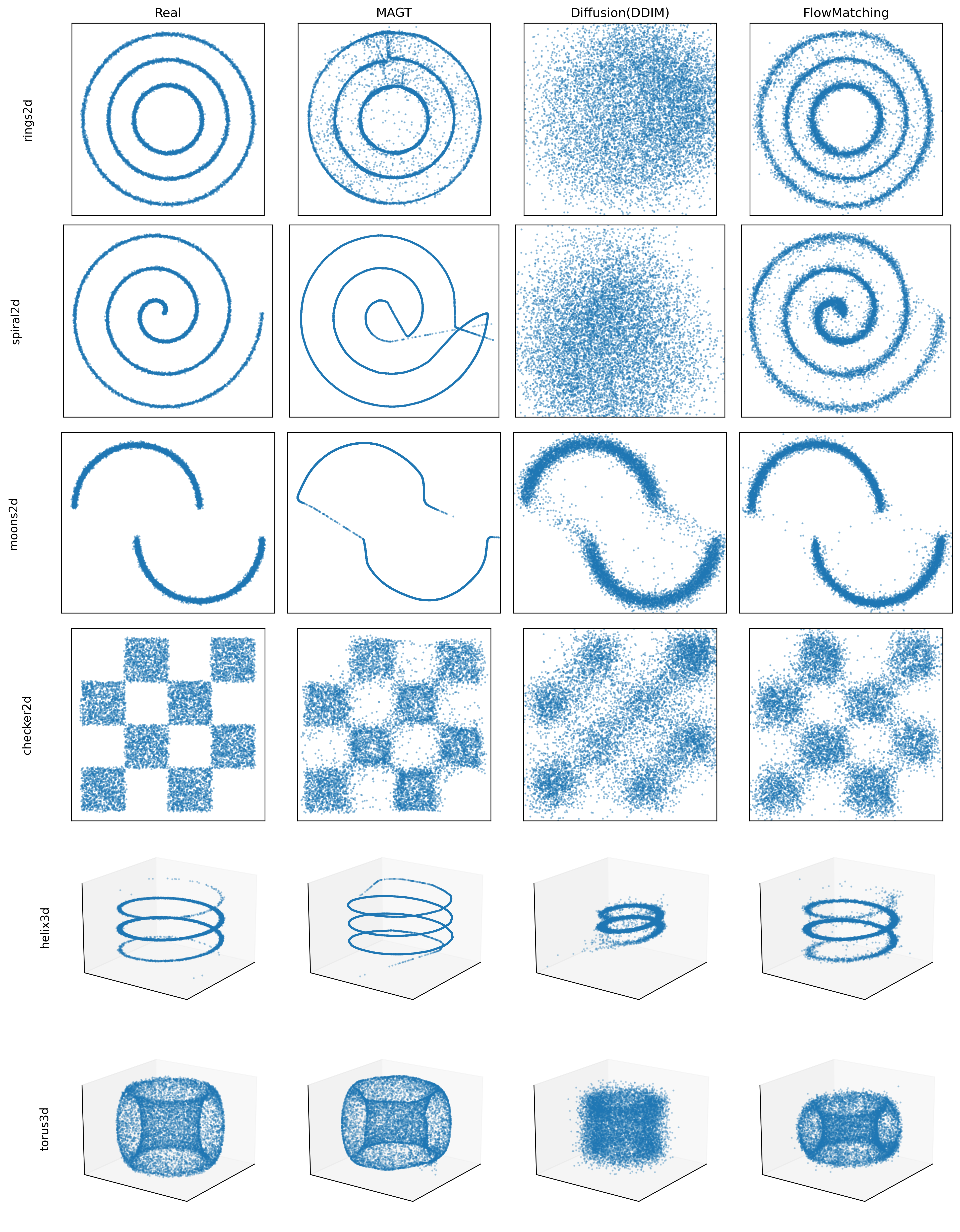}
 \caption{Qualitative comparison of generative models on six synthetic manifolds.}
 \label{fig:toyex}
\end{figure}

{Figure~\ref{fig:toyex} shows generated samples on the six synthetic benchmarks.
The samples follow the target supports, with most visible deviations near the
boundaries.}

\begin{figure}[H]
 \centering
 \safeincludegraphics[width=0.95\linewidth]{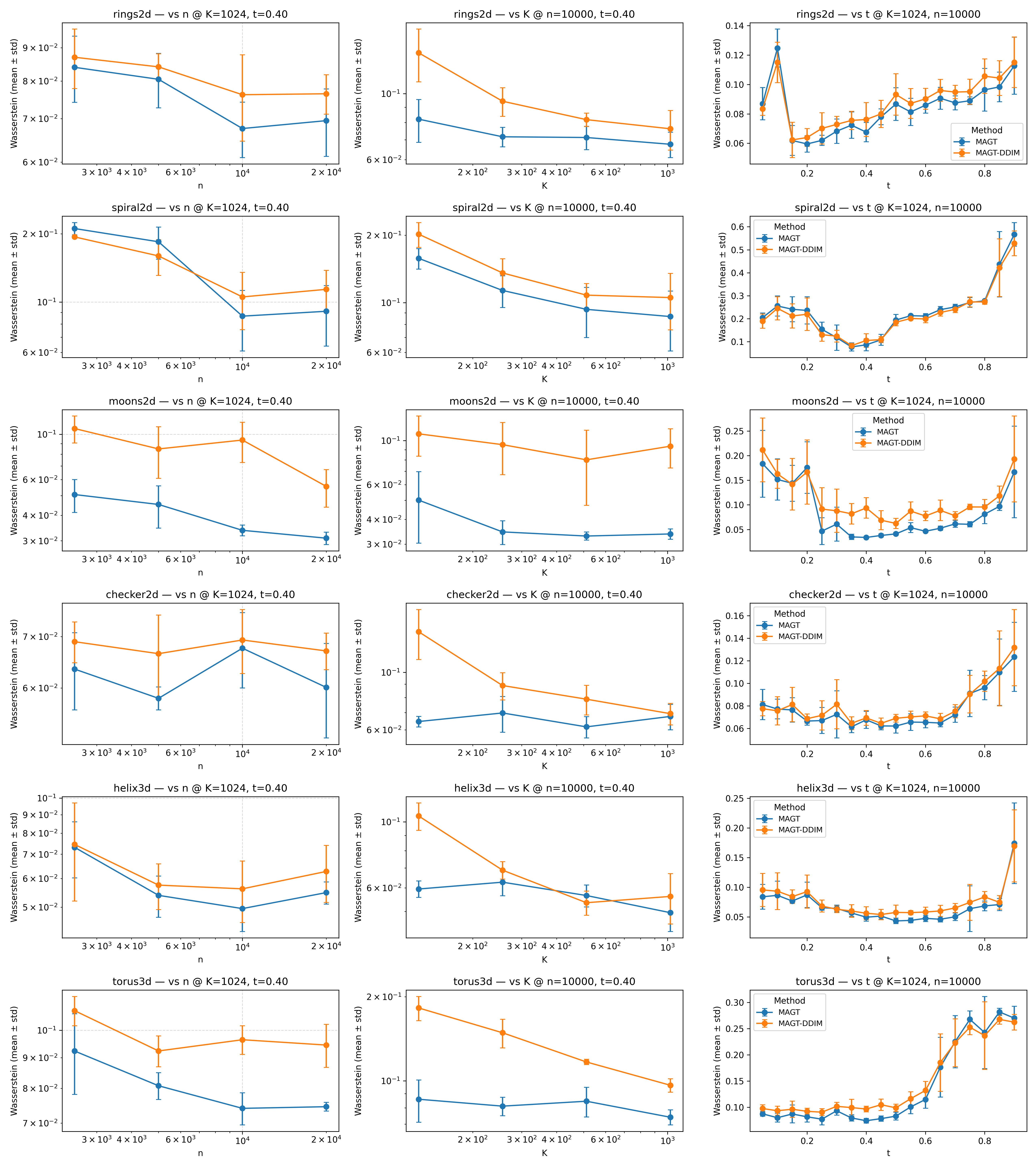}
 \caption{Effect of sample size $n$, anchor count $K$, and smoothing level $t$ on \method{} and \method{}--DDIM across six synthetic benchmarks. Curves report Wasserstein distance.}
 \label{fig:knt}
 \end{figure}
{Figure~\ref{fig:knt} examines the effects of training sample size
$n$, anchor count $K$, and smoothing level $t$.  Increasing $n$ or $K$ generally reduces
$W_2$ across the six tasks.  The dependence on $t$ is nonmonotone: small $t$
amplifies variation in the posterior-score estimate, while large $t$ removes
fine manifold structure.  The best values occur at intermediate bandwidths,
consistent with the trade-off in Section~\ref{sec:risk}; sampling uses the
fitted transport directly.}

\subsection{Real-data benchmarks}
\label{sec:experiments-real}

\paragraph*{Datasets.}
We evaluate the proposed method on image and tabular/high-dimensional benchmarks.
\textbf{MNIST} \citep{lecun1998gradient} contains $28\times 28$ grayscale handwritten digits (60{,}000 training, 10{,}000 test).
{\textbf{CIFAR-10} \citep{krizhevsky2009learning} contains $50{,}000$ training and $10{,}000$ test color images across ten classes.}

For tabular data, we use \textbf{Superconduct} \citep{hamidieh2018superconduct}, which contains $21{,}263$ samples with $D=81$ features.
We model the standardized feature vectors and evaluate distributional discrepancy between generated samples and a held-out test split (20\% of the data).

We also consider \textbf{Genomes} from the 1000 Genomes Project \citep{1000genomes2015}.
Following prior studies \citep{yelmen2023deep,ahronoviz2024genome}, we focus on $D=10{,}000$ biallelic SNPs on chromosome~6 (a 3~Mbp region including HLA genes), encoded as binary sequences.
We use 4,004 genomes for training and 1,002 for testing (stratified by continental group).

\paragraph*{Baselines and Model architectures.} We compare against DDIM~\citep{song2020ddim} and flow matching (\textsc{FM})~\citep{lipman2022flow,liu2022flow} as iterative baselines, and WGAN-GP~\citep{gulrajani2017improved} as a one-shot baseline. 
{For MNIST and CIFAR-10, we use the \method{}--MAP version;
the algorithm and implementation details are given in Supplementary
Algorithm~\ref{alg:magt-map-bank} and Appendix~\ref{sec:supp-real-data}.}
\textbf{Superconduct:} all methods use a 5-hidden-layer ReLU MLP of width 512; diffusion and flow matching additionally concatenate a 32-dimensional time embedding, and the GAN generator takes a 64-dimensional latent input. 
\textbf{Genomes:} diffusion and \method{} use closely matched 1D U-Net backbones following genomic diffusion~\citep{kenneweg2025generating}; \method{} further includes a linear projection from the low-dimensional latent input to the channel dimension expected by the U-Net.

{Detailed configurations and sensitivity analyses are reported in
Supplementary Appendix~\ref{sec:supp-real-data}.}

\paragraph*{Quantitative results and interpretation.}
{As summarized in Table~\ref{tab:mnist_cifar10}, the
\method{}--MAP configuration obtains an MNIST FID of $3.7779$,
compared with $9.9784$ for DDIM, $5.1364$ for flow matching, and $8.7263$ for
WGAN-GP.  Its measured time for the evaluation batch is $0.25$ seconds, versus
$82.11$ seconds for DDIM and $30.18$ seconds for flow matching.  On CIFAR-10,
flow matching has the lower FID ($11.10$ versus $11.66$), while
\method{}--MAP has the higher Inception Score ($8.26$ versus $7.60$).
Sampling $10{,}000$ CIFAR-10 images takes $9.04$ seconds, compared
with $237.44$ seconds for DDIM and $584.96$ seconds for flow matching.
Training costs for MNIST and CIFAR-10 are reported in NVIDIA A100 GPU-hours.
On both image datasets, \method{}--MAP and flow matching have comparable
training costs, both higher than those of WGAN-GP, while \method{}--MAP
has substantially lower inference cost for sample generation than flow matching.
WGAN-GP samples faster, whereas \method{}--MAP achieves lower FID on both datasets.}

{On the tabular benchmarks, fixed-base \method{}--MC reaches
$W_2=0.0998$ on Superconduct, a reduction of $20.98\%$
from flow matching and $74.94\%$ from DDIM.  On Genomes it reaches
$W_2=0.1688$, reducing the DDIM value by $43.60\%$ and the WGAN-GP value by
$65.63\%$.  On both datasets, \method{}--MC also samples faster than DDIM,
while WGAN-GP has the lowest sampling time.
These benchmarks evaluate a low-dimensional transport in ambient
dimensions 81 and $10{,}000$, respectively.}

\begin{table}[!t]
\centering
\captionsetup{font=small}
\caption{{Fidelity, sampling cost, and measured training cost on real-data benchmarks.
Boldface marks the best observed fidelity or discrepancy value in each row;
``--'' denotes an unavailable measurement.}}
\label{tab:mnist_cifar10}
\small{
\setlength{\tabcolsep}{4.5pt}
\renewcommand{\arraystretch}{0.85}
\begin{tabular}{l l c c c c}
\toprule
Dataset & Quantity & \method{} & DDIM & FM & {WGAN-GP} \\
\midrule
\multirow{3}{*}{MNIST} 
& FID $\downarrow$ 
& {\textbf{3.7779}} & {9.9784} & {5.1364} & {8.7263}\\
& {Training (GPU-h)}
& {0.92} & {1.54} & {0.99} & {0.19}\\
& Sampling (s) $\downarrow$ 
& 0.25 & 82.11 & 30.18 & 0.19\\
\midrule
\multirow{4}{*}{{CIFAR-10}}
& {FID $\downarrow$}
& {11.66} & {38.10} & {\textbf{11.10}} & {38.15}\\
& {IS $\uparrow$}
& {\textbf{8.26}} & {6.17} & {7.60} & {5.42}\\
& {Training (GPU-h)}
& {4.04} & {0.56} & {4.41} & {0.43}\\
& {Sampling (s) $\downarrow$}
& {9.04} & {237.44} & {584.96} & {1.68}\\
\midrule
\multirow{2}{*}{Superconduct} 
& $W_2$ $\downarrow$ 
& \textbf{0.0998} & 0.3982 & 0.1263 & 0.1443\\
& Time (s) $\downarrow$ 
& 0.0020 & 0.0974 & 0.0750 & 0.0013\\
\midrule
\multirow{2}{*}{Genomes} 
& $W_2$ $\downarrow$ 
& \textbf{0.1688} & 0.2993 & -- & 0.4911\\
& Time (s) $\downarrow$ 
& 1.63 & 980.34 & -- & 0.11\\
\bottomrule
\end{tabular}}
\end{table}

Additional representative samples generated by \method{} are provided in
Supplementary Appendix~\ref{sec:supp-generated-samples}.

\FloatBarrier
\section{Discussion}
\label{sec:discussion}

{\method{} uses a single-level denoising objective to fit a
generative transport while retaining one-step sampling.  Across the benchmarks,
its fidelity is competitive with the iterative baselines considered here.
Once $h$ is fitted, generation only draws $U\sim\pi$ and evaluates $h(U)$;
no reverse-time trajectory is integrated at sampling.}

{Gaussian smoothing gives the generated law a positive ambient
density, and the posterior identity expresses its score through $h$.  Score
matching can therefore train the transport while generation remains the direct
pushforward $P_h=h_\#\pi$.  Theorem~\ref{thm:local-erm-transfer} separates
population approximation, statistical estimation, and exceptional-event
errors.  For the localized spline-RePU class, bounding these terms yields
the minimax rate for the unsmoothed law.  The choice of smoothing level
also affects the accuracy of the finite-anchor approximation.
Larger $t$ spreads the latent posterior weights but removes fine
structure; smaller $t$ reduces smoothing bias but can concentrate the weights
and increase finite-anchor error.  Prior-sampled MC, QMC, and MAP/Laplace
proposals provide different accuracy--cost trade-offs across these regimes.
In practice, $t$ and the anchor budget $K$ can be selected jointly by held-out
performance, with effective sample size used to diagnose weight degeneracy.}

{The present theory treats regular embeddings of a compact source and a localized RePU class
whose smooth multiresolution modules can be analyzed explicitly.  The ReLU
networks used in the experiments are not covered by this exact theorem;
extending the sharp rate to unrestricted, fully trained ReLU transports remains
open.  Singular support geometry, noncompact bases, and likelihood evaluation
without latent inversion also require additional work.  Further work could
improve anchor efficiency through learned proposals and extend the framework
to conditional generation and higher-resolution images.  Hybrid samplers that
use \method{} for initialization followed by a short refinement chain could
also offer a flexible trade-off between sampling cost and generation quality.}

\FloatBarrier
\bigskip
\section*{Supplementary Materials}
\addcontentsline{toc}{section}{Supplementary Materials}

\allowdisplaybreaks[2]
\setlength{\parskip}{3pt plus 1pt minus 1pt}
\setlength{\abovedisplayskip}{5pt plus 1pt minus 2pt}
\setlength{\belowdisplayskip}{5pt plus 1pt minus 2pt}
\setlength{\abovedisplayshortskip}{2pt plus 1pt minus 1pt}
\setlength{\belowdisplayshortskip}{3pt plus 1pt minus 1pt}
\setlength{\jot}{1pt}
\setlength{\textfloatsep}{10pt plus 2pt minus 2pt}
\setlength{\floatsep}{8pt plus 2pt minus 2pt}
\setlength{\intextsep}{8pt plus 2pt minus 2pt}

\appendix
\beginsupplement
\renewcommand{\theHtheorem}{supp.\arabic{theorem}}
\renewcommand{\theHlemma}{supp.\arabic{lemma}}
\renewcommand{\theHassumption}{supp.\arabic{assumption}}
\renewcommand{\theHdefinition}{supp.\arabic{definition}}
\renewcommand{\theHproposition}{supp.\arabic{proposition}}
\renewcommand{\theHfigure}{supp.\arabic{figure}}
\renewcommand{\theHtable}{supp.\arabic{table}}
\renewcommand{\theHequation}{supp.\arabic{equation}}

Appendix~\ref{appendix_E} provides experimental details and additional
results. Appendix~\ref{sec:mcapprox} analyzes MC, QMC, and MAP approximations
of the posterior mean, score, and loss, and
Appendix~\ref{sec:implementation} presents the training algorithm.
Appendix~\ref{sec:proofs} contains the proofs of the main results, together
with the localized RePU construction, data-driven localization, minimax
lower bound, and a fixed-proposal finite-anchor extension.

\section{Experiment details and additional results}
\label{appendix_E}

\subsection{Toy Datasets}\label{sec:supp-toy-details}

For each dataset, observations are obtained by adding independent Gaussian
jitter $\varepsilon\sim\mathcal N(0,\sigma^2I_D)$ to the noiseless point
$\mathbf x\in\mathbb R^D$ defined below. The \texttt{jitter} parameter
controls $\sigma$.

\paragraph*{Rings2d.}
Draw $k\sim\Unif\{1,\dots,K\}$ and $\theta\sim\Unif[0,2\pi]$, and set
$\mathbf x=r_k(\cos\theta,\sin\theta)^\top$ for fixed radii
$r_1,\ldots,r_K$ (e.g., equally spaced in a fixed interval).
The default noise level is $\sigma=0.02$.

\paragraph*{Spiral2d.}
Draw $t\sim\Unif[t_{\min},t_{\max}]$ and set
$\mathbf x=r(t)(\cos t,\sin t)^\top$, with an increasing radius
$r(t)$, e.g., $r(t)=a+bt$ for $a,b>0$.

\paragraph*{Moons2d.}
Draw $c\sim\Unif\{0,1\}$ and $\theta\sim\Unif[0,\pi]$, and set
\[
\mathbf x=
\begin{cases}
(\cos\theta,\sin\theta)^\top, & c=0,\\
(1-\cos\theta,1-\sin\theta-\delta)^\top, & c=1,
\end{cases}
\]
where the vertical separation $\delta>0$ is fixed across experiments.

\paragraph*{Checker2d.}
Partition $[-1,1]^2$ into $m\times m$ cells of width $w=2/m$.
Draw $(i,j)$ uniformly from $\{0,\ldots,m-1\}^2$ with $i+j$ even, then
draw $\mathbf x$ uniformly from the cell
\[
[-1+iw,-1+(i+1)w]\times[-1+jw,-1+(j+1)w].
\]
The default noise level is $\sigma=0.02$.

\paragraph*{Helix3d.}
Draw $t\sim\mathcal N(0,\pi^2)$ and set
$\mathbf x=(\cos t,\sin t,t/2)^\top$, with noise level $\sigma=0.02$.

\paragraph*{Torus3d.}
For major radius $R=2$, minor radius $r=1$, and independent
$u,v\sim\Unif[0,2\pi]$, set
\[
\mathbf x=\bigl((R+r\cos v)\cos u,\,
(R+r\cos v)\sin u,\,r\sin v\bigr)^\top.
\]

The toy-data results appear in Table~\ref{tab:wd_time} and
Figure~\ref{fig:toyex}.

\subsection{Real data}\label{sec:supp-real-data}

{Table~\ref{tab:real-config} lists the settings used for the four
real-data benchmarks.}

\begin{table}[H]
\centering
\caption{{\method{} settings for the real-data benchmarks.}}
\label{tab:real-config}
\scriptsize
\setlength{\tabcolsep}{4.5pt}
\renewcommand{\arraystretch}{0.95}
\begin{tabular}{l l c c c c c}
\toprule
{Dataset} & {Variant} & {Base $\pi$} &
{$K$} & {\makecell{$t$\\candidates}} &
{\makecell{$d$\\candidates}} & {Selection metric} \\
\midrule
{MNIST} 
& {MAP}
& {\makecell{$\cN(0,I_{64})\times$\\$\mathrm{Unif}\{e_1,\ldots,e_{16}\}$}}
& {4096}
& {\makecell{$\{0.003,0.008,0.020,0.050,$\\$0.100,0.200,0.400,0.700\}$}}
& {80} 
& {FID} \\
{CIFAR-10}
& {MAP}
& {$\cN(0,I_{128})$}
& {16384}
& {$\{0.01,0.1\}$}
& {128}
& {FID} \\
{Superconduct}
& {MC}
& {$\cN(0,I_d)$}
& {49152}
& {$\{0.1,\ldots,0.9\}$}
& {$\{8,16,32,64,128\}$}
& {$W_2$} \\
{Genomes}
& {MC}
& {$\cN(0,I_{128})$}
& {2048}
& {$\{0.1,0.2,\ldots,0.9\}$}
& {128}
& {$W_2$} \\
\bottomrule
\end{tabular}
\end{table}

\paragraph*{{MAP implementation.}}
{The MNIST and CIFAR-10 \method{}--MAP variants both use the
alternating procedure in Algorithm~\ref{alg:magt-map-bank}; their
dataset-specific settings are listed in Table~\ref{tab:real-config}.}

\begin{algorithm}[H]
\caption{{Importance-corrected alternating \method{}--MAP.}}
\label{alg:magt-map-bank}
{\scriptsize\setlength{\baselineskip}{0.88\baselineskip}
\begin{algorithmic}[1]
\Require Training observations \(\{y_i\}_{i=1}^{N_B}\), prior \(\pi\), stage lengths \(E_1,E_2\), candidate budgets \(K_1,K_2\), score intervals \([t_{r,\rm lo},t_{r,\rm hi}]\), MAP level \(t_{\rm MAP}\), and proposal parameters \(\rho_{\rm IS},\lambda_{\rm MAP},\gamma_{\rm cat}\).
\Ensure Decoder \(h_\theta\) and latent bank
\(\mathcal B=\{u_i\}_{i=1}^{N_B}\).
\State Randomly initialize \(h_\theta\) and independently draw \(u_i\sim\pi\) for \(i=1,\ldots,N_B\).
\For{\(e=1,\ldots,E_1\)}
    \For{each minibatch \(\{y_b\}_{b=1}^{B}\)}
        \State Draw \(t\sim\mathrm{Unif}[t_{1,\rm lo},t_{1,\rm hi}]\), form \(y_{b,t}=\alpha_ty_b+\sigma_t\epsilon_b\), and draw \(u_k\sim q=\pi\), \(k=1,\ldots,K_1\).
        \State Compute \(\widetilde w_{bk}\propto \pi(u_k)p_\theta(y_{b,t}\mid u_k)/q(u_k)\), \(\widetilde m_b=\sum_k\widetilde w_{bk}h_\theta(u_k)\), and \(\widehat s_b=(\alpha_t\widetilde m_b-y_{b,t})/\sigma_t^2\).
        \State Update \(\theta\) by the score loss \(\|\widehat s_b-(\alpha_ty_b-y_{b,t})/\sigma_t^2\|_2^2\).
    \EndFor
\EndFor
\For{\(e=1,\ldots,E_2\)}
    \For{each indexed minibatch \(\{(y_b,u_b)\}_{b=1}^{B}\)}
        \State Form \(y_{b,t_{\rm MAP}}\) as above and refine each \(u_b\) using the MAP objective \(\Phi\) in Appendix~\ref{sec:map-proposal} at \(t=t_{\rm MAP}\), including a categorical rescan on the first alternating epoch, and persist the accepted state.
        \State Set \(q_{\rm MAP}^{(e)}(u)=B^{-1}\sum_{b=1}^{B}q(u\mid y_{b,t_{\rm MAP}})\), with each proposal constructed using the current decoder as in Appendix~\ref{sec:map-proposal}, and form \(q^{(e)}=\rho_{\rm IS}\pi+(1-\rho_{\rm IS})q_{\rm MAP}^{(e)}\).
        \State Draw \(K_2\) candidates from \(q^{(e)}\), compute \(\widetilde w_{bk}\propto\pi(u_k)p_\theta(y_{b,t}\mid u_k)/q^{(e)}(u_k)\), and update \(\theta\) with the same posterior-mean score loss at \(t\sim\mathrm{Unif}[t_{2,\rm lo},t_{2,\rm hi}]\).
    \EndFor
\EndFor
\State Select \(\sigma_{\rm jit}\) on held-out training images; draw \(I\sim\mathrm{Unif}\{1,\ldots,N_B\}\) and \(Z\sim\mathcal N(0,I_d)\); return \(h_\theta(u_I+\sigma_{\rm jit}P_{\rm cont}Z)\).
\end{algorithmic}
}
\end{algorithm}

\paragraph*{{Real-data smoothing-level sensitivity.}}
{Eight independent MNIST runs use adjacent score intervals with
lower endpoints $t\in\{0.003,0.008,0.020,0.050,0.100,0.200,0.400,0.700\}$.
Validation selects the decoder epoch and jitter before one test evaluation.
In the left panel of Figure~\ref{fig:mnist-t-sensitivity}, MNIST reaches a
minimum FID of $3.8892$ at $t=0.020$, remains at most $4.32$ through $t=0.1$,
and rises to $19.8649$ at $t=0.7$.
This curve is a sensitivity analysis; the main-table MNIST model was fitted
independently.  The right panel reports Superconduct held-out $W_2$ over
$d\in\{8,16,32,64,128\}$ and
$t\in\{0.1,\ldots,0.9\}$; the minimum is $0.1029$ at $(16,0.9)$.}

\begin{figure}[H]
\centering
\begin{minipage}[c]{0.56\linewidth}\centering
{\setlength{\fboxsep}{1pt}\fcolorbox{black}{white}{\safeincludegraphics[width=0.98\linewidth]{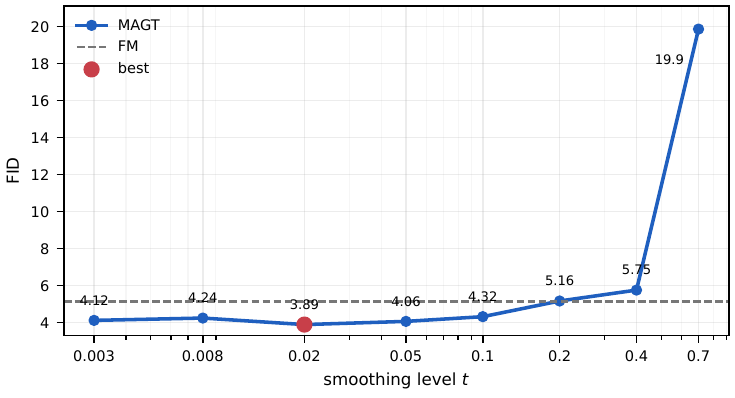}}}
\end{minipage}\hfill
\begin{minipage}[c]{0.40\linewidth}\centering
{\setlength{\fboxsep}{1pt}\fcolorbox{black}{white}{\safeincludegraphics[width=0.98\linewidth]{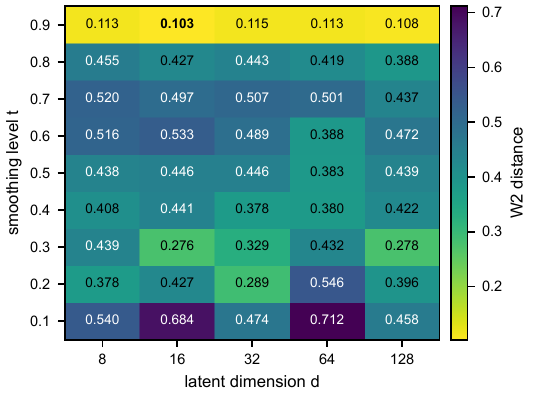}}}
\end{minipage}
\caption{{Smoothing sensitivity: MNIST FID (left) and
Superconduct held-out $W_2$ over $(d,t)$ (right).}}
\label{fig:mnist-t-sensitivity}\label{fig:superconduct-dt-sensitivity}
\end{figure}

\paragraph*{{MNIST anchor diagnostics.}}
{On 512 held-out images we compare three empirical candidate
proposals.  The prior-draw proposal uses latent anchors sampled independently
from $\pi$.  The initial-MAP proposal uses per-image MAP anchors obtained before
alternating refinement, whereas the refreshed-MAP proposal uses the persistent
MAP anchors after the decoder and latent bank have been updated alternately as
in Algorithm~1.  We evaluate each proposal over
$K\in\{2048,8192,16384,60000\}$ and $t\in\{0.003,0.005,0.008\}$.  The target
is the observable conditional score
$-\epsilon/\sigma_t$ from~\eqref{eq:ellK-def}.  The left panel of
Figure~\ref{fig:mnist-anchor-score-error} shows the full trajectories over
$K$, while Table~\ref{tab:mnist-anchor-proposal} gives the $K=60{,}000$
values.  Increasing $K$ lowers MSE, and at $K=60{,}000$ the refreshed bank
reduces MSE by $96.2\%$ relative to prior draws.}

\begin{table}[H]
\centering
\caption{{MNIST mean denoising-score MSE ($\times10^{-3}$) for
three empirical candidate banks.}}
\label{tab:mnist-anchor-proposal}
\scriptsize
\setlength{\tabcolsep}{6pt}
\renewcommand{\arraystretch}{0.95}
\begin{tabular}{c c c c}
\toprule
{\(t\)} & {Prior-draw bank} & {Initial MAP bank} & {Refreshed MAP bank} \\
\midrule
{0.003} & {182.812} & {13.526} & {\(\mathbf{6.979}\)} \\
{0.005} & {49.469} & {3.661} & {\(\mathbf{1.889}\)} \\
{0.008} & {13.436} & {0.994} & {\(\mathbf{0.513}\)} \\
\bottomrule
\end{tabular}
\end{table}

\paragraph*{{Fixed-target proposal comparison.}}
{The right panel of
Figure~\ref{fig:fixed-target-proposal-sensitivity} compares prior,
MAP-Laplace, and defensive proposals for a fixed two-dimensional target,
transport, and observation set over
$t\in\{0.05,0.10,0.25\}$ and $K\in\{64,256,1024,4096\}$, with exact $\pi/q$
correction.  Nested grids agree to $1.78\times10^{-13}$ and have boundary mass
at most $5.30\times10^{-12}$.  At $K=1024$, prior versus MAP-Laplace
(MSE, normalized ESS) equals $(37.5,.005)$ versus $(.391,.963)$ at $t=.05$,
$(2.11,.016)$ versus $(.122,.904)$ at $t=.10$, and $(.0606,.091)$ versus
$(.0298,.719)$ at $t=.25$.}

\begin{figure}[H]
\centering
\begin{minipage}[c]{0.48\linewidth}\centering
{\setlength{\fboxsep}{1pt}\fcolorbox{black}{white}{\safeincludegraphics[width=0.98\linewidth]{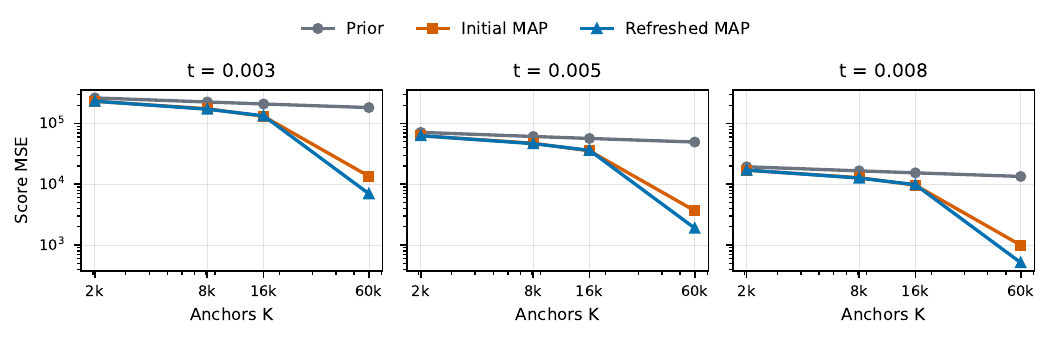}}}
\end{minipage}\hfill
\begin{minipage}[c]{0.48\linewidth}\centering
{\setlength{\fboxsep}{1pt}\fcolorbox{black}{white}{\safeincludegraphics[width=0.98\linewidth]{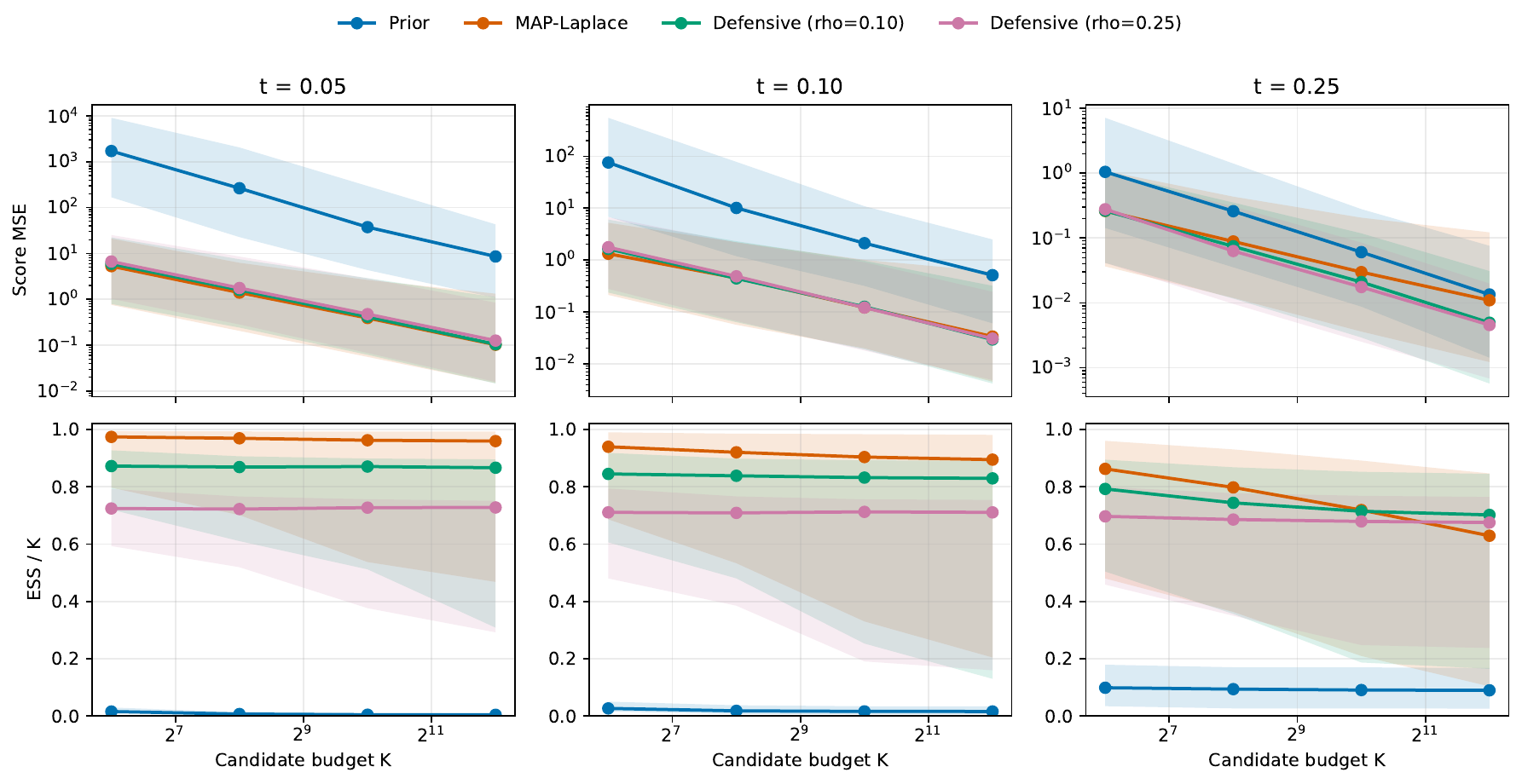}}}
\end{minipage}
\caption{{Anchor diagnostics: MNIST bank/budget MSE (left) and
fixed-target proposal MSE and normalized ESS (right).}}
\label{fig:mnist-anchor-score-error}\label{fig:fixed-target-proposal-sensitivity}
\end{figure}

\paragraph*{{Full CIFAR-10 training and evaluation.}}
{Ten class-specific five-layer decoders (latent and base width 128)
use two 200-epoch alternating stages at $t=0.01$ with anchor budgets 16384 and
8192.  Final banks use 100 jitters per center and uniform class sampling.
FID and IS use 10000 generated and 10000 test images under one Inception-v3
implementation.  Table~\ref{tab:mnist_cifar10} reports the quantitative
comparison, and Figure~\ref{fig:cifar_comparison} shows generated samples.}

\subsection{Examples of generated samples}\label{sec:supp-generated-samples}

{Figures~\ref{fig:mnist} and~\ref{fig:cifar_comparison} give the
qualitative comparisons for MNIST and CIFAR-10, respectively.}

\begin{figure}[H]
 \centering
 {\setlength{\fboxsep}{1pt}\fcolorbox{black}{white}{\safeincludegraphics[width=0.98\linewidth]{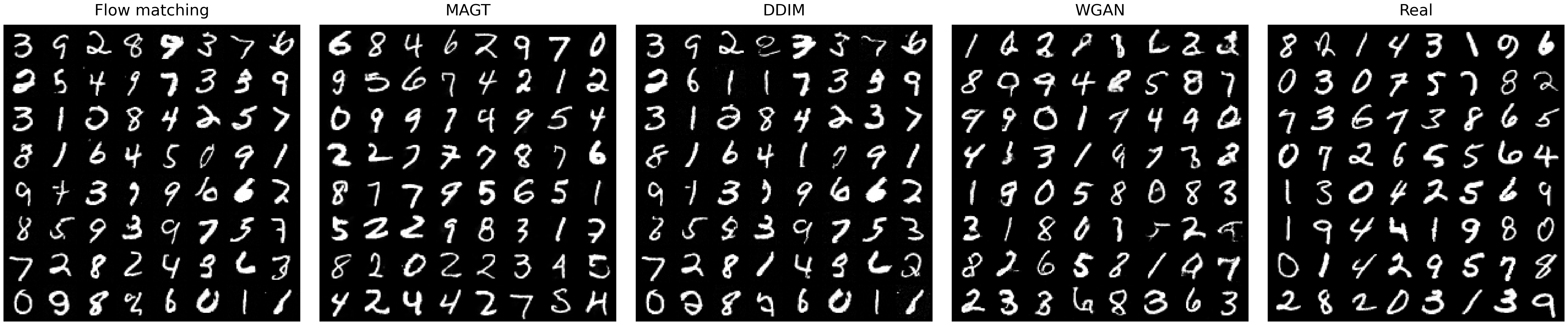}}}
 \caption{{Generated samples from \method{}, FM, DDIM, and
 WGAN-GP on MNIST, together with held-out data.}}
 \label{fig:mnist}
\end{figure}

\begin{figure}[H]
 \centering
 {\setlength{\fboxsep}{1pt}\fcolorbox{black}{white}{\safeincludegraphics[width=0.58\linewidth]{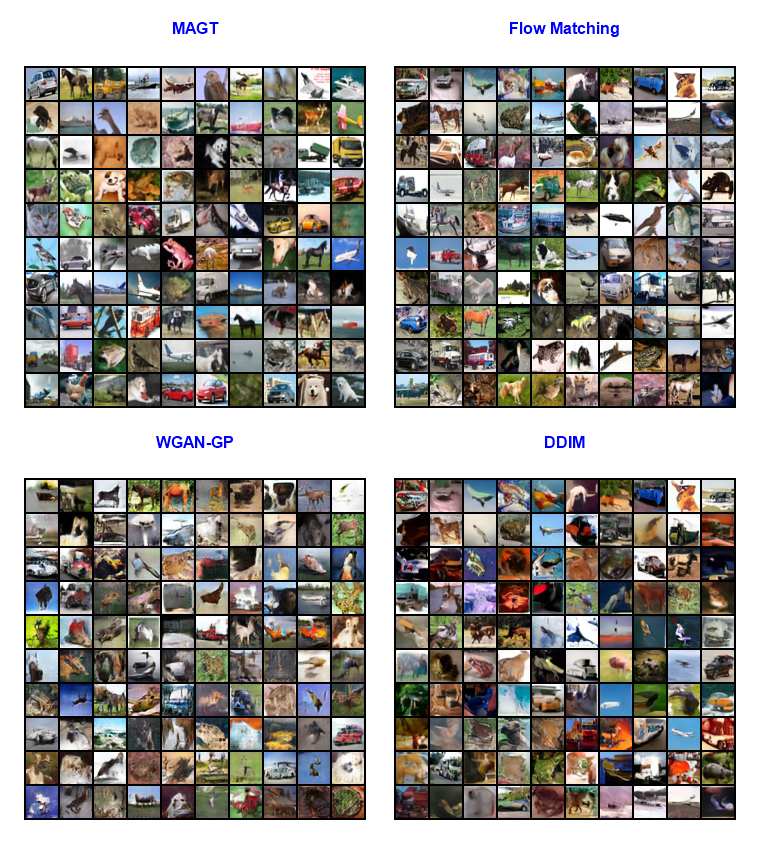}}}
 \caption{{Generated samples from \method{}, FM, DDIM, and
 WGAN-GP on CIFAR-10.}}
 \label{fig:cifar_comparison}
\end{figure}

{Figure~\ref{fig:geno} compares real and generated genomic samples
after PCA is fit separately within each class and applied to both sets.}

\begin{figure}[H]
 \centering
 \captionsetup{font=small,skip=5pt}
 \safeincludegraphics[width=0.88\linewidth]{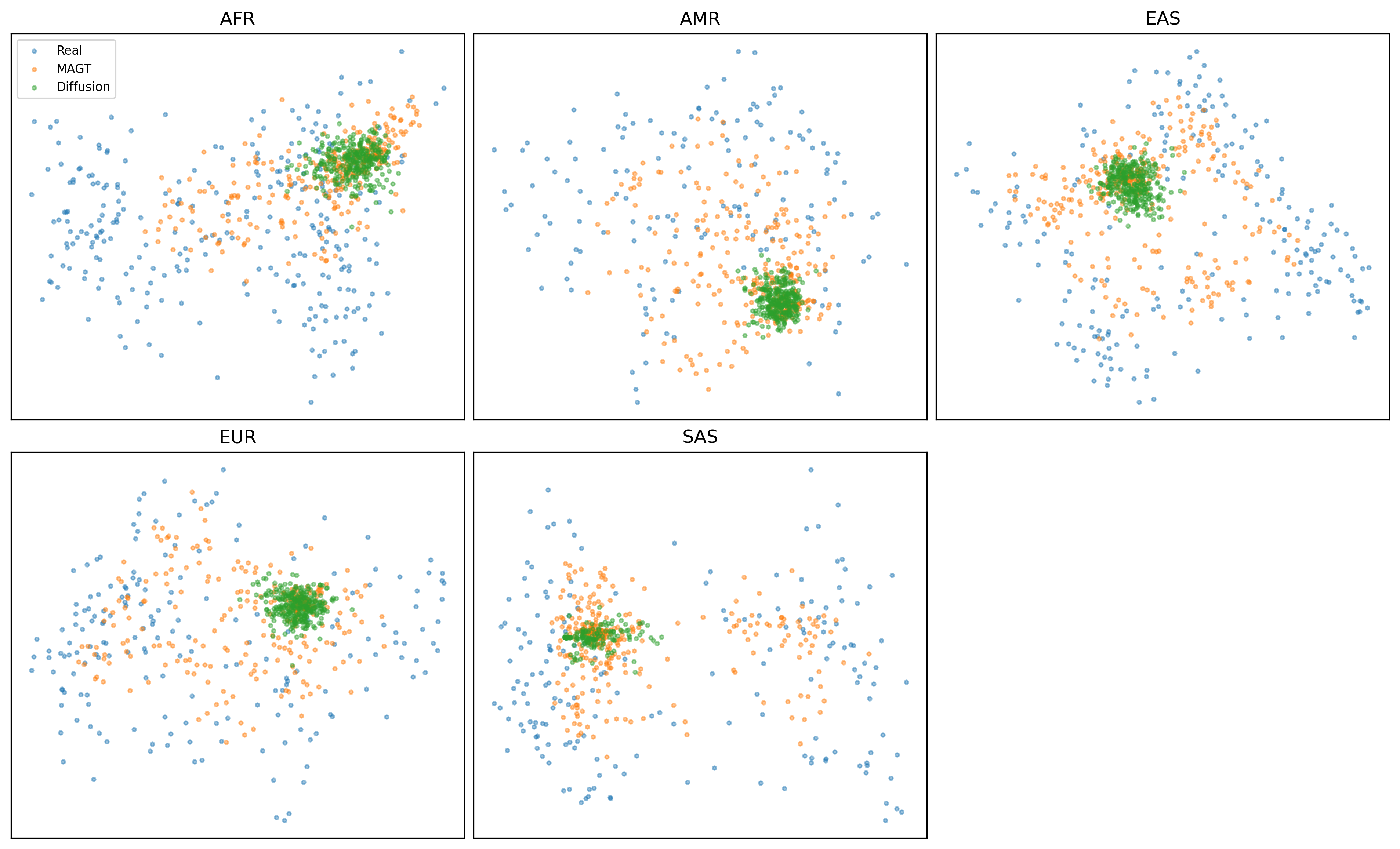}
 \caption{{Class-wise PCA projections of real and generated
 genomic samples for five classes.}}
 \label{fig:geno}
\end{figure}

\section{Practical choices for Monte Carlo approximation}
\label{sec:mcapprox}
{This appendix controls finite-anchor approximation of the
posterior mean, score, and loss for a fixed transport.  MC and MAP share a
self-normalized importance-sampling bound, whereas QMC uses Koksma--Hlawka;
the results retain smoothing, dimension, and proposal
dependence~\citep{owen2013mc,niederreiter1992random,dick2010digital}.}

{Let $m_t(y_t;h)=\mathbb E_h\{h(U)\mid Y_t^h=y_t\}$ and let
$\widetilde m_{t,K}$ be its approximation from~\eqref{eq:posterior-mean-estimator}.  By
\eqref{eq:mixture-score}--\eqref{eq:score-estimator},
\begin{equation}\label{eq:score-mean-error}
\widetilde s_{t,K}(y_t;h,\pi,\tilde\pi)-s_t(y_t;h)
=
\frac{\alpha_t}{\sigma_t^2}\Bigl(\widetilde m_{t,K}(y_t)-m_t(y_t;h)\Bigr),
\end{equation}
so posterior-mean error directly controls score error.}

{Put $r_t(y,y_0)=(\alpha_ty_0-y)/\sigma_t^2$ and
$d_{t,K}(y;h)=\widetilde s_{t,K}(y;h,\pi,\tilde\pi)-s_t(y;h)$.
If \(\|h\|_\infty\vee\|y_0\|_2\le B\), the posterior-mean identity gives
\(\bigl|\ell_K(y,y_0;h)-\|s_t(y;h)-r_t(y,y_0)\|_2^2\bigr|
\le4\alpha_tB\|d_{t,K}(y;h)\|_2/\sigma_t^2
+\|d_{t,K}(y;h)\|_2^2\).
Thus the bounds below control both integrated error and fresh-anchor training.

\paragraph*{Common SNIS bound.}
MC sets the proposal to $\pi$, MAP uses $q(\cdot\mid y)$, and the bias bound
also controls the fresh-anchor criterion.

}

{
\begin{lemma}[Common SNIS mean and loss bounds]
\label{lem:anchor-averaged-loss}
Fix $h,y$ with the candidate posterior
$p_h(\cdot\mid y)\ll\tilde\pi(\cdot\mid y)$, and set
$\omega_h=dp_h(\cdot\mid y)/d\tilde\pi(\cdot\mid y)$ and
$D_{2,h}(y)=\mathbb E_{\tilde\pi}\omega_h(U;y)^2$.  Draw $K$ independent
proposal anchors and form the self-normalized mean $\widetilde m_{t,K}(y)$
(use any vector of norm at most $B$ if its denominator vanishes).
If \(\|h\|_\infty\le B\), then
\begin{align}
\mathbb E_{\mathrm{anc}}
\|\widetilde m_{t,K}(y)-m_t(y;h)\|_2^2
&\le \frac{32B^2}{K}D_{2,h}(y), \label{eq:snis-mse-D2}\\
\left\|
\mathbb E_{\mathrm{anc}}
\{\widetilde m_{t,K}(y)-m_t(y;h)\}
\right\|_2
&\le \frac{16B}{K}D_{2,h}(y). \label{eq:snis-bias-D2}
\end{align}
Let $\ell_{K,t}^{\mathrm{av}}=\mathbb E_{\mathrm{anc}}\ell_K$.
If also \(\|y_0\|_2\le B\), then
\begin{equation}\label{eq:anchor-averaged-loss-pointwise}
\left|
\ell_{K,t}^{\mathrm{av}}(y,y_0;h)-\|s_t(y;h)-r_t(y,y_0)\|_2^2
\right|
\le
\frac{96\alpha_t^2B^2}{K\sigma_t^4}D_{2,h}(y).
\end{equation}
For any joint law $P$ of $(Y_t,Y_0)$ with $\|Y_0\|_2\le B$, if
\(\mathfrak D_t(P,\mathcal H):=
\sup_{h\in\mathcal H}\mathbb E_P D_{2,h}(Y_t)<\infty\),
then
\begin{equation}\label{eq:anchor-averaged-risk}
\sup_{h\in\mathcal H}
\left|\mathbb E_P\ell_{K,t}^{\mathrm{av}}(Y_t,Y_0;h)
-\mathbb E_P\|s_t(Y_t;h)-r_t(Y_t,Y_0)\|_2^2\right|
\le
\frac{96\alpha_t^2B^2}{K\sigma_t^4}
\mathfrak D_t(P,\mathcal H).
\end{equation}
\end{lemma}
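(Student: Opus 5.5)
The plan is to use the standard ratio decomposition for self-normalized importance sampling. Write $\omega_j=\omega_h(U^{(j)};y)$ with $U^{(j)}\sim\tilde\pi(\cdot\mid y)$ i.i.d. Then $\mathbb E\omega_j=1$ and $\mathbb E\omega_j^2=D_{2,h}(y)$. The unnormalized weights in \eqref{eq:is-weights} differ from $\omega_j$ only by the constant $p_h^t(y)$, which cancels in the ratio \eqref{eq:posterior-mean-estimator}. Center the anchors by setting $g_j=h(U^{(j)})-m_t(y;h)$, so that $\|g_j\|_2\le 2B$ and $\mathbb E\,\omega_jg_j=0$. Put $\bar\omega=K^{-1}\sum_j\omega_j$ and $N=K^{-1}\sum_j\omega_jg_j$. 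Then $\widetilde m_{t,K}-m_t=N/\bar\omega$ whenever $\bar\omega>0$.

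For the second moment \eqref{eq:snis-mse-D2}, I would avoid dividing by $\bar\omega$ directly. Since $\|N/\bar\omega\|_2\le 2B$ always holds (it is a convex combination of the $g_j$), and the fallback value on $\{\bar\omega=0\}$ also has error at most $2B$, we can split as follows. On $\{\bar\omega\ge 1/2\}$ we have $\|N/\bar\omega\|^2\le 4\|N\|^2$. On $\{\bar\omega<1/2\}$ the error is at most $2B$, so its squared error is at most $4B^2\,\mathbf 1\{|\bar\omega-1|>1/2\}\le 16B^2(\bar\omega-1)^2$. Taking expectations and using independence gives $\mathbb E\|N\|^2=K^{-1}\mathbb E\omega^2\|g\|^2\le 4B^2D_2/K$ and $\mathbb E(\bar\omega-1)^2\le D_2/K$. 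Summing, $16B^2D_2/K+16B^2D_2/K=32B^2D_2/K$, which is the stated bound.

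For the bias \eqref{eq:snis-bias-D2}, I would use the exact algebraic identity $N/\bar\omega=N-N(\bar\omega-1)+(\bar\omega-1)^2N/\bar\omega$ on $\{\bar\omega>0\}$, extended suitably on the null-denominator event. The first term has mean zero. The second term satisfies $\|\mathbb E N(\bar\omega-1)\|=K^{-1}\|\mathbb E\,\omega g(\omega-1)\|$, because cross terms vanish by independence; this is at most $2BD_2/K$. The third term is bounded by $2B\,\mathbb E(\bar\omega-1)^2\le 2BD_2/K$. The event $\bar\omega=0$ needs care: there the error is at most $2B$, and the event is contained in $\{|\bar\omega-1|\ge1\}$, so Chebyshev gives an extra contribution of $2BD_2/K$. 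The constants come in well under $16$, so this step carries the slack. I expect the main obstacle to be bookkeeping the null-denominator case consistently in both bounds.

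For the loss bound \eqref{eq:anchor-averaged-loss-pointwise}, expand $\ell_K=\|a+d\|^2$, where $a=s_t-r_t$ and $d=d_{t,K}=(\alpha_t/\sigma_t^2)(\widetilde m-m_t)$ by \eqref{eq:score-mean-error}. Then $\ell^{\rm av}_{K,t}-\|a\|^2=2a^\top\mathbb E d+\mathbb E\|d\|^2$. Since $s_t=(\alpha_tm_t-y)/\sigma_t^2$ by \eqref{eq:mixture-score}, we have $a=\alpha_t(m_t-y_0)/\sigma_t^2$, so $\|a\|\le 2\alpha_tB/\sigma_t^2$. The bias bound then gives $|2a^\top\mathbb Ed|\le 64\alpha_t^2B^2D_2/(K\sigma_t^4)$, and the MSE bound gives $\mathbb E\|d\|^2\le 32\alpha_t^2B^2D_2/(K\sigma_t^4)$. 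These total $96\alpha_t^2B^2D_2/(K\sigma_t^4)$. Finally, \eqref{eq:anchor-averaged-risk} follows by integrating the pointwise bound over $P$, using Fubini since the anchors are independent of $(Y_t,Y_0)$, and taking the supremum over $h\in\mathcal H$.
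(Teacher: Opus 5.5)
Your proposal is correct. The second-moment bound, the loss expansion $\ell_K=\|a+d\|_2^2$ with $\|a\|_2\le 2\alpha_tB/\sigma_t^2$, and the final integration over $P$ follow the paper's proof essentially step for step. The paper likewise splits at $E=\{\widehat b_K\ge 1/2\}$ and uses Chebyshev on $E^c$ to obtain $16+16=32$.

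The one place you take a different route is the bias bound.

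\textbf{The paper's argument.} It expands only to first order on $E$, writing $\widehat a_K/\widehat b_K=\widehat a_K-\widehat a_K(\widehat b_K-1)/\widehat b_K$. It then uses $\mathbb E\widehat a_K=0$ to trade $\mathbb E\,\widehat a_K\mathbf 1_E$ for $-\mathbb E\,\widehat a_K\mathbf 1_{E^c}$. Cauchy--Schwarz bounds each of the two $\widehat a_K$ terms by $4BD_{2,h}/K$, and the fallback on $E^c$ contributes $2B\,\mathbb P(E^c)\le 8BD_{2,h}/K$. That is where the constant $16$ comes from.

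\textbf{Your argument.} You use the exact second-order identity $1/\bar\omega=1-(\bar\omega-1)+(\bar\omega-1)^2/\bar\omega$. You compute the cross term exactly by independence as $K^{-1}\mathbb E\,\omega^2 g$, and you bound the remainder through the convex-combination bound $\|N/\bar\omega\|_2\le 2B$. This avoids the truncation event altogether and gives the sharper constant $6B$.

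In fact the constant is $4B$, and your worry about the null-denominator case disappears. On $\{\bar\omega=0\}$ all weights vanish, so $N=0$ and $(\bar\omega-1)^2=1$. The fallback error, of norm at most $2B$, is therefore already covered by the remainder bound $2B(\bar\omega-1)^2$, and the separate Chebyshev term is unnecessary.

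\textbf{Comparison.} The paper's version reuses the same event $E$ and the same two moment bounds as its MSE argument, so it is more uniform. Yours is tighter and handles the degenerate denominator more cleanly. Either suffices for \eqref{eq:snis-bias-D2}, and hence for the factor $64$ in the cross term of the loss bound.
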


\begin{proof}
Suppress \(h,y,t\) and put \(m=m_t(y;h)\).  For proposal draws
\(U_1,\ldots,U_K\), define
\(\widehat b_K=K^{-1}\sum_{k=1}^K\omega_h(U_k;y)\) and
\(\widehat a_K=K^{-1}\sum_{k=1}^K
\omega_h(U_k;y)\{h(U_k)-m\}\).
Then \(\mathbb E\widehat a_K=0\),
\(\mathbb E\|\widehat a_K\|_2^2\le4B^2D_{2,h}(y)/K\), and
\(\mathbb E|\widehat b_K-1|^2\le D_{2,h}(y)/K\).
Let \(E=\{\widehat b_K\ge1/2\}\).  On \(E\),
\(\widetilde m_{t,K}(y)-m=\widehat a_K/\widehat b_K\), whereas on
\(E^c\) the error has norm at most \(2B\).  Moreover,
\(\mathbb P(E^c)\le4D_{2,h}(y)/K\) by Chebyshev's inequality.  Consequently,
\(\mathbb E\|\widetilde m_{t,K}(y)-m\|_2^2
\le4\mathbb E\|\widehat a_K\|_2^2+4B^2\mathbb P(E^c)
\le32B^2D_{2,h}(y)/K\), proving~\eqref{eq:snis-mse-D2}.

For the bias, expand $\widehat a_K/\widehat b_K$ on $E$, use
$\mathbb E\widehat a_K=0$, and split off $E^c$.  Cauchy--Schwarz bounds the two
$\widehat a_K$ terms by $4BD_{2,h}(y)/K$ each and the remaining term by
$2B\mathbb P(E^c)\le8BD_{2,h}(y)/K$, proving~\eqref{eq:snis-bias-D2}.

The posterior identity gives
\(\|s_t(y;h)-r_t(y,y_0)\|_2\le2\alpha_tB/\sigma_t^2\).
Together with~\eqref{eq:score-mean-error} and
\eqref{eq:snis-mse-D2}--\eqref{eq:snis-bias-D2},
$|\mathbb E\ell_K-\|s_t-r_t\|_2^2|
\le2\|s_t-r_t\|\|\mathbb Ed\|+
\mathbb E\|d\|^2$ gives~\eqref{eq:anchor-averaged-loss-pointwise}.
Averaging over $P$ proves~\eqref{eq:anchor-averaged-risk}.
\end{proof}

The absolute value in~\eqref{eq:anchor-averaged-loss-pointwise} follows the
fresh-anchor average, so its cross term uses the $K^{-1}$ bias bound.}

\subsection{\method{}-MC}
\method{}-MC sets $\tilde\pi=\pi$ in~\eqref{eq:is-weights}; the importance
ratio cancels and weights are proportional to the Gaussian likelihood.

{Prior anchors are parallel but can collapse at small $\sigma_t$;
QMC spreads anchors deterministically and MAP concentrates them near modes.}

{
For a fixed transport \(h\), write \(\mu_h=h_\#\pi\).  The only lower-mass
condition needed for the Monte Carlo bound is
\begin{equation}\label{eq:anchor-small-ball}
\mu_h\{x':\|x'-x\|\le r\}\ge c_\mu r^d,
\qquad x\in\operatorname{supp}(\mu_h),\quad 0<r\le r_\mu,
\end{equation}
for fixed \(c_\mu,r_\mu>0\).

\begin{lemma}[$K$-approximation error]\label{thm:gen}
Consider the estimator with $\tilde\pi=\pi$.  Let
\(\Y_t\sim p_t^h\), draw i.i.d.\ anchors
$\bm U^{(1)},\dots,\bm U^{(K)}\sim\pi$ independently of $\Y_t$, and form the transport-based score
estimate $\widetilde s_{t,K}(\Y_t;h,\pi,\pi)$ via~\eqref{eq:score-estimator}.
Suppose $\|h\|_\infty\le B$ and~\eqref{eq:anchor-small-ball} holds.  Then, for
every $K\ge1$ and every VP level $t\in(0,1)$, there is a constant $C>0$
depending only on $(B,d,D,c_\mu,r_\mu)$ such that
\begin{equation}\label{eq:K-general}
\E\,
\big\|\widetilde s_{t,K}(\Y_t;h,\pi,\pi)-s_t(\Y_t;h)\big\|_2^2
\ \le\ \frac{C\,\alpha_t^2}{K\,\sigma_t^{d+4}}\,,
\end{equation}
where the expectation is over both $\Y_t$ and the anchors, and
$s_t(\cdot;h):=\nabla_{\y}\log p_t^h(\y)$ is the ideal (infinite-anchor)
mixture score at level $t$.
\end{lemma}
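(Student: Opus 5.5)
Our plan is to reduce the claim to Lemma~\ref{lem:anchor-averaged-loss} with $\tilde\pi=\pi$. By~\eqref{eq:score-mean-error}, the score error equals $(\alpha_t/\sigma_t^2)$ times the posterior-mean error. Applying~\eqref{eq:snis-mse-D2} conditionally on $\Y_t=y$ gives
\[
\E\|\widetilde s_{t,K}-s_t\|_2^2\le \frac{32B^2\alpha_t^2}{K\sigma_t^4}\,\E_{\Y_t\sim p_t^h}D_{2,h}(\Y_t).
\]
It therefore suffices to show that $\E_{p_t^h}D_{2,h}(\Y_t)\le C\sigma_t^{-d}$.

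With prior proposal, $\omega_h(u;y)=\phi(y;\alpha_th(u),\sigma_t^2I)/p_t^h(y)$. Hence
\[
D_{2,h}(y)=\frac{\int\phi(y;\alpha_th(u),\sigma_t^2 I)^2\pi(du)}{p_t^h(y)^2},
\]
and integrating against $p_t^h(y)\,dy$ yields
\[
\E D_{2,h}(\Y_t)=\int\frac{\E_{x\sim\mu_h}\phi(y;\alpha_tx,\sigma_t^2I)^2}{\E_{x'\sim\mu_h}\phi(y;\alpha_tx',\sigma_t^2I)}\,dy.
\]
We use $\phi^2(y;m,\sigma^2I)=(4\pi\sigma^2)^{-D/2}\phi(y;m,\sigma^2I/2)$. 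Swapping the integral with $\E_x$ (Tonelli), we bound for each fixed $x\in\operatorname{supp}\mu_h$ the quantity
\[
I(x):=\int\frac{\phi(y;\alpha_tx,\sigma_t^2I)^2}{p_t^h(y)}\,dy .
\]
The denominator is bounded below by restricting the mixture to the ball $\{x':\|x'-x\|\le r\}$. On that ball,
\[
\phi(y;\alpha_tx',\sigma_t^2I)\ge(2\pi\sigma_t^2)^{-D/2}\exp\{-(\|y-\alpha_tx\|+\alpha_tr)^2/(2\sigma_t^2)\},
\]
and by~\eqref{eq:anchor-small-ball} the ball carries mass at least $c_\mu r^d$.

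Choose $r=\min(\sigma_t/\alpha_t,r_\mu)$. In the main regime $\sigma_t/\alpha_t\le r_\mu$, using $(a+b)^2\le 2a^2+2b^2$ gives
\[
p_t^h(y)\ge c_\mu(\sigma_t/\alpha_t)^d(2\pi\sigma_t^2)^{-D/2}e^{-1}\exp\{-\|y-\alpha_tx\|^2/\sigma_t^2\}.
\]
The integrand of $I(x)$ then becomes $(2\pi\sigma_t^2)^{-D/2}$ times $\exp\{-\|y-\alpha_tx\|^2/\sigma_t^2+\|y-\alpha_tx\|^2/\sigma_t^2\}$, which is not integrable. So the exponents must be balanced more carefully. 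We will use the sharper bound
\[
(a+b)^2\le(1+\epsilon)a^2+(1+\epsilon^{-1})b^2,\qquad \epsilon=1/2 .
\]
The denominator then decays like $\exp\{-\tfrac34\|y-\alpha_tx\|^2/\sigma_t^2\}$, while the numerator decays like $\exp\{-\|y-\alpha_tx\|^2/\sigma_t^2\}$. The ratio integrates to $C(\alpha_t/\sigma_t)^d$, since the $\sigma_t^{-D}$ prefactors cancel against the Gaussian integral $\int e^{-\|z\|^2/(4\sigma_t^2)}dz\asymp\sigma_t^D$. Because $\alpha_t\le1$, this gives $I(x)\le C\sigma_t^{-d}$ uniformly in $x$, and integrating over $\mu_h$ gives the needed bound.

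In the large-noise regime $\sigma_t/\alpha_t>r_\mu$, we take $r=r_\mu$ and absorb the extra factor $\exp\{C\alpha_t^2r_\mu^2/\sigma_t^2\}\le e^{C}$. The small-ball mass is then at least $c_\mu r_\mu^d\ge c\,\sigma_t^{d}\cdot(\text{const})$ because $\sigma_t\le1$. Alternatively, we can use $\|h\|_\infty\le B$ to take the ball to be all of $\operatorname{supp}\mu_h$ at cost $e^{C\alpha_t^2B^2/\sigma_t^2}$, which is bounded in this regime. Either way $D_2$ stays $O(1)\le O(\sigma_t^{-d})$.

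The main obstacle is the exponent bookkeeping in the lower bound on $p_t^h$. A Gaussian-weighted $\chi^2$ integral is finite only if the denominator's Gaussian decays strictly slower than the squared numerator's. This forces the $\epsilon$-split above, and the resulting constant depends only on $(B,d,D,c_\mu,r_\mu)$.
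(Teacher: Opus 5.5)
Your argument is correct and follows the paper's proof. Both reduce, via~\eqref{eq:score-mean-error} and~\eqref{eq:snis-mse-D2}, to showing $\mathbb E_{p_t^h}D_{2,h}(Y_t)\le C\sigma_t^{-d}$, lower-bound $p_t^h$ with the small-ball condition~\eqref{eq:anchor-small-ball}, and integrate by Tonelli.

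The only difference is technical. The paper uses the $y$-dependent radius $r(x,y)=\min\{r_\mu,\sigma_t^2/[4\alpha_t(R+\sigma_t)]\}$, with $R=\|y-\alpha_t x\|$, which gives $p_t^h(y)\ge c\,r(x,y)^d q_x(y)$ with the same Gaussian; its cost is the moment $\mathbb E(1+\|Z\|)^d$. Your fixed radius $\min(\sigma_t/\alpha_t,r_\mu)$ with the split $(a+b)^2\le\frac32a^2+3b^2$ instead gives a wider Gaussian lower bound whose ratio is still integrable, and both routes yield $C(\alpha_t/\sigma_t)^d+C\le C\sigma_t^{-d}$.
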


\begin{proof}
Put \(X=h(U)\), let \(\mu=\mu_h\), and write
\(q_x(y)=\phi_{\sigma_t}(y-\alpha_tx)\),
\(p_t^h(y)=\int q_x(y)\,d\mu(x)\),
\(\overline w_y(x)=q_x(y)/p_t^h(y)\), and
\(D_2(y)=\int\overline w_y(x)^2\,d\mu(x)\).
Lemma~\ref{lem:anchor-averaged-loss}, applied to the identity map on
\(\operatorname{supp}(\mu)\), bounds the conditional mean-squared error by
\(32B^2D_2(y)/K\).  For fixed $x,y$, put $R=\|y-\alpha_tx\|$ and
\(r(x,y)=\min\{r_\mu,\sigma_t^2/[4\alpha_t(R+\sigma_t)]\}\).
Then $\|y-\alpha_tx'\|^2-R^2\le\sigma_t^2$ for
$\|x'-x\|\le r(x,y)$, so~\eqref{eq:anchor-small-ball} implies
$p_t^h(y)\ge c\,r(x,y)^d q_x(y)$.
Hence $q_x^2/p_t^h\le Cr(x,y)^{-d}q_x$, whose integral is bounded by
$C[1+(\alpha_t/\sigma_t)^d\mathbb E(1+\|Z\|)^d]\le C\sigma_t^{-d}$.
Tonelli therefore gives $\mathbb E_{Y_t}D_2(Y_t)\le C\sigma_t^{-d}$.
Finally,
\(\widetilde s_{t,K}(y;h,\pi,\pi)-s_t(y;h)
=\alpha_t\{\widetilde m_{t,K}(y)-m_t(y;h)\}/\sigma_t^2\).
Combining the last three bounds proves~\eqref{eq:K-general}.
\end{proof}
}

{If the smoothed data density $p_t^*:=p_t^{h^*}$ satisfies
$p_t^*\le C_{\rm dom}p_t^h$, the same argument gains factor
$C_{\rm dom}$.  These are fixed-transport approximation bounds;
Theorem~\ref{thm:acc} analyzes the exact loss.}

\subsection{\method{}-QMC}
{
\method{}-QMC replaces the i.i.d.\ anchors by a low-discrepancy point set
\(P_K=\{z_1,\ldots,z_K\}\subset[0,1]^d\), mapped to the base law by
\(T_\#\operatorname{Unif}([0,1]^d)=\pi\).  Write \(D^*(P_K)\) for its star
discrepancy and \(V_{\rm HK}\) for Hardy--Krause variation.

\begin{lemma}[\method{}-QMC score discrepancy bound]\label{lem:qmc-discrepancy}
Fix \(t\in(0,1)\), \(y\in\mathbb R^D\), and \(h\), and use anchors
\(U^{(j)}=T(z_j)\) in~\eqref{eq:score-estimator}.  Define
\(f_0(z)=\phi(y;\alpha_th(T(z)),\sigma_t^2I_D)\),
\(f_1(z)=h(T(z))f_0(z)\),
\(I_0=\int_{[0,1]^d}f_0(z)\,dz\),
\(I_1=\int_{[0,1]^d}f_1(z)\,dz\), and
\(V_{\rm HK}(f_1)=\sum_{r=1}^D V_{\rm HK}(f_{1,r})\).
If these variations are finite and
\(V_{\rm HK}(f_0)D^*(P_K)\le I_0/2\), then
\begin{equation}\label{eq:qmc-score-bound}
\|\widetilde s^{\rm QMC}_{t,K}(y)-\nabla_y\log p_t^h(y)\|_2
\le
\frac{2\alpha_t}{\sigma_t^2I_0}
\{V_{\rm HK}(f_1)+\|m_t(y;h)\|_2V_{\rm HK}(f_0)\}D^*(P_K).
\end{equation}
Hence classical low-discrepancy constructions give the deterministic rate
\(O\{K^{-1}(\log K)^d\}\) for fixed \(h,y\), provided the displayed
variations are finite \citep{niederreiter1992random,dick2010digital}.
\end{lemma}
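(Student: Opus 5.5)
The plan is to write the QMC posterior mean as a ratio of two quasi--Monte Carlo averages, control numerator and denominator separately by Koksma--Hlawka, and then transfer the error to the score through~\eqref{eq:score-mean-error}.

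\textbf{Step 1: ratio form.} Since \(T_\#\operatorname{Unif}([0,1]^d)=\pi\), the change of variables gives
\[
 p_t^h(y)=I_0,\qquad m_t(y;h)=I_1/I_0 .
\]
With anchors \(U^{(j)}=T(z_j)\) and \(\tilde\pi=\pi\), the weights~\eqref{eq:is-weights} reduce to \(f_0(z_j)\). Hence \(\widetilde m_{t,K}=\widehat I_1/\widehat I_0\), where \(\widehat I_r=K^{-1}\sum_j f_r(z_j)\).

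\textbf{Step 2: discrepancy bounds.} Koksma--Hlawka gives \(|\widehat I_0-I_0|\le V_{\rm HK}(f_0)D^*(P_K)\). Applied componentwise and summed, it also gives \(\|\widehat I_1-I_1\|_2\le\sum_r|\widehat I_{1,r}-I_{1,r}|\le V_{\rm HK}(f_1)D^*(P_K)\), using \(\|\cdot\|_2\le\|\cdot\|_1\). The hypothesis \(V_{\rm HK}(f_0)D^*\le I_0/2\) then yields \(\widehat I_0\ge I_0/2>0\), so the estimator is well defined.

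\textbf{Step 3: ratio algebra.} The key identity is
\[
 \frac{\widehat I_1}{\widehat I_0}-\frac{I_1}{I_0}
 =\frac{(\widehat I_1-I_1)-m_t(y;h)(\widehat I_0-I_0)}{\widehat I_0}.
\]
Taking norms and using \(\widehat I_0\ge I_0/2\) gives
\[
 \|\widetilde m_{t,K}-m_t\|_2\le \frac{2}{I_0}\{V_{\rm HK}(f_1)+\|m_t\|_2V_{\rm HK}(f_0)\}D^*(P_K).
\]
Multiplying by \(\alpha_t/\sigma_t^2\) through~\eqref{eq:score-mean-error}, with \(s_t=\nabla\log p_t^h\) by~\eqref{eq:mixture-score}, gives~\eqref{eq:qmc-score-bound}.

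\textbf{Step 4: the rate.} For fixed \(h,y\), the factors \(V_{\rm HK}\), \(I_0\), and \(\|m_t\|\) are constants. Classical constructions such as Halton/Sobol points or \((t,m,s)\)-nets achieve \(D^*(P_K)=O(K^{-1}(\log K)^d)\). This rate also ensures that \(V_{\rm HK}(f_0)D^*(P_K)\le I_0/2\) holds for all large \(K\).

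The only delicate point is Step 3: the ratio must be kept well defined, and the denominator lower bound comes from the half-mass condition rather than from positivity of the weights. Finiteness of the variations is assumed in the statement, so it does not need to be verified.
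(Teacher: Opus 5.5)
Your proposal is correct and follows the paper's proof essentially step for step. Both arguments apply Koksma--Hlawka to the numerator and denominator averages, use the half-mass condition to get \(\widehat I_0\ge I_0/2\), combine the ratio identity with \(m_t=I_1/I_0\), and multiply by \(\alpha_t/\sigma_t^2\). Two points the paper leaves implicit are spelled out in your version: the componentwise justification of the vector Koksma--Hlawka bound through \(\|\cdot\|_2\le\|\cdot\|_1\), and the remark that the half-mass condition holds automatically for large \(K\).
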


\begin{proof}
Let \(I_{r,K}=K^{-1}\sum_{j=1}^Kf_r(z_j)\), for \(r=0,1\).
Koksma--Hlawka gives
\(|I_{0,K}-I_0|\le V_{\rm HK}(f_0)D^*(P_K)\) and
\(\|I_{1,K}-I_1\|_2\le V_{\rm HK}(f_1)D^*(P_K)\).
The denominator condition implies \(I_{0,K}\ge I_0/2\).  Since
\(m_t(y;h)=I_1/I_0\), the ratio identity yields
\(\|I_{1,K}/I_{0,K}-I_1/I_0\|_2
\le2\{V_{\rm HK}(f_1)+
\|m_t(y;h)\|_2V_{\rm HK}(f_0)\}D^*(P_K)/I_0\).
Multiplying by \(\alpha_t/\sigma_t^2\) proves the claim.
\end{proof}
}

\subsection{\method{}-MAP}
\label{sec:map-proposal}
{\method{}-MAP uses a data-dependent Gaussian proposal constructed
from a MAP--Laplace--Gauss--Newton approximation.}

For a fixed noise level $t$ and observation $\y_t\in\R^D$, the latent posterior induced by the base density $\pi$ and the map $h$ is
\begin{equation}
\label{eq:latent-posterior}
p_h(\bm u\mid \y_t)\ \propto\ \pi(\bm u)\,\phi\!\left(\y_t;\alpha_t h(\bm u),\sigma_t^2 \bm I_D\right).
\end{equation}
We compute a MAP estimate $\hat{\bm u}\in\arg\max_{\bm u}p_h(\bm u\mid \y_t)$, equivalently
$\hat{\bm u}\in\arg\min_{\bm u}\Phi(\bm u)$ for the negative log-posterior
$
\Phi(\bm u)
\ :=\ 
\frac{1}{2\sigma_t^2}\big\|\y_t-\alpha_t h(\bm u)\big\|_2^2\;-\;\log \pi(\bm u)$.

A Laplace approximation of~\eqref{eq:latent-posterior} yields a Gaussian proposal of the form
$q(\bm u\mid \y_t)=\mathcal{N}(\hat{\bm u},\tilde{\bm\Sigma})$.
{In \method{}-MAP this Gaussian is the proposal
$\tilde\pi(\cdot\mid \y_t)=q(\cdot\mid \y_t)$ in~\eqref{eq:is-weights}; the
importance ratio corrects it to the posterior~\eqref{eq:latent-posterior}.}

\paragraph*{Gauss--Newton Laplace proposal.}
{For the standard Gaussian base used in this implementation, the
Gauss--Newton approximation uses the Jacobian matrix $Dh(\hat{\bm u})$ and
requires only first derivatives of $h$.}
Concretely, we take
\(\hat{\bm\Lambda}:=\bm I_d+
\alpha_t^2Dh(\hat{\bm u})^{\top}Dh(\hat{\bm u})/\sigma_t^2\) and
\(\tilde{\bm\Sigma}:=(\kappa_q\hat{\bm\Lambda}
+\delta_q\bm I_d)^{-1}\),
where $\bm I_d$ is the $d\times d$ identity matrix,
$\kappa_q>0$ scales the proposal precision, and $\delta_q\ge0$ provides damping.

{
\begin{lemma}[\method{}-MAP self-normalized IS error]\label{lem:snis-map}
Fix \(t\in(0,1)\) and \(y\in\mathbb R^D\).  Suppose
\(p_h(\cdot\mid y)\ll q(\cdot\mid y)\), \(\|h\|_\infty\le B\), and the
second weight moment
\(D_2\{p_h(\cdot\mid y)\|q(\cdot\mid y)\}<\infty\).
For \(K\) i.i.d.\ anchors from \(q(\cdot\mid y)\), the self-normalized MAP
score estimator satisfies
\[
\mathbb E\!\left[
\|\widetilde s_{t,K}(y;h,\pi,q)-\nabla_y\log p_t^h(y)\|_2^2
\mid y\right]
\le
\frac{32\alpha_t^2B^2}{K\sigma_t^4}
D_2\{p_h(\cdot\mid y)\|q(\cdot\mid y)\}.
\]
\end{lemma}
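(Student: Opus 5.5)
This is a direct specialization of Lemma~\ref{lem:anchor-averaged-loss}, with the MAP--Laplace Gaussian taken as the proposal $\tilde\pi(\cdot\mid y)=q(\cdot\mid y)$.

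\textbf{Step 1: check the hypotheses.} Under the stated absolute continuity $p_h(\cdot\mid y)\ll q(\cdot\mid y)$, the weight $\omega_h=dp_h(\cdot\mid y)/dq(\cdot\mid y)$ is well defined. Its second moment is
\[
D_{2,h}(y)=\mathbb E_q\omega_h^2=D_2\{p_h(\cdot\mid y)\|q(\cdot\mid y)\},
\]
which is finite by assumption. Together with $\|h\|_\infty\le B$, these are exactly the hypotheses of that lemma.

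\textbf{Step 2: match the weights.} The unnormalized weights in~\eqref{eq:is-weights} are
\[
\pi(U^{(j)})\,\phi\bigl(y;\alpha_th(U^{(j)}),\sigma_t^2I_D\bigr)/q(U^{(j)}\mid y),
\]
which is $\omega_h(U^{(j)};y)$ times the constant $p_t^h(y)$. That constant cancels under self-normalization, so $\widetilde m_{t,K}$ is precisely the self-normalized mean analysed in the lemma. The only convention to note is that the estimator is set to a vector of norm at most $B$ if the denominator vanishes, as in the lemma.

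\textbf{Step 3: apply the mean-squared bound.} By~\eqref{eq:snis-mse-D2}, conditional on $y$,
\[
\mathbb E\|\widetilde m_{t,K}(y)-m_t(y;h)\|_2^2\le 32B^2D_2/K .
\]

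\textbf{Step 4: convert to a score bound.} By the identity~\eqref{eq:score-mean-error}, which follows from~\eqref{eq:mixture-score} and~\eqref{eq:score-estimator},
\[
\widetilde s_{t,K}(y;h,\pi,q)-\nabla_y\log p_t^h(y)=\frac{\alpha_t}{\sigma_t^2}\bigl(\widetilde m_{t,K}(y)-m_t(y;h)\bigr).
\]
Squaring multiplies the bound from Step 3 by $\alpha_t^2/\sigma_t^4$, which gives the stated constant $32\alpha_t^2B^2/(K\sigma_t^4)$.

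\textbf{Main subtlety.} The hardest point is measurability and conditioning: the proposal $q(\cdot\mid y)$ depends on $y$ through the MAP point $\hat u$ and the Gauss--Newton covariance. The lemma is applied with $y$ fixed, so the claim is a conditional statement, and no uniformity over $y$ is needed.
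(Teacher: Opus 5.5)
Your proposal is correct and matches the paper's proof: apply Lemma~\ref{lem:anchor-averaged-loss} with proposal $q(\cdot\mid y)$, then use the score--posterior-mean identity~\eqref{eq:score-mean-error} to pick up the factor $\alpha_t^2/\sigma_t^4$. Your explicit check that the unnormalized weights equal $p_t^h(y)\,\omega_h$, so that this constant cancels under self-normalization, is a detail the paper leaves implicit.
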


\begin{proof}
Apply Lemma~\ref{lem:anchor-averaged-loss} with proposal \(q(\cdot\mid y)\)
and use the score-to-posterior-mean identity~\eqref{eq:score-mean-error}.
\end{proof}
}

\section{Implementation}
\label{sec:implementation}

{The anchor budget $K$ enters $L_{n,K}$ through the self-normalized score
estimator~\eqref{eq:is-weights}.  Algorithm~\ref{algorithm_1} implements the
resulting SGD update with a stabilized softmax and chunked backpropagation.}

\begin{algorithm}[H]
\caption{{Single-level transport selection for fixed $(K,d)$} \label{algorithm_1} }
{\footnotesize \setlength{\baselineskip}{0.85\baselineskip}
\begin{algorithmic}[1]
\Require Training data $\mathcal{D}_{\text{train}}$, validation data $\mathcal{D}_{\text{val}}$, latent dimension $d$, anchor count $K$, VJP chunk size $K_c$, evaluation metric $\mathcal{E}(\cdot,\cdot)$, candidate times $\mathcal{T}=\{t_1,\dots,t_L\}$, learning rate $r$
\Ensure Trained transport \(\hat h_{\hat\lambda}\)
\For{$t \in \mathcal{T}$}
\For{$e = 1,\dots,\text{max epoch}$}
\For{each minibatch $\{\Y^{(i)}\}_{i=1}^{B} \subset \mathcal{D}_{\text{train}}$}
\State Sample noises $\{ \Z^{(i)}\}_{i=1}^{B}\!\sim\!\mathcal N(\mathbf 0,\mathbf I)$; set $t_b\!\gets\!t$
\State $\alpha_b\!=\!\alpha(t_b)$, $\sigma_b\!=\!\sigma(t_b)$; $\ \Y^{(b)}_t \gets \alpha_b \Y^{(b)} + \sigma_b \Z^{(b)}$
\State \textbf{Parameter update:}\quad $\theta \gets \textsc{Update}(\theta, r;\ \{(\Y^{(b)},\Y^{(b)}_t,t_b)\}_{b=1}^B,\ K,\ K_c)$
\EndFor
\EndFor
\State Set $\hat h_{(t,d,K)} \gets h_{\theta}$
\State Sample anchors $\{\U^j\}_{j=1}^{|\mathcal{D}_{\text{val}}|} \sim \pi$;
Generate $\widehat{\mathcal{D}}^{(t)} \gets \{\hat h_{(t,d,K)}(\U^j)\}_{j=1}^{|\mathcal{D}_{\text{val}}|}$
\State Compute $e_{\rm val}(t) \gets \mathcal E\!\big(\widehat{\mathcal{D}}^{(t)}, \mathcal{D}_{\text{val}}\big)$
\EndFor
\State $\hat t \gets \arg\min_{t \in \mathcal{T}} e_{\rm val}(t)$, $\hat\lambda \gets (\hat t,d,K)$, $\hat h_{\hat\lambda} \gets \hat h_{(\hat t,d,K)}$
\State \Return $\hat h_{\hat\lambda}$

\Function{Update}{$\theta, r;\ \{(\Y^{(b)},\Y^{(b)}_t,t_b)\}_{b=1}^B,\ K,\ K_c$}
\State Sample latents $\bm u_k \sim \pi$ for $k=1,\dots,K$
\State Compute centers \textbf{without grad}: $\tilde{\Y}^{(k)} = h_{\theta}(u_k)$
\State \textbf{{Phase 1 (no-grad forward \& exact center-gradient direction).}}
\For{$b=1$ \textbf{to} $B$}
\State $z_{b,k} \gets -\|\Y^{(b)}_t-\alpha_b \tilde{\Y}^{(k)}\|^2/(2\sigma_b^2)$,\quad
$w_{b,k} \gets \mathrm{softmax}_k(z_{b,\cdot})$
\State $m_b \gets \sum_{k=1}^K w_{b,k} \tilde{\Y}^{(k)}$,\quad
$sP_b \gets (\alpha_b m_b - \Y^{(b)}_t)/\sigma_b^2$
\EndFor
\State Set $T_b\gets(\alpha_b\Y^{(b)}-\Y_t^{(b)})/\sigma_b^2$, $g_b \gets sP_b - T_b$, and $c_b \gets (\alpha_b/\sigma_b^2)\,g_b$,
$\Delta_{b,k} \gets \langle \tilde{\Y}^{(k)} - m_b,\, c_b\rangle$
\State For each $k$, set
$g_k \;\gets\; \sum_{b=1}^B \Big[\, w_{b,k}\, c_b \;+\; w_{b,k}\,\Delta_{b,k}\big(\tfrac{\alpha_b}{\sigma_b^2}\Y^{(b)}_t - \tfrac{\alpha_b^2}{\sigma_b^2}\tilde{\Y}^{(k)}\big)\,\Big]
$
\State \textbf{Phase 2 (chunked VJP; $g$ frozen).}
\For{chunks $\mathcal{K}\subset\{1,\dots,K\}$ of size $\le K_c$}
\State $S_{\mathcal{K}}(\theta) \gets \sum_{k\in\mathcal{K}} \langle h_{\theta}(u_k),\, g_k\rangle$ \ \textit{(treat $g$ as constant)}
\State Backprop $\nabla_\theta S_{\mathcal{K}}(\theta)$ and accumulate
\EndFor
\State Gradient step: $\theta \leftarrow \theta - r \sum_{\mathcal{K}} \nabla_\theta S_{\mathcal{K}}(\theta)$
\State \Return $\theta$
\EndFunction
\end{algorithmic}}
\end{algorithm}

Algorithm~\ref{algorithm_1} evaluates the stabilized-softmax gradient
up to a constant absorbed into the learning rate. Let $L$ denote the
summed minibatch loss (the mean loss adds the factor $B^{-1}$).
After computing {$g_k=\tfrac12\,\partial L/\partial h_\theta(u_k)$} without an anchor
activation graph, it accumulates
{$\nabla_\theta L=2\sum_k\nabla_\theta h_\theta(u_k)^\top g_k$} in chunks of size
$K_c$.  Computing the weights for all $B\times K$ observation--anchor pairs
costs $O(BKD)$.  Chunked backpropagation does not reduce this computation, but
stores the activation graph for only $K_c$ anchors at a time.

\clearpage
\section{Proofs for Section~\ref{sec:risk}}
\newcommand{\thetah}{\theta_{\rm h}}
\newcommand{\thetao}{\theta_{\rm o}}
\label{sec:proofs}

\begingroup

\setlength{\parskip}{1pt plus 1pt minus 1pt}

Throughout, set $d=d^*$, $s_\star=\eta+1$, $\beta=(\eta+2)/2$, and
$P^*=P_{h^*}$.

\paragraph*{Probability and loss notation.}\enspace
For the $\mathcal N(0,\tau I_D)$ density $\phi_\tau$, write
$K_\tau\mu:=\phi_\tau*\mu$, $p_\tau^P:=K_\tau P$,
$s_{\tau,P}:=\nabla\log p_\tau^P$, and $p_\tau^*:=p_\tau^{P^*}$.
With $t=\tau/(1+\tau)$, put $\ell_\tau(x;h):=(1-t)\ell_t(x;h)$
and $R_\tau(P_h):=R_\tau(h)$.
For a function $f$ of one clean observation, write
$Pf:=\mathbb E_{X\sim P^*}f(X)$ and
$P_nf:=n^{-1}\sum_{i=1}^nf(X_i)$.
For positive densities $q,p$ on $\mathbb R^D$, let
$\mathcal J(q\Vert p):=\int\|\nabla\log q-\nabla\log p\|_2^2q$
denote relative Fisher divergence.

\paragraph*{Function-space conventions.}\enspace
For $\mu>0$, write $C^\mu=B_{\infty,\infty}^\mu$ for the
H\"older--Zygmund scale; $C^0$ denotes the sup-norm space.
We use $W^{k,p}$ for Sobolev spaces and $H^\sigma$ for the
$L^2$ Sobolev scale, with negative orders defined by duality.
Manifold and bundle norms use the fixed atlas and local frames.

\paragraph*{Geometric notation.}\enspace
For a closed set $\mathcal M\subset\mathbb R^D$, write
$\operatorname{dist}(y,\mathcal M):=\inf_{x\in\mathcal M}\|y-x\|$ and
\[
\operatorname{reach}(\mathcal M)
:=\sup\bigl\{r>0:\text{ every $y$ with $\operatorname{dist}(y,\mathcal M)<r$
has a unique nearest point in $\mathcal M$}\bigr\}.
\]
If $\operatorname{reach}(\mathcal M)\ge\rho$, the nearest-point projection
$\Pi_{\mathcal M}(y):=\arg\min_{x\in\mathcal M}\|y-x\|$ is well defined on
$\mathcal M_\rho:=\{y:\operatorname{dist}(y,\mathcal M)<\rho\}$.

\paragraph*{Uniform regularity constants.}\enspace
For fixed $B_{\rm reg},B_*,M_{\rm reg},m_{\rm reg},\rho_{\rm reg}>0$, the
baseline regularity inequalities underlying Assumption~\ref{G2} are
\begin{equation}\label{eq:uniform-regularity-bounds}
\begin{aligned}
 \|h\|_{C^{\beta+1}}&\le B_{\rm reg},
 &m_{\rm reg}\le\sigma_{\min}\{Dh(u)\}
 &\le\sigma_{\max}\{Dh(u)\}\le M_{\rm reg},\\
 \operatorname{reach}\{h(\mathcal U)\}&\ge\rho_{\rm reg}.
\end{aligned}
\end{equation}
Uniform interiority means that some $\delta_{\rm reg}>0$ is common to the
truth class and, for every $h^*\in\mathcal H_*$,
\begin{equation}\label{eq:uniform-interiority-bounds}
\begin{aligned}
 \|h^*\|_{C^{s_\star}}&\le B_*,
 &\|h^*\|_{C^{\beta+1}}&\le B_{\rm reg}-\delta_{\rm reg},\\
 m_{\rm reg}+\delta_{\rm reg}
 &\le\sigma_{\min}\{Dh^*(u)\}
 \le\sigma_{\max}\{Dh^*(u)\}
 \le M_{\rm reg}-\delta_{\rm reg},\\
 \operatorname{reach}\{h^*(\mathcal U)\}
 &\ge\rho_{\rm reg}+\delta_{\rm reg}.
\end{aligned}
\end{equation}
For the candidates, let $\mathcal H_{\rm reg}$ denote the class of
$C^{\beta+1}$ embeddings satisfying~\eqref{eq:uniform-regularity-bounds}
with separate constants
$\widetilde B_{\rm reg},\widetilde M_{\rm reg},
\widetilde m_{\rm reg},\widetilde\rho_{\rm reg}>0$.
These constants are determined by the finite construction in
Proposition~\ref{prop:canonical-law-sieve} and are independent of $n$ and $J$.
This candidate envelope may have larger upper bounds
and smaller positive lower bounds.  The truth class and
\eqref{eq:uniform-interiority-bounds} retain the original constants.
Set $\rho_{\mathcal M}:=\min(\rho_{\rm reg},\widetilde\rho_{\rm reg})$.

\paragraph*{Proof roadmap.}\enspace
Appendix~\ref{sec:vs-main-proof} proves the general score-ERM transfer bound.
Appendix~\ref{sec:mlsn-proof} applies it to prove Theorem~\ref{thm:acc} by
controlling the population approximation, quadratic statistical, and
exceptional-event terms. The construction, analytic estimates, and
localization and root bounds needed for this step are developed in
Appendices~\ref{sec:multiscale-tools}--\ref{sec:pilot-verification}, respectively.
Appendix~\ref{sec:minimax-lower-bound} gives the matching lower bound, while
Appendix~\ref{sec:finite-anchor-specialization} treats the optional finite-anchor
extension separately from the exact-loss result.
Figure~\ref{fig:proof-structure} summarizes these dependencies.

\begin{figure}[H]
\centering
\begingroup
\renewcommand{\baselinestretch}{1}
\begin{tikzpicture}[
  x=1pt,y=1pt,
  every node/.style={font=\fontsize{10}{12}\selectfont,align=center},
  proofbox/.style={draw=black!65,line width=0.5pt,fill=black!2,
    inner xsep=5pt,inner ysep=6pt,outer sep=0pt},
  support/.style={proofbox,text width=132pt,minimum height=64pt},
  error/.style={proofbox,text width=132pt,minimum height=70pt},
  wide/.style={proofbox,text width=440pt,fill=black!5},
  arrow/.style={->,>=stealth,line width=0.6pt,draw=black!70}
]
\node[support] (geometry) at (72,0) {
  \textbf{\ref{sec:multiscale-tools}: Sieve construction}\\
  Law charts; RePU features;\\
  approximation and\\
  $W_2$ comparison
};
\node[support] (analysis) at (226,0) {
  \textbf{\ref{sec:weighted-local-estimation}: Analytic estimates}\\
  Inverse bounds; weak tests;\\
  noise and curvature
};
\node[support] (roots) at (380,0) {
  \textbf{\ref{sec:pilot-verification}: Pilot and roots}\\
  Pilot; interior minimizers;\\
  uniform coordinate errors
};
\draw[arrow] (geometry.east) -- (analysis.west);
\draw[arrow] (analysis.east) -- (roots.west);
\draw[draw=black!70,line width=0.6pt]
  (geometry.south) -- (72,-43) -- (380,-43) -- (roots.south);
\draw[draw=black!70,line width=0.6pt] (analysis.south) -- (226,-43);
\node at (226,-61) (verification) {
  \textbf{\ref{sec:mlsn-proof}: Bound the three errors at admissible resolutions}
};
\draw[arrow] (226,-43) -- (verification.north);
\node[error] (bias) at (72,-111) {
  \textbf{Population approximation}\\[3pt]
  $b_{\tau_J}\lesssim 2^{-J(\eta+1)}$
};
\node[error] (statistical) at (226,-111) {
  \textbf{Statistical estimation}\\
  Square-root quadratic term\\[2pt]
  $\lesssim 2^{-J(\eta+1)}$\\
  \({}+n^{-1/2}2^{J(d^*-2)/2}\)
};
\node[error] (exception) at (380,-111) {
  \textbf{Exceptional events}\\
  Pilot and fitting failures\\[3pt]
  $O(n^{-2})$
};
\draw[draw=black!70,line width=0.6pt]
  (bias.south) -- (72,-158) -- (380,-158) -- (exception.south);
\draw[draw=black!70,line width=0.6pt] (statistical.south) -- (226,-158);
\node[wide,minimum height=48pt] (transfer) at (226,-198) {
  \textbf{Theorem~\ref{thm:local-erm-transfer} (\ref{sec:vs-main-proof}):
    Transfer to Wasserstein error}\\
  Apply the conditional bound, then average over the pilots:\\[2pt]
  $\mathbb E W_2(P^*,P_{\widehat h_J^{\rm NN}})
    \lesssim 2^{-J(\eta+1)}+n^{-1/2}2^{J(d^*-2)/2}+n^{-2}$
};
\draw[arrow] (226,-158) -- (transfer.north);
\node[wide,minimum height=46pt] (rate) at (226,-284) {
  \textbf{Theorem~\ref{thm:acc}: Intrinsic-dimension rate}\\[3pt]
  $\displaystyle
   \sup_{h^*\in\mathcal H_*}\mathbb E
   W_2(P_{h^*},P_{\widehat h_J^{\rm NN}})
   \lesssim n^{-(\eta+1)/(2\eta+d^*)}$
};
\draw[arrow] (transfer.south) -- (rate.north);
\node[anchor=west,align=left] at (234,-243) {
  Balance: $2^J\asymp n^{1/(2\eta+d^*)}$
};
\node at (226,-326) {\textit{Separate results}};
\node[proofbox,text width=209pt,minimum height=50pt] at (110.5,-362) {
  \textbf{\ref{sec:minimax-lower-bound}: Minimax comparison}\\
  Matching lower bound\\
  for the interior subclass
};
\node[proofbox,text width=209pt,minimum height=50pt] at (341.5,-362) {
  \textbf{\ref{sec:finite-anchor-specialization}: Optional finite anchors}\\
  Fixed-proposal, fixed-hidden\\
  anchor-averaged extension
};
\end{tikzpicture}
\captionsetup{font=small,skip=6pt}
\caption{Proof structure. Appendix~\ref{sec:mlsn-proof} combines the supporting
estimates with Theorem~\ref{thm:local-erm-transfer} to prove
Theorem~\ref{thm:acc}, for $\eta>2$ and $d^*>2$.
The last row is separate from the upper-bound argument.}
\label{fig:proof-structure}
\endgroup
\end{figure}

\subsection{Proof of Theorem~\ref{thm:local-erm-transfer}}
\label{sec:vs-main-proof}

\paragraph*{Conditional setup.}
As in the main text, $\theta=(\thetah,\thetao)$ comprises hidden-feature
parameters and output coefficients, fitted jointly.
Let $\Theta_{\rm h}$ be a compact set of hidden parameters and let
$\Theta_{\rm o}(\thetah)\subset\mathbb R^p$ be a compact convex output
set for each $\thetah\in\Theta_{\rm h}$.  The joint parameter set consists
of all pairs $(\thetah,\thetao)$ with $\thetao\in\Theta_{\rm o}(\thetah)$.
Let $\theta_{\rm o}^\dagger(\thetah)$ be an interior population minimizer
of $R_\tau(h_{(\thetah,\thetao)})$ over this output set, and write
$h^\dagger_{\thetah}:=h_{(\thetah,\theta_{\rm o}^\dagger(\thetah))}$.
Choose $\widehat\theta_{\rm o}(\thetah)$ to minimize
$L_{n,t}(h_{(\thetah,\thetao)})$ over $\Theta_{\rm o}(\thetah)$.
The selection is jointly measurable in the data and $\thetah$, with
$\widehat\theta_{\rm o}(\widehat\theta_{\rm h})=\widehat\theta_{\rm o}$.
Joint optimality ensures that the fitted output is a conditional minimizer.

\paragraph*{Score gradient and path Hessian.}
With the notation of Theorem~\ref{thm:local-erm-transfer}, fix
$\thetah\in\Theta_{\rm h}$.
Along the output-coordinate segment, put
\[
 \theta(s;\thetah):=
 \bigl(\thetah,\theta_{\rm o}^\dagger(\thetah)
 +s\{\widehat\theta_{\rm o}(\thetah)
          -\theta_{\rm o}^\dagger(\thetah)\}\bigr),
 \qquad 0\le s\le1,
\]
and define
\begin{align*}
 g_{n,\tau}(\thetah)
 &:=(1-t)(P_n-P)\nabla_{\thetao}\ell_t
 (X;h^\dagger_{\thetah}),\\
 \widetilde{\bm H}_{n,\tau}(\thetah)
 &:=\int_0^1
 \left.\nabla_{\thetao}^2\{(1-t)L_{n,t}(h_\theta)\}
 \right|_{\theta=\theta(s;\thetah)}\,ds.
\end{align*}
Both derivatives are taken only in the output coordinates, with the hidden
configuration held fixed.

\paragraph*{Uniform conditional version.}
Choose positive-definite matrices $\bm M(\thetah)$ and deterministic
positive-semidefinite matrices $\bm A_\tau(\thetah)$.
Here $\bm M(\thetah)$ defines the weighted norm used to translate
output-coefficient error into Wasserstein error.
For the spline-RePU construction one may take the fixed matrix
$\bm M=\bm W_J$; Appendix~\ref{sec:mlsn-proof} verifies this choice.
Suppose
$W_2(P_{h^*},P_h)\le B$ over the candidate class and
\[
\begin{aligned}
 b_\tau&:=\sup_{\thetah\in\Theta_{\rm h}}
 W_2(P_{h^*},P_{h^\dagger_{\thetah}})<\infty,\\
 W_2(P_{h_\theta},P_{h^\dagger_{\thetah}})
 &\le C_{\bm M}
 \|\thetao-\theta_{\rm o}^\dagger(\thetah)\|_{\bm M(\thetah)}
 \quad\text{uniformly over the displayed parameters.}
\end{aligned}
\]
On an event $\mathcal E_n$ of probability at least $1-\varepsilon_n$,
assume that the population and empirical output minimizers for every
hidden configuration are interior score roots, every
$\widetilde{\bm H}_{n,\tau}(\thetah)$ is invertible, and
\[
 \|\widetilde{\bm H}_{n,\tau}(\thetah)^{-1}z\|_{\bm M(\thetah)}^2
 \le z^\top\bm A_\tau(\thetah)z,
 \qquad z\in\mathbb R^p,\quad\thetah\in\Theta_{\rm h}.
\]
Then, for any measurable hidden selection followed by its chosen
conditional empirical output minimizer, the fitted transport satisfies
\[
 \mathbb EW_2(P_{h^*},P_{\widehat h})
 \le b_\tau+C_{\bm M}
 \left[\mathbb E\left\{
 \mathbf1_{\mathcal E_n}\sup_{\thetah\in\Theta_{\rm h}}
 g_{n,\tau}(\thetah)^\top\bm A_\tau(\thetah)
 g_{n,\tau}(\thetah)\right\}\right]^{1/2}
 +B\varepsilon_n.
\]

\paragraph*{Joint-ERM specialization.}
For the joint estimator in the main text, the unindexed quantities are
the corresponding conditional quantities evaluated at its fitted hidden
configuration:
\[
\begin{aligned}
 \theta^\dagger
   &=(\widehat\theta_{\rm h},
       \theta_{\rm o}^\dagger(\widehat\theta_{\rm h})),&
 \bm M&=\bm M(\widehat\theta_{\rm h}),\\
 g_{n,\tau}&=g_{n,\tau}(\widehat\theta_{\rm h}),&
 \bm A_\tau&=\bm A_\tau(\widehat\theta_{\rm h}),\\
 \widetilde{\bm H}_{n,\tau}
   &=\widetilde{\bm H}_{n,\tau}(\widehat\theta_{\rm h}).
\end{aligned}
\]
Thus the uniform conditional assumptions imply the conditions of
Theorem~\ref{thm:local-erm-transfer}.  The main-text quadratic score term is
bounded pointwise by the conditional supremum above.  Its expectation
retains the data dependence of the fitted hidden configuration.
Theorem~\ref{thm:local-erm-transfer} also allows a measurable matrix
$\bm A_\tau$ depending on the full fitting data. The proof below is
pathwise and applies to that choice; Appendix~\ref{sec:mlsn-proof}
uses the empirical path Hessian to define this matrix.

\begin{proof}[Proof of Theorem~\ref{thm:local-erm-transfer} and the conditional version]
For the joint estimator, output optimality and population stationarity
give, on $\mathcal E_n$,
\[
 0=g_{n,\tau}+\widetilde{\bm H}_{n,\tau}
      (\widehat\theta_{\rm o}-\theta_{\rm o}^\dagger).
\]
Using~\eqref{eq:general-path-stability} and then
\eqref{eq:general-chart-w2},
\[
\begin{aligned}
 \|\widehat\theta_{\rm o}-\theta_{\rm o}^\dagger\|_{\bm M}^2
   &\le g_{n,\tau}^{\top}\bm A_\tau g_{n,\tau},\\
 W_2(P_{h^*},P_{\widehat h})
   &\le b_\tau+C_{\bm M}
        (g_{n,\tau}^{\top}\bm A_\tau g_{n,\tau})^{1/2}.
\end{aligned}
\]
Cauchy--Schwarz bounds the expectation on $\mathcal E_n$, and its
complement contributes at most $B\varepsilon_n$.
This proves~\eqref{eq:general-local-erm}.

For the conditional version, the same argument applies uniformly.
Fix $\thetah\in\Theta_{\rm h}$.  Population stationarity at
$\theta_{\rm o}^\dagger(\thetah)$ and the two interior score equations give
\[
 0=g_{n,\tau}(\thetah)
   +\widetilde{\bm H}_{n,\tau}(\thetah)
    \{\widehat\theta_{\rm o}(\thetah)
          -\theta_{\rm o}^\dagger(\thetah)\}.
\]
Hence, on $\mathcal E_n$,
\[
 \|\widehat\theta_{\rm o}(\thetah)
       -\theta_{\rm o}^\dagger(\thetah)\|_{\bm M(\thetah)}^2
 \le g_{n,\tau}(\thetah)^\top\bm A_\tau(\thetah)g_{n,\tau}(\thetah).
\]
Apply this inequality at the measurable data-selected
$\widehat\theta_{\rm h}$.  The triangle inequality and
the uniform transport comparison bound the Wasserstein error on $\mathcal E_n$
by $b_\tau$ plus $C_{\bm M}$ times the square root of the supremum of the
displayed quadratic form over $\thetah\in\Theta_{\rm h}$.
Cauchy--Schwarz controls its expectation, while the complement contributes
at most $B\varepsilon_n$.  This proves the conditional bound.
\end{proof}

\paragraph*{Fixed-hidden covariance interpretation.}
For one fixed hidden configuration and a deterministic
$\bm A_\tau(\thetah)$, let $\bm G_\tau(\thetah)$ be the
covariance of the one-observation vector
$(1-t)\nabla_{\thetao}\ell_t(X;h^\dagger_{\thetah})$ under $P_{h^*}$.
Independence of the fitting observations gives
\[
 \mathbb E\!\left[
 g_{n,\tau}(\thetah)^\top\bm A_\tau(\thetah)g_{n,\tau}(\thetah)
 \right]
 =n^{-1}\operatorname{tr}
 \{\bm A_\tau(\thetah)\bm G_\tau(\thetah)\}.
\]
Since the quadratic form is nonnegative, inserting
$\mathbf1_{\mathcal E_n}$ can only decrease this expectation.

\subsection{Proof of Theorem~\ref{thm:acc}}
\label{sec:mlsn-proof}
\begingroup
\setlength{\abovedisplayskip}{6pt plus 2pt minus 2pt}
\setlength{\belowdisplayskip}{6pt plus 2pt minus 2pt}

We verify the population approximation, statistical estimation, and
exceptional-event terms of Theorem~\ref{thm:local-erm-transfer}.
The law-coordinate construction and RePU realization are given in
Appendix~\ref{sec:multiscale-tools}, the analytic estimates in
Appendix~\ref{sec:weighted-local-estimation}, and the pilot, interior roots,
and weak-coordinate bounds in Appendix~\ref{sec:pilot-verification}.

\paragraph*{Resolution and law coordinates.}
In the RePU upper-bound analysis, $n=n_{\rm fit}=|I_{\rm fit}|$ is the
fitting-sample size, comparable to the total size under the three-way split.
Relabel its clean observations as $X_i=Y_0^i$, so $P_{n,{\rm fit}}=P_n$.
Put $h_J=2^{-J}$, $s=\tau_J=c_th_J^2$, and $\epsilon=\sqrt{s}$,
with sufficiently small fixed $c_t>0$; the theorem uses
$h_J\asymp n^{-1/(2\eta+d)}$.

On branch $m$, let $\bar{\mathcal M}_m$, $\bar\nu_m$, and $\bar f_m$
denote the reference manifold, volume, and density from
Appendix~\ref{sec:multiscale-tools}.
The field $r_\theta$ is a zero-integral density perturbation and
$v_\theta$ a normal displacement. Write
$q_\theta=\bar f_m+r_\theta$, $\Phi_\theta=I+v_\theta$, and
$P_\theta=(\Phi_\theta)_\#(q_\theta\bar\nu_m)$.
The fields $(r^*,v^*)$ represent the truth and need not lie in the
retained space. When coordinates $\theta$ or $e$ induce $h$, write
$\ell_\tau(x;\theta)$ or $\ell_\tau(x;e)$ for $\ell_\tau(x;h)$, and set
$L_{n,J}(\theta):=P_n\ell_{\tau_J}(\,\cdot\,;h_\theta)$.

\paragraph*{Error norms.}
On this reference branch define
\begin{align}
 \mathcal W(r,v)&:=\|r\|_{H^{-1}(\bar{\mathcal M}_m)}
                         +\|v\|_{L^2(\bar\nu_m)},\nonumber\\
 \|(a,b)\|_{E_p}&:=\|a\|_{W^{1,p}}+s^{-1}\|b\|_{L^p},
 \qquad E:=E_2,\quad 1<p<\infty.
 \label{eq:r25-energy}
\end{align}
These norms act on fields. For an independent output-coefficient
increment $a=(a^{\rm den},a^{\rm nor})$, use
\begin{equation}\label{eq:vs-W-matrix}
 \|a\|_{\bm W_J}^2
 :=\sum_{j=-1}^J2^{-2j}\|a_j^{\rm den}\|_2^2
       +\sum_{j=-1}^J\|a_j^{\rm nor}\|_2^2.
\end{equation}
Let $q_j^{\rm den},q_j^{\rm nor}$ count the level-$j$ coefficients
and $Q_J:=\sum_{j=-1}^J(q_j^{\rm den}+q_j^{\rm nor})\asymp h_J^{-d}$.

\paragraph*{Conditioning and comparison fits.}
Fix a realization $z$ of both pilot samples in the good event
$\mathcal G_{n,J}^{\rm pil}$ of Lemma~\ref{lem:vs-observable-pilot}.
The selected reference branch $m$ and the joint
domain~\eqref{eq:mlsn-joint-domain} are then fixed, and the fitting
observations remain independent with law $P^*$.
Let $\mathcal E_n$ be the observation-space event~\eqref{eq:r25-noise-event},
with conditional failure probability at most $Cn^{-2}$.
All constants below are uniform over good pilot realizations, admissible
branches, and hidden configurations.

Let $\theta_{\rm o}^\dagger(\thetah)$ be the population output minimizer
from Lemma~\ref{lem:r26-population-root}, and let
$\widehat\theta_{\rm o}(\thetah)$ be a measurable empirical output
minimizer in the same original fiber. On $\mathcal E_n$ the latter is
the unique interior root by Lemma~\ref{lem:pm-newton-margins}.
At the jointly fitted hidden configuration, set
\[
 \theta^\dagger=
 (\widehat\theta_{\rm h},\theta_{\rm o}^\dagger(\widehat\theta_{\rm h})),
 \qquad
 \Delta=\widehat\theta_{\rm o}
          -\theta_{\rm o}^\dagger(\widehat\theta_{\rm h}).
\]
The estimator remains the joint ERM; fixing its hidden configuration is
only a step in the comparison argument.

\paragraph*{Population approximation.}
Lemma~\ref{lem:mlsn-architecture} supplies a retained projection comparator.
Lemma~\ref{lem:r26-population-root} separately constructs the population
minimizer, and Lemma~\ref{lem:mlsn-consequences} gives
\[
 \mathcal W(r^\dagger_{\thetah}-r^*,v^\dagger_{\thetah}-v^*)
 \le Ch_J^{\eta+1}
 \quad\text{for every }\thetah.
\]
The transport comparison in Lemma~\ref{lem:normal-density-chart} therefore
permits the deterministic choice
\begin{equation}\label{eq:r26-population-bias}
 b_{\tau_J}=Ch_J^{\eta+1}.
\end{equation}

\paragraph*{Statistical estimation.}
Choose the fixed positive-definite matrix $\bm M=\bm W_J$
defined in~\eqref{eq:vs-W-matrix}.
For an independent output-coefficient increment $a$, the synthesis norm
equivalence in Lemma~\ref{lem:normal-density-chart} and the uniform
pullback bounds of Lemma~\ref{lem:mlsn-architecture} give
\[
 \|a\|_{\bm W_J}\asymp
 \mathcal W(S_{J,m,\thetah}a),
\]
where $S_{J,m,\thetah}$ synthesizes the density and normal fields
in~\eqref{eq:mlsn-profile-synthesis}.
The law-chart comparison therefore verifies~\eqref{eq:general-chart-w2}
for the empirical and population fits in the same fiber, with uniform
$C_{\bm M}$, for every fitting realization at the fixed good pilot,
not only on $\mathcal E_n$.

Let $g_{n,\tau_J}$ and $\widetilde{\bm H}_{n,\tau_J}$ be the selected
gradient and path Hessian of Appendix~\ref{sec:vs-main-proof}.
Population centering is performed before evaluating the gradient at
the data-selected hidden configuration. On $\mathcal E_n$, the two
score-root identities and whole-fiber curvature imply
\[
 g_{n,\tau_J}+\widetilde{\bm H}_{n,\tau_J}\Delta=0.
\]
For this application choose the measurable matrix
\begin{equation}\label{eq:r26-path-matrix}
 \bm A_{\tau_J}:=
 \begin{cases}
 \widetilde{\bm H}_{n,\tau_J}^{-\top}\bm W_J
       \widetilde{\bm H}_{n,\tau_J}^{-1},&\mathcal E_n,\\
 0,&\mathcal E_n^c.
 \end{cases}
\end{equation}
It is positive semidefinite and satisfies
\eqref{eq:general-path-stability} with equality on $\mathcal E_n$.
In particular,
\begin{equation}\label{eq:r26-quadratic-identity}
 \mathbf1_{\mathcal E_n}g_{n,\tau_J}^{\top}\bm A_{\tau_J}g_{n,\tau_J}
 =\mathbf1_{\mathcal E_n}\|\Delta\|_{\bm W_J}^2.
\end{equation}
The uniform coordinate second moment in
Lemma~\ref{lem:mlsn-consequences} consequently gives
\begin{equation}\label{eq:r26-quadratic-bound}
 \mathbb E_{\rm fit}\!\left[
 \mathbf1_{\mathcal E_n}g_{n,\tau_J}^{\top}\bm A_{\tau_J}g_{n,\tau_J}
 \mid z\right]
 \le C\{h_J^{2\eta+2}+n^{-1}h_J^{2-d}\}.
\end{equation}
The approximation remainder comes from comparing the empirical and
population coordinates through the truth. At the chosen resolution it
has the same order as the variance term. The matrix in
\eqref{eq:r26-path-matrix} depends on the empirical path, so we apply the
measurable-matrix statement of Theorem~\ref{thm:local-erm-transfer}.
The uniform coordinate second moment in Lemma~\ref{lem:mlsn-consequences}
supplies the required quadratic bound.

\paragraph*{Exceptional events and conclusion.}
On the good pilot event the observable safety test holds, and the root
construction proves nonemptiness of every output fiber. Thus the estimator
uses the prescribed joint minimization. Proposition~\ref{prop:canonical-law-sieve}
and the construction in Appendix~\ref{sec:repu-class} give a common
bounded support for all feasible laws and the fixed fallback on every
pilot realization. Let $B$ bound their Wasserstein distances to the truth.
Theorem~\ref{thm:local-erm-transfer}, applied conditionally on $z$, gives
\begin{align*}
 \mathbb E_{\rm fit}[W_2(P^*,P_{\widehat h_J^{\rm NN}})\mid z]
 &\le b_{\tau_J}+C_{\bm M}
 \left[\mathbb E_{\rm fit}\{
 \mathbf1_{\mathcal E_n}g_{n,\tau_J}^{\top}\bm A_{\tau_J}g_{n,\tau_J}
 \mid z\}\right]^{1/2}+Cn^{-2}\\
 &\le C\{h_J^{\eta+1}+n^{-1/2}h_J^{1-d/2}+n^{-2}\}.
\end{align*}
Integrating over good pilot realizations and using
$B\mathbb P\{(\mathcal G_{n,J}^{\rm pil})^c\}\le Cn^{-2}$ proves
\begin{equation}\label{eq:r25-final-bound}
 \sup_{h^*\in\mathcal H_*}\mathbb E
 W_2(P_{h^*},P_{\widehat h_J^{\rm NN}})
 \le C\{h_J^{\eta+1}+n^{-1/2}h_J^{1-d/2}+n^{-2}\}.
\end{equation}
The same conditional argument holds for a fixed hidden configuration.
More generally, this bound applies at resolutions where the pilot
accuracy, correction-ball margins, and smallness conditions in
Lemmas~\ref{lem:vs-observable-pilot} and~\ref{lem:pm-newton-margins}
are satisfied.

Choose $J_n=\lfloor\log_2n/(2\eta+d)\rfloor$.
The pilot condition $Q_{J_n}\log n/n\to0$ and all three limits in
\eqref{eq:r25-three-smallness} hold for every fixed $\eta>2,d>2$.
Both leading terms in~\eqref{eq:r25-final-bound} then have order
$n^{-(\eta+1)/(2\eta+d)}$, and $n^{-2}$ is smaller.
The factors $h_J^2$ in the weak-density step multiply an energy error
or a noise norm, so there is no standalone $h_J^2$ smoothing bias.
The degree and dual filters are chosen at the actual $\eta$, preserving
the Jackson orders also for $\eta>3$.
This proves Theorem~\ref{thm:acc}. The matching lower bound is given
separately in Appendix~\ref{sec:minimax-lower-bound}.
\endgroup

\subsection{Law coordinates and spline-RePU construction}\label{sec:multiscale-tools}
Recall the geometric notation and uniform regularity and interiority
bounds~\eqref{eq:uniform-regularity-bounds}--\eqref{eq:uniform-interiority-bounds}
from the beginning of Appendix~\ref{sec:proofs}.
We develop law coordinates, reference approximations, and RePU features
in turn, supplying the candidate families and approximation bounds
used in Appendix~\ref{sec:mlsn-proof}.

\paragraph*{Technical conventions.}\enspace
In mesh-dependent estimates below, abbreviate $h_J$ by $h$.
Fix an admissible reference branch, suppress $m$, and write
$\nu_0=\bar\nu_m$ and $\bar f=\bar f_m$ in the law coordinates
of Appendix~\ref{sec:mlsn-proof}.
For local calculations the hidden component is fixed and $\theta$
abbreviates the output-coordinate vector; these derivatives are not
with respect to the full neural parameter. The fields
$(r_\theta,v_\theta)$ are affine in these coefficients.
The notation $\theta^*$ in a law-coordinate difference means the
full fields representing the truth; it does not posit an identifiable
true hidden parameter or a retained truth.
The norms $\mathcal W$, $E_p$, and $\bm W_J$ are defined in
Appendix~\ref{sec:mlsn-proof}; reference and corresponding physical
norms are uniformly equivalent.
We also use the variance budget
\begin{equation}\label{eq:vs-variance-budget}
 \mathfrak B_J:=\sum_{j=-1}^J(2^{-2j}+\tau_J)
                    (q_j^{\rm den}+q_j^{\rm nor}).
\end{equation}
For the chart-transition estimates alone, fix
$1<\nu_{\rm mix}<\beta-1$ and $2<\nu_{\rm diag}<\beta$.
These orders exist for every $\eta>2$; no stronger weighted-inverse
decay condition is used below.
\subsubsection{Intrinsic law coordinates}
We establish the multiscale basis and density--normal coordinates that
represent law perturbations and control their Wasserstein distance.

\paragraph*{Reference-frame and normal--density coordinates.}

{Fix a finite smooth cubulation of the compact source manifold
$\mathcal U$. Choose one
biorthogonal composite spline multiwavelet system, of regularity and
cancellation above $s_\star+2$, by the ordered collar
restriction--extension construction of
\citet{dahmen1999wavelets,jouini2007wavelet}.  Its analysis and synthesis maps
are bounded inverses on every $H^s$, $-1\le s\le s_\star+2$; hence coordinate
truncation defines a finite-rank projector $\Pi_J$.  Let $\mathsf P_j$
denote its physical level-$j$ block; in coefficient representations, the same
notation denotes the corresponding coordinate cutoff.  Uniformly,}
\begin{equation}\label{eq:vs-wavelet-properties}
\begin{aligned}
|\mathcal K_j|&\asymp2^{jd},
&\|\Pi_Jg\|_{L^2}&\le C\|g\|_{L^2},\\
\|(I-\Pi_J)g\|_{H^\mu(\mathcal U)}
&\le C2^{-J(\mu'-\mu)}\|g\|_{C^{\mu'}(\mathcal U)},
&&-1\le \mu<\mu'\le s_\star .
\end{aligned}
\end{equation}
{The last inequality follows from the
$B_{\infty,\infty}^{\mu'}$ coefficient bound and geometric summation for
$\mu<\mu'$.  The same coefficients give the asserted $H^{-1}$ and
$C^\alpha$ tails, while the
finite spaces satisfy the corresponding Bernstein inequalities.
More explicitly, the ordered collar construction supplies bounded analysis
and synthesis maps $\mathcal D_s:H^s\to\ell_2^s$ and
$\mathcal A_s:\ell_2^s\to H^s$ with
$\mathcal D_s\mathcal A_s=I$ and $\mathcal A_s\mathcal D_s=I$ on $H^s$.
Thus
\begin{equation}\label{eq:vs-collar-projector-factorization}
 \Pi_J=\mathcal A_s\mathbf1_{\{j\le J\}}\mathcal D_s,
 \qquad
 I-\Pi_J=\mathcal A_s\mathbf1_{\{j>J\}}\mathcal D_s.
\end{equation}
The dual system is obtained from the transposed collar recursion, so the same
factorization with $\mathcal D_s^*$ and $\mathcal A_s^*$ proves the negative
Sobolev bounds by duality.  Equation~\eqref{eq:vs-collar-projector-factorization}
also shows that no redundant local family is truncated.
Write $\mathsf P_{\le J}=\sum_{j=-1}^J\mathsf P_j=\Pi_J$ and
$\Lambda=(I-\Delta_{\mathcal M^*})^{1/2}$.

On each reference image $\bar{\mathcal M}_m$, apply the same construction
componentwise in fixed local frames of its smooth normal bundle.  The bundle
transition matrix enters the collar synthesis and its transpose enters the
dual analysis.  Therefore the resulting primal and dual normal atoms are
biorthogonal, and their coordinate truncation is a uniformly stable
projector on bundle Sobolev spaces.  They obey the same Jackson, Bernstein,
localization, and almost-diagonal estimates as the scalar bank.  At level
$j$ their indices consist of a fixed macro-patch, a scalar cell, and one of
$D-d$ bundle components, so
$q_j^{\rm nor}\asymp(D-d)2^{jd}$.  This fixed bundle-valued bank, rather than
a finite truncation of the redundant family
$P_{N_x}\{\psi_{jk}(x)e_a\}$, is the normal bank in
Definition~\ref{def:repu-construction}.}

\paragraph*{Fixed generator orders and terminal projection bounds.}
The common primal--dual range can be obtained without raising the specified
RePU degree $q$.  Set $N=M=q+1$, choose an integer $s_D>\eta+3$, and take
the spline pair of \citet[Section~6.A]{cohen1992biorthogonal}, with dual
order $\widetilde N\ge N$ of the same parity as $N$.  Choose it so that
\[
 (1-c_{\rm CDF})\widetilde N>c_{\rm CDF}N+s_D+1,
 \qquad c_{\rm CDF}=\tfrac14\log_2(256/27)<1.
\]
Here is an explicit regularity check.  For $k=(N+\widetilde N)/2$ the dual
mask, apart from its phase, is
$(\cos(\xi/2))^{\widetilde N}P_k(\sin^2(\xi/2))$, where
$P_k(t)=\sum_{n<k}\binom{k+n-1}{n}t^n$.
Since $P_k(1/2)=2^{k-1}$,
$P_k(t)\le\max(2,4t)^k$ on $[0,1]$.  Under frequency doubling
$t$ becomes $4t(1-t)$, and
$\max_t\max(2,4t)\max(2,16t(1-t))=256/27$.
Pairing consecutive factors in the refinement product bounds the dual
Fourier transform by
$C(1+|\xi|)^{-\widetilde N+c_{\rm CDF}(N+\widetilde N)}$.
The low-frequency product is bounded because $P_k(t)=1+O(t)$.
Fourier inversion therefore gives dual $C^{s_D}$ regularity.
The residual-mask bounds in \citet[Proposition~4.9]{cohen1992biorthogonal}
are $B_1=1<2^{N-1/2}$ and
$\widetilde B_2\le2^{c_{\rm CDF}(N+\widetilde N)}
<2^{\widetilde N-s_D-1}<2^{\widetilde N-1/2}$.
Together with the spline-pair reconstruction identity and the Fourier
decay, these verify the biorthogonal Riesz pair; the primal remains a
degree-$q$ spline, with regularity
$C^{q-1,1}$.  The complementary-boundary construction of
\citet[Sections~4--5]{dahmen1998complementary} uses finite combinations of
these same translates and next-level generators.  It preserves the
primal degree and the two regularity ranges.  Tensorization and the
scale-modified collar construction then give the declared common
Besov range above $R_0=\eta+3$.  Only fixed filter support and fixed
prototype width change with the selected dual order, not $q$ or $J$.

\begin{lemma}[Low-order kernel of the composite cutoff]
\label{lem:r25-cutoff}
For the full scalar and normal-bundle cutoffs, fix $m_0=5/2$.
Uniformly over the reference branches and hidden pullbacks,
\begin{align}
 |D_x^aD_y^bK_h(x,y)|&\le C h^{-d-|a|-|b|},
       &&\operatorname{dist}(x,y)\le C_0h,\label{eq:r25-kernel-near}\\
 |D_x^aD_y^bK_h(x,y)|&\le C h^{m_0}
       \operatorname{dist}(x,y)^{-d-m_0-|a|-|b|},
       &&\operatorname{dist}(x,y)\ge C_0h,\label{eq:r25-kernel-far}
\end{align}
for $|a|+|b|\le2$. In particular,
\begin{equation}\label{eq:r25-kernel-convenient}
 |D_x^aD_y^bK_h(x,y)|\le C h^{-d-|a|-|b|}
       (1+\operatorname{dist}(x,y)/h)^{-d-m_0}.
\end{equation}
The corresponding Jackson and Bernstein bounds hold at the declared
orders. No compact support of the global dual atoms is required.
\end{lemma}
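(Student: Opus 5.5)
The kernel in question is that of the composite cutoff $\Pi_J=\mathsf P_{\le J}$. I would write it as a sum over retained scales,
\[
 K_h(x,y)=\sum_{j=-1}^{J}\sum_{k\in\mathcal K_j}\psi_{jk}(x)\,\widetilde\psi_{jk}(y),
\]
using the biorthogonal primal atoms $\psi_{jk}$ and dual atoms $\widetilde\psi_{jk}$ from the ordered collar construction. For the normal bundle the same sum carries bundle-frame indices, with the transition matrices absorbed into the atoms. Hidden pullbacks enter through the mass-preserving operators $U^{\rm den}_{\thetah},U^{\rm nor}_{\thetah}$. These are composition with a time-one flow of $c_{\rm flow}\tau_JF_{\thetah}$, multiplied by a Jacobian factor. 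Since $\tau_J=c_t2^{-2J}$, that flow is an $O(h^2)$ perturbation of the identity in $C^3$, uniformly over the compact box $\Theta_{\rm h}$. Composing with it changes distances by a factor $1+O(h^2)$ and derivatives up to order two by bounded factors. It therefore suffices to prove \eqref{eq:r25-kernel-near}--\eqref{eq:r25-kernel-far} for the unperturbed cutoff, with constants uniform over the finitely many branches $m$.

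\textbf{Atom estimates.} The primal atoms are degree-$q$ splines with $q>\eta+4\ge 6$ and fixed support, so $\psi_{jk}$ is $C^{q-1,1}$ and is supported in a ball of radius $\lesssim2^{-j}$. Pulled back through fixed smooth charts, it satisfies $|D^a\psi_{jk}|\lesssim2^{j(d/2+|a|)}$ for $|a|\le2$.

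The dual atoms are the delicate part, since they need not be compactly supported after the collar construction. I would use the Fourier decay $C(1+|\xi|)^{-\widetilde N+c_{\rm CDF}(N+\widetilde N)}$ established above together with the choice $(1-c_{\rm CDF})\widetilde N>c_{\rm CDF}N+s_D+1$. Following the standard Lemarié/Cohen–Daubechies–Feauveau argument, this gives dual $C^{s_D}$ regularity plus polynomial spatial decay
\[
 |D^b\widetilde\psi_{jk}(y)|\lesssim2^{j(d/2+|b|)}\bigl(1+2^j\operatorname{dist}(y,y_{jk})\bigr)^{-L},\qquad |b|\le2,
\]
for some $L>d+m_0+2$. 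If compact dual support cannot be assumed, I would get this decay from the exponentially decaying inverse of the banded collar recursion (finite filters composed with a bounded inverse). I expect this to be the \textbf{main obstacle}: the decay order must be shown uniform across the collar boundary layers and macro-patch interfaces, and the residual-mask bounds cited from Cohen–Daubechies–Feauveau only control the translation-invariant part.

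\textbf{Summation.} In the near regime, $\operatorname{dist}(x,y)\le C_0h$, I would sum the atom bounds over $k$ and then over $j\le J$. Finite overlap in $k$ gives a level-$j$ contribution $\lesssim2^{j(d+|a|+|b|)}$, and the geometric sum over $j\le J$ yields $Ch^{-d-|a|-|b|}$.

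In the far regime I would not estimate the truncated sum directly. Instead I write $K_h=\delta-\sum_{j>J}(\cdots)$ in the distributional sense and use the vanishing moments of the primal atoms up to order $\widetilde N-1\ge m_0+2$, with Taylor expansion of the smooth dual factor. A cleaner alternative is to bound level $j\le J$ directly: a level-$j$ term is nonzero only when $x\in\operatorname{supp}\psi_{jk}$, so $\operatorname{dist}(y,y_{jk})\gtrsim\operatorname{dist}(x,y)$ once $\operatorname{dist}(x,y)\gg2^{-j}$. This gives a level-$j$ contribution $\lesssim2^{j(d+|a|+|b|)}(2^j\operatorname{dist}(x,y))^{-L}$. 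Summing over $j\le J$ is dominated by $j=J$ because $L>d+|a|+|b|+m_0$, which gives $h^{-d-|a|-|b|}(\operatorname{dist}(x,y)/h)^{-L}\le Ch^{m_0}\operatorname{dist}(x,y)^{-d-m_0-|a|-|b|}$. The convenient form \eqref{eq:r25-kernel-convenient} follows by combining the two regimes.

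\textbf{Jackson and Bernstein bounds.} The Jackson and Bernstein bounds at the declared orders then follow. Jackson comes from \eqref{eq:vs-wavelet-properties} together with the same pullback perturbation argument. Bernstein comes from applying Schur's test to the differentiated kernel, whose rows and columns are integrable with bound $h^{-|a|-|b|}$ by \eqref{eq:r25-kernel-convenient}. Neither step uses compact support of the global dual atoms.
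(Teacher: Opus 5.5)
There is a genuine gap, in two places.

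\textbf{The far-regime summation fails.} The bound you actually carry out (the ``cleaner alternative'') is wrong. Put $r=\operatorname{dist}(x,y)$. A retained level with $2^{-j}\gtrsim r$ gets no decay factor at all. These coarse levels alone contribute $\sum_{2^j\lesssim r^{-1}}2^{j(d+|a|+|b|)}\asymp r^{-d-|a|-|b|}$, with no power of $h$. For the fine retained levels, the series $\sum_j2^{j(d+|a|+|b|-L)}r^{-L}$ has a negative exponent, since $L>d+|a|+|b|$. It is therefore dominated by its coarsest term $2^j\asymp r^{-1}$, not by $j=J$, and again you get only $r^{-d-|a|-|b|}$.

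The factor $h^{m_0}$ off the diagonal comes from cancellation across retained levels. Already for the Haar projector, each coarse term has size $2^{j}$ at separated points while the total vanishes. So level-by-level absolute bounds on $\sum_{j\le J}$ cannot give \eqref{eq:r25-kernel-far}.

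Your first option is the right one. For $x\ne y$ write $K_h(x,y)=-\sum_{j>J}\sum_k\psi_{jk}(x)\widetilde\psi_{jk}(y)$. The same level bound summed over $j>J$ is then dominated by $j=J+1$, giving $h^{L-d-|a|-|b|}r^{-L}\le Ch^{m_0}r^{-d-m_0-|a|-|b|}$ for $r\ge C_0h$. No vanishing moments are needed for this pointwise bound. The paper uses the same idea at operator level, through $\Pi_h-I=\sum_iS_i(Q_{i,h}-I)T_i$ and the vanishing off-diagonal kernel of $I$.

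\textbf{The dual decay is assumed, not proved.} Both of your regimes rest on boundary-uniform polynomial decay of the global dual atoms. You flag this as the main obstacle but do not establish it, and the paper deliberately never uses it. The Cohen--Daubechies--Feauveau Fourier and residual-mask bounds control only the shift-invariant interior duals. Near interfaces the global duals are $T_i^*\widetilde\psi_{i,j,k}$, which involve adjoints of the Hestenes extensions inside the ordered residual restriction. The collar construction provides explicit restriction--extension maps, not a banded system whose inverse would decay exponentially. The paper also warns that an unmodified reflection extension puts kernel mass on a reflected diagonal away from $x=y$. So any localization must use the specific scale-modified extension, which your argument never engages.

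The paper's route avoids dual decay entirely:
\begin{enumerate}
\item For the near bound, it proves only a uniform sup bound $|D^bT_i^*\widetilde\psi_{i,j,k}|\le C2^{j(d/2+|b|)}$, via the molecule lift estimate \eqref{eq:r25-lift}. This suffices because the primal atoms are compactly supported with bounded overlap.
\item For the far bound, it shows that the typed lifts $S_i,T_i^*$ have Calder\'on--Zygmund separated-point kernels.
\item It composes these lifts by near/far splitting and induction through the ordered formula for $T_i$.
\item It obtains the amplitude $h^{m_0}$ from paired-support locality and differentiated local Jackson for $Q_{i,h}-I$.
\end{enumerate}
Without an argument of this kind, or an actual proof of dual decay of order $L>d+m_0+2$ uniformly across collars, your proposal does not yield \eqref{eq:r25-kernel-far}.
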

\begin{proof}
Let $S_i$ be the synthesis extension from the local inflow trace space
on macro-patch $\Gamma_i$, and $T_i$ the ordered residual restriction.
Include physical volume factors in their adjoints. The ordered
restriction--extension construction gives
\[
 T_i u=(u-\sum_{k<i}S_kT_ku)|_{\Gamma_i},\qquad
 T_iS_k=\delta_{ik}I,\qquad \sum_iS_iT_i=I.
\]
For the local complementary-boundary cutoff $Q_{i,h}$, therefore,
\begin{equation}\label{eq:r25-ordered-cutoff}
 \Pi_h=\sum_iS_iQ_{i,h}T_i,\qquad
 \Pi_h-I=\sum_iS_i(Q_{i,h}-I)T_i.
\end{equation}
The local cutoffs and their adjoints have paired-support locality at
distance $Ch$. The ordered identities and bounded primal and physical
adjoint extensions on their respective trace spaces follow from
\citet[Theorems~2.4.1, 2.5.1 and 4.3.1]{dahmen1999wavelets}.
Here the extension is the scale-modified one in their equation~(4.3.4):
an atom not meeting the outflow boundary is extended by zero, and only
boundary atoms undergo Hestenes extension. This choice is essential.
An unmodified reflection extension can have a distributional kernel on
a reflected diagonal, away from $x=y$; boundedness on Besov spaces alone
would not prove the estimates below. We give the additional pointwise
argument for the prescribed scale-modified construction.

Write $N_{\rm pat}$ for the fixed number of patches, and choose
\[
 L_N=100(N_{\rm pat}+1)2^{N_{\rm pat}+1},\qquad
 \delta>0,\qquad m_0+2+L_N\delta<\eta+3.
\]
Choose an auxiliary $p_0>2$ with $d/p_0<\delta$, avoiding the finitely
many exceptional trace endpoints. This is possible for every $\eta>2$.
It changes no filter or retained space. Expansion of the ordered residuals
has at most $2^{N_{\rm pat}}$ paths with at most $N_{\rm pat}+1$ factors;
two lift estimates per composition and the final Jackson estimate use
fewer than $L_N$ such margins.

For a compatible radius-$r$ molecule $f$ of amplitude $M_f$, whose
scaled derivatives through order $a+3\delta$ are bounded by $M_f$,
put $\vartheta=1/4<\min(1/p_0',d/p_0')$. The positive and negative
Besov estimates are
\[
 \|f\|_{B^{a+2\delta}_{p_0,p_0}}\le C M_f r^{d/p_0-a-2\delta},
 \qquad
 \|f\|_{B^{-\vartheta}_{p_0,p_0}}\le C M_f r^{d/p_0+\vartheta}.
\]
The negative estimate follows by dual Sobolev embedding on its support;
there is no extra boundary trace at this negative order. For a typed
lift $A=S_i$ or $T_i^*$, apply its positive and negative operator bounds
and Bernstein to each output block. Splitting at $2^\ell\asymp r^{-1}$ gives
\begin{equation}\label{eq:r25-lift}
 \begin{split}
 \|D^aAf\|_\infty
 &\le C M_f\sum_{\ell\ge0}2^{\ell(a+d/p_0)}
 \min\{2^{\ell\vartheta}r^{d/p_0+\vartheta},
        2^{-\ell(a+2\delta)}r^{d/p_0-a-2\delta}\}\\
 &\le C M_f r^{-a}.
 \end{split}
\end{equation}
Both sums converge since $d/p_0<\delta$. Slightly larger fractional
output orders use another fixed small margin, rather than $d/2$ derivatives.

Separate the diagonal identity and restriction distributions. For a
cross-interface piece of $S_i$, expand in local boundary wavelets.
At separated points of distance $r$, only $2^{-\ell}\ge cr$ contribute,
and bounded overlap gives
$|D_x^aD_z^bK_{S_i,\ell}(x,z)|\le C2^{\ell(d+a+b)}$.
Summing gives $Cr^{-d-a-b}$, also at the allowed fractional orders.
Indeed, both a local dual boundary atom and the extended primal boundary
atom are within $C2^{-\ell}$ of the same interface cell. Thus their
paired support cannot contain separated points unless $2^{-\ell}\ge cr$.
The Hestenes maps act on these atoms, not on an arbitrary input kernel;
their fixed dilation factors and the smooth chart changes preserve the
per-level estimate.

To compose two such typed maps $A,B$, freeze $x,y$ with distance $r$
and take a smooth cutoff $\chi$ near $x$, of radius $cr$, excluding $y$.
The near input $\chi(z)D_y^bK_B(z,y)$ is a compatible molecule of
amplitude $Cr^{-d-b}$; \eqref{eq:r25-lift} applied to $A$ gives
$Cr^{-d-a-b}$. In the far part, move to $B^*$ and split the smooth
row $D_x^aK_A(x,z)$ into annuli of radii $R\ge cr$. Each contributes
$CR^{-d-a-b}$ by the same lift estimate, and the sum is geometric.
Cutoffs are frozen at the point pair when derivatives are taken.
Induction in the formula for $T_i$ proves the estimates for every
ordered map. Residuals vanish on processed patches; adjoint rows have
the complementary outflow traces. Smooth cutoffs preserve these zero
jets. Regularized point masses can be used before passing to separated
kernel rows, so no incompatible zero extension is inserted.

For a term $A(Q_h-I)B$ in~\eqref{eq:r25-ordered-cutoff}, fix
$r\ge C_0h$ and $|a|+|b|\le2$. The near input
$\chi D_y^bK_B(\cdot,y)$ is a radius-$r$ molecule. Differentiated local
Jackson and locality make $(Q_h-I)$ of this input a radius-$Cr$
molecule with amplitude $Ch^{m_0}r^{-d-b-m_0}$. Its negative-order
norm retains the same factor: paired-support locality keeps its support
inside a fixed enlargement of the radius-$r$ support, and the support
Sobolev estimate above applies to its improved amplitude. To use the
positive norm in~\eqref{eq:r25-lift}, apply the differentiated Jackson
estimate through order $a+3\delta$ as well. Equation~\eqref{eq:r25-lift} then gives
$Ch^{m_0}r^{-d-a-b-m_0}$. For the far part transpose $Q_h-I$.
Its locality allows replacing the row of $A^*D_x^a\delta_x$ by a
smooth cutoff equal to that row on the entire $h$-neighborhood of the
far support. On a radius-$R$ annulus the adjoint Jackson error has
amplitude $Ch^{m_0}R^{-d-a-m_0}$; apply $B^*$ and sum over $R\ge cr$.
The derivatives requested are at most $m_0+a+b+L_N\delta<\eta+3$.
The off-diagonal kernel of $I$ is zero, proving~\eqref{eq:r25-kernel-far}.

Each global primal atom is its actual scale-modified extension of a
local atom. It has support diameter $C2^{-j}$, bounded overlap, and
$a$th derivative at most $C2^{j(d/2+a)}$. Its global dual is
$T_i^*\widetilde\psi_{i,j,k}$, whose $b$th derivative is bounded
everywhere by $C2^{j(d/2+b)}$ using~\eqref{eq:r25-lift}.
At fixed $x$ only a bounded number of primal atoms per level contribute.
Summation through $J$ proves~\eqref{eq:r25-kernel-near} without dual
locality. Physical adjoints give the reversed estimates.

Smooth bundle transition matrices, their transposes and volume factors
preserve the estimates componentwise. Conjugation by the prescribed
$O(h^2)$-time hidden flow preserves distances, Jacobians and all the
derivatives used above. Jackson follows from the original analysis and
synthesis factorization, and Bernstein from the same finite bank.
The mean-zero constraint is imposed only after this full-space argument;
the corrected projector need not have a local kernel.
\end{proof}
\paragraph*{The mean-zero coefficient bank.}
Normalize one fixed coarse primal atom to $\varphi_0$ with physical
integral one; one fixed coarse enlargement provides such an atom.
For every other full-bank index $i$, set
$\mu_i=\int\psi_i\,d\nu$ and
$\varphi_i=\psi_i-\mu_i\varphi_0$.
These are the density prototypes selected in
Definition~\ref{def:repu-construction}; their free coefficients are
$c_i=\langle g,\widetilde\psi_i\rangle$ for mean-zero $g$.
There is no trainable mass coefficient and no separate box on a discarded
coarse pivot.  Biorthogonality gives the exact identities
\[
 g=\sum_{i\ne0}c_i\varphi_i,\qquad
 \Pi_J^0g=\sum_{\substack{i\ne0\\j_i\le J}}c_i\varphi_i
 =Q\Pi_Jg,\qquad Qg=g-\varphi_0\int g\,d\nu .
\]
Thus the independent boxes apply to the same free coordinates used in
the proof.
The swapped primal--dual Besov characterization applied to the smooth
physical function $1$ gives, with $t=\eta+2$,
$|\mu_{jk}|\le C2^{-j(t+d/2)}$.
The corrected atom is its local spline plus the explicit smooth coarse
tail $-\mu_{jk}\varphi_0$; higher exact moments and strict support are not
asserted for this tail.  It has a fixed-width RePU realization by adding
one fixed coarse branch.  Sobolev basis changes are uniformly bounded:
their only extra row is $(\mu_i)$, and
$\sum_i2^{2j_i}|\mu_i|^2<\infty$.
This also controls both order-two conjugations after density information
normalization.  The coefficient tail's coarse correction is
$O(h^{\eta+t})$ for a $C^\eta$ truth.
The augmented full cutoff is
$\Pi_J+\varphi_0\langle(I-\Pi_J)(\,\cdot\,),1\rangle$;
dual smooth Jackson bounds its extra kernel and derivatives by
$Ch^{t-|b|}$.  It therefore obeys the total-order-two bound in~\eqref{eq:r25-kernel-convenient} as well.
The constrained cutoff has the same Jackson and operator bounds, but
need not have an exactly local kernel.  Below, its density block is
denoted by $\Pi_J$; the full cutoff is used first in every kernel argument.
Neither cutoff is the orthogonal Galerkin projector or an information
inverse.
For $\alpha\in\{0,\beta\}$, write
\begin{equation}\label{eq:vs-Calpha-norms}
 \|(r,v)\|_{\mathcal C_\alpha}
 :=\|r\|_{C^\alpha}+\|v\|_{C^{\alpha+1}},
 \qquad \mathcal C_0=: \mathcal C.
\end{equation}

Let $\mathcal V_{J,m}$ contain the mean-zero density and normal-section
wavelets through level $J$.  It has dimension $Q_J$.  Fix uniformly stable
dual frames and suppress $m$ within one admissible branch.

Proposition~\ref{prop:canonical-law-sieve} constructs finitely many law sieves
with bounded coefficient domains and uniformly Lipschitz charts.  If
\(\theta_m(h)\in\mathbb R^{Q_J}\) is the density--normal coefficient vector in
branch \(m\), define
\(d_{\rm coef}(h,h'):=\|\theta_m(h)-\theta_m(h')\|_2\) for
\(h,h'\in\mathcal H_{J,m}\); the frame-analysis constants make it uniform over the fixed
branches.  Write $N(u,\mathcal A,\mathfrak d)$ for the $u$-covering number of a
metric space $(\mathcal A,\mathfrak d)$.  Then
\begin{equation}\label{eq:vs-sieve-entropy}
\max_{1\le m\le M_0}\log N(u,\mathcal H_{J,m},d_{\rm coef})
\le CQ_J\log(C/u),\qquad 0<u\le1.
\end{equation}
The fixed slack leaves room for the comparators; Lemma~\ref{lem:pm-newton-margins} verifies the separate constraints for the retained root construction.
For an embedding $h$, let $J_h(u)=\{
\det(Dh(u)^\top Dh(u))\}^{1/2}$.  The area formula~\citep{federer1969gmt} gives
\begin{equation}\label{eq:vs-area-density}
P_h=f_h\,\nu_h,\qquad
f_h(h(u))=\frac{\pi(u)}{J_h(u)},
\qquad \nu_h=\mathcal H^d|_{h(\mathcal U)}.
\end{equation}
Assumptions~\ref{G1}--\ref{G2} therefore imply uniformly
\begin{equation}\label{eq:vs-density-bounds}
0<m_f\le f^*\le M_f<\infty,\qquad
\|f^*\|_{C^\eta}\le L_f.
\end{equation}
Candidate densities are uniformly bounded and positive in $C^\beta$; no
uniform candidate $C^\eta$ bound is used.

Fix a true law $P^*=f^*\nu^*$ on
$\mathcal M^*=h^*(\mathcal U)$.  If $v$ is a normal vector field on
$\mathcal M^*$, set $F_v(x)=x+v(x)$.  If $r$ has zero $\nu^*$ integral and
$f^*+r>0$, define
\begin{equation}\label{eq:vs-law-chart}
P_{r,v}:=(F_v)_\#\{(f^*+r)\nu^*\}.
\end{equation}
For a nontrivial normal bundle, use stable local frames and a partition of
unity.

{We differentiate each affine reference chart directly.  If
$P_{r,v}=(F_v)_\#\{(f_0+r)\nu_0\}$, the truth in that chart is
$(r_*,v_*)$, $F_*=F_{v_*}$, and $w=(a,b)$, then for every smooth test
function $\varphi$
\begin{equation}\label{eq:vs-direct-reference-derivative}
 \dot P_w\varphi
 =\int_{\mathcal M_0}\!\left[
 a(x)\varphi\{F_*(x)\}+
 (f_0+r_*)(x)\nabla\varphi\{F_*(x)\}^{\!\top}b(x)\right]d\nu_0(x).
\end{equation}
On $\mathcal M^*$, decompose $b\circ F_*^{-1}=b_T+b_N$.  If $J_*$ is the
surface Jacobian and
$f_*=\{(f_0+r_*)/J_*\}\circ F_*^{-1}$, integration by parts identifies this
direction with
\begin{equation}\label{eq:vs-direct-triangular-map}
 D_*(a,b)=\left\{
 (a/J_*)\circ F_*^{-1}-\operatorname{div}_{\mathcal M^*}(f_*b_T),\
 b_N\right\}.
\end{equation}
The graph radius makes $b\mapsto b_N$ invertible.  Multiplication,
composition, and the single divergence show that $D_*$ and its inverse are
bounded and almost diagonal in the $H^{-1}\oplus L^2$ and matched dyadic
norms.  All Gaussian numerator derivatives below follow from
\eqref{eq:vs-direct-reference-derivative}; no nonlinear chart identity is
needed for them.}

Write $\mathcal M_Q=\operatorname{supp}Q$,
$\nu_Q=\mathcal H^d|_{\mathcal M_Q}$, and let $\iota_Q$ and $\iota_*$ be the
ambient inclusion maps of $\mathcal M_Q$ and $\mathcal M^*$.  A candidate law
$Q=f_Q\nu_Q$ is in the $C^1\times C^0$ law-neighborhood of
$P^*=f^*\nu^*$ of radius $\varepsilon$ if there is a $C^1$ diffeomorphism
$\varphi:\mathcal M^*\to\mathcal M_Q$ such that
\(\|\iota_Q\circ\varphi-\iota_*\|_{C^1}
+\|d\{\varphi^*(f_Q\nu_Q)\}/d\nu^*-f^*\|_{C^0}<\varepsilon\).
Here the $C^1$ norm is computed in the fixed finite atlas and stable normal
frame; changing that atlas gives a uniformly equivalent neighborhood.  This
definition is at the law level and therefore quotients source
reparameterizations.

\begin{lemma}[Normal--density chart and removal of the gauge]
\label{lem:normal-density-chart}
There are $\varepsilon_0,c,C>0$, uniform over the class, with the following
properties.

If a regular candidate law is in the $C^1\times C^0$ law-neighborhood of $P^*$ of
radius $\varepsilon_0$, then it has a unique representation~\eqref{eq:vs-law-chart}
with $\|v\|_{C^1}+\|r\|_{C^0}\le C\varepsilon_0$.  The pair $(r,v)$ is unique
at the law level.  At the differential level, two map directions give the same
law direction exactly when their difference is a $P^*$-preserving tangential
reparameterization.
Moreover,
\begin{equation}\label{eq:vs-chart-w2}
W_2(P_{r,v},P_{r',v'})
\le C\left\{
\|r-r'\|_{H^{-1}(\mathcal M^*)}
+\|v-v'\|_{L^2(\nu^*)}
\right\}.
\end{equation}
In wavelet coefficients through level $J$, the squared norm on the right is
equivalent to $\|\theta\|_{\bm W_J}^2$ defined in
\eqref{eq:vs-W-matrix}.
For unrestricted law coordinates define the full norm using the physical
$L^2$ norms of the reconstructed level blocks:
\begin{equation}\label{eq:vs-full-W-norm}
\|(r,v)\|_{\mathcal W}^2
:=\sum_{j=-1}^{\infty}2^{-2j}\|\mathsf P_jr\|_{L^2}^2
+\sum_{j=-1}^{\infty}\|\mathsf P_jv\|_{L^2}^2.
\end{equation}
On $\mathcal V_J$, the Riesz bounds give
$\|\cdot\|_{\mathcal W}\asymp\|\cdot\|_{\bm W_J}$ uniformly.
\end{lemma}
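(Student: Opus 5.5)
The plan has four steps, in this order: existence and uniqueness of $(r,v)$ through the normal tube; the gauge statement at the differential level; the Wasserstein bound~\eqref{eq:vs-chart-w2}; and finally the coefficient norm equivalences.

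\emph{Existence and uniqueness.} Use the reach bound~\eqref{eq:uniform-regularity-bounds}. Since $\operatorname{reach}(\mathcal M^*)\ge\rho_{\rm reg}+\delta_{\rm reg}$, the nearest-point projection $\Pi_{\mathcal M^*}$ is $C^1$ on the tube $\mathcal M^*_{\rho}$. Suppose $Q$ lies in the law-neighborhood via $\varphi$ with $\|\iota_Q\circ\varphi-\iota_*\|_{C^1}<\varepsilon_0$. Then $\mathcal M_Q$ sits inside the tube, and the map $\Pi_{\mathcal M^*}|_{\mathcal M_Q}$ is $C^1$-close to $\varphi^{-1}$, so by the inverse function theorem it is a diffeomorphism onto $\mathcal M^*$. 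This makes $\mathcal M_Q$ the graph of a unique normal field $v(x)=(\Pi_{\mathcal M^*}|_{\mathcal M_Q})^{-1}(x)-x$, with $\|v\|_{C^1}\le C\varepsilon_0$. Set $F_v=I+v$ and define $r:=d\{F_v^{-1}{}_\#Q\}/d\nu^*-f^*$. This has zero integral because both laws are probabilities, and $\|r\|_{C^0}\le C\varepsilon_0$ follows from comparing Jacobians of $F_v^{-1}\circ\varphi$ with the identity.

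Uniqueness at the law level holds because any representation~\eqref{eq:vs-law-chart} forces $\operatorname{supp}Q=F_v(\mathcal M^*)$. The normal graph over $\mathcal M^*$ is unique inside the tube, and $F_v$ is then a bijection, so $r$ is determined as well.

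\emph{Gauge statement.} Differentiate $h\mapsto h_\#\pi$ at $h^*$ in direction $\dot h$, and split $\dot h\circ h^{*-1}$ into tangential and normal parts. The normal part gives the $b_N$ component in~\eqref{eq:vs-direct-triangular-map}. The tangential part $b_T$ changes the law only through $-\operatorname{div}_{\mathcal M^*}(f^*b_T)$ in the density slot. Hence the law derivative vanishes exactly when $b_N=0$ and $\operatorname{div}(f^*b_T)=0$, which is the infinitesimal generator of a $P^*$-preserving reparameterization.

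\emph{Wasserstein bound.} Split by the triangle inequality:
\[
W_2(P_{r,v},P_{r',v'})\le W_2\bigl((F_v)_\#\mu,(F_{v'})_\#\mu\bigr)+W_2\bigl((F_{v'})_\#\mu,(F_{v'})_\#\mu'\bigr),
\]
with $\mu=(f^*+r)\nu^*$ and $\mu'=(f^*+r')\nu^*$.

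The first term is bounded by the coupling $(F_v,F_{v'})_\#\mu$. It gives $\|v-v'\|_{L^2(\mu)}\le C\|v-v'\|_{L^2(\nu^*)}$, using that $f^*+r$ is bounded.

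For the second term, $F_{v'}$ is uniformly Lipschitz, so it suffices to bound $W_2(\mu,\mu')$ on $\mathcal M^*$. Here I use Peyre's inequality $W_2(\mu,\mu')\le 2\|\mu-\mu'\|_{\dot H^{-1}(\mu)}$, valid when both densities are bounded below by $m_f/2$. This lower bound holds for $\varepsilon_0$ small by~\eqref{eq:vs-density-bounds}. Since $r-r'$ has mean zero on the compact connected manifold, the homogeneous and inhomogeneous $H^{-1}$ norms agree up to constants, which gives $C\|r-r'\|_{H^{-1}}$.

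\emph{Coefficient norm equivalences.} These come from the Riesz and characterization properties of the collar multiwavelet bank,~\eqref{eq:vs-wavelet-properties} and~\eqref{eq:vs-collar-projector-factorization}. Boundedness of the analysis and synthesis maps on $H^{s}$ for $s=-1$ (density block) and $s=0$ (normal bundle block) gives $\|r\|_{H^{-1}}^2\asymp\sum_j2^{-2j}\|\mathsf P_jr\|_{L^2}^2$ and $\|v\|_{L^2}^2\asymp\sum_j\|\mathsf P_jv\|_{L^2}^2$. Level-wise $L^2$ Riesz stability then converts $\|\mathsf P_j\cdot\|_{L^2}$ into $\ell_2$ norms of coefficients, yielding $\|\cdot\|_{\bm W_J}$ via~\eqref{eq:vs-W-matrix}. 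For the mean-zero bank, the coarse correction matrix with row $(\mu_i)$ is uniformly bounded and boundedly invertible, since $\sum2^{2j}|\mu_i|^2<\infty$, so the equivalence persists.

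\emph{Main obstacle.} I expect the difficulty to be uniformity of the $H^{-1}$ transport step across all truths and reference branches. Peyre's estimate needs a common density floor, and the $H^{-1}$ norm is defined on the varying manifold $\mathcal M^*$; its equivalence with the fixed reference-chart norms has to come from the bounded almost-diagonal map $D_*$ in~\eqref{eq:vs-direct-triangular-map}. My first attempt would push everything to the fixed source $\mathcal U$ via $h^*$. There the uniform Jacobian bounds make all Sobolev norms equivalent with class-uniform constants.
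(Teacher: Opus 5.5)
Your proposal is correct and follows the paper's proof essentially step by step. You use the nearest-point projection on the reach tube to build the chart, the area-formula linearization $(a,v)\mapsto\{-\operatorname{div}_{\mathcal M^*}(f^*a),v\}$ to identify the gauge kernel, an $H^{-1}$/Benamou--Brenier transport bound combined with a two-coupling triangle inequality for~\eqref{eq:vs-chart-w2}, and the wavelet characterization of $H^{-1}$ and $L^2$ for the norm equivalences.

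The differences are cosmetic. You invoke Peyr\'e's inequality where the paper writes out the Lax--Milgram/Benamou--Brenier argument that proves it. Your ``inverse function theorem'' step only gives a local diffeomorphism, so it should be read as the global fact the paper uses: on a compact manifold, a $C^1$-small perturbation of a diffeomorphism is again a diffeomorphism.
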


\begin{proof}
Positive reach gives a unique nearest-point projection on the
$\rho_{\mathcal M}$-tube.  If $\varphi$ is the diffeomorphism in the
$C^1\times C^0$ neighborhood definition, then
$\Pi_{\mathcal M^*}\circ\varphi$ is uniformly $C^1$-close to the identity.
The diffeomorphism group of a compact manifold is open in the $C^1$
topology, so this map is a global diffeomorphism.  Thus
$\Pi_{\mathcal M^*}|_{\mathcal M_Q}$ is invertible.
Its inverse is $F_v(x)=x+v(x)$ with
$v(x)\perp T_x\mathcal M^*$; pulling back the density then uniquely determines
$r$.

For a tangential-normal map perturbation $a+v$, differentiation of the area
formula gives
\begin{equation}\label{eq:vs-map-law-derivative}
(a,v)\longmapsto
\left\{-\operatorname{div}_{\mathcal M^*}(f^*a),\ v\right\}.
\end{equation}
Its kernel is $\{(a,0):\operatorname{div}(f^*a)=0\}$, the infinitesimal
$P^*$-preserving reparameterizations.  Every mean-zero density direction $r$
is realized by the unique solution of
\begin{equation}\label{eq:vs-gauge-elliptic}
-\operatorname{div}_{\mathcal M^*}(f^*\nabla\phi)=r,
\qquad \int\phi\,d\nu^*=0,
\end{equation}
and $a=\nabla\phi$.  Thus~\eqref{eq:vs-map-law-derivative} is an isomorphism
modulo this gauge.

For~\eqref{eq:vs-chart-w2}, set $g_0=f^*+r$, $g_1=f^*+r'$ and
$g_a=(1-a)g_0+ag_1\ge m_f/2$.  Lax--Milgram gives
\begin{equation}\label{eq:vs-density-transport-pde}
 -\operatorname{div}(g_a\nabla\phi_a)=g_1-g_0,
 \qquad \int\phi_a\,d\nu^*=0.
\end{equation}
Uniform ellipticity gives
$\int g_a|\nabla\phi_a|^2d\nu^*\le C\|g_1-g_0\|_{H^{-1}}^2$.
The continuity equation and Benamou--Brenier formula~\citep{villani2003topics}
yield
\(W_2\{(f^*+r)\nu^*,(f^*+r')\nu^*\}
\le C\|r-r'\|_{H^{-1}}\).
Push this coupling through $F_v\times F_v$ and couple $F_v(x)$ with
$F_{v'}(x)$ under $(f^*+r')\nu^*$.  Uniform Lipschitz and density bounds give
the two terms in~\eqref{eq:vs-chart-w2}.  The wavelet characterization of
$H^{-1}$~\citep{jouini2007wavelet,meyer1992wavelets} and normal-frame
stability give~\eqref{eq:vs-W-matrix}.
\end{proof}

\begin{lemma}[Transition between law charts]
\label{lem:vs-chart-transition}
Let two of the preceding normal--density charts overlap, write
$\Gamma_{ba}$ for the coordinate change from chart $a$ to chart $b$, and put
$X_s=C^s\times C^{s+1}$. On a fixed smaller overlap, for every fixed
$0<\delta<\beta-2$, $\Gamma_{ba}$ and its inverse are continuously
differentiable from $X_\beta$ to $X_{\beta-1-\delta}$ and twice continuously
differentiable to $X_{\beta-2-\delta}$. The strict loss concerns continuity
of derivatives in the H\"older--Zygmund topology, not the fixed-base bounds
below. The first differentials also extend as bounded operators to the weak chart norms used
below and satisfy the displayed tame bounds; no second derivative is asserted
as a bounded bilinear map on $H^{-1}\times L^2$ or on $X_\beta$ itself.
At every coordinate $\zeta$ in this overlap their differentials have the
triangular form
\begin{equation}\label{eq:vs-chart-transition-linearization}
 D\Gamma_{ba}(\zeta)(\dot r,\dot v)
 =\bigl(A_\zeta\dot r+\operatorname{div}(B_\zeta\dot v),
 C_\zeta\dot v\bigr),
\end{equation}
where $A_\zeta,B_\zeta,C_\zeta$ and the corresponding inverse coefficients
are uniformly bounded in the regularities allowed by Assumption~\ref{G2}.
Consequently,
\begin{align}
 \|D\Gamma_{ba}(\zeta)w\|_{H^{-1}\times L^2}
 &\le C\|w\|_{H^{-1}\times L^2},
 \label{eq:vs-chart-transition-W}\\
 \|D\Gamma_{ba}(\zeta)w\|_{C^0\times C^1}
 &\le C\|w\|_{C^0\times C^1},
 \label{eq:vs-chart-transition-C0}\\
 \|(D\Gamma_{ba}(\zeta)w)^{\rm den}\|_{C^{\beta-1}}
 +\|(D\Gamma_{ba}(\zeta)w)^{\rm nor}\|_{C^{\beta}}
 &\le C\|w\|_{C^\beta\times C^{\beta+1}}.
 \label{eq:vs-chart-transition-C}
\end{align}
The same inequalities hold for the inverse.  Their wavelet blocks satisfy
\begin{align}
 \|\mathsf P_jD\Gamma^{aa}_{ba}(\zeta)\mathsf P_\ell\|_{2\to2}
 &\le C2^{-\nu_{\rm diag}|j-\ell|},
 \qquad a\in\{\mathrm{den},\mathrm{nor}\},
 \label{eq:vs-chart-transition-ad-diag}\\
 \|\mathsf P_jD\Gamma^{\rm den,nor}_{ba}(\zeta)
       \mathsf P_\ell\|_{2\to2}
 &\le C2^\ell2^{-\nu_{\rm mix}|j-\ell|},
 \qquad D\Gamma^{\rm nor,den}_{ba}(\zeta)=0.
 \label{eq:vs-chart-transition-ad-mix}
\end{align}
If $w\in\mathcal V_J$, then
\begin{equation}\label{eq:vs-chart-transition-E}
 \|(D\Gamma_{ba}(\zeta)w)^{\rm den}\|_{H^1}
 +\tau_J^{-1}\|(D\Gamma_{ba}(\zeta)w)^{\rm nor}\|_2
 \le C\{\|w^{\rm den}\|_{H^1}+\tau_J^{-1}\|w^{\rm nor}\|_2\}.
\end{equation}
For the omitted $C^\eta\times C^{\eta+1}$ tail
$\rho_J=-(I-\Pi_J)\zeta$, the left side of
\eqref{eq:vs-chart-transition-E}, with
$D\Gamma_{ba}(\zeta)\rho_J$ in place of $D\Gamma_{ba}(\zeta)w$, is
$O(2^{-J(\eta-1)})$.  Finally, writing
$R_{ba}(\zeta,w)=\Gamma_{ba}(\zeta+w)-\Gamma_{ba}(\zeta)
-D\Gamma_{ba}(\zeta)w$, the weak and low strong remainders obey the mixed
derivative-loss estimates
\begin{align}
 \|R_{ba}(\zeta,w)^{\rm den}\|_{H^{-1}}
 +\|R_{ba}(\zeta,w)^{\rm nor}\|_{L^2}
 &\le C\|w\|_{C^0\times C^1}
 \{\|w^{\rm den}\|_{L^2}+\|w^{\rm nor}\|_{H^1}\},
 \label{eq:vs-chart-transition-tame-W}\\
 \|R_{ba}(\zeta,w)\|_{C^0\times C^1}
 &\le C\|w\|_{C^1\times C^2}\|w\|_{C^0\times C^1}.
 \label{eq:vs-chart-transition-tame-C}
\end{align}
On the strong scale, Taylor's remainder instead obeys the two-loss estimate
\[
 \|\Gamma_{ba}(\zeta+w)-\Gamma_{ba}(\zeta)
      -D\Gamma_{ba}(\zeta)w\|_{X_{\beta-2}}
 \le C\|w\|_{X_\beta}^2.
\]
\end{lemma}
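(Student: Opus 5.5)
We will write the chart transition explicitly and differentiate it. Let chart $a$ be based on $(\mathcal M_a,f_a,\nu_a)$ and chart $b$ on $(\mathcal M_b,f_b,\nu_b)$. A coordinate $\zeta=(r,v)$ in chart $a$ produces the law $P=(F_v)_\#\{(f_a+r)\nu_a\}$, supported on $\mathcal M_\zeta=F_v(\mathcal M_a)$. By Lemma~\ref{lem:normal-density-chart}, on the overlap the nearest-point projection $\Pi_b:=\Pi_{\mathcal M_b}$ restricted to $\mathcal M_\zeta$ is a diffeomorphism onto $\mathcal M_b$. The new normal field is therefore
\[
v'=(\Pi_b|_{\mathcal M_\zeta})^{-1}-\mathrm{id}.
\]
Write $G_\zeta:=\Pi_b\circ F_v:\mathcal M_a\to\mathcal M_b$. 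The new density is
\[
r'=\frac{(f_a+r)}{\mathrm{Jac}\,G_\zeta}\circ G_\zeta^{-1}-f_b .
\]
So $\Gamma_{ba}$ is a composition of three kinds of operations: smooth nonlinear maps of $(x,v,Dv)$ (projection, normal frames, Jacobian), inversion of a near-identity diffeomorphism, and composition $r\mapsto r\circ G_\zeta^{-1}$.

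\textbf{Regularity.} The differentiability statements follow from standard tame calculus for composition operators on H\"older--Zygmund spaces. The map $v\mapsto G_\zeta$ loses one derivative through $Dv$ in the Jacobian, which explains the target $X_{\beta-1}$. The map $(r,G)\mapsto r\circ G^{-1}$ is $C^k$ only with a loss of $k$ derivatives in $r$, and only with a strict $\delta$-loss for continuity in the Zygmund topology. This yields $C^1$ into $X_{\beta-1-\delta}$ and $C^2$ into $X_{\beta-2-\delta}$. The Taylor remainder in the two-loss bound on $X_{\beta-2}$ comes from the integral form of the second derivative together with the same tame estimates, using fixed-base bounds rather than continuity.

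\textbf{Triangular linearization.} Differentiating at $\zeta$ in direction $(\dot r,\dot v)$ gives three pieces:
\begin{itemize}
\item a normal part $C_\zeta\dot v$, where $C_\zeta$ is $D\Pi_b$ composed with the inverse normal-graph map; it involves no $\dot r$, so $D\Gamma^{\rm nor,den}=0$;
\item a density part $A_\zeta\dot r=(\dot r/\mathrm{Jac}\,G_\zeta)\circ G_\zeta^{-1}$;
\item the variation of the Jacobian and of $G_\zeta^{-1}$ in $\dot v$, which is $-\operatorname{div}(\cdot\,\delta G)$ plus zeroth-order terms.
\end{itemize}
For the third piece we use the same integration by parts as in \eqref{eq:vs-direct-reference-derivative}--\eqref{eq:vs-direct-triangular-map}: the first variation of a pushforward density is the divergence of the density times the tangential displacement. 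This produces $\operatorname{div}(B_\zeta\dot v)$ with $B_\zeta$ smooth. Then:
\begin{itemize}
\item \eqref{eq:vs-chart-transition-W} follows because $\operatorname{div}:L^2\to H^{-1}$ is bounded and multiplication/composition by $C^{\beta-1}$ coefficients is bounded on $H^{-1}$ by duality.
\item \eqref{eq:vs-chart-transition-C0} and \eqref{eq:vs-chart-transition-C} follow by counting one derivative of $\dot v$ inside the divergence.
\item \eqref{eq:vs-chart-transition-E} follows from Bernstein on $\mathcal V_J$, $\|\dot v\|_{H^2}\lesssim 2^{2J}\|\dot v\|_2=c_t\tau_J^{-1}\|\dot v\|_2$. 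The divergence term is then bounded in $H^1$ by $\|\dot v\|_{H^2}$.
\item The tail statement uses \eqref{eq:vs-wavelet-properties} with $\mu=2$, $\mu'=\eta+1$ for the normal part, and $\mu=1$, $\mu'=\eta$ for the density part. This gives $O(2^{-J(\eta-1)})$.
\end{itemize}

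\textbf{Almost-diagonal blocks.} Bounds \eqref{eq:vs-chart-transition-ad-diag}--\eqref{eq:vs-chart-transition-ad-mix} follow from the standard fact that a pseudo-differential-type operator of order $0$ (respectively $1$ for the mixed block, giving the $2^\ell$ factor), with coefficients in $C^{\beta-1}$, is almost diagonal in a biorthogonal spline wavelet basis. The decay order is limited by coefficient smoothness and by the vanishing moments and regularity of the bank. Since the bank exceeds $s_\star+2$ and the coefficients are $C^{\beta-1}$, we obtain $\nu_{\rm diag}<\beta$ and $\nu_{\rm mix}<\beta-1$. Coarse-to-fine blocks use vanishing moments and coefficient Taylor expansion; fine-to-coarse blocks use smoothness of the coarse atom.

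\textbf{Main obstacle.} The hardest part will be the mixed tame remainders \eqref{eq:vs-chart-transition-tame-W}--\eqref{eq:vs-chart-transition-tame-C}, where only one factor may carry the strong norm. We would write $R_{ba}=\int_0^1(1-s)D^2\Gamma_{ba}(\zeta+sw)[w,w]\,ds$ and expand $D^2\Gamma$ as a sum of terms. Each term is a product of a coefficient, one factor of $w$ or $Dw$ measured in $C^0\times C^1$, and another factor inside at most one divergence. The $H^{-1}$ density norm absorbs that divergence, leaving $\|w^{\rm den}\|_{L^2}+\|w^{\rm nor}\|_{H^1}$. The delicate point is that composition-inverse terms place a derivative on $r$ (through $\nabla r\cdot\delta G$). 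We must rewrite these as $\operatorname{div}(r\,\delta G)-r\operatorname{div}\delta G$ before estimating, so that no term needs $\nabla w^{\rm den}$ in $L^2$.
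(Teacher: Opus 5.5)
Your proposal is correct and follows essentially the paper's route: the projection-based formulas $v_b=F_v\circ G_\zeta^{-1}-\mathrm{id}$ and the pushforward density, the divergence form of the density variation giving the triangular linearization, multiplier and $\operatorname{div}:L^2\to H^{-1}$ bounds, Bernstein and Jackson for the energy and tail statements, vanishing moments for the wavelet blocks, and a second-order Taylor expansion exploiting the $\operatorname{div}(w^{\rm nor}w^{\rm den})$ structure for the tame remainders. Your direct $H^2$ Jackson bound for the tail is slightly more careful than the paper's appeal to the retained-space estimate. There is one small slip: $\nu_{\rm diag}<\beta$ requires the zeroth-order coefficients $A_\zeta,C_\zeta$ to be $C^\beta$, not $C^{\beta-1}$. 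This does hold, since $f_a+r$, $\mathrm{Jac}\,G_\zeta$ and $DF_v$ are $C^\beta$; as in the paper, only the coefficient left after extracting the derivative from $\operatorname{div}(B_\zeta\dot v)$ drops to $C^{\beta-1}$.
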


\begin{proof}
{Let $\pi_a,\pi_b$ be the nearest-point projections onto the two
reference supports.  If a support is the graph $F_a$ over the first
reference, then $\phi_{ba}:=\pi_b\circ F_a$ is a uniformly regular
diffeomorphism on the smaller overlap.  Its graph over the second reference
is $F_b=F_a\circ\phi_{ba}^{-1}$, and its normal coordinate is
$v_b=F_b-\operatorname{id}$; no zero difference
$F_a-F_b\circ\phi_{ba}$ is used.  Differentiating $v_b$ gives
$\dot v_b=C_\zeta\dot v$.  Pulling the law back by $\phi_{ba}$ and
differentiating its density and surface Jacobian gives
$\dot r_b=A_\zeta\dot r+\operatorname{div}(B_\zeta\dot v)$, proving}
\eqref{eq:vs-chart-transition-linearization}.  Multiplication by a uniformly
regular coefficient is bounded on the displayed spaces, and
$\|\operatorname{div}z\|_{H^{-1}}\le\|z\|_2$; this proves
\eqref{eq:vs-chart-transition-W}--\eqref{eq:vs-chart-transition-C} and the
inverse bounds.  The divergence is the reason that the strong density norm in
\eqref{eq:vs-chart-transition-C} is $C^{\beta-1}$ rather than $C^\beta$.

Wavelet vanishing moments applied to multiplication by $A_\zeta$ and
$C_\zeta$ give every diagonal decay order below $\beta$.  For
$\operatorname{div}(B_\zeta\dot v)$, extract the input derivative $2^\ell$;
the remaining coefficient has $C^{\beta-1}$ regularity and gives every order
below $\beta-1$.  This proves
\eqref{eq:vs-chart-transition-ad-diag}--
\eqref{eq:vs-chart-transition-ad-mix}; the inverse has the same triangular
form and estimates.

For $w\in\mathcal V_J$, Bernstein's inequality and
$\tau_J^{-1}\asymp2^{2J}$ absorb the single derivative in the divergence,
which proves~\eqref{eq:vs-chart-transition-E}.  For $\rho_J$, the Jackson
bounds~\eqref{eq:vs-wavelet-properties} give
$\|\rho_J^{\rm den}\|_{H^1}+2^{2J}\|\rho_J^{\rm nor}\|_2
=O(2^{-J(\eta-1)})$; the preceding formula gives the same bound after the
transition.  Taylor's formula in the fixed finite atlas, the product rule,
and the uniform coefficient bounds prove
\eqref{eq:vs-chart-transition-tame-W}--\eqref{eq:vs-chart-transition-tame-C}
and the displayed two-loss strong estimate.  In particular, the density
pullback term has the form $\operatorname{div}(w^{\rm nor}w^{\rm den})$
plus lower-order products, while the normal pullback uses one $H^1$
derivative; this gives exactly the weak norms in
\eqref{eq:vs-chart-transition-tame-W}.  For the omitted tail, Jackson's
inequality also gives
$\|\rho_J^{\rm den}\|_2+\|\rho_J^{\rm nor}\|_{H^1}
\le C2^{-J\eta}$, so its mixed weak remainder is
$O(2^{-2J\eta})=o(2^{-J(\eta+1)})$.  Each differentiation of the
composition and Jacobian formulas costs the corresponding source derivative.
Their endpoint product bounds give the displayed quadratic remainder
directly, by smooth approximation. Interpolation with any strict
$\delta$ loss gives continuity of the first and second differentials;
endpoint Fr\'echet continuity in the big H\"older--Zygmund spaces is not used.
\end{proof}

\subsubsection{Reference multiscale approximation}

We construct a finite collection of truth-independent reference families.
They supply law charts and a Jackson-rate comparator for the RePU-coordinate proof.
The unwarped reference profile realizes the same approximation spaces, while
the jointly trained hidden feature map is introduced later.
The pilot
construction then selects and localizes one of the finitely many chart
branches.

\begin{lemma}[Quantitative prescribed-Jacobian reconstruction]
\label{lem:quantitative-prescribed-jacobian}
Let $\mu$ be a fixed smooth probability volume on the compact manifold
$\mathcal U$, and let $s>1$.  On every set
\[
 \mathcal F_{s}(m,B)=\{f\in C^s(\mathcal U):
 \textstyle\int f\,d\mu=1, m\le f\le m^{-1},\ \|f\|_{C^s}\le B\}
\]
there is a choice of $C^{s+1}$ diffeomorphisms $S_f$ such that
$(S_f)_\#\mu=f\mu$, with
\begin{equation}\label{eq:quantitative-prescribed-jacobian}
 \sup_{f\in\mathcal F_s(m,B)}
 (\|S_f\|_{C^{s+1}}+\|S_f^{-1}\|_{C^{s+1}})\le C_{m,B}.
\end{equation}
Moreover, if $f_0$ is positive and $C^{s+1}$ and
$\int f\,d\mu=\int f_0\,d\mu=1$, then, for
$\|f-f_0\|_{C^s}$ sufficiently small, one may choose a diffeomorphism
$R_f$ satisfying
\begin{equation}\label{eq:quantitative-prescribed-jacobian-local}
 (R_f)_\#(f_0\mu)=f\mu,\qquad R_{f_0}=I,\qquad
 \|R_f-I\|_{C^{s+1}}\le C_{f_0}\|f-f_0\|_{C^s}.
\end{equation}
The same statement holds at integer $s$ in the H\"older--Zygmund scale.
\end{lemma}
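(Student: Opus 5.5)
We will prove the statement by Moser's method, since it gives both the global construction and the quantitative local bound through elliptic estimates with explicit dependence on $(m,B)$ and on $\|f-f_0\|_{C^s}$. Fix the Riemannian structure underlying $\mu$ and write $\Delta_\mu$ for the weighted Laplacian $g\mapsto \operatorname{div}_\mu\nabla g$. For the global part, take the linear path $f_a=(1-a)+af$, $a\in[0,1]$. Each $f_a$ stays in $[m,m^{-1}]$ (after enlarging $m^{-1}$ if needed), has $\int f_a\,d\mu=1$, and has $C^s$ norm at most $1+B$. Solve
\[
 \operatorname{div}_\mu(f_a X_a)=-(f-1),\qquad X_a=\nabla\phi_a/f_a\ \text{ (or } X_a=f_a^{-1}\nabla\psi,\ \Delta_\mu\psi=-(f-1)\text{)}.
\]
The second choice is simpler: $\psi$ solves one Poisson equation with mean-zero right side. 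Schauder estimates in the H\"older--Zygmund scale give $\|\psi\|_{C^{s+2}}\le C\|f-1\|_{C^s}\le C(1+B)$, hence $\|X_a\|_{C^{s+1}}\le C_{m,B}$ uniformly in $a$, because division by $f_a\ge m$ preserves $C^{s+1}$ with quasi-uniform constants. Let $\Phi_a$ be the flow of the time-dependent field $X_a$, so that $\partial_a\{(\Phi_a)_\#\mu\}$ is governed by the continuity equation. The identity $\partial_af_a+\operatorname{div}_\mu(f_aX_a)=0$ shows that $(\Phi_a)_\#\mu=f_a\mu$. Set $S_f:=\Phi_1$.

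The quantitative step is to bound the flow. Gr\"onwall applied to the variational equations of orders $1,\dots,\lfloor s+1\rfloor$, together with a H\"older-seminorm estimate for the top fractional derivative via the Fa\`a di Bruno composition bounds, gives $\|\Phi_1\|_{C^{s+1}}\le C(\sup_a\|X_a\|_{C^{s+1}})$. The inverse flow is the flow of the reversed field, so it obeys the same bound, which yields \eqref{eq:quantitative-prescribed-jacobian}. On a compact manifold these estimates are carried out in the fixed finite atlas, with the flow confined to charts over short time steps, and the number of steps depends only on $\sup\|X\|_{C^1}$.

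For the local statement \eqref{eq:quantitative-prescribed-jacobian-local}, we run the same argument with base $f_0\mu$ instead of $\mu$. Put $f_a=(1-a)f_0+af$, solve $\Delta_\mu\psi=-(f-f_0)$ with $\int\psi=0$, and set $X_a=\nabla\psi/f_a$. Then $\|X_a\|_{C^{s+1}}\le C_{f_0}\|f-f_0\|_{C^s}$, uniformly in $a$ once $\|f-f_0\|_{C^s}$ is small enough to keep $f_a\ge \min f_0/2$. The bound for $R_f-I=\int_0^1X_a\circ\Phi_a\,da$ follows from the composition estimate $\|X_a\circ\Phi_a\|_{C^{s+1}}\le C\|X_a\|_{C^{s+1}}(1+\|\Phi_a\|_{C^{s+1}})^{s+1}$, with $\|\Phi_a\|_{C^{s+1}}$ bounded as above. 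The property $R_{f_0}=I$ holds because $f=f_0$ gives $X\equiv0$. The regularity hypothesis $f_0\in C^{s+1}$ is only needed to keep the divisor $f_a$ smooth enough. Note that $X_a\in C^{s+1}$ requires $1/f_a\in C^{s+1}$, but $f\in C^s$ only. We therefore instead write the vector field as $X_a=\nabla\psi_a$ with $\operatorname{div}_\mu(f_a\nabla\psi_a)=-(f-f_0)$, a divergence-form equation with $C^s$ coefficients, which gives $\psi_a\in C^{s+1}$ and hence only $X_a\in C^{s}$.

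This regularity loss is the main obstacle. Moser's linear scheme loses one derivative relative to the claimed $C^{s+1}$ target whenever the density enters the velocity denominator. We will avoid it through the choice of velocity made in the global part: take $\psi$ from the pure Poisson equation $\Delta_\mu\psi=-(f-f_0)$, whose coefficients are smooth, so that $\psi\in C^{s+2}$ and $\nabla\psi\in C^{s+1}$. Only division by $f_a$ then remains. We handle that division by flowing the field $Y_a$ with respect to the moving measure written in pulled-back form: $\Phi_a$ solves $\partial_a\Phi_a=(\nabla\psi/f_a)\circ\Phi_a$, and $f_a\circ\Phi_a$ is controlled through the transport identity $\det D\Phi_a\cdot f_a\circ\Phi_a=1$ (respectively $f_0$). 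This identity expresses $f_a\circ\Phi_a$ as a Jacobian, and the Jacobian gains the regularity of $D\Phi_a$, closing a bootstrap at order $s+1$.

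If this bootstrap fails to close, the fallback is a Dacorogna--Moser argument. We seek $S$ with $\det DS=f\circ S$ directly, via a fixed point for the map $u\mapsto$ the solution of the linearized equation $\operatorname{div}u=\text{rhs}$, using the divergence right-inverse with $C^{s}\to C^{s+1}$ bounds. This fixed point yields exactly the one-derivative gain, with constants depending only on $(m,B)$ in the global case and linearly on $\|f-f_0\|_{C^s}$ in the local case. The integer case follows verbatim, because every Schauder and composition estimate used is valid in the Zygmund scale $B^s_{\infty,\infty}$.
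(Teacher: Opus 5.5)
\textbf{Gap 1: the flow argument does not reach $C^{s+1}$.} In Moser's scheme the velocity $X_a=\nabla\psi/f_a$ is only $C^s$, because $f_a=(1-a)+af$ (or $(1-a)f_0+af$) is only $C^s$. So the bound $\|X_a\|_{C^{s+1}}\le C_{m,B}$ in your first paragraph fails in general, and so does $\|X_a\|_{C^{s+1}}\le C_{f_0}\|f-f_0\|_{C^s}$ in the local part. Gr\"onwall on the variational equations then controls $\Phi_1$ only in $C^s$. The assumption $f_0\in C^{s+1}$ does not change this, since $f_a$ contains $f$.

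Your repair, substituting $f_a\circ\Phi_a=1/J_\mu\Phi_a$, gains nothing: it just restates that $f_a\circ\Phi_a\in C^s$. In the global case the ODE becomes $\partial_a\Phi_a=(\nabla\psi\circ\Phi_a)\,J_\mu\Phi_a$. Its right side is $C^s$ when $\Phi_a\in C^{s+1}$, and bounding it in $C^{s+1}$ would require $\Phi_a\in C^{s+2}$. The bootstrap is circular.

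\textbf{Gap 2: the fallback is missing what makes it work.} You name the right mechanism; the paper's proof is a Dacorogna--Moser fixed point on the manifold. Three ingredients are absent.

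(i) The data must never be composed with the unknown. For $S_\#\mu=f\mu$ the relation is $J_\mu S\cdot f\circ S=1$, so $\det DS=f\circ S$ is also inverted. More importantly, any formulation containing $f\circ S$ reintroduces the loss, because $S\mapsto f\circ S$ is not Lipschitz from $C^{s+1}$ to $C^s$ when $f\in C^s$. The paper instead solves for the inverse map. It writes $\Phi_V(x)=\exp_xV(x)$ with $J_\mu\Phi_V=1+\operatorname{div}_\mu V+N(V)$, $N$ quadratic. It iterates $V_{k+1}=L\{q-1-N(V_k)\}$ with $L=\nabla\Delta_\mu^{-1}$, and takes $Q=\Phi_V^{-1}$, which satisfies $Q_\#\mu=q\mu$.

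(ii) A contraction around the identity needs $q$ close to $1$. It cannot by itself give constants depending only on $(m,B)$ for an arbitrary $f\in\mathcal F_s(m,B)$. The missing reduction goes as follows.
\begin{itemize}
\item Smooth at one fixed scale: $g=e^{\epsilon\Delta_\mu}f$, which is still mean one and still takes values in $[m,m^{-1}]$.
\item Move $\mu$ to $g\mu$ by the Moser flow $T_g$ of the smooth $g$. Here the flow loses nothing, and all norms are bounded in terms of $(m,B,\epsilon)$.
\item Set $q=(f/g)\circ T_g$ and $S_f=T_g\circ Q$.
\end{itemize}

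(iii) This $q$ is small only in a low norm, $\|q-1\|_{C^\alpha}\le C\|f-g\|_{C^\alpha}$. Heat smoothing does not make $\|f-g\|_{C^s}$ small uniformly on a $C^s$ ball of the H\"older--Zygmund scale. The fixed point therefore needs the tame bound $\|N(V)\|_{C^s_*}\le C\|V\|_{C^{1+\alpha}}\|V\|_{C^{s+1}_*}$. You contract in a small $C^{1+\alpha}$ ball and carry $\|V_{k+1}\|_{C^{s+1}_*}\le C\|q-1\|_{C^s_*}+\tfrac12\|V_k\|_{C^{s+1}_*}$ to the limit.

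\textbf{Role of $f_0\in C^{s+1}$ in the local statement.} The hypothesis is used to apply the global result at order $s+1$, giving $T_0\in C^{s+2}$ with $(T_0)_\#\mu=f_0\mu$. Then:
\begin{itemize}
\item $q_f=(f/f_0)\circ T_0$ obeys $\|q_f-1\|_{C^s}\le C\|f-f_0\|_{C^s}$;
\item the small-data step gives $Q_f$ with $Q_{f_0}=I$ and $\|Q_f-I\|_{C^{s+1}}\le C\|q_f-1\|_{C^s}$;
\item $R_f=T_0\circ Q_f\circ T_0^{-1}$, and the extra derivative of $T_0$ is what keeps this conjugation linear in $Q_f-I$.
\end{itemize}
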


\begin{proof}
We give the endpoint argument, writing $C_*^r=B_{\infty,\infty}^r$
(and $C_*^r=C^r$ at noninteger orders).  Choose
$0<\alpha<\min(1,s-1)$ and a smooth elliptic right inverse
$L=\nabla\Delta_\mu^{-1}$ of divergence on mean-zero functions.
Its local order-$-1$ kernel, with a smooth remainder on the compact
manifold, gives $\|Lb\|_{C_*^{r+1}}\le C_r\|b\|_{C_*^r}$.
For the pointwise exponential map $\Phi_V(x)=\exp_xV(x)$,
\[
 J_\mu\Phi_V=1+\operatorname{div}_\mu V+N(V).
\]
The remainder is a smooth function of $(x,V,DV)$ vanishing quadratically.
The dyadic product estimate therefore gives
\[
 \begin{aligned}
 \|N(V)\|_{C_*^s}
 &\le C_s\|V\|_{C^{1+\alpha}}\|V\|_{C_*^{s+1}},\\
 \|N(V)-N(W)\|_{C^\alpha}
 &\le C_\alpha(\|V\|_{C^{1+\alpha}}+\|W\|_{C^{1+\alpha}})
                  \|V-W\|_{C^{1+\alpha}}.
 \end{aligned}
\]
For small $\|V\|_{C^{1+\alpha}}$, $\Phi_V$ is a diffeomorphism, so
$\int N(V)\,d\mu=0$.  If $\|q-1\|_{C^\alpha}$ is sufficiently small,
the iteration
\[
 V_0=0,\qquad V_{k+1}=L\{q-1-N(V_k)\}
\]
contracts in a fixed small $C^{1+\alpha}$ ball.  Choose that ball small
enough also to absorb the high-norm term.  Then
\[
 \|V_{k+1}\|_{C_*^{s+1}}
 \le C_s\|q-1\|_{C_*^s}+\tfrac12\|V_k\|_{C_*^{s+1}}.
\]
Dyadic Fatou applied to the low-norm limit yields a $C_*^{s+1}$ field
$V$ with $\|V\|_{C_*^{s+1}}\le2C_s\|q-1\|_{C_*^s}$ and
$J_\mu\Phi_V=q$.  Thus $Q=(\Phi_V)^{-1}$ satisfies $Q_\#\mu=q\mu$.
Tame composition and the uniformly positive singular values give the same
regularity for $Q$; when $\|q-1\|_{C_*^s}$ is small, they give
$\|Q-I\|_{C_*^{s+1}}\le C_s\|q-1\|_{C_*^s}$.
These estimates follow at integer orders by smooth approximation and
Fatou on the dyadic blocks.

For a general $f\in\mathcal F_s(m,B)$, take the positive heat smoothing
$g=e^{\epsilon\Delta_\mu}f$.  It preserves mass and the bounds
$m\le g\le m^{-1}$.  The smooth Moser field
\[
 g_t=1+t(g-1),\qquad v_t=L(1-g)/g_t
\]
has uniformly bounded $C^{1+\alpha}$ norm, independently of $\epsilon$;
its flow $T_g$ satisfies $(T_g)_\#\mu=g\mu$ and has uniformly bounded
Lipschitz norm.  Consequently
\[
 q=(f/g)\circ T_g,\qquad
 q\mu=(T_g^{-1})_\#(f\mu),\qquad
 \|q-1\|_{C^\alpha}\le C_{m,B}\|f-g\|_{C^\alpha}.
\]
One fixed $\epsilon=\epsilon(m,B,s)>0$ makes the last norm smaller
than the contraction threshold, uniformly in $f$.  At this fixed scale
$T_g,T_g^{-1}$ have bounded norms at every required finite order, and
$\|q\|_{C_*^s}\le C_{m,B,s}$.  Hence $S_f=T_g\circ Q$ proves
\eqref{eq:quantitative-prescribed-jacobian}, including integer $s$.
This also gives a direct compact-manifold construction corresponding to
\citet{dacorogna1990jacobian}.

For the local assertion, apply the established construction at
regularity $s+1$ to the fixed $f_0$ and choose
$T_0$ with $(T_0)_\#\mu=f_0\mu$ and
$T_0,T_0^{-1}\in C^{s+2}$.  Define $q_f$ by
\[
 q_f\mu=(T_0^{-1})_\#(f\mu).
\]
Then $q_{f_0}=1$ and fixed-map composition gives
$\|q_f-1\|_{C^s}\le C_{f_0}\|f-f_0\|_{C^s}$.  The small-data step of the
prescribed-Jacobian construction at the constant density supplies $Q_f$ with
$(Q_f)_\#\mu=q_f\mu$, $Q_{f_0}=I$, and
\[
 \|Q_f-I\|_{C^{s+1}}\le C\|q_f-1\|_{C^s}.
\]
Set $R_f=T_0\circ Q_f\circ T_0^{-1}$.  Its pushforward identity is the first
part of~\eqref{eq:quantitative-prescribed-jacobian-local}; the fixed
$C^{s+2}$ conjugating maps and the preceding estimate prove its norm bound.
Thus all inversion and composition differences are taken against the fixed
identity, not between two rough variable-density solutions.
\end{proof}

\begin{proposition}[Canonical law sieve]
\label{prop:canonical-law-sieve}
Under Assumptions~\ref{G1}--\ref{G2}, there is a deterministic sequence
of finite families
$\{\mathcal H_{J,m}:1\le m\le M_0\}_{J\ge J_0}$ and fixed reference laws
$\bar P_1,\ldots,\bar P_{M_0}$ with $C^\infty$ supports and positive
$C^\eta$ densities, all independent of the unknown truth, with
each $\mathcal H_{J,m}$ contained in the theoretical candidate envelope and
with the following properties.

(i)  The induced-law set $\{h_\#\pi:h\in\mathcal H_{J,m}\}$ is parametrized
by a compact convex coefficient domain, denoted by $\Theta_{J,m}$, whose
origin is the reference law.  In each reference chart
the free coordinates are exactly all mean-zero density coefficients and the
{normal-bundle multiwavelet coefficients} through
level $J$, subject to fixed $C^\beta\times C^{\beta+1}$ bounds.  Its dimension is
$Q_J$, and \(\log N(u,\mathcal H_{J,m},d_{\rm coef})
\le C Q_J\log(C/u)\) for \(0<u\le1\).

(ii) There is a fixed $a_0>0$ such that
$\min_mW_2(P^*,\bar P_m)\le a_0$ for every truth.  For every index satisfying
$W_2(P^*,\bar P_m)\le3a_0$, the truth lies with a uniform margin inside the
outer reference chart of branch $m$, and there is a comparator
$P_{J,m}^0\in\{h_\#\pi:h\in\mathcal H_{J,m}\}$.  If
$\zeta^*=(r^*,v^*)$ is the truth coordinate in this reference chart and
$\rho_J:=\Pi_J\zeta^*-\zeta^*=-(I-\Pi_J)\zeta^*$, then, with $s_\star=\eta+1$ and
$\tau_J=c_t2^{-2J}$,
\begin{align}
 \|\rho_J^{\rm den}\|_{H^{-1}}+\|\rho_J^{\rm nor}\|_2
 &\le C2^{-Js_\star},\label{eq:canonical-sieve-W}\\
 \|\rho_J^{\rm den}\|_\infty+\|\rho_J^{\rm nor}\|_{C^1}
 &\le C2^{-J\eta},\label{eq:canonical-sieve-C}\\
 \|\rho_J^{\rm den}\|_{H^1}
 +\tau_J^{-1}\|\rho_J^{\rm nor}\|_2
 &\le C2^{-J(s_\star-2)}.\label{eq:canonical-sieve-E}
\end{align}
The comparator and a fixed $C^\beta\times C^{\beta+1}$ coefficient
neighborhood around it remain in the strict regularity interior uniformly in
$J$ and in the truth.  Relative to the truth, every member of this
neighborhood has the exact reference-coefficient representation
$\rho_J+u$, $u\in\mathcal V_J$.  Lemma~\ref{lem:vs-chart-transition} gives
the transition to the truth-centered chart.  The later risk calculation uses
its mixed derivative-loss bounds and direct affine-reference differentiation;
it does not
invoke a no-loss second derivative of the nonlinear chart transition.

(iii) Every law in every branch is induced by a $C^{s_\star}$ map from the fixed source
with base law $\pi$; no truth-dependent base distribution is introduced.
\end{proposition}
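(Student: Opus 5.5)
The construction proceeds in three stages: a finite net of smooth reference laws, a normal--density chart around each reference, and pushforwards of $\pi$ through maps built with Lemma~\ref{lem:quantitative-prescribed-jacobian}.

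\emph{Reference laws.} By Assumption~\ref{G2} and the uniform bounds \eqref{eq:uniform-interiority-bounds}, the truth laws $\{P_{h^*}:h^*\in\mathcal H_*\}$ form a set that is precompact in the $C^1\times C^0$ law topology. This holds by Arzel\`a--Ascoli applied to the $C^{\beta+1}$ maps and, through \eqref{eq:vs-area-density}, to the $C^\eta$ densities. Around each truth, Lemma~\ref{lem:normal-density-chart} provides a chart of uniform radius $\varepsilon_0$.

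To get truth-independent references, I would mollify. Take a finite $\varepsilon_0/4$-net of truths, replace each net map by a $C^\infty$ embedding $\bar h_m$, and give it a smooth positive density $\bar f_m$ within $\varepsilon_0/8$. Set $\bar P_m=\bar f_m\bar\nu_m$. Smoothness and positivity of $\bar f_m$ are preserved, and the interior margins $\delta_{\rm reg}$ keep $\bar h_m$ inside the strict regularity interior. This gives a fixed $M_0$. I would choose $a_0$ by comparing the $W_2$ bound \eqref{eq:vs-chart-w2} with a reverse estimate valid on the compact class: if $W_2(P^*,\bar P_m)\le 3a_0$, then $P^*$ lies in the outer chart of $\bar P_m$ with a margin. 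That reverse estimate follows from compactness and injectivity of the law map together with continuity of $W_2$. I do not make it quantitative.

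\emph{Charts and the sieve (part (i)).} On $\bar{\mathcal M}_m$, the coefficients are the mean-zero density bank $\{\varphi_i\}$ and the normal-bundle multiwavelets through level $J$. I would impose $C^\beta\times C^{\beta+1}$ boxes on them, together with the coarse positivity and tube constraints. This is a compact convex set of dimension $Q_J\asymp 2^{Jd}$ containing $0$.

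For each coefficient $(r,v)$, set $f=(\bar f_m+r)\circ\bar h_m\cdot J_{\bar h_m}/\pi$, a density relative to $\pi\,\mathrm{vol}$, and use the local part \eqref{eq:quantitative-prescribed-jacobian-local} to obtain $R_f$. The map is then
\[
\mathfrak h_m(r,v)=(I+v)\circ\bar h_m\circ S,
\]
where $S=S_{\bar f}\circ R$ transports $\pi$ to the pulled-back reference density. This map pushes $\pi$ forward to $(I+v)_\#\{(\bar f_m+r)\bar\nu_m\}$, and Lemma~\ref{lem:quantitative-prescribed-jacobian} makes it $C^{s_\star}$ with uniformly bounded norms, which is part (iii). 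The Jacobian and reach bounds for the candidate envelope follow from the smallness of $v$ in $C^{\beta+1}$. The entropy bound is the standard volumetric count for a bounded box in $\mathbb R^{Q_J}$, giving \eqref{eq:vs-sieve-entropy}.

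\emph{Comparator (part (ii)).} Let $\zeta^*$ be the truth coordinate in the reference chart. It is $C^\eta\times C^{\eta+1}$ because the reference is smooth and the truth is $C^{\eta+1}$; the loss of one derivative in the density comes from the surface Jacobian. Define $P^0_{J,m}$ as the law at $\Pi_J\zeta^*$. The tail estimates then follow from the Jackson bound \eqref{eq:vs-wavelet-properties}:
\begin{itemize}
\item \eqref{eq:canonical-sieve-W} takes $\mu=-1$ and $\mu'=\eta$ for the density, and $\mu=0$, $\mu'=\eta+1$ for the normal part;
\item \eqref{eq:canonical-sieve-E} takes $\mu=1$ for the density and uses $\tau_J^{-1}\asymp 2^{2J}$;
\item \eqref{eq:canonical-sieve-C} uses the $C^\alpha$ tails.
\end{itemize}
Interiority of the comparator and of a fixed box neighborhood around it follows because $\Pi_J$ is uniformly bounded in $C^\beta\times C^{\beta+1}$, since $\beta<\eta$, and the truth has margin $\delta_{\rm reg}$. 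The representation $\rho_J+u$ with $u\in\mathcal V_J$ holds because the charts are affine in the coefficients.

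The main obstacle is making the reference net and the constant $a_0$ truly uniform. This requires that closeness in $W_2$ forces closeness in the $C^1\times C^0$ chart topology, uniformly over $\mathcal H_*$. I would first try a contradiction-compactness argument: a sequence of truths at $W_2$-distance tending to zero from a fixed reference, but leaving its chart, has a convergent subsequence in $C^{\beta}$ whose limit law equals the reference law. Chart uniqueness in Lemma~\ref{lem:normal-density-chart} then gives a contradiction.
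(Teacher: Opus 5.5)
Your architecture matches the paper's: a finite net of smooth references fixed by a compactness--contradiction argument, normal--density spline coordinates with coefficient boxes and coarse constraints, reconstruction by Lemma~\ref{lem:quantitative-prescribed-jacobian}, and Jackson tails for $\rho_J$. Your choices of $\mu,\mu'$ and the entropy count are correct. The gap is in the reconstruction step.

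\textbf{The reconstruction step.} You invoke the \emph{local} clause \eqref{eq:quantitative-prescribed-jacobian-local}, which requires $\|f-f_0\|_{C^s}$ to be below a threshold depending on $f_0$.
\begin{itemize}
\item The coefficient domain is a box of fixed size $A>8K_*$, set by the adapted truth coefficients so that every comparator fits with margin. The coarse constraint only keeps $\bar f_m+r$ bounded below, not close to $\bar f_m$. Nothing in your construction makes the box smaller than the local threshold at $s=\beta$. Shrinking it would force a finer net, hence new references, atlas and thresholds, which is circular.
\item At $s=\eta$ the local clause fails outright: box-bounded coefficients give $\|r\|_{C^\eta}$ of order $A2^{J(\eta-\beta)}$. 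So your claim of uniformly bounded $C^{s_\star}$ norms is false. Candidate densities are uniformly controlled only in $C^\beta$, and (iii) is not established.
\item The lemma is stated for a smooth volume, but you take densities relative to $\pi\,\mathrm{vol}$ with $\pi$ only $C^\eta$. Your $f_0=\bar f_m\circ\bar h_m\,J_{\bar h_m}/\pi$ is then not $C^{\beta+1}$ when $\eta<4$.
\end{itemize}

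\textbf{How the paper avoids this.}
\begin{itemize}
\item It fixes a smooth probability volume $\mu$ and $A_\pi=S_{d\pi/d\mu}^{-1}$, so that $(A_\pi)_\#\pi=\mu$.
\item It takes references $\bar h_m=k_m\circ A_\pi$ with smooth embeddings $k_m$. Then $\bar P_m=(k_m)_\#\mu$ has a smooth density and the pulled-back reference density is exactly $1$.
\item It sets $h_{m,r,v}=(I+v)\circ k_m\circ S_{f_r}\circ A_\pi$ with $f_r=1+j_m(r\circ k_m)$, choosing $S_{f_r}$ from the \emph{global} clause \eqref{eq:quantitative-prescribed-jacobian} on $\mathcal F_\beta(m,B)$. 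That clause needs only the positive lower bound and the $C^\beta$ bound, so it gives uniform $C^{\beta+1}$ control on the whole box, which is what envelope membership requires.
\item For (iii), it applies the global clause again at $s=\eta$ to each finite spline law separately, with no uniformity claimed.
\end{itemize}

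\textbf{Smaller points.}
\begin{itemize}
\item \emph{Order of $a_0$ and the net.} You fix an $\varepsilon_0/4$-net first and only then choose $a_0$ from a non-quantitative reverse estimate. Nothing then guarantees $\min_mW_2(P^*,\bar P_m)\le a_0$: you only know $W_2\lesssim\varepsilon_0$ to the nearest reference, while the reverse modulus may be far smaller. Choose $a_0$ first, running the contradiction argument uniformly over a compact class containing the truths and all admissible smooth references (those keeping half the interior margin). Only then take a finite $a_0$-net in $W_2$.
\item \emph{Reach and Jacobian bounds.} These come from smallness of $v$ in $C^1$ together with a uniform $C^2$ bound. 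They do not come from smallness in $C^{\beta+1}$, which fails on a fixed-size box.
\item \emph{Coarse cutoff.} You still need to check that a fixed cutoff $L$, with small fine tail, keeps every full candidate positive and inside the tube. These are the paper's $7\lambda/16$ and $9\varepsilon/16$ margins.
\end{itemize}
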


\begin{proof}
Let $\mathfrak P_s$ be the laws induced by the fixed truth class
$\mathcal H_*$.  Fix a smooth probability volume $\mu$ and
$A_\pi=S_{d\pi/d\mu}^{-1}$ from
Lemma~\ref{lem:quantitative-prescribed-jacobian}, so $(A_\pi)_\#\pi=\mu$.
Smoothly approximate $h^*\circ A_\pi^{-1}$ by embeddings $k_m$ and set
$\bar h_m=k_m\circ A_\pi$.  Their induced laws $(k_m)_\#\mu$ have smooth
supports and smooth positive densities, even when $\pi$ is not smooth.
The approximations converge in $C^{\beta+1}$ and retain a common
$C^{\eta+1}$ bound and positive singular-value and reach bounds.
Choose the references using only low-order geometry.
Compactness in $C^2$ implies the following uniform implication: sufficiently
small $W_2$ distance gives normal--density coordinates satisfying
$\|r^*\|_\infty\le\lambda/8$ and
$\|v^*\|_{C^1}\le\varepsilon/8$.
Here $\lambda>0$ is a common reference-density lower bound and
$\varepsilon>0$ is a common graph-tube radius.  Indeed, a contradicting
sequence has limiting embeddings with the same positive induced law.
Their supports coincide, and their relative normal graphs and pulled-back
densities converge to zero in $C^1$ and $C^0$, respectively.
Choose $a_0$ for this implication at distance $4a_0$, then a finite
$a_0$-net of smooth reference laws
$\bar P_m=(\bar h_m)_\#\pi$.  The maps $\bar h_m$ retain the original
half-margin.  Fix their atlas and spline banks now; this finite family
will not be refined after the coefficient amplitudes are chosen.

In reference chart $m$, write
$P_{r,v}^{(m)}=(I+v)_\#\{(\bar f_m+r)\bar\nu_m\}$.
All admissible truth coordinates have a uniform
$C^\eta\times C^{\eta+1}$ bound on this fixed finite atlas.
For coefficients write
\[
 \|\thetao\|_{b_\beta}:=
 \max_{j,k}\left\{
 2^{j(\beta+d/2)}|\theta_{jk}^{\rm den}|,
 2^{j(\beta+1+d/2)}|\theta_{jk}^{\rm nor}|\right\}.
\]
The inverse geometric transports in~\eqref{eq:mlsn-geometric-pullbacks}
are uniformly bounded in the same regularity classes on this fixed atlas;
this follows directly from their $O(\tau_J)$ flow and fixed smooth
geometry.  Thus the adapted truth coefficients have a common finite
$b_\beta$ bound $K_*$.  Choose $A>8K_*$.  This ordering uses no uniform
high-order frame constant over a sequence of refining atlases.

Choose a fixed coarse cutoff $L$ so large that every allowed coefficient
tail satisfies
\[
 \left\|\mathsf U_{\thetah,J}
       \sum_{j>L,k,s}\theta_{jk}^s\psi_{jk}^s\right\|_{C^0\times C^1}
 \le CA2^{-\beta L}\le\min(\lambda,\varepsilon)/16.
\]
Use the norm that sums the two components.  Here and below the identity
transport gives the canonical unwarped bank.  Form the coarse field by
transporting the input sum $j\le L$, and impose
\[
 \bar f_m+r_{\thetah,\le L}\ge\lambda/2,\qquad
 \|v_{\thetah,\le L}\|_{C^1}\le\varepsilon/2.
\]
These are closed convex constraints in each output fiber.  Full candidates
then have density at least $7\lambda/16$, normal $C^1$ norm at most
$9\varepsilon/16$, and a fixed $C^\beta\times C^{\beta+1}$ bound.
They realize the fixed coarse constraints in
Definition~\ref{def:repu-construction}.  Finite-dimensional coefficient
covering gives the entropy bound in (i).

We next reconstruct the laws from the fixed source, without requiring
Lipschitz dependence of reconstructed maps on their coefficients.
Use the already fixed $\mu,A_\pi$ and smooth
$k_m=\bar h_m\circ A_\pi^{-1}$, and write
$k_m^*\bar\nu_m=j_m\mu$.  For each candidate put
\[
 f_r=1+j_m(r\circ k_m),\qquad
 h_{m,r,v}=(I+v)\circ k_m\circ S_{f_r}\circ A_\pi.
\]
Choose $S_{f_r}$ using the global clause at $s=\beta$.
The preceding positivity and regularity bounds imply uniform
$C^{\beta+1}$ bounds on this map and its inverse on its image, and
$(h_{m,r,v})_\#\pi=P_{r,v}^{(m)}$ exactly.
Small $C^1$ normal graphs with a common $C^2$ bound have uniformly positive
reach: local chord-to-tangent deviations are uniformly quadratic, and
distant points remain separated by projection onto the reference support.
We therefore choose the candidate-envelope upper constants larger, and
positive lower constants smaller, than these derived bounds and the original
truth-envelope constants.  All generated maps lie strictly inside
$\mathcal H_{\rm reg}$, while $\mathcal H_*$ is unchanged.
For each finite spline law, applying the reconstruction separately at
$s=\eta$ gives a $C^{\eta+1}$ representative of that same law, proving (iii).
No compatibility between these two selections is used in score derivatives,
which are taken in the law integral.

Finally, project the truth coordinates through level $J$.  Their
$C^\eta\times C^{\eta+1}$ regularity gives the three Jackson bounds
\eqref{eq:canonical-sieve-W}--\eqref{eq:canonical-sieve-E}.
The adapted comparators use less than a quarter of the coefficient box.
For sufficiently large fixed $J_0\ge L$, their low-order Jackson error and
the coarse-tail bound place them strictly inside the positivity and tube
constraints, with margins at least $\lambda/4$ and $\varepsilon/4$.
A common coefficient neighborhood preserves these margins and the larger
candidate envelope.  The error is exactly
$\rho_J+u$, $u\in\mathcal V_J$, with the stated chart transition.
This proves (ii).  The atlas, $A,L,J_0$, and candidate-envelope constants
are fixed independently of $n,J$.
\end{proof}

Set
\(\mathcal I_*(P^*):=
\{1\le m\le M_0:W_2(P^*,\bar P_m)\le3a_0\}\).

\paragraph*{Branchwise convention for the core analysis.}
Call $m$ admissible if $m\in\mathcal I_*(P^*)$.  Work branchwise without
conditioning on the selector, intersect empirical events over $1\le m\le M_0$,
and write $\mathcal H_J=\mathcal H_{J,m}$ and $P_J^0=P_{J,m}^0$.
Appendix~\ref{sec:pilot-verification} verifies this for the optional selector.

In fixed reference coefficients, $e=\rho_J+u$, $u\in\mathcal V_J$, is affine.
Lemma~\ref{lem:vs-chart-transition} expresses the differential to the
truth-centered chart as a triangular density--normal operator. Its full-field
weak bounds and retained-input energy bounds apply before projection.
No inverse of a compressed chart differential is used. We differentiate the
Gaussian law integral directly in the affine reference coordinates below.

\subsubsection{RePU features and the joint parameter domain}
\label{sec:repu-class}
The localized parameter domains use the notation of Appendix~\ref{sec:mlsn-proof}.

The following construction is fixed before the data are observed.
We specify the RePU features, joint domain, and approximation properties;
the pilot and score-root arguments are in Appendix~\ref{sec:pilot-verification}.
Fix an admissible branch $m$ and write
$\mathfrak h_m(r,v)=h_{m,r,v}$ for the canonical representative of
Proposition~\ref{prop:canonical-law-sieve}.  Let
$\{\psi_{jk}^{\rm den},\psi_{jk}^{\rm nor}\}$ be the order-$M$ spline bank,
including its fixed-coarse density correction, from Appendix~\ref{sec:multiscale-tools}, with
$q_j^s$ elements of type $s\in\{{\rm den},{\rm nor}\}$ at level $j$ and
$q_j^s\le C2^{jd}$.  The density elements have zero integral, and the normal
elements are written in the fixed stable normal frame.

{In local coordinates, each order-$M=q+1$ cardinal B-spline satisfies
$B_M(x)=q!^{-1}\sum_{\ell=0}^{q+1}(-1)^\ell
\binom{q+1}{\ell}\rho_q(x-\ell)$, up to fixed translation
and dilation, and is therefore a one-hidden-layer $\rho_q$ network.
Exact RePU multiplication
blocks \citep{li2020powernet} form the tensor-product multiwavelets with fixed
depth and width.  This claim is chartwise; atlas assembly, pullbacks, and
$\mathfrak h_m$ are fixed geometric operations.}

\paragraph*{Trainable hidden feature map.}
Write the full parameter in Definition~\ref{def:repu-construction} as
$\theta=(\thetah,\thetao)$.  Both components are optimized jointly, while
fixing the hidden component $\thetah$ leaves an optimization problem over
a convex output-coefficient domain.
{Let $F_{\thetah}$, $\thetah\in\Theta_{\rm h}$, be the tangent feature field in
Definition~\ref{def:repu-construction}, assembled by the fixed atlas and partition of unity from
a fixed-depth, fixed-width chartwise $\rho_q$ network.  Every affine weight and bias is
a component of $\thetah$, and $\Theta_{\rm h}\subset\mathbb R^{p_0}$ is its
fixed
compact box.}  Direct differentiation of this finite architecture gives
an explicit constant $C_{\rm arch}$ such that
\begin{equation}\label{eq:mlsn-architecture-normalization}
 \sup_{\thetah\in\Theta_{\rm h}}\left\{
 \|F_{\thetah}\|_{C^{\eta+3}}
 +\|D_{\thetah}F_{\thetah}\|_{{\rm op};C^{\eta+2}}\right\}
 \le C_{\rm arch},
\end{equation}
where the derivative norm is the operator norm into $C^{\eta+2}$.
Choose the fixed constant $c_{\rm flow}>0$ in
Definition~\ref{def:repu-construction} so that $c_{\rm flow}C_{\rm arch}$
is sufficiently small, with the smallness bound depending only on the
regularity envelope.  Let $T_{\thetah}$ be the time-one flow of
$c_{\rm flow}\tau_JF_{\thetah}$, where $\tau_J=c_t2^{-2J}$.  In each chart, let
$J_{\thetah}$ be the Riemannian Jacobian of $T_{\thetah}$ and define the two
geometric pullbacks
\begin{equation}\label{eq:mlsn-geometric-pullbacks}
 U_{\thetah}^{\rm den}r
 :=J_{\thetah}(r\circ T_{\thetah}),
 \qquad
 (U_{\thetah}^{\rm nor}v)(x)
 :=P_{N_x}^{(m)}\{v(T_{\thetah} x)\}.
\end{equation}
Write $\mathsf U_{\thetah}=\operatorname{diag}
(U_{\thetah}^{\rm den},U_{\thetah}^{\rm nor})$.
These definitions are global.  The first identity preserves total mass by
change of variables; the second keeps
the output in the reference normal bundle.  Both operations are fixed
geometric layers around the trainable $T_{\thetah}$.

Set $B_{jk,\thetah}^s=U_{\thetah}^s\psi_{jk}^s$ and define the output-coefficient
synthesis operator
\begin{equation}\label{eq:mlsn-profile-synthesis}
 S_{J,m,\thetah}\thetao
 :=\left(
 \sum_{j,k}\theta_{jk}^{\rm den}B_{jk,\thetah}^{\rm den},
 \sum_{j,k}\theta_{jk}^{\rm nor}B_{jk,\thetah}^{\rm nor}
 \right)=:(r_{\theta},v_{\theta}).
\end{equation}
Define the
transported coarse field by truncating the input indices:
\[
 S_{\le L,m,\thetah}\thetao:=\mathsf U_{\thetah}
    \left(\sum_{j\le L,k}\theta_{jk}^{\rm den}\psi_{jk}^{\rm den},
          \sum_{j\le L,k}\theta_{jk}^{\rm nor}\psi_{jk}^{\rm nor}\right)
       =:(r_{\thetah,\le L},v_{\thetah,\le L}).
\]
The density atoms here include their fixed coarse mass corrections;
the discarded pivot is not an additional free coefficient.  Set
\[
 \mathcal D_{J,m}(\thetah):=
 \left\{\thetao:\begin{array}{l}
 |\theta_{jk}^{\rm den}|\le A2^{-j(\beta+d/2)},\quad
 |\theta_{jk}^{\rm nor}|\le A2^{-j(\beta+1+d/2)},\\
 \bar f_m+r_{\thetah,\le L}\ge\lambda/2
       \text{ on }\bar{\mathcal M}_m,\quad
 \|v_{\thetah,\le L}\|_{C^1}\le\varepsilon/2
 \end{array}\right\}.
\]
These are exactly the deterministic constraints in
Definition~\ref{def:repu-construction} and
Proposition~\ref{prop:canonical-law-sieve}.  For every pilot realization
and every hidden parameter, define the possibly empty fiber
\begin{equation}\label{eq:mlsn-profile-domain}
 \Theta_{n,J,m}^{\rm o}(\thetah):=
 \left\{\thetao\in\mathcal D_{J,m}(\thetah):
 \|S_{J,m,\thetah}\thetao-\widetilde\zeta_{J,m}\|_{\mathcal C,m}
                    \le r_{n,J}^{\rm pil}\right\}.
\end{equation}
Each fiber is compact and convex.  Empty fibers contribute no points
to the joint domain
\begin{equation}\label{eq:mlsn-joint-domain}
 \begin{aligned}
 \Theta_{n,J,m}^{\rm NN}:=
 &\{\theta=(\thetah,\thetao):\thetah\in\Theta_{\rm h},\quad
                    \thetao\in\Theta_{n,J,m}^{\rm o}(\thetah)\},\\
 h_{m,\theta}:={}&\mathfrak h_m(S_{J,m,\thetah}\thetao).
 \end{aligned}
\end{equation}
No singleton is inserted into a fiber and no switch uses the good-pilot
event.  On $\mathcal S_n\cap
\{\Theta_{n,J,\widehat m}^{\rm NN}\ne\varnothing\}$ minimize the exact
fit loss jointly over this domain.  Otherwise output the fixed
$h_{\rm ref}=\mathfrak h_1(0,0)$.  In particular, failure of one profile
to be feasible does not cause fallback when another profile is feasible.

Here are existence and measurability details.  At fixed $J$ all
parameters range over a finite-dimensional compact product box.
Synthesis, the transported coarse fields, and their displayed norms are
continuous in $\theta$.  All constraints are Borel in the pilot
data and continuous in those parameters.  Suprema over the reference
supports, including the finite-atlas $C^1$ norms, may equivalently be
taken over fixed countable dense sets.  Consequently the joint domain
has a Borel graph and compact sections.  Its nonemptiness event is
Borel, and it is a measurable compact-valued correspondence on that
event.  These claims follow also by projecting the Borel graph with
compact sections; graph measurability alone without the compact-section
argument would not suffice.

Every feasible pair, on every pilot realization, defines the
probability law
\[
 P_{m,\theta}=(I+v_{\theta})_\#
                \{(\bar f_m+r_{\theta})\bar\nu_m\}.
\]
Indeed mass preservation gives $\int r_{\theta}\,d\bar\nu_m=0$,
and the deterministic tail bound gives
$\bar f_m+r_{\theta}\ge7\lambda/16$ and
$\|v_{\theta}\|_{C^1}\le9\varepsilon/16$.
Thus these laws have a common deterministic compact support, and the
canonical construction places their representatives in the same
candidate envelope irrespective of the pilot event.
The induced law is continuous in the parameters in $W_2$.
For fixed $t_J>0$, its Gaussian-convolution score is continuous in the
parameters and is bounded by $C_{t_J}(1+|y|)$ on this compact family.
The exact corruption-averaged fit loss is therefore finite and
continuous in the parameters, by Gaussian dominated convergence, for
every finite real fit sample.  It is Borel in the data.
The measurable maximum theorem now gives a nonempty compact measurable
argmin correspondence whenever the joint domain is nonempty.
Successively minimizing its parameter coordinates in a fixed order
gives a unique Borel lexicographic selection.  The selected parameters,
fallback indicator, and induced law are therefore data-only measurable
statistics.  This argument uses the law integral and does not assume
continuity of a choice of prescribed-Jacobian representatives.

\begin{lemma}[Properties derived from the RePU construction]
\label{lem:mlsn-architecture}
The prescribed hidden pullbacks, their inverses and the transported
cutoffs are uniformly bounded at every spatial order used in the
analytic lemmas in Appendix~\ref{sec:weighted-local-estimation}. For every hidden parameter there is an original
retained comparator $\theta^0$ such that, in affine reference fields,
\begin{align}
 \|\theta^0-\theta^*\|_{E_p}&\le C_ph^{\eta-1},
          &&1<p<\infty,\label{eq:r25-comparator-energy}\\
 \|r^0-r^*\|_{H^{-1}}+\|v^0-v^*\|_2&\le Ch^{\eta+1},
 \qquad \|r^0-r^*\|_\infty+\|v^0-v^*\|_{C^1}\le Ch^\eta.
          \label{eq:mlsn-profile-population-bias}
\end{align}
Its normalized coefficient magnitudes are at most $K_*<A/8$,
and its input-coarse positivity and tube margins are at least
$\lambda/4$ and $\varepsilon/4$ for sufficiently large fixed $J_0$.
The inverse, Gaussian, product and empirical bounds of Appendix~\ref{sec:weighted-local-estimation} hold uniformly
over the hidden family.
\end{lemma}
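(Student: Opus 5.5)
The plan has three parts: a perturbation argument for the hidden pullbacks, a projection comparator for the approximation bounds, and a check of the box and coarse margins.

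\emph{Pullbacks.} Since $T_{\thetah}$ is the time-one flow of $c_{\rm flow}\tau_JF_{\thetah}$, Gr\"onwall's inequality and the normalization \eqref{eq:mlsn-architecture-normalization} give $\|T_{\thetah}-I\|_{C^{\eta+2}}\le C\tau_J$ uniformly over $\Theta_{\rm h}$. The same bound holds for $T_{\thetah}^{-1}$, the flow of $-c_{\rm flow}\tau_JF_{\thetah}$. It follows that $J_{\thetah}=1+O(\tau_J)$ in $C^{\eta+1}$ and that the normal projection $P_{N_x}^{(m)}$ composed with $T_{\thetah}$ differs from the identity on the normal bundle by $O(\tau_J)$. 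Composition and multiplication estimates on the fixed atlas then give the following for every spatial order $k\le\eta+2$ used later:
\[
\|\mathsf U_{\thetah}\|_{C^k\to C^k}+\|\mathsf U_{\thetah}^{-1}\|_{C^k\to C^k}\le C.
\]
The same bounds hold on $H^s$ for $-1\le s\le s_\star$, and on $L^p$ and $W^{1,p}$. Since $\mathsf U_{\thetah}$ is a smooth near-identity change of variables, conjugating the cutoff $\Pi_J$ by it preserves the kernel bounds of Lemma~\ref{lem:r25-cutoff}. This is already noted at the end of that proof.

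\emph{Comparator.} I take $\theta^0$ to be the coefficients of $\mathsf U_{\thetah}^{-1}\zeta^*$ in the fixed bank, truncated at level $J$. Then
\[
S_{J,m,\thetah}\theta^0-\zeta^*=-\mathsf U_{\thetah}(I-\Pi_J)\mathsf U_{\thetah}^{-1}\zeta^*.
\]
Because $\mathsf U_{\thetah}^{-1}\zeta^*$ keeps the $C^\eta\times C^{\eta+1}$ regularity of the truth, the Jackson bounds \eqref{eq:vs-wavelet-properties}, in the form of \eqref{eq:canonical-sieve-W}--\eqref{eq:canonical-sieve-E} from Proposition~\ref{prop:canonical-law-sieve}, give $h^{\eta+1}$ in $H^{-1}\times L^2$ and $h^\eta$ in $C^0\times C^1$. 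Boundedness of $\mathsf U_{\thetah}$ on these spaces then yields \eqref{eq:mlsn-profile-population-bias}.

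For \eqref{eq:r25-comparator-energy}, I first use Jackson in $W^{1,p}$ on the density part, which gives $h^{\eta-1}$. For the normal part, $s^{-1}\|\cdot\|_{L^p}\le Ch^{-2}h^{\eta+1}=Ch^{\eta-1}$. The $L^p$ Jackson bounds for $1<p<\infty$ follow from the same collar factorization \eqref{eq:vs-collar-projector-factorization} combined with the kernel decay \eqref{eq:r25-kernel-convenient} via Young's inequality, with a constant depending on $p$.

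\emph{Margins and uniformity.} The coefficients of $\theta^0$ are pairings with the dual atoms. The $b_\beta$ bound follows from the Besov characterization together with the uniform $C^\beta$ bound on $\mathsf U_{\thetah}^{-1}\zeta^*$. These are the adapted truth coefficients, already bounded by $K_*<A/8$ in the proof of Proposition~\ref{prop:canonical-law-sieve}. For the coarse constraints, the transported input-coarse field of $\theta^0$ differs from $\zeta^*$ by the tail over levels $j>L$, which is at most $CA2^{-\beta L}\le\min(\lambda,\varepsilon)/16$, plus the Jackson error $Ch^\eta$. With $\|r^*\|_\infty\le\lambda/8$ and $\|v^*\|_{C^1}\le\varepsilon/8$, taking $J_0$ large makes the remaining margins exceed $\lambda/4$ and $\varepsilon/4$.

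Uniformity of the analytic bounds of Appendix~\ref{sec:weighted-local-estimation} over the hidden family then reduces to the first step, because every such bound uses only finitely many derivatives of $\mathsf U_{\thetah}$ and of $\mathsf U_{\thetah}^{-1}$.

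The main obstacle I expect is \eqref{eq:r25-comparator-energy} for $p\ne2$. The Riesz-basis machinery is Hilbertian, and the mean-zero corrected cutoff need not have a local kernel. I would handle this by splitting the cutoff into the full cutoff, which has the local kernel of Lemma~\ref{lem:r25-cutoff}, plus the rank-one $\varphi_0$ term, whose contribution is $O(h^{t})$ with $t=\eta+2$, and bound each piece in $L^p$ separately.
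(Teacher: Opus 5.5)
Your proposal is correct and follows essentially the same route as the paper's proof. It uses Gr\"onwall for the $O(\tau_J)$ hidden flow and its inverse, and obtains the comparator by pulling back the truth, applying the original mean-zero cutoffs and transporting back. It then applies Jackson bounds in $W^{1,p}\times L^p$, $H^{-1}\times L^2$ and $C^0\times C^1$ with $s^{-1}\asymp h^{-2}$ for the normal energy term, and inherits the bound $K_*<A/8$ and the coarse margins from Proposition~\ref{prop:canonical-law-sieve}.

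The step you flag as the main obstacle is easier than you expect. For a $C^\eta\times C^{\eta+1}$ pulled-back truth on the compact source, the $W^{1,\infty}\times L^\infty$ Jackson bounds already dominate every finite-$p$ norm. Separately, the normalization lets you track the flow through $C^{\eta+3}$ rather than $C^{\eta+2}$, which covers all orders used by the conjugated cutoff.
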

\begin{proof}
The normalized flow's variational equations and Gronwall give
$T_{\thetah}-I=O(\tau_J)$, uniform flow and inverse bounds through
$C^{\eta+3}$, and $C^{\eta+2}$ bounds after one hidden-parameter derivative.
For this small flow, the normal-fiber projection and its inverse have
the same spatial bounds. The resulting pullbacks preserve the declared
Sobolev and Besov orders, norm equivalences, and total density mass.

Pull back the full truth coordinates, apply the original mean-zero
scalar and bundle cutoffs, and transport back. Jackson approximation
in $W^{1,p}\times L^p$, $H^{-1}\times L^2$, and $C^0\times C^1$ gives
\eqref{eq:r25-comparator-energy}--\eqref{eq:mlsn-profile-population-bias},
using $s^{-1}=c_t^{-1}h^{-2}$ for the normal energy term.
Proposition~\ref{prop:canonical-law-sieve} supplies $K_*<A/8$ and the
coarse margins, which the small flow preserves.
The conjugated cutoff inherits Lemma~\ref{lem:r25-cutoff}'s bounds.
The analytic lemmas in Appendix~\ref{sec:weighted-local-estimation}
depend only on these uniform bounds and the fixed candidate envelope,
so their constants are uniform over the hidden family.
\end{proof}

\subsection{First-law information and weak Gaussian estimates}
\label{sec:weighted-local-estimation}\label{sec:weighted-estimation}
This subsection establishes the analytic bounds used for the score roots
and coordinate errors in Appendix~\ref{sec:pilot-verification}.
We derive the law equation and retained-space inverse, then develop
weak Gaussian, noise, and whole-fiber curvature estimates.

Choose $0<\alpha<\min\{1,\beta-2\}$. The candidate bounds imply uniform
$C^{2,\alpha}$ reference densities and $C^{3,\alpha}$ support maps.
In physical coordinates their densities are positive and uniformly
$W^{2,\infty}$, since differentiating the surface Jacobian twice uses
at most three derivatives of the embedding. These conclusions also
hold at integer smoothness in the stated H\"older--Zygmund convention.
Only these low-order bounds are used in the Gaussian estimates.

For a current candidate $P$ put $p_s=K_sP$ and define, for signed laws
or distributional law directions,
\begin{equation}\label{eq:r25-information}
 \mathcal A_{s,P}(U,V):=2\int p_s
    \nabla\frac{K_sU}{p_s}\cdot\nabla\frac{K_sV}{p_s},
 \qquad I_\theta(w,z):=\mathcal A_{s,P_\theta}(DP_\theta[w],DP_\theta[z]).
\end{equation}
All convolutions and gradients in this expression are ambient.
Subscripts $s,P$ are suppressed when fixed.

\subsubsection{Exact law equation and first-law bounds}
The exact score equation expresses the estimation problem through law
perturbations; the forward and remainder bounds control its linear
and nonlinear parts.

For the clean empirical measure $P_n$ let $p_n=K_sP_n$.
Gaussian integration by parts in the corruption-averaged loss gives
\begin{align}
 L_{n,J}(\theta)&=\int p_n
       (|s_\theta|^2+2\operatorname{div}s_\theta)+D/s,
       \qquad s_\theta=\nabla\log p_\theta,\label{eq:r25-exact-loss}\\
 DL_{n,J}(\theta)[w]&=2\int(p_ns_\theta-\nabla p_n)
             \cdot\nabla B_w
       =\mathcal A_\theta(P_\theta-P_n,DP_\theta[w]),
       \quad B_w=K_sDP_\theta[w]/p_\theta.
       \label{eq:r25-exact-gradient}
\end{align}
Gaussian tails justify the integrations by parts. No sampled corruption
or finite-anchor objective is substituted for this exact criterion.

\begin{lemma}[Gaussian first-law bounds]
\label{lem:r25-gaussian-forward}
Fix a candidate $P=f\nu$ on its physical support $\mathcal M$,
$X\sim P$, $Z\sim\mathcal N(0,I_D)$ independently, and
$\epsilon=\sqrt s$. For every fixed $1<p<\infty$, uniformly in the
candidate and $s$, let
$T_Da=Ds[a\nu]$ and $T_Nb=Ds[-s\operatorname{div}(bP)]$.
Then
\begin{align}
 \|T_Da\|_{L^p(p_s)}&\le C_p\|a\|_{W^{1,p}},
       \label{eq:r25-density-forward}\\
 \|T_Da(X+\epsilon Z)-\nabla_{\mathcal M}(a/f)(X)\|_p
       &\le C_p\epsilon\|a\|_{W^{2,p}},
       \label{eq:r25-density-comparison}\\
 \|T_Nb\|_{L^p(p_s)}&\le C_p\|b\|_p,
       \label{eq:r25-normal-forward}\\
 \|T_Nb(X+\epsilon Z)-b(X)\|_p
       &\le C_p\epsilon\|b\|_{W^{1,p}}\quad
           \text{if $b$ is normal to $\mathcal M$}.
       \label{eq:r25-normal-comparison}
\end{align}
The forward estimates also allow fixed exponential-linear radial weights
in the clean corruption. They do not require retained directions.
\end{lemma}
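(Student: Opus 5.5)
We need to prove Lemma (Gaussian first-law bounds). We must first interpret the operators. Here $p_s=K_sP$, and $s_{s,P}=\nabla\log p_s$. The notation $Ds[U]$ denotes the derivative of the score in the law direction $U$, namely
\[
Ds[U]=\nabla\frac{K_sU}{p_s}.
\]
This follows from $s=\nabla K_sP/K_sP$, whose derivative is $\nabla(K_sU)/p_s-(K_sU)\nabla p_s/p_s^2=\nabla(K_sU/p_s)$.

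The plan is to write both operators as posterior expectations and control them via the posterior concentration of the latent point $X$ given $Y=X+\epsilon Z$.

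\textbf{Density operator.} For $T_D$, set $g=a/f$, so that $K_s(a\nu)/p_s(y)=\mathbb E[g(X)\mid Y=y]$. Then
\[
T_Da(y)=\nabla_y\mathbb E[g(X)\mid Y=y]=s^{-1}\operatorname{Cov}\bigl(g(X),X\mid Y=y\bigr).
\]
The next step is to convert the covariance into a gradient by integration by parts on $\mathcal M$ against the Gaussian posterior weight. Write the posterior as proportional to $f(x)\exp(-|y-x|^2/2s)\,\nu(dx)$. Its tangential gradient in $x$ is $(y-x)_T/s$ times the weight, up to the $f$ and mean-curvature terms. This turns $s^{-1}\operatorname{Cov}$ into
\[
\mathbb E[\nabla_{\mathcal M}g(X)\mid Y]
\]
plus terms bounded by $\mathbb E[|g(X)-\mathbb E g|\cdot O(1+|Y-X|/\epsilon\cdot \epsilon)\mid Y]$.

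For the normal component of $\operatorname{Cov}(g,X)$ we use the reach. Locally $X-\Pi(Y)$ has normal part $O(|X-\Pi Y|^2)$, and posterior concentration at scale $\epsilon$ (Laplace with Jacobian and reach bounds, uniform in the candidate envelope) gives normal covariance $O(\epsilon\,\mathbb E[|g-\bar g|\mid Y])$. Dividing by $s$ leaves $\epsilon^{-1}\mathbb E|g-\bar g|$, and this is controlled by local Poincaré at scale $\epsilon$ by $\mathbb E|\nabla g|$. Hence
\[
|T_Da(y)|\lesssim \mathbb E\bigl[M|\nabla g|(X)\mid Y=y\bigr],
\]
with $M$ a local maximal function at scale $\epsilon$ weighted by Gaussian tails. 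Taking $L^p(p_s)$ norms, conditional Jensen reduces this to $\|M|\nabla g|\|_{L^p(P)}$. The Hardy–Littlewood bound on the Ahlfors-regular $\mathcal M$ and $\|g\|_{W^{1,p}}\lesssim\|a\|_{W^{1,p}}$ (since $f\ge m_f$ and $f\in W^{2,\infty}$) give~\eqref{eq:r25-density-forward}.

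For~\eqref{eq:r25-density-comparison}, evaluate at $Y=X+\epsilon Z$ and subtract $\nabla_{\mathcal M}g(X)$. The leading term $\mathbb E[\nabla g(X')\mid Y]-\nabla g(X)$ involves $|X'-X|\lesssim\epsilon$ under the joint law. It is bounded by $\epsilon$ times a maximal function of $\nabla^2g$, together with the curvature and $\nabla f$ remainders that carry an extra $\epsilon$. The same maximal and Jensen argument yields the $W^{2,p}$ bound.

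\textbf{Normal operator.} For $T_N$, note that
\[
K_s(-s\operatorname{div}(bP))=s\int\nabla_x\phi_s(y-x)\cdot b\,dP=\int (y-x)\cdot b(x)\,\phi_s\,dP.
\]
The divergence here is ambient in the distributional sense, so this identity holds for general $b$. Therefore
\[
\frac{K_sU}{p_s}=\mathbb E\bigl[(Y-X)\cdot b(X)\mid Y\bigr],
\]
and its gradient is
\[
\mathbb E[b(X)\mid Y]+s^{-1}\operatorname{Cov}\bigl((Y-X)\cdot b(X),X\mid Y\bigr).
\]
Since $|Y-X|\sim\epsilon$ under the posterior, the covariance is $O(\epsilon\cdot\epsilon)\cdot|b|$-type. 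This gives the pointwise bound
\[
|T_Nb(y)|\lesssim \mathbb E\bigl[(1+|Y-X|^2/s)\,|b(X)|\mid Y\bigr],
\]
and~\eqref{eq:r25-normal-forward} follows by Jensen, since $\mathbb E(1+|Z|^2)^p<\infty$.

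For~\eqref{eq:r25-normal-comparison}, normality of $b$ lets us replace $b(X')$ by $b(X)$ with an error of $\epsilon|\nabla b|$. The covariance term must then be shown to vanish to leading order. Its leading part is $s^{-1}\operatorname{Cov}((Y-X)_N\cdot b,X_T)$, which is odd under the tangential Gaussian posterior symmetry up to $O(\epsilon)$ corrections from curvature and $\nabla f$. The resulting error is bounded by $\epsilon\,M|\nabla b|+\epsilon\,M|b|$.

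\textbf{Main obstacle.} The hard step is the uniform posterior localization. We need that, uniformly over the candidate envelope and $s\le s_0$, the posterior of $X$ given $Y$ has Gaussian tails at scale $\epsilon$ around $\Pi(Y)$ when $\operatorname{dist}(Y,\mathcal M)\ll\rho_{\mathcal M}$, and that far-away $Y$ carry negligible $p_s$ mass. I would prove this with the reach-based tube decomposition and a local Laplace comparison. The exponential-linear radial weights are absorbed because the posterior tail bounds are Gaussian.
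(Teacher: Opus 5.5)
Your outline is correct, but it takes a different route from the paper.

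\textbf{The paper's route.} The paper never splits tangential and normal parts, and never integrates by parts on $\mathcal M$. It derives one replica bound, \eqref{eq:r25-replica}, from the small-ball lower bound $p_s(y)\ge c\epsilon^d(1+|z_i|)^{-d}\phi_i(y)$ by a geometric-mean argument. This bound localizes the clean anchor and any number of posterior replicas at scale $\epsilon$ simultaneously, including tails and exponential weights. Each covariance is then written as a two-replica difference, one chord times one increment. The increment bound \eqref{eq:r25-increment} comes from a path integral and a change of variables to the intermediate point; it holds for every $p\ge1$ and needs no maximal functions. The density comparison is obtained by expanding the normalized posterior weight at the clean anchor around the flat Gaussian $\mathcal N(z_T,I_d)$, whose covariance is the identity.

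\textbf{Your route.} You integrate by parts exactly:
$e\cdot T_Da(y)=\mathbb E_{Q_y}[\nabla_{\mathcal M}g\cdot e]+\operatorname{Cov}_{Q_y}(g,w_e)-s^{-1}\operatorname{Cov}_{Q_y}\{g,(y-X)\cdot e_N\}$, where $w_e=\nabla_{\mathcal M}\log f\cdot e_T+\operatorname{div}_{\mathcal M}e_T$ for a fixed $e\in\mathbb R^D$. This isolates the leading term exactly. It reduces \eqref{eq:r25-density-comparison} to an increment estimate for $\nabla_{\mathcal M}g$, with no weight expansion. The price is threefold: a Poincar\'e inequality for the posterior; a Haj\l{}asz-type pointwise inequality plus Hardy--Littlewood, which restricts you to $p>1$ (enough here); and the uniform posterior localization you defer. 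The replica bound delivers that localization in one step, and the paper reuses it in the later remainder, noise and curvature lemmas.

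\textbf{Two points need fixing.}

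First, the normal covariance in $T_Da$ is of order $\epsilon^2$ times a weighted $\mathbb E_{Q_y}|g-\bar g|$, because the normal deviation of $X-\Pi_{\mathcal M}(Y)$ is quadratic, as you note yourself. The count $O(\epsilon\,\mathbb E_{Q_y}|g-\bar g|)$ that you wrote leaves an $O(1)$ normal component of $T_Da$. That suffices for \eqref{eq:r25-density-forward}, but not for \eqref{eq:r25-density-comparison}, where this component must be $O(\epsilon)$.

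Second, the gain in \eqref{eq:r25-normal-comparison} is not a symmetry cancellation. After freezing $b_0=b(X)$ at the clean anchor,
$s^{-1}\operatorname{Cov}_{Q_Y}\{(Y-X')\cdot b_0,X'\}=-s^{-1}\operatorname{Cov}_{Q_Y}\{(X'-X)\cdot b_0,X'\}$.
Normality together with the reach bound gives $|(X'-X)\cdot b_0|\le|b_0|\,|X'-X|^2/(2\rho_{\mathcal M})$. The covariance is therefore $O(\epsilon^3)$ by size alone; this is the paper's ``chord times quadratic graph increment.'' A symmetry about $X$ is not available in any case: $Q_Y$ is centered near $\Pi_{\mathcal M}(Y)$, which is offset from $X$ by roughly $\epsilon$ times the tangential part of $Z$. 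This bound is where normality actually enters; replacing $b(X')$ by $b(X)$ does not use it.
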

\begin{proof}
Let $Q_y$ denote the posterior on $\mathcal M$ and put
$\phi_i(y)=\phi_s(y-x_i)$, $z_i=(y-x_i)/\epsilon$.
The source ball of radius $c\epsilon/(1+|z_i|)$ about $x_i$ gives
\[
 p_s(y)\ge c\epsilon^d(1+|z_i|)^{-d}\phi_i(y).
\]
For a clean anchor $x_0$ and $k$ posterior replicas, take the geometric
mean of these $k+1$ bounds before raising to the denominator power $k$.
Their joint density is bounded by
\begin{equation}\label{eq:r25-replica}
 \frac{\prod_{i=0}^k\phi_i(y)}{p_s(y)^k}
 \le C_k\epsilon^{-dk-D}\prod_{i=0}^k(1+|z_i|)^{dk/(k+1)}
       \exp\left\{-\frac{\sum_{i=0}^k|z_i|^2}{2(k+1)}\right\}.
\end{equation}
After integrating $y$, any fixed polynomial residual factors and
$\exp(c|z_0|)$ are bounded by the source kernel
$C_k\epsilon^{-dk}\exp\{-c_k\sum_{i=1}^k|x_i-x_0|^2/\epsilon^2\}$.
It has bounded integrals with any one source fixed, by upper volume growth.
Reach gives uniformly controlled graphs for close chords; remote pairs
have exponentially small contributions. If a residual exceeds
$c\epsilon^{-\kappa}$, half the Gaussian exponent bounds that event
by $\exp(-c\epsilon^{-2\kappa})$, while the other half integrates
every free $L^p$ source factor. Thus the tails are smaller than any
prescribed power of $\epsilon$.

Conditional Jensen for a two-replica covariance is valid for every
real $p\ge1$, without increasing the number of denominator factors.
On a near pair in source coordinates,
\[
 |u(\xi_1)-u(\xi_2)|^p\le |\xi_1-\xi_2|^p
 \int_0^1|Du((1-t)\xi_1+t\xi_2)|^p\,dt.
\]
For $t\le1/2$ change $\xi_1$ to the intermediate point, and for
$t\ge1/2$ change $\xi_2$. Their Jacobians are bounded below by
$2^{-d}$, up to a fixed graph constant. Applying this to the replica
kernel shows, for any fixed polynomial or exponential-linear residual
weight $W$,
\begin{equation}\label{eq:r25-increment}
 \mathbb E_{X,Z}Q_Y^{\otimes k}
     [W|u(x_i)-u(x_j)|^p]\le C_{p,W}\epsilon^p\|u\|_{W^{1,p}}^p.
\end{equation}
Second-order integral remainders have the analogous
$\epsilon^{2p}\|u\|_{W^{2,p}}^p$ bound. Long pairs use endpoint norms
and Gaussian decay, not paths outside a controlled chart.

For $u=a/f$, exact posterior differentiation gives
$T_Da(y)=\epsilon^{-2}\operatorname{Cov}_{Q_y}(x,u(x))$.
The replica form has one chord and one density increment, supplying
two powers of $\epsilon$ by~\eqref{eq:r25-increment}. This proves
\eqref{eq:r25-density-forward}, including the weights.
For the comparison use the graph $x+t+g_x(t)$ at the clean anchor,
with $t=\epsilon A$ and flat posterior $Q_0=\mathcal N(z_T,I_d)$.
The first unnormalized weight correction is
\[
 P_1(A,z)=D\log f(x)\cdot A+\tfrac12 z_N\cdot\mathrm{II}_x[A,A].
\]
On $|z|\le\epsilon^{-1/16}/8$, $|A|\le\epsilon^{-1/16}/2$,
the normalized weight is
$1+\epsilon(P_1-Q_0P_1)+O\{\epsilon^2\operatorname{poly}(A,z)\}$.
The remainder uses $D^2f,D^3g_x$; its polynomial bound is integrated,
not maximized on the growing region. The exact tails are covered by
\eqref{eq:r25-replica}. Since $\operatorname{Cov}_{Q_0}(A,A)=I$,
the linear part of $u$ gives its intrinsic gradient with error
$C\epsilon\operatorname{poly}(z)|Du|$. The replica remainder has
one chord and a second-order test remainder, yielding
$C_p\epsilon\|u\|_{W^{2,p}}$. This proves
\eqref{eq:r25-density-comparison} without $D^3f$.

For an ambient field $b$,
\[
 T_Nb(y)=Q_yb+\epsilon^{-2}
      \operatorname{Cov}_{Q_y}(x,b(x)\cdot(y-x)).
\]
The chord and residual moments prove~\eqref{eq:r25-normal-forward}
without differentiating $b$. For the comparison write
$b(x')=b(x)+[b(x')-b(x)]$ at the clean anchor.
The variable average and covariance cost
$C_p\epsilon\|b\|_{W^{1,p}}$. For the constant term, normality gives
$b(x)\cdot(x'-x)=b(x)\cdot g_x(t)=O(|t|^2)|b(x)|$.
Its chord times quadratic graph increment has order $\epsilon^3$;
division by $\epsilon^2$ proves~\eqref{eq:r25-normal-comparison}.
Smooth approximation gives the Sobolev assertions.
\end{proof}

\begin{lemma}[Exact geometric law remainder]
\label{lem:r25-law-remainder}
Let $P=\mu$, $P_1=(I+b)_\#\{(1+u)\mu\}$, where
$\int u\,d\mu=0$, $1+u\ge0$, $\|u\|_\infty\le A_0$ and
$\|b\|_\infty/\epsilon\le\rho_0$ for fixed constants. For
$R=P_1-P-u\mu+\operatorname{div}(b\mu)$ and $1\le p<\infty$,
\begin{equation}\label{eq:r25-law-remainder}
 \left\|\nabla\frac{K_sR}{p_s}\right\|_{L^p(p_s)}
 \le C_p\{s^{-1}\|ub\|_p+s^{-3/2}\||b|^2\|_p\}
 \le C_p(\|u\|_\infty+\|b\|_\infty/\epsilon)s^{-1}\|b\|_p.
\end{equation}
The displacement need not be normal to the current candidate support.
\end{lemma}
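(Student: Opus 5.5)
The plan is to write $K_sR$ through an exact interpolation along the displacement and density path, and then control the resulting ratios $K_s(\cdot)/p_s$ by the replica and posterior machinery from the proof of Lemma~\ref{lem:r25-gaussian-forward}. For a test function $\varphi$, set $x_\lambda=x+\lambda b(x)$ and write $P_1\varphi=\int \varphi(x+b(x))(1+u(x))\,d\mu(x)$. A second-order Taylor expansion in $\lambda$ gives
\[
R\varphi=\int\!\!\int_0^1 \Bigl[u(x)\nabla\varphi(x_\lambda)\cdot b(x)+(1-\lambda)\,b(x)^\top\nabla^2\varphi(x_\lambda)b(x)\Bigr]d\lambda\,d\mu(x),
\]
where the $(1-\lambda)$ factor on the second term is absorbed into the weight $(1+\lambda u)$. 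Taking $\varphi=\phi_s(y-\cdot)$ yields
\[
K_sR(y)=\int\!\!\int_0^1\Bigl[-u\,b\cdot\nabla\phi_s(y-x_\lambda)+(1-\lambda)(1+\lambda u)\,b^\top\nabla^2\phi_s(y-x_\lambda)\,b\Bigr]d\lambda\,d\mu .
\]
Every Gaussian derivative is evaluated at the shifted point $x_\lambda$. We then use $|x_\lambda-x|\le\|b\|_\infty\le\rho_0\epsilon$, so $\phi_s(y-x_\lambda)\le C\,e^{c|y-x|/\epsilon}\phi_s(y-x)$ with constants depending on $\rho_0$. This is exactly the exponential-linear radial weight allowed in the replica bounds.

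Next we differentiate $K_sR/p_s$ in $y$. The gradient produces one further factor of order $\epsilon^{-1}\operatorname{poly}(z)$, where $z=(y-x)/\epsilon$, plus a posterior-covariance correction $-(K_sR/p_s)\,\nabla\log p_s$. Writing $\nabla\log p_s(y)=-\epsilon^{-2}(y-Q_yx)$, this correction is a centered difference $\epsilon^{-2}(x-Q_yx)$, which is again $\epsilon^{-1}\operatorname{poly}$ after pairing with a second replica. Each $\nabla^k\phi_s$ contributes $\epsilon^{-k}\operatorname{poly}(z)\phi_s$. The term in $ub$ is therefore a posterior average of $u(x)b(x)\,\epsilon^{-2}\operatorname{poly}(z)$, and the term in $|b|^2$ is one of $|b(x)|^2\epsilon^{-3}\operatorname{poly}(z)$.

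We then bound the $L^p(p_s)$ norm. By conditional Jensen, $\|Q_Y[g\,W]\|_{L^p(p_s)}^p\le\mathbb E_{X,Z}Q_Y[|g|^pW^p]$. The replica bound~\eqref{eq:r25-replica} with $k=1$ turns this into $\int|g|^p\,d\mu$ times a uniformly bounded kernel integral, since residual polynomial and exponential-linear weights are integrable against the source kernel by upper volume growth. Taking $g=ub$ gives $s^{-1}\|ub\|_p$, and taking $g=|b|^2$ gives $s^{-3/2}\||b|^2\|_p$, which is the first inequality in~\eqref{eq:r25-law-remainder}. The second inequality is pointwise: $|ub|\le\|u\|_\infty|b|$ and $|b|^2\le\|b\|_\infty|b|$, so $s^{-3/2}\||b|^2\|_p\le(\|b\|_\infty/\epsilon)\,s^{-1}\|b\|_p$.

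Normality of $b$ is never used, since no cancellation against the tangent graph is needed; only the crude $\epsilon^{-k}$ derivative counts enter.

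The main obstacle is the shifted kernel. The posterior $Q_y$ is built from $\phi_s(y-x)$, not from $\phi_s(y-x_\lambda)$, and a shift of size up to $\rho_0\epsilon$ changes the exponent by $O(|z|+\rho_0^2)$. This is where the hypothesis $\|b\|_\infty/\epsilon\le\rho_0$ is essential. I would handle it first by bounding $\phi_s(y-x_\lambda)\le e^{\rho_0|z|+\rho_0^2/2}\phi_s(y-x)$, then absorbing $e^{\rho_0|z|}$ using half of the Gaussian exponent in~\eqref{eq:r25-replica}. The constant then depends on $\rho_0$ and $A_0$ but not on $s$. The lower bound $1+u\ge0$ together with $\|u\|_\infty\le A_0$ keeps the weights $(1+\lambda u)$ bounded, so no denominator issue arises from $P_1$ itself, because $p_s$ is the density of $P$.
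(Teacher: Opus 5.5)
Your proposal is correct and is essentially the paper's argument: an exact second-order Taylor remainder along the displacement, comparison of the shifted Gaussian to the unshifted one through the factor $e^{\rho_0|z|}$ (the paper's $M_l=\exp\{l v\cdot z-l^2|v|^2/2\}$ with $v=b/\epsilon$), exact differentiation of the posterior ratio, which produces one extra $\epsilon^{-1}$ plus a score term, and then conditional Jensen followed by Gaussian integration. The paper controls the score term with the pointwise moment bound $\mathbb E[|Z'|^k\mid Y=x+\epsilon z]\le C_k(1+|z|)^k$ rather than with a second replica, and one slip in your write-up is harmless: your first display ($u$ on the gradient term, $(1-\lambda)$ on the Hessian term) is correct, but your second display borrows the factor $(1+\lambda u)$ from the path version $\int_0^1(1-\lambda)[2u\,\nabla\varphi(x_\lambda)\cdot b+(1+\lambda u)\,b^\top\nabla^2\varphi(x_\lambda)b]\,d\lambda$ without its matching $2(1-\lambda)u$, so it is off by $\int\!\int_0^1\lambda(1-\lambda)u\,b^\top\nabla^2\phi_s(y-x_\lambda)b\,d\lambda\,d\mu$, a term of the same type you already bound.
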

\begin{proof}
Uniform small-ball bounds give
$\mathbb E[|Z'|^k\mid Y=x+\epsilon z]\le C_k(1+|z|)^k$,
where $Z'=(Y-X')/\epsilon$. Indeed the denominator integrated on
$B(x,\epsilon)$ is at least
$c\epsilon^d e^{-(|z|+1)^2/2}$, whereas the numerator tail beyond
$R\epsilon$ is bounded by
$C\epsilon^d\sum_{j\ge\lfloor R\rfloor}(j+1)^de^{-j^2/2}$.
Split the moment integral at $R=C_k(1+|z|)$.

Put $v=b/\epsilon$ and
$M_l(x,z)=\exp\{lv\cdot z-l^2|v|^2/2\}$. The exact second derivative
of the path contribution, divided by its unshifted Gaussian, is
\[
 F_l=M_l\{2u\,v\cdot(z-lv)
       +(1+lu)([v\cdot(z-lv)]^2-|v|^2)\}.
\]
Taylor's integral remainder gives
$K_sR/p_s=\int_0^1(1-l)Q_yF_l\,dl$.
Both $|F_l|$ and $|D_zF_l|$ are at most
$C(|u||v|+|v|^2)e^{\rho_0|z|}(1+|z|)^3$.
For $m_s(y)=Q_yZ'$, differentiation of the posterior gives exactly
\[
 \nabla_y Q_yF_l=\epsilon^{-1}
 Q_y\{D_zF_l-F_l(Z'-m_s(y))\}.
\]
Conditional Jensen, the moment bound pointwise at each clean anchor,
and Gaussian integration bound its $L^p(p_s)$ norm by
$C_p\epsilon^{-1}\||u||v|+|v|^2\|_p$.
Integrate in $l$. No source derivative of $u$ or $b$ is taken, and
the normal-squared term in $F_l$ has not been discarded.
\end{proof}
\subsubsection{The same-bank finite-\texorpdfstring{$p$}{p} inverse}
The retained-space inverse converts score residuals into coordinate
bounds, as needed for the interior-root construction.

\begin{lemma}[Finite-$p$ first-law inverse]
\label{lem:vs-finite-p-inverse}
For each fixed $1<p<\infty$, sufficiently small fixed $c_t>0$ gives,
on the actual retained density and normal space,
\begin{equation}\label{eq:r25-inverse}
 \|I_\theta^{-1}F\|_{E_p}\le C_p\|F\|_{E_{p'}^*},
 \qquad I_\theta(w,w)\ge c\|w\|_E^2.
\end{equation}
Constants are uniform in $J$, the reference branch, and all hidden
parameters in their fixed compact box. The functional norm is on the
retained test space; no inverse on arbitrary Sobolev directions is asserted.
\end{lemma}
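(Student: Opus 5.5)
Fix the hidden parameter and write $P=P_\theta$, $p_s=K_sP$. By \eqref{eq:r25-information} the information form is the weighted Dirichlet form $I_\theta(w,z)=2\int p_s\nabla B_w\cdot\nabla B_z$ with $B_w=K_sDP_\theta[w]/p_s$. By the direct reference derivative \eqref{eq:vs-direct-reference-derivative} and the triangular map \eqref{eq:vs-direct-triangular-map}, $\nabla B_w=T_Da+T_Nb$ up to the bounded triangular change of variables $D_*$. Here $w=(a,b)$ is a retained density--normal direction, and the normal component is rescaled so that $b$ enters as $-s\operatorname{div}(bP)$ with weight $s^{-1}$. This matches the energy norm $E_p$ in \eqref{eq:r25-energy}. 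The plan has two steps: (i) coercivity $I_\theta(w,w)\ge c\|w\|_E^2$ on $\mathcal V_J$, and (ii) upgrading to the finite-$p$ bound by a duality and perturbation argument around the leading-order operator.

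For (i), Lemma~\ref{lem:r25-gaussian-forward}, through \eqref{eq:r25-density-comparison} and \eqref{eq:r25-normal-comparison}, gives
\[
\nabla B_w(X+\epsilon Z)=\nabla_{\mathcal M}(a/f)(X)+s^{-1}b(X)+O_{L^p}\bigl(\epsilon\|a\|_{W^{2,p}}+\epsilon s^{-1}\|b\|_{W^{1,p}}\bigr).
\]
The two leading terms are tangent and normal to $\mathcal M$, so they are orthogonal pointwise. Consequently
\[
\|\nabla_{\mathcal M}(a/f)\|_{L^2(P)}^2+s^{-2}\|b\|_{L^2(P)}^2\gtrsim\|a\|_{H^1}^2+s^{-2}\|b\|_2^2=\|w\|_E^2,
\]
using mean zero, Poincaré, and $f\ge m_f$. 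The error terms are handled on the retained space by Bernstein: $\|a\|_{W^{2,p}}\le Ch_J^{-1}\|a\|_{W^{1,p}}$ and $\|b\|_{W^{1,p}}\le Ch_J^{-1}\|b\|_p$. Since $\epsilon=\sqrt{c_t}\,h_J$, the relative error is $O(\sqrt{c_t})$, which is absorbed once $c_t$ is small. This is exactly why the lemma requires sufficiently small $c_t$.

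For (ii), let $\mathcal L$ denote the leading bilinear form
\[
\mathcal L(w,z)=2\int f\bigl\{\nabla_{\mathcal M}(a/f)\cdot\nabla_{\mathcal M}(a'/f)+s^{-2}b\cdot b'\bigr\}\,d\nu .
\]
Its density block is a uniformly elliptic divergence-form operator on $\mathcal M$, and its normal block is a multiplication operator. I will show $W^{1,p}$ stability of its Galerkin inverse on the spline space $\mathcal V_J$, i.e. $\|\mathcal L_J^{-1}F\|_{E_p}\le C_p\|F\|_{E_{p'}^*}$. For the normal block this is the $L^p$-stability of the quasi-projection onto the bundle bank, which follows from the kernel bounds of Lemma~\ref{lem:r25-cutoff}. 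For the density block it is the classical $W^{1,p}$ stability of Galerkin projections for elliptic problems on quasi-uniform spline spaces, obtained through discrete Green's function and weighted-norm estimates that use the kernel localization \eqref{eq:r25-kernel-convenient}.

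The remainder $I_\theta-\mathcal L$ is then bounded as a map $E_p\to E_{p'}^*$ with norm $O(\sqrt{c_t})$. To do this, pair the comparison errors of Lemma~\ref{lem:r25-gaussian-forward} in $L^p$ against the forward bounds \eqref{eq:r25-density-forward} and \eqref{eq:r25-normal-forward} in $L^{p'}$, with Bernstein applied again. A Neumann series gives the inverse. Uniformity over $J$, branch, and hidden parameters follows because every constant depends only on the candidate envelope and on Lemma~\ref{lem:mlsn-architecture}.

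The main obstacle is the $p\ne2$ Galerkin stability of the density block. The mean-zero correction destroys exact kernel locality, and the hidden pullback conjugates the bank, so I would treat the correction as a rank-one perturbation with the $Ch^{t}$ kernel bound stated earlier. The conjugation is $O(\tau_J)$-close to the identity, so it can be handled perturbatively.
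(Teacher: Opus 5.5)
Your architecture matches the paper's. You use a leading Gram form made of the tangential Dirichlet form plus an $s^{-2}$ normal mass form, and finite-$p$ Galerkin inverses of its two diagonal blocks; your discrete-Green and weighted-norm route for the density block is what the paper carries out. You then add an $O(\sqrt{c_t})$ Gram error from H\"older at $p,p'$ with retained Bernstein, a Neumann series, and $p=2$ for coercivity.

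The normal block, however, is misidentified. The Galerkin inverse of the form $(b,b')\mapsto\int m\,b\cdot b'$ on the retained normal space is bounded from $E_{p'}^*$ to $E_p$ exactly when the $m$-weighted $L^2$-orthogonal projection onto that space is $L^p$-stable. That projection is not the biorthogonal quasi-projection $\Pi_h$. $L^p$ bounds for one projection onto a subspace do not transfer to another: on a line spanned by a spiky function, the orthogonal projection has arbitrarily large $L^p$ norm while an oblique one has norm one. So the kernel bounds of Lemma~\ref{lem:r25-cutoff} do not suffice. The paper proceeds as follows:
\begin{itemize}
\item it forms the physical orthogonal projection $P_h=\Pi_hC_h^{-1}\Pi_h^*$, with $C_h=I-(\Pi_h-\Pi_h^*)^2$;
\item it shows that $L_m^{-1}$, for $L_m=P_hm_qP_h+I-P_h$, stays in the cellwise polynomial-decay algebra of order $s_0=d+m_0>d$, via $\|T^2\|_{s_0}\le C\|T\|_{s_0}^{2-\vartheta}\|T\|_{\rm op}^{\vartheta}$ and a Neumann series for $I-L_m/c$;
\item only then does it apply Schur.
\end{itemize}
You need this inverse-closedness, or a Demko--Moss--Smith argument in a local single-scale basis whose existence you would have to verify for this bank.

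Second, the retained normal fields are normal to the reference support $\bar{\mathcal M}_m$, not to the candidate support on which $X\sim P_\theta$ lives. So \eqref{eq:r25-normal-comparison} does not apply to $b$, and the leading normal term is not $s^{-1}b$. The conversion must be done at the candidate, as in \eqref{eq:r25-first-law-conversion}. Writing $b=D\Phi\,T_\theta b+N_\theta b$, the leading normal form is $s^{-2}\int q\,N_\theta b\cdot N_\theta b'$, which is positive since $N_\theta$ is injective on reference normals. The tangential part becomes the nonretained density correction $-\operatorname{div}_0(qT_\theta b)$.

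The map $D_*$ you invoke is bounded on $H^{-1}\oplus L^2$, not on $E_p$. The correction is controlled only by the forward bound \eqref{eq:r25-density-forward} and retained Bernstein, $\|b\|_{W^{2,p}}\le Ch^{-2}\|b\|_p=Cc_t\,s^{-1}\|b\|_p$, which contributes an extra $O(c_t)$ Gram error. With your form $s^{-2}\int f\,b\cdot b'$, the discrepancy is instead governed by the fixed tilt between reference normals and candidate tangent planes, and shrinking $c_t$ does not reduce it.

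Finally, in the variable $a$ the density form $a_q$ has kernel $\operatorname{span}(q)$ with $q$ not retained, so it is not a plain coercive Ritz problem. The paper handles this in three steps:
\begin{itemize}
\item it proves Ritz stability for the principal operator $A=-\operatorname{div}(c\nabla)+I$ using only the polynomially decaying kernel of the full cutoff (classical finite-element superapproximation would need checking for this bank);
\item it obtains $\|B_h-B\|_{L^p\to W^{1,p}}\le C_ph$ in \eqref{eq:r25-ritz-Lp};
\item it transfers invertibility of the bordered Fredholm operator $T(u,\lambda)$ to $T_h$.
\end{itemize}
The mean-zero constraint enters only through that bordered operator, so no rank-one perturbation of kernel locality is needed.
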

\begin{proof}
We first establish the scalar elliptic inverse without any Gaussian
perturbation. Let $V_h$ be the full, not mean-corrected, scalar space.
Subtract fixed-cutoff coordinate polynomials in its kernel integrals.
The moments of~\eqref{eq:r25-kernel-near}--\eqref{eq:r25-kernel-far}
and signed polynomial Jackson errors imply, for $1\le p\le\infty$,
\begin{align}
 \|\Pi_hu\|_{W^{k,p}}&\le C_p\|u\|_{W^{k,p}},\quad k=0,1,2,
       \label{eq:r25-projector-sobolev}\\
 \|(I-\Pi_h)u\|_{W^{t,p}}&\le C_ph^{2-t}\|u\|_{W^{2,p}},
       \quad t=0,1,\qquad
 \|(I-\Pi_h)u\|_p\le C_ph\|u\|_{W^{1,p}}.\nonumber
\end{align}
In particular, $\|D\Pi_h1\|_\infty+\|1-\Pi_h1\|_{H^1}\le Ch^2$.
These statements use $m_0>2$, not mixed derivatives of total order four.

Let $A=-\operatorname{div}(c\nabla)+I$ on the fixed smooth compact
reference, with uniformly elliptic symmetric $c\in C^{2,\alpha}$.
Write $a_0$ for its form and $R_h$ for its actual Ritz projector.
Continuous local elliptic regularity and a finite atlas give uniform
$W^{2,p}$ regularity and $W^{-1,p}\to W^{1,p}$ invertibility.
The scalar Green bound followed by rescaled interior estimates in
both variables gives
\[
 |D_x^aD_y^bG_A(x,y)|\le C\operatorname{dist}(x,y)^{2-d-a-b},
 \qquad a+b\le3.
\]
Only $D_x^2D_yG_A$, not a fourth mixed derivative, is needed below.

\paragraph*{Localized loads and annular error.}
Suppose $Ag=-\operatorname{div}\xi$, where $\xi$ is supported in
$B_{C\rho}(x_0)$, $\rho\ge h$, and
\[
 \|\xi\|_1\le C a_0^*,\qquad
 \|\xi\|_2\le C a_0^*\rho^{-d/2},\qquad
 \|\operatorname{div}\xi\|_2\le C a_0^*\rho^{-d/2-1}.
\]
Here $a_0^*$ is a load amplitude, unrelated to the form $a_0$.
Put $E_g=g-R_hg$, $\zeta=g-\Pi_hg$ and $\chi=E_g-\zeta\in V_h$.
Energy quasioptimality and continuous $H^2$ duality give
\begin{equation}\label{eq:r25-ritz-global}
 \|g\|_{H^1}+\|E_g\|_{H^1}\le Ca_0^*\rho^{-d/2},\qquad
 \|E_g\|_2+\|\zeta\|_2+\|\chi\|_2\le Ca_0^*h\rho^{-d/2}.
\end{equation}
For example, solve $Az=E_g$ and use
$\|E_g\|_2^2=a_0(E_g,z-\Pi_hz)$ to obtain the second bound.
On a radius-$R$ annulus with $R\ge C\rho$, the Green estimates give
$|D^kg|\le Ca_0^*R^{1-d-k}$, $k\le2$, and
$\int_{B_R}|g|\le Ca_0^*R$. A cutoff separates local $H^2$ approximation
from remote kernel tails, yielding
\begin{equation}\label{eq:r25-ritz-jackson}
 \|\zeta\|_{H^1(D_R)}\le Ca_0^*hR^{-d/2-1},\quad
 \|\zeta\|_{2,D_R}\le Ca_0^*h^2R^{-d/2-1},\quad
 \|\zeta\|_{1,D_R}\le Ca_0^*h^2/R.
\end{equation}
The remote projected derivative of order $a=0,1$ has bound
$Ca_0^*h^{m_0-a}R^{1-d/2-m_0}$, which is no larger than these terms.

Take a core of radius $R_0=M\rho$ and exterior dyadic annuli $D_j$
of radii $R_j=2^jR_0$, with fixed-overlap enlargements $\Omega_j$.
If $R_0$ exceeds a fixed geometric radius,~\eqref{eq:r25-ritz-global}
already suffices. Otherwise define finite sums
\[
 G=R_0^{d/2}\|E_g\|_{H^1(\mathrm{core})}
        +\sum_jR_j^{d/2}\|E_g\|_{H^1(D_j)},\qquad
 L=h^{-1}\|E_g\|_{1,\mathrm{core}}+\sum_jR_j^{-1}\|E_g\|_{1,D_j}.
\]
Then $\|E_g\|_{W^{1,1}}\le CG$. Core contributions and the analogous
$L$ sum for $\zeta$ are at most $C_Ma_0^*$ by the preceding bounds.
Macroscopic annuli are covered by a fixed finite atlas with subordinate
cutoffs, so none is assumed to lie in a single coordinate ball.

For $\omega=1$ on $D_j$, supported on an enlargement of radius $R=R_j$,
write $w=\omega^2$. Since $\chi$ is retained,
\[
 C_\omega=w\chi-\Pi_h(w\chi)
       =\int K_h(x,y)[w(x)-w(y)]\chi(y)\,dy.
\]
Differentiate before subtracting $\chi(x)$. The exact identity is
\begin{align*}
 DC_\omega={}&\int D_xK_h(x,y)[w(x)-w(y)][\chi(y)-\chi(x)]\,dy\\
 &+Dw(x)\int K_h(x,y)[\chi(y)-\chi(x)]\,dy
       +\chi(x)D\{w(x)\Pi_h1(x)-\Pi_hw(x)\}.
\end{align*}
The first two terms use the moments
$\int|D_xK_h|\operatorname{dist}^2\le Ch$ and
$\int|K_h|\operatorname{dist}\le Ch$ and local path changes of variables.
The scalar term contains all three errors
$D(w-\Pi_hw)+(\Pi_h1-1)Dw+wD\Pi_h1$, bounded by $Ch^2R^{-3}$.
For $M_j(u)=\int_{\rm remote}\operatorname{dist}(y,D_j)^{-d-m_0}|u(y)|\,dy$
with a fixed radius-$R$ buffer, the resulting commutator estimate is
\begin{align*}
 \|C_\omega\|_{H^1}
 \le{}&C(h/R)\|\chi\|_{H^1(\Omega_j)}+ChR^{-2}\|\chi\|_{2,\Omega_j}\\
 &+Ch^{m_0-1}R^{-d/2-m_0}\|\chi\|_{1,\Omega_j}
       +Ch^{m_0-1}R^{d/2}M_j(\chi).
\end{align*}
Orthogonality gives $a_0(E_g,w\chi)=a_0(E_g,C_\omega)$.
In the genuinely far part $C_\omega=-\Pi_h(w\chi)$. With an additional
buffer cutoff $\vartheta$, integrate by parts in the error variable:
$a_0(E_g,\vartheta C_\omega)=\langle E_g,A(\vartheta C_\omega)\rangle$.
Only two evaluation derivatives of $K_h$, and $c,Dc$, occur. The local
and remote contributions are bounded, respectively, by
\[
Ch^{m_0-2}R^{-d/2-m_0}\|\chi\|_{2,\Omega_j}\|E_g\|_{1,\Omega_j},
\qquad Ch^{m_0-2}R^{d/2}\|\chi\|_{2,\Omega_j}M_j(E_g).
\]
The norms of $\chi$ here are local because the source of the far
projection is $w\chi$. Replacing them by its global norm would lose
the local estimate needed for the annular sum.
Use Young's inequality and the scaled interpolation
$R^{-1}\|E_g\|_2\le\varepsilon_0\|E_g\|_{H^1}
+C_{\varepsilon_0}R^{-d/2-1}\|E_g\|_1$ on enlargements.
For arbitrarily small fixed $\varepsilon_0$, once $h/R$ is sufficiently
small, the cutoff energy identity thus implies
\begin{align}
 \|E_g\|_{H^1(D_j)}\le{}&\varepsilon_0\|E_g\|_{H^1(\Omega_j)}
  +C_{\varepsilon_0}\bigl\{\|\zeta\|_{H^1(\Omega_j)}
       +R^{-1}\|\zeta\|_{2,\Omega_j}\notag\\
 &+R^{-d/2-1}(\|E_g\|_{1,\Omega_j}+\|\zeta\|_{1,\Omega_j})
  +h^{m_0-1}R^{d/2}M_j(E_g-\zeta)\notag\\
 &+h^{m_0-2}R^{d/2+1}M_j(E_g)\bigr\}.
 \label{eq:r25-local-energy}
\end{align}
Constants here do not depend on $M,J$ or the error sums. In particular,
the $\sqrt{h/R}$ term from the commutator is made small before interpolation.
More explicitly, the energy identity first produces
$C(h/R)\|\chi\|_{H^1(\Omega_j)}\|E_g\|_{H^1(\Omega_j)}$.
Taking its square root gives the stated small factor. Each far term
has the form $B_j\|\chi\|_{2,\Omega_j}$; using
$\|\chi\|_{2,\Omega_j}\le\|E_g\|_{2,\Omega_j}
+\|\zeta\|_{2,\Omega_j}$, scaled interpolation and Young's inequality
leaves $R B_j$ and an arbitrarily small local energy term. This accounts
for the extra factor $R$ in the last term
of~\eqref{eq:r25-local-energy}.

Let $\ell_j=R_j^{-1}\|E_g\|_{1,D_j}$ and
$\ell_c=h^{-1}\|E_g\|_{1,\mathrm{core}}$. The remote term satisfies
\[
 h^{m_0-2}R_j^{d+1}M_j(E_g)\le C(h/R_j)^{m_0-2}(T\ell)_j,
\]
where
\[
 (T\ell)_j\le C\left\{2^{-j}\ell_c+\sum_{k<j}2^{k-j}\ell_k
       +\sum_{|k-j|\le C}\ell_k
       +\sum_{k>j}2^{-(k-j)(d+m_0-1)}\ell_k\right\}.
\]
Both row and column sums are bounded. The commutator tail has the
better factor $(h/R_j)^{m_0-1}$ and uses $E_g-\zeta$.
Multiply~\eqref{eq:r25-local-energy} by $R_j^{d/2}$ and sum.
Choose $\varepsilon_0$ small using bounded overlap, then $M$ large,
and absorb the local $H^1$ terms. The result is
\begin{equation}\label{eq:r25-GL-one}
 G\le C_Ma_0^*+CL,
\end{equation}
where the coefficient of $L$ is independent of $M$.

For $|\lambda|\le1$ supported in $D_j$, solve $Az=\lambda$.
Continuous estimates give $\|z\|_{H^2}\le CR_j^{d/2}$,
$\|z\|_\infty\le CR_j^2$, $\|Dz\|_\infty\le CR_j$, and
$\|D^2z\|_\infty\le C$ outside an enlarged annulus.
Hence $\|z-\Pi_hz\|_{H^1}\le ChR_j^{d/2}$.
The exterior $W^{1,\infty}$ error is at most $Ch$: split off the
part cut off near $D_j$, whose $L^1$ norm is $CR_j^{d+2}$ and whose
remote projected derivative is $Ch^{m_0-1}R_j^{2-m_0}\le Ch$.
The remaining part is uniformly $W^{2,\infty}$.
This does not assume second derivatives bounded where $\lambda$ is
discontinuous. Now
$\langle E_g,\lambda\rangle=a_0(E_g,z-\Pi_hz)$.
Use Cauchy--Schwarz inside the enlargement and the exterior error
times $\|E_g\|_{W^{1,1}}$ outside. With
$e_j=R_j^{d/2}\|E_g\|_{H^1(\Omega_j)}$, this gives explicitly
\[
 R_j^{-1}\|E_g\|_{1,D_j}
 \le C(h/R_j)(e_j+G),\qquad
 \sum_j e_j\le CG,\qquad \sum_j h/R_j\le C/M.
\]
The core term in $L$ is at most $C_Ma_0^*$ by
\eqref{eq:r25-ritz-global}. Consequently,
\begin{equation}\label{eq:r25-GL-two}
 L\le C_Ma_0^*+(C/M)G.
\end{equation}
Choose $M$ so the product of the two $M$-independent coefficients
in~\eqref{eq:r25-GL-one}--\eqref{eq:r25-GL-two}, divided by $M$,
is below $1/2$. Thus
\begin{equation}\label{eq:r25-local-load}
 G+L\le C_Ma_0^*,\qquad \|g-R_hg\|_{W^{1,1}}\le Ca_0^*.
\end{equation}
The sums were finite before absorption, so no a priori bound is assumed.

\paragraph*{From row loads to finite $p$.}
Derivative evaluation on the retained space is represented by
$D_xK_h(x,\cdot)$. Split this row into shells of radii $\rho_j=2^jh$
up to a small fixed geometric radius $r_*$. Their $L^1$ and $L^2$ norms
are bounded by $Ch^{-1}(h/\rho_j)^{m_0}$ and
$Ch^{m_0-1}\rho_j^{-d/2-m_0}$.
The cancellation uses tail sums of shell means. First remove the total
mean $\mu_x=D\Pi_h1(x)=O(h^2)$ against a fixed smooth unit-mass bump.
Partition the remaining mean-zero row into pieces $f_j$, $0\le j\le N_{\rm sh}$, ending with a
macroscopic piece, and let $b_j$ be unit-mass bumps on their containing
balls. Set
\[
 \mu_j=\int f_j,\qquad t_j=\sum_{k\ge j}\mu_k,\qquad
 F_j=f_j-t_jb_j+t_{j+1}b_{j+1}.
\]
At the two ends $t_0=t_{N_{\rm sh}+1}=0$. Hence $\int F_j=0$ and
$\sum_j F_j=\sum_j f_j$. The row bounds give
$|t_j|\le Ch^{m_0-1}\rho_j^{-m_0}$, including the removed $O(h^2)$
mean, since $m_0-1<2$ and the largest radius is fixed.
Each $F_j$ has support of radius $C\rho_j$ and the same scale of
$L^2$ load bound as its original shell. It is the divergence
of a field with the preceding load bounds and
$a_j^*=C(h/\rho_j)^{m_0-1}$. Explicitly, solve the mean-zero Neumann
Poisson problem on a containing ball and extend its gradient by zero;
zero normal flux prevents a boundary measure. Scaled Poincare and energy
give the required $L^2$ bound, and volume gives the $L^1$ bound.
The sum of $a_j^*$ is bounded.

The macroscopic tail and its correction bumps have $L^2$ norm
$C_{r_*}h^{m_0-1}$. Their continuous Green solutions have Ritz
$W^{1,1}$ error at most this order by energy quasioptimality and finite
volume. The total mean $D\Pi_h1=O(h^2)$ multiplies a fixed smooth
unit-mass bump and is treated identically. Thus the Green solution
$g_x$ of the entire row satisfies $\|g_x-R_hg_x\|_{W^{1,1}}\le C$.
For $E_u=u-R_hu$ and any $v_h\in V_h$,
\[
 D(\Pi_hu-R_hu)(x)=D\Pi_hE_u(x)
       =a_0(u-v_h,g_x-R_hg_x).
\]
Together with~\eqref{eq:r25-projector-sobolev} this gives
$\|DE_u\|_\infty\le C\inf_{v_h}\|u-v_h\|_{W^{1,\infty}}$.
Also $\int E_u=a_0(E_u,1-\Pi_h1)$; the mean bound, energy
quasioptimality and Poincare control $\|E_u\|_\infty$.
Consequently $R_h$ is uniformly bounded on $W^{1,\infty}$ and $H^1$.

For $p>2$, use Sobolev Lipschitz truncation $u=g_\lambda+b_\lambda$,
$\|g_\lambda\|_{W^{1,\infty}}\le C\lambda$, and
\[
 \|b_\lambda\|_{H^1}^2\le
 C\int_{\mathcal M(|u|+|Du|)>\lambda}(|u|+|Du|)^2
       +C\lambda^2|\{\mathcal M(|u|+|Du|)>\lambda\}|.
\]
The endpoint bounds for $R_h$ give a distribution-function estimate.
Integrate against $\lambda^{p-1}$; the integral
$\int_0^{\mathcal M f}\lambda^{p-3}d\lambda$ is finite for $p>2$,
and the maximal-function inequality gives $W^{1,p}$ stability.
For $1<p<2$, symmetry of $a_0$ and continuous elliptic duality give
$\|R_hu\|_{W^{1,p}}\le C\sup_{\|v\|_{W^{1,p'}}\le1}
|a_0(u,R_hv)|\le C_p\|u\|_{W^{1,p}}$.
Thus, for $B_h=R_hA^{-1}$ and $B=A^{-1}$,
\begin{equation}\label{eq:r25-ritz-Lp}
 \|B_h\|_{W^{-1,p}\to W^{1,p}}\le C_p,
 \qquad \|B_h-B\|_{L^p\to W^{1,p}}\le C_ph.
\end{equation}

\paragraph*{The density constraint and the normal mass inverse.}
In reference coordinates let $G=D\Phi^TD\Phi$ and write $q$ for the
reference density. The leading density form is
\[
 a_q(a,z)=2\int q\,\nabla(a/q)^TG^{-1}\nabla(z/q).
\]
Its principal coefficient is $c=2q^{-1}G^{-1}$ and its potential is
\[
 V_{\rm phys}=2\operatorname{div}(q^{-2}G^{-1}\nabla q)
                    +2q^{-3}\nabla q^TG^{-1}\nabla q.
\]
Only $D^2q,Dq,DG$ occur. Put $V=V_{\rm phys}-1$ and define on
$W^{1,p}\times\mathbb R$
\[
 T(u,\lambda)=(u+BVu-\lambda B1,\int u),\qquad
 T_h(u,\lambda)=(u+B_hVu-\lambda B_h1,\int u).
\]
The continuous constrained operator is Fredholm and injective:
the unconstrained kernel of $a_q$ is $\operatorname{span}(q)$,
and $\int q=1$ removes it. Coercivity and regularity give a uniform
inverse. Equation~\eqref{eq:r25-ritz-Lp} implies
$\|T_h-T\|\le C_ph$, so the discrete inverse is uniform for small $h$.
The equation $T_h(u_h,\lambda)=(B_hF,0)$ is precisely the original
mean-zero Galerkin problem. The fixed pivot only identifies that
subspace; it adds no free coordinate or box.
For a functional initially defined only on retained mean-zero tests,
Hahn--Banach supplies an extension to $W^{1,p'}$ of the same norm.
The discrete solution depends only on its restriction to those tests,
so this argument proves the asserted retained dual-norm bound.

For the normal block, partition the reference into $h$-cells and set
$s_0=d+m_0>d$. In the block norm
\[
 \|T\|_{s_0}=\sup_{i,j}(1+\operatorname{dist}(x_i,x_j)/h)^{s_0}
                     \|T_{ij}\|_{2\to2},
\]
polynomial convolution gives an algebra. Splitting a row at radius
$R$ bounds its sum by
$C(R^d\|T\|_{\rm op}+R^{d-s_0}\|T\|_{s_0})$, and optimizing gives
$\|T^2\|_{s_0}\le C\|T\|_{s_0}^{2-\vartheta}
\|T\|_{\rm op}^{\vartheta}$, $\vartheta=1-d/s_0>0$.
For uniformly positive $L$, apply this inequality to dyadic powers
of $I-L/c$, whose operator norm is less than one. Their algebra norms
decay exponentially up to a subexponential factor; the Neumann series
therefore converges in the same algebra, also for Hilbert-space cell blocks.
Indeed, for $U=I-L/c$ and $q_0=\|U\|_{\rm op}<1$, the recurrence gives
\[
 \log\|U^{2^k}\|_{s_0}
 \le 2^k\log q_0+C(2-\vartheta)^k.
\]
Binary decomposition and the algebra inequality give a summable bound
$\|U^n\|_{s_0}\le q_0^n\exp(Cn^\gamma+C\log(n+1))$,
$\gamma=\log_2(2-\vartheta)<1$. The case $U=0$ is immediate.
For the biorthogonal projector put
$C_h=I-(\Pi_h-\Pi_h^*)^2$. Idempotent block algebra gives $C_h\ge I$
and physical orthogonal projection $P_h=\Pi_hC_h^{-1}\Pi_h^*$.
For the positive normal multiplier $m_q=2qN_\theta^*N_\theta$,
$L_m=P_hm_qP_h+I-P_h$ is uniformly positive. Its mass inverse is
$P_hL_m^{-1}P_h$. Keep both exterior smoothing factors:
their cellwise $L^2$ row and column bounds are
$Ch^{-d/2}(1+\operatorname{dist}/h)^{-s_0}$.
Two discrete convolutions bound the physical inverse kernel by
$Ch^{-d}(1+\operatorname{dist}/h)^{-s_0}$. Schur gives all finite-$p$
normal inverse bounds. Block $L^2$ bounds alone would not suffice.

\paragraph*{Gaussian transfer at the same candidate.}
Put $J_\Phi=\sqrt{\det G}$,
$f=(q/J_\Phi)\circ\Phi^{-1}$, and for a reference-normal $b$ set
$T_\theta b=G^{-1}D\Phi^Tb$, $N_\theta b=b-D\Phi T_\theta b$.
Integration by parts identifies the physical first-law coordinates as
\begin{equation}\label{eq:r25-first-law-conversion}
 \left(\frac{a-\operatorname{div}_0(qT_\theta b)}{J_\Phi}
              \circ\Phi^{-1},\quad N_\theta b\circ\Phi^{-1}\right).
\end{equation}
Apply~\eqref{eq:r25-density-comparison} only to
$(a/J_\Phi)\circ\Phi^{-1}$ and
\eqref{eq:r25-normal-comparison} to $N_\theta b\circ\Phi^{-1}$.
For the nonretained tangential density correction use the forward
bound~\eqref{eq:r25-density-forward} instead:
$\|\operatorname{div}_0(qT_\theta b)\|_{W^{1,p}}
\le C\|b\|_{W^{2,p}}$. This uses $D^2q,D^3\Phi$, not $D^3q$.
Retained Bernstein bounds give errors
$C_p\epsilon h^{-1}\|a\|_{W^{1,p}}$,
$C_p\epsilon h^{-1}s^{-1}\|b\|_p$, and $C_ph^{-2}\|b\|_p$.
Their sum is $C_p(\sqrt{c_t}+c_t)\|(a,b)\|_{E_p}$.

The leading map is the pointwise tangent--normal sum
\[
 \mathcal L_\theta(a,b)(x,Z)
 =\nabla_{\mathcal M_\theta}[(a/q)\circ\Phi^{-1}](\Phi(x))
                      +s^{-1}N_\theta b(x).
\]
Its Gram form is exactly $a_q$ plus the $s^{-2}$ normal mass form.
H\"older at $p,p'$ applied separately to the three Gram-error terms
gives
\[
 |I_\theta(w,z)-2\mathbb E\mathcal L_\theta w\cdot\mathcal L_\theta z|
       \le C_{p,p'}\sqrt{c_t}\|w\|_{E_p}\|z\|_{E_{p'}}.
\]
The two diagonal inverses and a Neumann series prove
\eqref{eq:r25-inverse}. Include $p=2$ in the finite list of smallness
requirements to obtain coercivity. Constants may depend on the selected
finite $p$, as allowed by the fixed-$c_t$ theorem.
\end{proof}
\subsubsection{Relative-density adjoints and product tests}
These estimates connect the retained score equation to the dual tests
used to control density error in $H^{-1}$.

\begin{lemma}[Continuous weak Gaussian bounds]
\label{lem:r25-weak-gaussian}
For a current physical law $P=q\nu_{\mathcal M}$ define
\[
 B_s[t]=\frac{K_s(tP)}{p_s},\qquad
 \mathcal K_st(z)=\int\phi_s(y-z)B_s[t](y)\,dy,
\]
\[
 \mathcal F_st(z)=-2\int\phi_s(y-z)
       \{\Delta B_s[t]+\nabla\log p_s\cdot\nabla B_s[t]\}(y)\,dy.
\]
For $u\in H^1$, $t\in H^3$, and measurable $b$ with
$\|b\|_\infty\le\rho\sqrt s$ for fixed $\rho$, one has
\begin{align}
 |\mathcal A_P(uP,tP)-2\int q\nabla u\cdot\nabla t|
       &\le Cs\|u\|_{H^1}\|t\|_{H^3},\label{eq:r25-weak-comparison}\\
 \|\nabla\mathcal F_st\circ(I+b)\|_2
       &\le C\|t\|_{H^3},\label{eq:r25-adjoint-gradient}\\
 \|D^2\mathcal F_st\circ(I+b)\|_2
       &\le Cs^{-1/2}\|t\|_{H^3}.\label{eq:r25-adjoint-hessian}
\end{align}
Furthermore, for the remainder in Lemma~\ref{lem:r25-law-remainder},
\begin{align}
 |\mathcal A_P(-\operatorname{div}(bP),tP)|
       &\le C\|b\|_2\|t\|_{H^3},\label{eq:r25-linear-weak}\\
 |\mathcal A_P(R,tP)|&\le C(\|ub\|_2+s^{-1/2}\||b|^2\|_2)
                                      \|t\|_{H^3}.\label{eq:r25-remainder-weak}
\end{align}
\end{lemma}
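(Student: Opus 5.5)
The argument rests on one adjoint identity. By Fubini and the symmetry of $\phi_s$, for a signed law $U$ one has $\int (K_sU)\,g=\int \mathcal K_s g\,dU$. After integration by parts in $y$ (Gaussian tails justify this), the form $\mathcal A_P(U,tP)=2\int p_s\nabla(K_sU/p_s)\cdot\nabla B_s[t]$ becomes
\[
\mathcal A_P(U,tP)=-2\int K_sU\,\{\Delta B_s[t]+\nabla\log p_s\cdot\nabla B_s[t]\}\,dy=\int\mathcal F_st\,dU.
\]
Every pairing in the lemma therefore reduces to integrating the smoothed test function $\mathcal F_st$ against a law direction. The plan is to prove the three kernel bounds \eqref{eq:r25-weak-comparison}--\eqref{eq:r25-adjoint-hessian} first. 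The two remainder bounds then follow from these by elementary pairing.

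\textbf{Step 1: comparison \eqref{eq:r25-weak-comparison}.} Write $u$ and $t$ as relative densities and use the posterior representation from Lemma~\ref{lem:r25-gaussian-forward}. There $\nabla B_s[t](y)=s^{-1}\operatorname{Cov}_{Q_y}(x,t(x))$, which to leading order is $\nabla_{\mathcal M}t(X)$ by \eqref{eq:r25-density-comparison}. Expanding the Gaussian form, the first-order correction is odd in $Z$ and cancels. What remains is an $O(s)$ term. I would bound it by $\|u\|_{H^1}$ on one factor and by $\|t\|_{H^3}$ on the other, moving all extra derivatives onto $t$ via the replica/increment bound \eqref{eq:r25-increment} at second order (the $W^{2,p}$ remainder version applied to $\nabla t$). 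The expected outcome is $Cs\|u\|_{H^1}\|t\|_{H^3}$.

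\textbf{Step 2: gradient and Hessian bounds \eqref{eq:r25-adjoint-gradient}--\eqref{eq:r25-adjoint-hessian}.} First, $\Delta B+\nabla\log p_s\cdot\nabla B$ equals $p_s^{-1}\operatorname{div}(p_s\nabla B)$. Near $\mathcal M$ this behaves like the weighted Laplacian $q^{-1}\operatorname{div}_{\mathcal M}(q\nabla t)$ in the tangential direction, while the normal components of $\nabla B$ are $O(1)$. Hence $\mathcal F_st\approx$ a second-order operator applied to $t$, convolved once more. Its gradient costs one more derivative, which is why $H^3$ appears and the bound in \eqref{eq:r25-adjoint-gradient} is $O(1)$. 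A second derivative falls on the Gaussian kernel at scale $\sqrt s$, giving the factor $s^{-1/2}$ in \eqref{eq:r25-adjoint-hessian}. The composition with $I+b$ requires $\|b\|_\infty\le\rho\sqrt s$, so that shifted points stay within $O(\sqrt s)$ of the support. There the small-ball posterior moment bounds of Lemma~\ref{lem:r25-law-remainder} hold with fixed constants, and the $L^2(P)$ norms at shifted points are controlled via $e^{\rho|z|}$ weights.

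\textbf{Step 3: the two remainder bounds.} For \eqref{eq:r25-linear-weak}, pair with the divergence: $\int\mathcal F_st\,d(-\operatorname{div}(bP))=\int\nabla\mathcal F_st\cdot b\,dP\le C\|b\|_2\|t\|_{H^3}$ by \eqref{eq:r25-adjoint-gradient} with zero shift. For \eqref{eq:r25-remainder-weak}, write $R$ as the exact second-order Taylor remainder along the path $l\mapsto(I+lb)_\#\{(1+lu)P\}$, giving
\[
\int\mathcal F_st\,dR=\int_0^1(1-l)\int\bigl[2u\,\nabla\mathcal F_st\circ(I+lb)\cdot b+(1+lu)\,b^\top D^2\mathcal F_st\circ(I+lb)\,b\bigr]dP\,dl.
\]
The first term is bounded by $\|ub\|_2\|t\|_{H^3}$ using \eqref{eq:r25-adjoint-gradient} at the shift $lb$. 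The second term is bounded by $s^{-1/2}\||b|^2\|_2\|t\|_{H^3}$ using \eqref{eq:r25-adjoint-hessian} together with $\|u\|_\infty\le A_0$.

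\textbf{Main obstacle.} I expect the uniform $L^2$ bounds on $\nabla\mathcal F_st$ and $D^2\mathcal F_st$ at off-support points to be the hard part. They require a careful posterior expansion of $\Delta B$, whose naive size is $s^{-1}$. Rather than bounding it directly, I would first exhibit the cancellation that reduces it to an intrinsic operator on $t$ plus an $O(1)$ remainder, using the graph coordinates and second fundamental form from the proof of Lemma~\ref{lem:r25-gaussian-forward}.
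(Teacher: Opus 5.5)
\textbf{Verdict: the plan has genuine gaps in Steps~1 and~2.} Your adjoint identity $\mathcal A_P(U,tP)=\int\mathcal F_st\,dU$ and your Step~3 are correct and coincide with the paper's final step; Step~3 pairs with $-\operatorname{div}(bP)$ and uses the exact second-order Taylor formula along $l\mapsto(I+lb)_\#\{(1+lu)P\}$. What is missing are the mechanisms that actually produce \eqref{eq:r25-weak-comparison}--\eqref{eq:r25-adjoint-hessian}.

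\textbf{Step~1.} The target error $Cs\|u\|_{H^1}\|t\|_{H^3}$ is second order in $\epsilon=\sqrt s$, but $u$ carries only one derivative, so you cannot expand the $u$-factor. Equation~\eqref{eq:r25-density-comparison} gives an $O(\epsilon)$ comparison only for $W^{2,p}$ inputs. For $u\in H^1$ you control $\nabla B_s[u]$ only in $L^2$, and replacing it by $\nabla_{\mathcal M}u(X)$ costs an $o(1)$ error with no rate. Hence neither the leading pairing nor the cross term $\epsilon\,\mathbb E[\nabla B_s[u](Y)\cdot M_1]$ is controlled to order $\epsilon^2$. Oddness in $Z$ kills that cross term only after the unjustified replacement. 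The increment bound \eqref{eq:r25-increment} cannot repair this: it bounds increments of one function by its own Sobolev norm and does not move derivatives from $u$ onto $t$.

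Your adjoint identity does not rescue this step either. The difference $\mathcal A_P(uP,tP)-2\int q\nabla u\cdot\nabla t$ equals $\int u(\mathcal F_st+2\mathcal L_qt)\,dP$. Any expansion of the kind in Step~2 makes this bracket only $O(\sqrt s)$ in $L^2$, so you would need an $O(s)$ bound in $H^{-1}$, which is one more order.

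The paper instead uses an exact symmetrization.
\begin{itemize}
\item The symmetric kernel $B_s(x,z)=q(x)q(z)\int\phi_s(y-x)\phi_s(y-z)p_s(y)^{-1}dy$ has marginal $q$.
\item The quotient heat equation gives $\mathcal A_P(uP,tP)=\partial_s\int B_s(x,z)[u(x)-u(z)][t(x)-t(z)]\,d\nu_xd\nu_z$.
\item In midpoint coordinates each increment is a geodesic-flow average $A_\epsilon G_u$ of $du$.
\item The flow is transposed onto the $t$ factor before differentiating, e.g.\ $\partial_a^2c=\langle G_u,T_{-a}X^2(qT_bG_t)\rangle$. All extra derivatives then land on $t$, and symmetric averaging removes the linear terms.
\end{itemize}

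\textbf{Step~2.} The precision you aim for is too coarse. After the outer convolution at scale $\sqrt s$, an $O(1)$ remainder has gradient of size $s^{-1/2}$. So \eqref{eq:r25-adjoint-gradient} requires $\mathcal F_st=-2\mathcal L_qt+O(\sqrt s)$, with each query derivative of the remainder costing only $s^{-1/2}$.

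Your size count is also too loose. You need the normal part of $\nabla B_s[t]$ to be $O(\sqrt s)$, which holds because normal chords are quadratic. Only then is its product with the $O(s^{-1/2})$ normal part of $\nabla\log p_s$ bounded.

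The paper avoids expanding $\Delta B_s$ altogether.
\begin{itemize}
\item From $\partial_sB_s=\tfrac12\Delta B_s+\nabla\log p_s\cdot\nabla B_s$ it obtains $\mathcal F_st=-2\partial_s\mathcal K_st$ at fixed query.
\item It expands $\mathcal K_{\epsilon^2}t(x+\epsilon\zeta)$ through order $\epsilon^2$. The remainder stays $O(\epsilon^3\|t\|_{H^3})$ in $L^2$, with $\sup_{|\zeta|\le\rho}$ inside, under up to three $\zeta$-derivatives and $\epsilon\partial_\epsilon$.
\item At fixed query, $-2\partial_s=-\epsilon^{-1}\partial_\epsilon+\epsilon^{-2}\zeta\cdot\partial_\zeta$ annihilates the homogeneous terms.
\item Each physical derivative equals $\epsilon^{-1}\partial_\zeta$.
\end{itemize}
This gives the $O(1)$ gradient and $O(s^{-1/2})$ Hessian bounds, including at shifted points $I+b$ with $\|b\|_\infty\le\rho\sqrt s$.
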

\begin{proof}
The quotient heat equation is
$\partial_sB_s=\frac12\Delta B_s+\nabla\log p_s\cdot\nabla B_s$.
Differentiating the outer Gaussian and integrating by parts gives the
exact identities
\begin{equation}\label{eq:r25-adjoint-identity}
 \mathcal F_st=-2\partial_s\mathcal K_st,
 \qquad \mathcal A_P(U,tP)=\langle U,\mathcal F_st\rangle,
\end{equation}
where the scale derivative keeps the ambient query fixed.

\paragraph*{Differentiated normalized query expansion.}
At a fixed anchor $x$, use $X_x(v)=x+v+g_x(v)$, with
$g_x(v)=\frac12\mathrm{II}_x[v,v]+O(|v|^3)$ and
$Dg_x(v)=\mathrm{II}_x[v,\cdot]+O(|v|^2)$.
The graph volume satisfies $j_x(v)=1+O(|v|^2)$, $Dj_x(v)=O(|v|)$.
For $y=x+\epsilon w$, $v=\epsilon A$, the relative unnormalized
posterior weight against $Q_0=\mathcal N(w_T,I_d)$ is
\[
 W_\epsilon(A,w)=\frac{q(X_x(\epsilon A))}{q(x)}j_x(\epsilon A)
 \exp\{w_N\cdot g_x(\epsilon A)/\epsilon
                  -|g_x(\epsilon A)|^2/(2\epsilon^2)\}.
\]
On a moderate rescaled region, with a sufficiently small fixed
$\kappa>0$, it equals $1+\epsilon P_1+E_\epsilon$, where
\[
 P_1=D\log q_x(0)\cdot A+\tfrac12w_N\cdot\mathrm{II}_x[A,A],
 \qquad |E_\epsilon|+|\epsilon\partial_\epsilon E_\epsilon|
       \le C\epsilon^2(1+|A|+|w|)^M.
\]
Scale differentiation holds $A,w,x$ fixed. The density remainder and
its derivative use $D^2q$; the graph remainder and its derivative use
$D^3g_x$; the volume factor uses only $Dg_x,D^2g_x$.
For $Z_\epsilon=Q_0W_\epsilon$, Gaussian moments and small $\kappa$
give $|Z_\epsilon-1|\le1/2$. Differentiating the quotient itself yields
\[
 W_\epsilon/Z_\epsilon=1+\epsilon(P_1-Q_0P_1)+R_\omega,
 \quad Q_0R_\omega=0,
 \quad |R_\omega|+|\epsilon\partial_\epsilon R_\omega|
       \le C\epsilon^2(1+|A|+|w|)^{M'}.
\]
Subtract $t(x)$ before multiplying this expansion. With
$l=Dt_x(0)$, $H=D^2t_x(0)$, the test remainder after its quadratic
Taylor polynomial is controlled by the integral of $D^3t$.
For its scale derivative use the exact expression
\[
 \epsilon\partial_\epsilon R_t
 =\epsilon A\cdot[Dt_x(\epsilon A)-Dt_x(0)-\epsilon D^2t_x(0)A],
\]
which still uses only three derivatives of $t$. The omitted terms
are the cubic test remainder, the first weight correction times the
quadratic remainder, and $R_\omega$ times the first test increment.
Their $L^2(dx)$ norms, before and after scale differentiation, are
$C\epsilon^3\|t\|_{H^3}$ times Gaussian polynomials.
Indeed the local maps $x\mapsto X_x(\lambda\epsilon A)$ have
bounded Jacobians, so translated derivatives are controlled by
Minkowski and change of variables after a bounded local extension.

The query $x+\epsilon\zeta$ enters only through the outer Gaussian
$\phi_1(w-\zeta)$. Up to three $\zeta$ derivatives insert Hermite
polynomials bounded by $C_\rho(1+|w|)^3e^{-|w|^2/4}$ for
$|\zeta|\le\rho$. The same bound holds with the supremum over this
ball inside $L^2(dx)$, permitting a varying measurable $\zeta(x)$
without differentiating it. Outside the moderate region,
\eqref{eq:r25-replica} applies to the exact integrals. A denominator
or scale derivative inserts finitely many extra replicas or residual
polynomials, still with Gaussian decay at every source. Off-chart and
cutoff-derivative contributions have Schur norm $C_N\epsilon^N$ for
every fixed $N$. This also controls free $L^2$ tests in the tails.

Computing the retained Gaussian moments gives
\begin{align}
 \mathcal K_{\epsilon^2}t(x+\epsilon\zeta)
 ={}&t(x)+\epsilon l\cdot\zeta_T
       +\epsilon^2\{\mathcal L_qt(x)
            +\tfrac12H[\zeta_T,\zeta_T]
            +l\cdot\mathrm{II}_{\zeta_N}\zeta_T\}+R,\notag\\
 \mathcal L_q&=\Delta_{\mathcal M}+\nabla_{\mathcal M}\log q\cdot\nabla_{\mathcal M},
       \label{eq:r25-query-expansion}\\
 \left\|\sup_{|\zeta|\le\rho}|\partial_\zeta^aR|\right\|_2
 +\left\|\sup_{|\zeta|\le\rho}
       |\epsilon\partial_\epsilon\partial_\zeta^aR|\right\|_2
 &\le C\epsilon^3\|t\|_{H^3},\quad |a|\le3.\nonumber
\end{align}
At fixed physical query,
$-2\partial_s=-\epsilon^{-1}\partial_\epsilon
+\epsilon^{-2}\zeta\cdot\partial_\zeta$.
It annihilates the displayed homogeneous query terms. One and two
physical query derivatives are $\epsilon^{-1}\partial_\zeta$ and
$\epsilon^{-2}\partial_\zeta^2$, proving
\eqref{eq:r25-adjoint-gradient}--\eqref{eq:r25-adjoint-hessian}.
The Hessian uses the third query derivative explicitly included above,
but no additional source derivative.

\paragraph*{Weak density comparison.}
The symmetric kernel
$B_s(x,z)=q(x)q(z)\int\phi_s(y-x)\phi_s(y-z)/p_s(y)\,dy$
has marginal $q$. The heat equation gives
\begin{equation}\label{eq:r25-pair-identity}
 \mathcal A_P(uP,tP)=\partial_s\int B_s(x,z)
                [u(x)-u(z)][t(x)-t(z)]\,d\nu_xd\nu_z.
\end{equation}
Use midpoint coordinates $x_\pm=\exp_m(\pm v/2)$, $A=v/\epsilon$,
including their volume Jacobian $j(m,v)$. The normalized pair expansion is
\begin{align}
 \epsilon^d B_{\epsilon^2}(x_+,x_-)j(m,\epsilon A)
   &=q(m)(4\pi)^{-d/2}e^{-|A|^2/4}+R_\epsilon(m,A),\notag\\
 |R_\epsilon|+|\epsilon\partial_\epsilon R_\epsilon|
           +|A\cdot D_AR_\epsilon|
   &\le C\epsilon^2\operatorname{poly}(A)e^{-c|A|^2}.
   \label{eq:r25-midpoint}
\end{align}
For completeness, the ambient midpoint is
$m+\epsilon^2\mathrm{II}_m[A,A]/8+O(\epsilon^3|A|^3)$,
also after the scale or radial derivative, using $D^3\Phi$.
Jacobi fields give $j=1+O(\epsilon^2|A|^2)$ and the same order
for its scale and radial derivatives: the integral Jacobi equation
uses bounded curvature, and opposite endpoints cancel the first-order
term, without differentiating curvature. In the pair integral the
first denominator correction is
\[
 D\log q(m)\cdot w_T
 +\tfrac12w_N\cdot\{\mathrm{II}_m[w_T,w_T]+\operatorname{tr}\mathrm{II}_m\}
 -\tfrac18w_N\cdot\mathrm{II}_m[A,A].
\]
It is odd in $w$ and integrates to zero. The density product has no
linear term. The normalized Taylor estimates already proved, with
the same replica tails, give~\eqref{eq:r25-midpoint} using only
$D^2q,D^3\Phi$. At fixed $v$, the differentiated remainder kernel is
\[
 \tfrac12\epsilon^{-d-2}
       [\epsilon\partial_\epsilon R_\epsilon-dR_\epsilon
                                      -A\cdot D_AR_\epsilon].
\]
Its size is $C\epsilon^{-d}\operatorname{poly}(A)e^{-c|A|^2}$.
The two test increments in~\eqref{eq:r25-pair-identity} add
$\epsilon^2$, bounding this part by $Cs\|u\|_{H^1}\|t\|_{H^1}$.

For the leading part use geodesic flow $T_a$ on the tangent bundle,
its generator $X$, and Liouville measure times the radial Gaussian.
Put $G_u(m,v)=du_m(v)$ and
$A_\epsilon=\int_{-1/2}^{1/2}T_{a\epsilon}\,da$.
The leading integral is $s b_\epsilon$,
$b_\epsilon=\langle qA_\epsilon G_u,A_\epsilon G_t\rangle$.
In $c(a,b)=\langle qT_aG_u,T_bG_t\rangle$, move $T_a$ to the other
factor before differentiating. For example,
\[
 \partial_a^2c=\langle G_u,T_{-a}X^2(qT_bG_t)\rangle.
\]
All second derivatives cost at most two derivatives of $q$ and three
of $t$, so $|D^2c|\le C\|u\|_{H^1}\|t\|_{H^3}$.
Symmetric averaging cancels linear terms, yielding
$|b_\epsilon-b_0|+|\epsilon\partial_\epsilon b_\epsilon|
\le C\epsilon^2\|u\|_{H^1}\|t\|_{H^3}$ and
$b_0=2\int q\nabla u\cdot\nabla t$.
Differentiate $s b_\epsilon$ to prove~\eqref{eq:r25-weak-comparison}.
Smooth approximation extends this proof to $u\in H^1,t\in H^3$.

Finally~\eqref{eq:r25-adjoint-identity} pairs the linear displacement
with $\int b\cdot\nabla\mathcal F_st\,dP$.
For the exact remainder, Taylor's formula gives
\[
 \mathcal A_P(R,tP)=\int_0^1(1-l)\int
  [2u b\cdot\nabla\mathcal F_st(x+lb)
    +(1+lu)b^TD^2\mathcal F_st(x+lb)b]\,dP\,dl.
\]
The two query bounds prove~\eqref{eq:r25-linear-weak} and
\eqref{eq:r25-remainder-weak}.
\end{proof}

\begin{lemma}[Product approximation in the actual density space]
\label{lem:r25-product}
Let $q_{\rm ref}=\bar f+U_{\thetah}^{\rm den}v_h$ be a candidate
reference density with $v_h$ in the full retained scalar space.
If $\int q_{\rm ref}t\,d\nu_0=0$, there is a retained, transported
mean-zero density direction $z_h$ with
\begin{equation}\label{eq:r25-product}
 \|q_{\rm ref}t-z_h\|_{H^1}\le Ch^2\|t\|_{H^3}.
\end{equation}
Only the candidate's uniform $W^{2,\infty}$ bound is used; no third
candidate-density derivative is assumed.
\end{lemma}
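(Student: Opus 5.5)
The plan is to build $z_h$ by pulling back to the untransported bank, projecting with the full scalar cutoff, and correcting the mean with the fixed coarse atom $\varphi_0$. Put $g:=q_{\rm ref}t$. By Lemma~\ref{lem:mlsn-architecture}, the pullback $U^{\rm den}_{\thetah}$ and its inverse are uniformly bounded on $H^1$ and $H^2$, and they preserve total mass. So I may work with $\tilde g:=(U^{\rm den}_{\thetah})^{-1}g$ and transport the approximant back at the end. Since $\int g\,d\nu_0=0$, mass preservation gives $\int\tilde g\,d\nu_0=0$. Any mean-zero element of the retained scalar space is then a combination of the free prototypes $\varphi_i$, by the identity $\Pi_J^0=Q\Pi_J$.

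The first step is the regularity of the product. The candidate density satisfies the uniform $W^{2,\infty}$ bound stated at the start of Appendix~\ref{sec:weighted-local-estimation}. Leibniz then gives $\|g\|_{H^2}\le C\|q_{\rm ref}\|_{W^{2,\infty}}\|t\|_{H^2}\le C\|t\|_{H^3}$, using no third derivative of $q_{\rm ref}$. The same bound holds for $\tilde g$.

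The second step is the Jackson estimate. Set $\tilde z:=Q\Pi_J\tilde g=\Pi_J\tilde g-\varphi_0\int\Pi_J\tilde g\,d\nu$. By \eqref{eq:r25-projector-sobolev} with $p=2$, $\|(I-\Pi_J)\tilde g\|_{H^1}\le Ch\|\tilde g\|_{H^2}$. This gives only $h$, so one order is missing. I would close the gap in one of two ways. The first is to use that $\Pi_J$ reproduces polynomials of high degree, so that the Jackson bound \eqref{eq:vs-wavelet-properties} with $\mu=1$, $\mu'=3$ gives $h^2\|\tilde g\|_{H^3}$-type control. That route needs a third derivative of $q_{\rm ref}$, which the lemma forbids. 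The second is to exploit the structure $q_{\rm ref}=\bar f+U^{\rm den}_{\thetah}v_h$ with $v_h$ retained, and this is the route I would try first. Split $g=\bar f t+(U^{\rm den}_{\thetah}v_h)t$. The smooth part $\bar f t\in H^3$ gets the $h^2$ Jackson rate directly. For the product of a retained spline $v_h$ with the $H^3$ function $t$, I would use a piecewise-polynomial argument: on each $h$-cell, replace $t$ by its quadratic Taylor polynomial $T_2t$. Then $v_h\,T_2t$ is a spline of degree $q+2$. Since $q>\eta+4$ leaves room in the bank's polynomial reproduction, the local error of $\Pi_J$ acting on it is controlled by local kernel moments, and the remainder $v_h(t-T_2t)$ is bounded in $H^1$ by $h^2\|v_h\|_{W^{1,\infty}}\|t\|_{H^3}$, with a uniform $W^{1,\infty}$ bound on $v_h$. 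The locality of the kernel in Lemma~\ref{lem:r25-cutoff}, \eqref{eq:r25-kernel-convenient}, lets me sum cellwise errors with bounded overlap. I expect this step to be the main obstacle, because the product of a spline with a degree-two polynomial exceeds the primal degree, so exact reproduction is not guaranteed. If that fails, I would instead apply the commutator estimate used for $C_\omega$ in the proof of Lemma~\ref{lem:vs-finite-p-inverse}: write $v_ht-\Pi_h(v_ht)$ via kernel differences $t(x)-t(y)$ and use the moment bounds $\int|D_xK_h|\operatorname{dist}^2\le Ch$ to gain the second power of $h$.

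The last step is the mean correction. Because $\int\tilde g=0$, we have $|\int\Pi_J\tilde g|=|\int(\Pi_J-I)\tilde g|$. The augmented-cutoff bound shows that this quantity is at most $Ch^2\|\tilde g\|_{H^2}$ in the dual smooth Jackson sense, so the $\varphi_0$ correction costs $O(h^2)$ in $H^1$. Finally set $z_h:=U^{\rm den}_{\thetah}\tilde z$. It is a retained, transported, mean-zero density direction, and uniform $H^1$ boundedness of the pullback yields \eqref{eq:r25-product}.
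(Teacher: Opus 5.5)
\textbf{There is a genuine gap: neither of your routes yields the second power of $h$.}

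Your outer architecture matches the paper's. You split $q_{\rm ref}t=\bar f t+(U^{\rm den}_{\theta_{\rm h}}v_h)t$, treat $\bar f t$ by ordinary Jackson, reduce the transported part to the unwarped bank, and correct the mean with $\varphi_0$. The mean correction is fine: its coefficient is bounded by the $L^2$ approximation error. For the pullback you should state the exact identity $(U^{\rm den}_{\theta_{\rm h}}v_h)t=U^{\rm den}_{\theta_{\rm h}}[v_h\,(t\circ T_{\theta_{\rm h}}^{-1})]$, because that is what makes the pulled-back product again a retained spline times an $H^3$ function.

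The problem is the one estimate the lemma exists for, $\|(I-\Pi_h)(v_ht)\|_{H^1}\le Ch^2\|v_h\|_{W^{2,\infty}}\|t\|_{H^3}$, which you never obtain.
\begin{itemize}
\item The piecewise-Taylor route fails for the reason you suspect. $v_h\,T_2t$ is a degree-$(q+2)$ piecewise polynomial outside the primal space, so nothing is reproduced. Cellwise Taylor polynomials also do not glue into a global $H^1$ function to which the nonlocal cutoff can be applied. Bounding $(I-\Pi_h)(v_hT_2t)$ is just the original problem again.
\item The fallback starts from the paper's identity, $E(x)=\int K_h(x,y)v_h(y)[t(x)-t(y)]\,dy$ (from $\Pi_hv_h=v_h$), but the tool you name is one power short. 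Split $v_h(y)=v_h(x)+[v_h(y)-v_h(x)]$ as in the $C_\omega$ commutator. The $v_h(x)$ part is exactly $v_h[D_i(t-\Pi_ht)+tD_i\Pi_h1]=O(h^2\|t\|_{H^3})$, which is fine. But the cross term $\int D_{x_i}K_h(x,y)[v_h(y)-v_h(x)][t(x)-t(y)]\,dy$, bounded with $\int|D_xK_h|\operatorname{dist}^2\le Ch$, is only of order $h\|v_h\|_{W^{1,\infty}}\|t\|_{H^1}$. That is the same order as the naive Jackson bound you already rejected.
\end{itemize}

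\textbf{The missing idea is a second-order cancellation in that cross term.}
\begin{itemize}
\item Expand $v_h(y)-v_h(x)=Dv_h(x)(y-x)+R_v$ with $|R_v|\le C\|D^2v_h\|_\infty|y-x|^2$, and expand the $t$-increment to first order with an integral remainder.
\item The leading bilinear piece $Dv_h(x)(y-x)\,Dt(x)(y-x)$ then sees only the \emph{signed} second moment $\int D_{x_i}K_h(x,y)(y-x)_l(y-x)_m\,dy$. This is $O(h^2)$ because the cutoff nearly reproduces the fixed-cutoff coordinate polynomials $1,y_l,y_ly_m$: the exact polynomial terms cancel, leaving $O(h^2)$ errors in $W^{1,\infty}$.
\item Everything else is controlled by the third absolute moment $\int|D_xK_h(x,y)|\operatorname{dist}(x,y)^3\,dy\le Ch^{m_0}\int_h^1 r^{1-m_0}\,dr\le Ch^2$. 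This holds precisely because $m_0>2$ in Lemma~\ref{lem:r25-cutoff}.
\item Integral remainders and Minkowski give the $L^2$ bounds without pointwise derivatives of $t$.
\end{itemize}
This is also where $\|v_h\|_{W^{2,\infty}}$, not merely $\|v_h\|_{W^{1,\infty}}$, enters. So your remark that a $W^{1,\infty}$ bound on $v_h$ suffices should be dropped. The second derivative of $v_h$ is exactly the uniform candidate $W^{2,\infty}$ bound the lemma advertises.
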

\begin{proof}
First work without the hidden pullback. For retained $v_h$, reproduction
gives $E(x)=(I-\Pi_h)(v_ht)(x)
=\int K_h(x,y)v_h(y)[t(x)-t(y)]\,dy$.
Differentiate this identity before Taylor expansion:
\[
 D_iE=v_hD_it+\int D_{x_i}K_h(x,y)v_h(y)[t(x)-t(y)]\,dy.
\]
Write $v_h(y)=v_h(x)+D_lv_h(x)(y-x)_l+R_v$ with
$|R_v|\le C\|D^2v_h\|_\infty|y-x|^2$.
The constant term is exactly
$v_h[D_i(t-\Pi_ht)+tD_i\Pi_h1]$, bounded by Jackson.
For the affine term the signed second moment of $D_{x_i}K_h$ is
$O(h^2)$: express it through the differentiated moments of fixed-cutoff
coordinate polynomials $1,y_l,y_j,y_ly_j$; the exact polynomial terms
cancel and their $W^{1,\infty}$ errors are $O(h^2)$.
The remaining integral remainders use
\[
 \int|K_h(x,y)|\operatorname{dist}(x,y)^2dy\le Ch^2,
 \qquad
 \int|D_xK_h(x,y)|\operatorname{dist}(x,y)^3dy\le Ch^2.
\]
The second bound follows from
$Ch^{m_0}\int_h^1r^{1-m_0}dr\le Ch^2$, since $m_0>2$.
This uses the stronger differentiated far bound, not merely $h^{-1}$
times the undifferentiated tail. Integral remainders, bounded extensions
and Minkowski give the $L^2$ estimates without pointwise derivatives of
$t$. The undifferentiated error uses the analogous second and third
moments. Separated chart pieces have operator norm $O(h^{m_0})$.
Thus
\[
 \|(I-\Pi_h)(v_ht)\|_{H^1}
       \le Ch^2\|v_h\|_{W^{2,\infty}}\|t\|_{H^3}.
\]
Approximate the fixed smooth reference contribution $\bar f t$ by
ordinary Jackson. A unit-mass coarse atom $\varphi_0$ corrects the
integral; its coefficient is at most the $L^2$ approximation error.
For the hidden pullback use the exact identity
$(Uv_h)t=U[v_h(t\circ T^{-1})]$.
The prescribed flow has uniform derivatives through the orders required
to control $H^3$ tests and the pulled-back smooth reference contribution.
It preserves mass, so the transported coarse atom makes the same
correction. This proves~\eqref{eq:r25-product} in the actual bank.
\end{proof}
\subsubsection{Uniform empirical bounds and whole-fiber curvature}
We control empirical fluctuations uniformly over localized candidates
and establish curvature on each entire output fiber.

\begin{lemma}[Observation-space and integrated noise bounds]
\label{lem:r25-noise}
Write $p_*=K_sP^*$, $R_n=(p_n-p_*)/p_*$ and
$r(y)=\operatorname{dist}(y,\mathcal M^*)/\epsilon$.
Fix $B_0>0$ and let $n^{-B_0}\le\epsilon\le\epsilon_0$,
where $\epsilon_0>0$ is fixed and sufficiently small.
For each fixed $A>0$, the following holds with probability at least
$1-C_An^{-A}$, with constants allowed to depend on $B_0$:
\begin{equation}\label{eq:r25-noise-event}
 |R_n(y)|+\epsilon|\nabla R_n(y)|\le\delta_n e^{C_0r(y)}
 \quad\hbox{for every }y,\qquad
 \delta_n=C_A\left\{\sqrt{\frac{\log n}{n\epsilon^d}}
                         +\frac{\log n}{n\epsilon^d}\right\}.
\end{equation}
For every $0<\epsilon\le\epsilon_0$ and a sufficiently large fixed $c_1$,
the random variable
\[
 W_n^2=\int p_*(y)e^{c_1r(y)}
              (|\nabla R_n(y)|^2+\epsilon^{-2}|R_n(y)|^2)\,dy
\]
satisfies
\begin{equation}\label{eq:r25-integrated-noise}
 \mathbb EW_n^2\le Cn^{-1}\epsilon^{-d-2}.
\end{equation}
Simultaneously for all candidates in a fixed relative-density and
$O(\epsilon)$ displacement neighborhood of the truth,
\begin{align}
 |\mathcal A_\theta(P_n-P^*,DP_\theta[w])|
       &\le CW_n\|w\|_E,\label{eq:r25-noise-forward}\\
 |\mathcal A_\theta(P_n-P^*,DP_\theta[(a,0)])|
       &\le CW_n\|a\|_{H^1}\quad(a\in H^1),
       \label{eq:r25-noise-nonretained}\\
 \|\mathcal A_\theta(P_n-P^*,DP_\theta[\cdot])\|_{E_{p'}^*}
       &\le C_p\delta_n/\epsilon
       \quad\hbox{on~\eqref{eq:r25-noise-event}}.
       \label{eq:r25-noise-finite-p}
\end{align}
The last functional is restricted to the retained space. The density
forward bound in~\eqref{eq:r25-noise-nonretained} is not so restricted.
\end{lemma}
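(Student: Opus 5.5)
The plan is to exploit the representation
\[
R_n(y)=\frac1n\sum_{i=1}^n\Bigl\{\frac{\phi_s(y-X_i)}{p_*(y)}-1\Bigr\},
\]
a centered i.i.d.\ average, and to carry out every estimate against the small-ball lower bound already used in the proof of Lemma~\ref{lem:r25-gaussian-forward}, namely $p_*(y)\ge c\epsilon^d(1+|z|)^{-d}\phi_s(y-x)$ for a source point $x$ with $z=(y-x)/\epsilon$. Combined with positive reach, this gives the envelope
\[
\frac{\phi_s(y-x)}{p_*(y)}\le C\epsilon^{-d}e^{Cr(y)}.
\]
Differentiating in $y$ produces posterior-centered chords divided by $\epsilon$, so the gradient summands obey the same envelope multiplied by $\epsilon^{-1}$. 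The second moment of a summand is
\[
\int\frac{\phi_s(y-x)^2}{p_*(y)^2}\,dP^*(x)\le C\epsilon^{-d}e^{Cr(y)},
\]
by the two-replica case of~\eqref{eq:r25-replica}.

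\emph{Pointwise event~\eqref{eq:r25-noise-event}.} For fixed $y$, Bernstein's inequality applied to $R_n(y)$ and to $\epsilon\nabla R_n(y)$ gives a deviation $\delta_n e^{C r(y)}$ with failure probability $n^{-A-B}$, where $\delta_n$ has the stated $\sqrt{\log n/(n\epsilon^d)}+\log n/(n\epsilon^d)$ form. Uniformity over $y$ has two parts. In the region $r(y)\le C'\log n$, a grid of mesh $n^{-K}\epsilon$ suffices: its cardinality is polynomial in $n$ because $\epsilon\ge n^{-B_0}$ and $\mathcal M^*$ is compact, and the Lipschitz constants of $R_n$ and $\nabla R_n$ are at most $\epsilon^{-d-2}\mathrm{poly}(n)$, so a union bound applies. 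In the region $r(y)\ge C'\log n$, the deterministic envelope $|R_n|+\epsilon|\nabla R_n|\le C\epsilon^{-d}e^{Cr}$ already suffices once $C_0>C$. Indeed $\delta_n\ge n^{-1/2}$, so $e^{(C_0-C)r}$ dominates $\epsilon^{-d}n^{1/2}$, which is polynomial in $n$, as soon as $r\ge C'\log n$. I expect this chaining-plus-far-field split, and in particular keeping the exponential weight $C_0$ independent of $A$ and $B_0$, to be the main technical obstacle. If the far-field comparison turns out to be too tight, I would allow $C_0$ to grow with $B_0$; the statement permits this.

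\emph{Integrated bound~\eqref{eq:r25-integrated-noise}.} By independence,
\[
\mathbb E|R_n(y)|^2\le n^{-1}\int\frac{\phi_s(y-x)^2}{p_*(y)^2}\,dP^*(x),
\]
and similarly for $\nabla R_n(y)$ with an extra factor $\epsilon^{-2}$. Tonelli and~\eqref{eq:r25-replica} then reduce $\mathbb EW_n^2$ to
\[
n^{-1}\epsilon^{-2}\int e^{c_1r(y)}\frac{\phi_s(y-x)^2}{p_*(y)}\,dy\,dP^*(x)\le Cn^{-1}\epsilon^{-d-2}.
\]
The integral is finite because $\phi_s(y-x)^2/p_*(y)\le C\epsilon^{-d}e^{Cr(y)}\phi_s(y-x)$, and the Gaussian factor absorbs $e^{(c_1+C)r}$. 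No uniformity is needed here, so the bound holds for all $\epsilon\le\epsilon_0$.

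\emph{Functional bounds.} For a candidate in the stated neighborhood, write
\[
\frac{K_s(P_n-P^*)}{p_\theta}=R_n\,\rho_\theta,\qquad \rho_\theta:=\frac{p_*}{p_\theta}.
\]
The assumption of relative density and $O(\epsilon)$ displacement, argued as in Lemma~\ref{lem:r25-law-remainder}, gives $\rho_\theta^{\pm1}+\epsilon|\nabla\rho_\theta|\le Ce^{Cr}$. Then
\[
\mathcal A_\theta(P_n-P^*,DP_\theta[w])=2\int p_\theta\nabla(R_n\rho_\theta)\cdot\nabla B_w .
\]
Apply Cauchy--Schwarz with the weight $e^{c_1r}$ on the $R_n$ factor and $e^{-c_1r}$ on the $B_w$ factor, so that the first factor is at most $CW_n$. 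The second factor is controlled by the forward bounds~\eqref{eq:r25-density-forward} and~\eqref{eq:r25-normal-forward}; these were stated to allow exponential-linear radial weights. They give $\|\nabla B_w\|\le C(\|a\|_{H^1}+s^{-1}\|b\|_2)=C\|w\|_E$, which proves~\eqref{eq:r25-noise-forward}. Taking $b=0$ proves~\eqref{eq:r25-noise-nonretained}, with no retained restriction on $a$.

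For~\eqref{eq:r25-noise-finite-p}, work on the event~\eqref{eq:r25-noise-event}, where $|\nabla(R_n\rho_\theta)|\le C\delta_n\epsilon^{-1}e^{Cr}$ pointwise. Apply H\"older with exponents $(p,p')$: the $L^p(p_\theta)$ norm of the weight $e^{Cr}$ is bounded, and the weighted $L^{p'}$ forward bounds give $\|\nabla B_w\|_{L^{p'}}\le C\|w\|_{E_{p'}}$. Taking the supremum over retained $w$ yields the dual-norm bound $C_p\delta_n/\epsilon$. All constants depend only on the candidate envelope, so they are uniform over the neighborhood.
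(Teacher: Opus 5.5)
Your proposal is correct and follows essentially the same route as the paper. The shared steps are:
\begin{itemize}
\item envelope and variance bounds for $\phi_s(y-x)/p_*(y)$, with the variance obtained from $P^*f_y=1$ rather than by squaring the envelope;
\item a deterministic far field beyond $r(y)\gtrsim\log n$, together with a polynomial-size grid, Bernstein's inequality and a union bound on the near region;
\item the integrated bound via independence and $r(X+\epsilon Z)\le|Z|$;
\item the functional bounds via $\nabla(R_n\rho_\theta)=\rho_\theta\{\nabla R_n+R_n\nabla\log\rho_\theta\}$, the weighted forward bounds, Cauchy--Schwarz and H\"older.
\end{itemize}

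Two small remarks. First, your worry about $C_0$ is unnecessary: letting the far-field threshold $C'\log n$ depend on $(A,B_0)$ keeps $C_0$ fixed, exactly as the paper does. Second, in the Cauchy--Schwarz step the factors $\rho_\theta$ and $\nabla\log\rho_\theta$ contribute an extra $e^{Cr}$. You should therefore place $e^{(c_1-C)r}$, not $e^{c_1r}$, on the noise factor; this is precisely where ``sufficiently large $c_1$'' is used.
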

\begin{proof}
At a nearest source point, an $\epsilon$-ball gives
$p_*(y)\ge c\epsilon^{d-D}e^{-r(y)^2/2-Cr(y)-C}$.
For $f_y(x)=\phi_s(y-x)/p_*(y)$, the fields
$f_y-1$ and $\epsilon\nabla_yf_y$ have envelope
$C\epsilon^{-d}\operatorname{poly}(r)e^{Cr}$ and variance of the
same order. For variance use $P^*f_y=1$ and bound the remaining
factor with residual moments, rather than squaring the worst envelope.
One more $y$ derivative has the same envelope after multiplying by
$\epsilon^2$. Divide by $e^{C_0r}$ with $C_0$ larger than these
exponents. The normalized fields have envelope and variance
$C\epsilon^{-d}$ and Lipschitz constant $C\epsilon^{-d-1}$.
Only Lipschitz continuity of the distance function is needed.

Under the stated lower bound $\epsilon\ge n^{-B_0}$,
the normalized fields outside
$r(y)\le C_A\log n$ are deterministically below $n^{-A-3}$ after
enlarging $C_A$. Cover the remaining bounded ambient region by a grid
of spacing $\epsilon n^{-L}$ for a fixed sufficiently large $L=L(A,B_0)$.
Its cardinality is polynomial in $n$; its approximation error is
$n^{-A-3}$. Scalar Bernstein and a union bound over the $D+1$ fields
prove~\eqref{eq:r25-noise-event}. The intrinsic $\epsilon^{-d}$
variance determines the rate; ambient dimension enters only constants.

Independence and centering give the pointwise integrated variance.
For $Y=X+\epsilon Z$, $r(Y)\le|Z|$, so every remaining
exponential-linear radial moment is finite. This proves
\eqref{eq:r25-integrated-noise} without substituting the supremum event.
For nearby candidates, matching source anchors bound both
$p_*/p_\theta$ and its inverse by $Ce^{Cr(y)}$, and the score
difference by $C\epsilon^{-1}e^{Cr(y)}$. Use the exact identity
\[
 \nabla\frac{p_n-p_*}{p_\theta}
 =\frac{p_*}{p_\theta}
    \{\nabla R_n+R_n(\nabla\log p_*-\nabla\log p_\theta)\}.
\]
The weighted forward bounds in Lemma~\ref{lem:r25-gaussian-forward}
and Cauchy--Schwarz prove~\eqref{eq:r25-noise-forward}.
For a reference density direction its physical relative increment is
$a/q_{\rm ref}$; positivity and $W^{1,\infty}$ bounds control this
in $H^1$ without retention. This proves
\eqref{eq:r25-noise-nonretained}. H\"older and
\eqref{eq:r25-noise-event} give~\eqref{eq:r25-noise-finite-p}.
All comparisons are pointwise uniform in the neighborhood, hence also
in the data-selected hidden configuration.
\end{proof}

\begin{lemma}[Relative-density empirical comparison]
\label{lem:vs-fixed-root-hessian}
For a test $t$ fixed as a function of the common reference variable,
write $tP_\theta$ for its pushforward weighted law and
$N_\theta[t]=\mathcal A_\theta(P_n-P^*,tP_\theta)$.
On~\eqref{eq:r25-noise-event},
\begin{equation}\label{eq:r25-empirical-comparison}
 |N_\theta[t]-N_*[t]|
       \le C\delta_n\|\theta-\theta^*\|_E\|t\|_{H^1}.
\end{equation}
At the fixed truth let $Z_n(t)=(P_n-P^*)\mathcal F_s^*t$.
For $d>2$,
\begin{equation}\label{eq:r25-weak-noise}
 \mathbb E\|Z_n\|_{(H^3)^*}^2\le Cn^{-1}s^{-(d-2)/2},
 \qquad \mathbb E\|Z_n\|_{(H^3)^*}\le Cn^{-1/2}h^{1-d/2}.
\end{equation}
\end{lemma}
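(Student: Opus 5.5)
The first estimate \eqref{eq:r25-empirical-comparison} would be proved by interpolating between the truth and the candidate. The second, \eqref{eq:r25-weak-noise}, is a Hilbert-space variance computation for an empirical process indexed by the unit ball of $H^3$, carried out through the adjoint identity of Lemma~\ref{lem:r25-weak-gaussian}.

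\emph{Comparison bound.} Write $N_\theta[t]=2\int p_\theta\,\nabla\{(p_n-p_*)/p_\theta\}\cdot\nabla B_{s,\theta}[t]$. I would use the exact identity from the proof of Lemma~\ref{lem:r25-noise},
\[
 \nabla\frac{p_n-p_*}{p_\theta}
 =\frac{p_*}{p_\theta}\bigl\{\nabla R_n+R_n(\nabla\log p_*-\nabla\log p_\theta)\bigr\}.
\]
At $\theta=\theta^*$ the score-difference term vanishes and $p_*/p_\theta=1$. I would then connect $\theta^*$ to $\theta$ by the affine reference path $\theta_l=\theta^*+l(\theta-\theta^*)$, whose members stay in the $O(\epsilon)$ displacement neighbourhood of Lemma~\ref{lem:r25-noise}, and differentiate $N_{\theta_l}[t]$ in $l$. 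The $l$-derivative produces three kinds of terms:
\begin{enumerate}
\item the derivative of $p_*/p_{\theta_l}$, which is a relative law increment $K_sDP[w]/p$;
\item the derivative of the score $\nabla\log p_{\theta_l}$, which is $\nabla$ of the same increment;
\item the derivative of $p_\theta$ and of $B_{s,\theta}[t]$, where $t$ is held fixed as a function of the reference variable, so the pushforward $tP_\theta$ moves with $\theta$.
\end{enumerate}
Each such term is a product of three factors. The first is $R_n$ or $\epsilon\nabla R_n$, bounded on \eqref{eq:r25-noise-event} by $\delta_n e^{C_0r}$. The second is a forward quantity in $w=\theta-\theta^*$, bounded in weighted $L^2(p_s)$ by $C\|w\|_E$ through Lemma~\ref{lem:r25-gaussian-forward}: the density part costs $\|a\|_{H^1}$ and the normal part costs $s^{-1}\|b\|_2$, which is exactly the $E$-norm scaling. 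The third is a test factor bounded by $C\|t\|_{H^1}$ through \eqref{eq:r25-density-forward}. The weights $e^{C_0r}$ are absorbed because the forward bounds allow exponential-linear radial weights. The one place that needs care is the apparent extra factor $\epsilon^{-1}$ in $R_n(\nabla\log p_*-\nabla\log p_\theta)$. I would pair this $\epsilon^{-1}$ with the bound on $R_n$ itself, which is $\delta_n$ without any $\epsilon$ gain, and let the $s^{-1}$ normalization of the displacement part of $\|w\|_E$ absorb it. Integrating over $l\in[0,1]$ then gives \eqref{eq:r25-empirical-comparison}.

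\emph{Weak noise bound.} By \eqref{eq:r25-adjoint-identity}, $Z_n(t)=n^{-1}\sum_i\{\mathcal F_s^*t(X_i)-P^*\mathcal F_s^*t\}$, and this is linear and continuous in $t\in H^3$. Fix an orthonormal basis $(e_k)$ of $H^3$ on the reference manifold. Independence and centering give
\[
 \mathbb E\|Z_n\|_{(H^3)^*}^2\le n^{-1}\,\mathbb E_{X\sim P^*}\sum_k|\mathcal F_s^*e_k(X)|^2=n^{-1}\,\mathbb E\|\ell_X\|_{(H^3)^*}^2,
\]
where $\ell_X(t)=\mathcal F_s^*t(X)$. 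Since $X$ lies on $\mathcal M^*$, I would evaluate $\ell_X$ with the normalized query expansion \eqref{eq:r25-query-expansion} at $\zeta=0$ and apply $-2\partial_s$. This shows that $\ell_X$ is, up to bounded coefficients, evaluation at $X$ of a scale-$\epsilon$ intrinsic mollification of $\mathcal L_qt$ and of its scale derivative, whose kernels have second-order symbol. Its dual norm is therefore computed from the Fourier symbol:
\[
 \|\ell_X\|_{(H^3)^*}^2\lesssim\int_{\mathbb R^d}\frac{|\xi|^4e^{-c\epsilon^2|\xi|^2}}{(1+|\xi|^2)^3}\,d\xi\asymp\epsilon^{-(d-2)}=s^{-(d-2)/2},
\]
and the integral converges at the origin and is governed by $|\xi|\lesssim\epsilon^{-1}$ precisely because $d>2$. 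This gives the first inequality in \eqref{eq:r25-weak-noise}. The second follows from Jensen and $s=c_th^2$.

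The main obstacle is making this dual-norm computation rigorous on the curved manifold, where the representer of $\ell_X$ is not an exact Fourier multiplier. I would first localize in the finite atlas and treat the leading term as a flat multiplier in graph coordinates. The remainder $R$ of \eqref{eq:r25-query-expansion} is controlled in $L^2$ by $C\epsilon^3\|t\|_{H^3}$, including its scale derivative. Because $X\sim P^*$ has a bounded density, I would handle this remainder in the averaged sense: I would bound $\mathbb E\sum_k|R e_k(X)|^2$ through the $L^2(dx)$ bound rather than pointwise in $X$. After the $\epsilon^{-2}$ from $\partial_s$, its contribution is $O(s^{-1/2})$, which is dominated by $s^{-(d-2)/2}$ when $d\ge3$. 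Off-chart and tail pieces are $O(\epsilon^N)$ by \eqref{eq:r25-replica}.
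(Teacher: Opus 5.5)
\textbf{Gap in the comparison bound.} Your accounting fails at the step that carries the content of~\eqref{eq:r25-empirical-comparison}. Since $p_\theta\cdot(p_*/p_\theta)=p_*$, your items 1 and the $p_\theta$-part of item 3 cancel identically. One gets, with $r_n=p_n-p_*$,
\[
N_\theta[t]-N_*[t]=2\int p_*\nabla R_n\cdot\nabla(B_\theta[t]-B_*[t])-2\int r_n(s_\theta-s_*)\cdot\nabla B_\theta[t].
\]
The term you flag as delicate is the easy one. There $|R_n|\le\delta_n e^{C_0r}$, and the score difference is bounded in weighted $L^2$ by $C\|\theta-\theta^*\|_E$ directly from the forward bounds, so it needs no path differentiation. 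The hard term is the first. On~\eqref{eq:r25-noise-event} you only have $|\nabla R_n|\le\delta_n\epsilon^{-1}e^{C_0r}$. Your three-factor rule (a forward bound in $w$ times a forward bound in $t$) gives at best $\|\nabla DB_\theta[t][w]\|\lesssim\|w\|_E\|t\|_{H^1}$, hence $C\delta_n\epsilon^{-1}\|w\|_E\|t\|_{H^1}$, a loss of $\epsilon^{-1}\asymp h^{-1}$. This loss is not harmless. In Lemma~\ref{lem:mlsn-consequences} this term is absorbed into $h^2E_*$ using $\delta_n/h^2\to0$; with the extra $h^{-1}$ that absorption would require $\eta>3$.

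The missing idea is the exact mixed derivative of $B_\theta[t]=Q_{\theta,y}t$. With $u=\dot q/q$, $b=\dot\Phi$ and $z=(y-\Phi_\theta(x))/\epsilon$, one has $DB_\theta[t][w]=\operatorname{Cov}_Q(t,u)+\epsilon^{-1}\operatorname{Cov}_Q(t,z\cdot b)$, and
\[
\nabla_y DB_\theta[t][w]=-\epsilon^{-1}Q[(t-Qt)(u-Qu)(z-Qz)]+\epsilon^{-2}\operatorname{Cov}_Q(t,b)-\epsilon^{-2}Q[(t-Qt)(z\cdot b-Q(z\cdot b))(z-Qz)].
\]
Every term contains a centered posterior increment of $t$, and the density term also contains one of $u$. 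By~\eqref{eq:r25-increment} and Cauchy--Schwarz in the replica integral, each increment contributes $\epsilon$ times an $H^1$ norm. This gives the weighted $L^1(p_\theta)$ bound $C\epsilon\|t\|_{H^1}(\|u\|_{H^1}+s^{-1}\|b\|_2)$. That $\epsilon$ cancels the one lost in $\nabla R_n$, and forward bounds cannot see it.

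\textbf{Gap in the weak-noise bound.} For~\eqref{eq:r25-weak-noise} your symbol computation is the right heuristic: for a flat support with constant density, $\mathcal F_s=-2\Delta e^{s\Delta}$ exactly. The proposed rigorization, however, does not work, for two reasons.
\begin{enumerate}
\item At $\zeta=0$, \eqref{eq:r25-query-expansion} followed by $-2\partial_s$ gives the leading term $-2\mathcal L_qt(X)$. This is a pointwise second-derivative evaluation, which is unbounded on $H^3$ for $d\ge2$. The scale-$\epsilon$ smoothing you invoke lives in the remainder; the expansion does not exhibit it.
\item The expansion controls that remainder only as an operator $H^3\to L^2(dx)$ of small norm. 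This gives no control of $\mathbb E\sum_k|Re_k(X)|^2=\sum_k\|Re_k\|_{L^2(P^*)}^2$, which is a Hilbert--Schmidt quantity. In the flat model the differentiated remainder has symbol $2|\xi|^2(e^{-s|\xi|^2}-1)$ and is not Hilbert--Schmidt from $H^3$ at all, so your claimed $O(s^{-1/2})$ is unsupported.
\end{enumerate}
What is needed is kernel-level control of the exact adjoint. The paper uses two such bounds:
\begin{enumerate}
\item $\|\mathcal F_s^*t\|_\infty\le C\|t\|_{W^{2,\infty}}$, combined with Bernstein on the levels $2^j\le s^{-1/2}$ of a fixed orthonormal multiscale basis of $H^3$. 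This gives $\sum_j2^{j(d+4)}2^{-6j}\asymp s^{-(d-2)/2}$ for $d>2$.
\item The row bound $\|\mathcal F_s^*(x,\cdot)\|_{L^2(P^*)}\le Cs^{-1-d/4}$, combined with Bessel on the finer levels. This gives $s^3s^{-2-d/2}$, the same order.
\end{enumerate}
With these two bounds, your variance identity and Jensen finish the proof.
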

\begin{proof}
Let $Q=Q_{\theta,y}$ be the posterior in reference coordinates,
$z=(y-\Phi_\theta(x))/\epsilon$, and $B_\theta[t]=Q t$.
Along a path with $\dot q/q=u$ and $\dot\Phi=b$, quotient differentiation
gives
\[
 DB_\theta[t]=\operatorname{Cov}_Q(t,u)
                         +\epsilon^{-1}\operatorname{Cov}_Q(t,z\cdot b).
\]
Differentiate in $y$ exactly:
\begin{align*}
 \nabla\operatorname{Cov}_Q(t,u)
 &=-\epsilon^{-1}Q[(t-Qt)(u-Qu)(z-Qz)],\\
 \nabla\{\epsilon^{-1}\operatorname{Cov}_Q(t,z\cdot b)\}
 &=\epsilon^{-2}\operatorname{Cov}_Q(t,b)
   -\epsilon^{-2}Q[(t-Qt)(z\cdot b-Q(z\cdot b))(z-Qz)].
\end{align*}
The first term on the second line is essential. Replace centered
factors by differences against replicas. In the joint replica integral,
Cauchy--Schwarz and~\eqref{eq:r25-increment} give one $\epsilon$
for each $H^1$ increment. The density term is bounded in weighted
$L^1(p_\theta)$ by $C\epsilon\|t\|_{H^1}\|u\|_{H^1}$; the
displacement terms by $C\epsilon^{-1}\|t\|_{H^1}\|b\|_2$.
Thus
\[
 \|\nabla DB_\theta[t]\|_{L^1(p_\theta;\,\mathrm{weight})}
 \le C\epsilon\|t\|_{H^1}
                 (\|\dot q\|_{H^1}+s^{-1}\|\dot\Phi\|_2).
\]
Matching-anchor comparisons permit integration along the reference
segment using a fixed larger weight relative to $p_*$.
This is an $L^1$ product estimate; no dimension-dependent Sobolev
embedding into a pointwise product is used.
With $r_n=p_n-p_*$, exact subtraction yields
\begin{align*}
 N_\theta[t]-N_*[t]
 ={}&2\int p_*\nabla R_n\cdot\nabla(B_\theta[t]-B_*[t])\\
 &-2\int r_n(s_\theta-s_*)\cdot\nabla B_\theta[t].
\end{align*}
The preceding $L^1$ bound and~\eqref{eq:r25-noise-event} control the
first term. Weighted Cauchy--Schwarz and first-law forward estimates
control the second, proving~\eqref{eq:r25-empirical-comparison}.
In particular there is no additional factor $h^{-1}$.

For the expectation bound at fixed truth, the exact adjoint kernel
satisfies
\[
 \|\mathcal F_s^*t\|_\infty\le C\|t\|_{W^{2,\infty}},
 \qquad \|\mathcal F_s^*(x,\cdot)\|_{L^2(P^*)}\le Cs^{-1-d/4}.
\]
The second follows by differentiating the exact two-Gaussian kernel
in~\eqref{eq:r25-adjoint-identity} and applying the replica bounds;
each variance derivative costs $s^{-1}$. For the first, the kernel
annihilates constants, its absolute second moment is $O(1)$, and its
signed first graph-coordinate moment is $O(1)$ by the linear-test
version of the normalized scale calculation. Taylor's formula suffices.
Take a fixed smooth orthonormal multiscale decomposition of $H^3$,
independent of the estimator bank. At $2^j\le s^{-1/2}$, Bernstein
bounds the squared norm of its evaluation functional on a level's
$L^2$ unit sphere by $C2^{j(d+4)}$. Multiply by $2^{-6j}$ and sum:
the result is $Cs^{-(d-2)/2}$. No additional level dimension multiplies
this already complete functional norm. Above that scale, Bessel and
the row $L^2$ bound give $Cs^3s^{-2-d/2}$, the same order.
Independence and centering now prove~\eqref{eq:r25-weak-noise}.
The dual norm includes data-dependent tests, and a finite reference
union changes only constants.
\end{proof}

\begin{lemma}[Whole-fiber empirical curvature]
\label{lem:mlsn-uniform-hessian}
Write $P_\theta=\Phi_\#(q\nu_0)$ and
$P^*=(\Phi+b)_\#\{(q+e)\nu_0\}$. Put
$\mathfrak{r}_\theta=\|e\|_\infty+\|b\|_{C^1}$ and suppose
$\|b\|_\infty\le\rho\epsilon$, $\|e/q\|_\infty\le\rho$
for a fixed small $\rho$. For every retained direction $w$, on
\eqref{eq:r25-noise-event},
\begin{equation}\label{eq:r25-hessian}
 |D^2L_{n,J}(\theta)[w,w]-I_\theta(w,w)|
       \le C(\mathfrak{r}_\theta/\epsilon+\delta_n/\epsilon)\|w\|_E^2.
\end{equation}
Consequently, if the right prefactor tends to zero, the exact empirical
loss is uniformly strongly convex on each entire feasible output fiber.
\end{lemma}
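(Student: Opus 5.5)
We will differentiate the exact empirical loss \eqref{eq:r25-exact-loss} twice along a retained direction $w$, using the gradient formula \eqref{eq:r25-exact-gradient}:
\[
 DL_{n,J}(\theta)[w]=\mathcal A_\theta(P_\theta-P_n,DP_\theta[w]).
\]
Differentiating once more in the output coordinates (the fields $(r_\theta,v_\theta)$ are affine in $\theta_{\rm o}$ at fixed hidden configuration) gives three pieces. The first is $\mathcal A_\theta(DP_\theta[w],DP_\theta[w])=I_\theta(w,w)$. The second is $\mathcal A_\theta(P_\theta-P_n,D^2P_\theta[w,w])$. The third is the derivative of the form $\mathcal A_{s,P}$ with respect to its base law $P$ in direction $DP_\theta[w]$, evaluated at $(P_\theta-P_n,DP_\theta[w])$. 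The second and third terms are both linear in the residual $P_\theta-P_n$. We split this residual as
\[
 P_\theta-P_n=(P_\theta-P^*)+(P^*-P_n).
\]

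\textbf{Population part.} By the representation $P^*=(\Phi+b)_\#\{(q+e)\nu_0\}$, we have $P^*-P_\theta=e\nu_0$ pushed forward, minus $\operatorname{div}(bP_\theta)$, plus the exact remainder of Lemma~\ref{lem:r25-law-remainder}. Through the relative-density gradient $\nabla(K_s(P^*-P_\theta)/p_\theta)$, this residual is then bounded in $L^p(p_s)$. Lemma~\ref{lem:r25-gaussian-forward} handles the density piece (constant times $\|e\|_{W^{1,p}}$, with the sup norm sufficing after posterior averaging) and the normal piece ($s^{-1}\|b\|_p$ via $T_N$). Lemma~\ref{lem:r25-law-remainder} handles the quadratic remainder, using $\|b\|_\infty\le\rho\epsilon$. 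Because $\|b\|_\infty\le\rho\epsilon$ and $\|b\|_{C^1}\le\mathfrak r_\theta$, the residual score is uniformly of size $C\mathfrak r_\theta/\epsilon$ in a weighted sup/$L^p$ sense; the factor $\epsilon^{-1}$ arises from the normal displacement measured against noise scale $\epsilon$.

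\textbf{Pairing with second-order objects.} We pair this residual bound with the second-order objects: $\nabla(K_sD^2P_\theta[w,w]/p_\theta)$ and the base-law derivative of the weight $p_s$, which introduces score differences times $\nabla B_w$. For retained $w$, the forward estimates of Lemma~\ref{lem:r25-gaussian-forward} with exponential-linear weights, combined with Hölder, bound these by $C\|w\|_E^2$. The needed product bound is obtained from the replica bound \eqref{eq:r25-replica}: the second law derivative is a double covariance under the posterior, and the replica increments \eqref{eq:r25-increment} give one $\epsilon$ per $H^1$ increment. This exactly compensates the $s^{-1}$ weights in $E$, so no Bernstein loss $h^{-1}$ appears.

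\textbf{Empirical part.} On the event \eqref{eq:r25-noise-event}, we write $p_n-p_*=p_*R_n$ with $|R_n|+\epsilon|\nabla R_n|\le\delta_ne^{C_0r}$. We then use the identity from the proof of Lemma~\ref{lem:r25-noise}, comparing $p_*/p_\theta$ through matching anchors. The same pairings with $D^2P_\theta$ and the base-law derivative are then bounded by $C(\delta_n/\epsilon)\|w\|_E^2$, the $\epsilon^{-1}$ coming from $\nabla R_n$. Adding the pieces and subtracting $I_\theta(w,w)$ yields \eqref{eq:r25-hessian}.

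\textbf{Strong convexity.} Lemma~\ref{lem:vs-finite-p-inverse} gives $I_\theta(w,w)\ge c\|w\|_E^2$ uniformly over the fiber. If the prefactor $C(\mathfrak r_\theta+\delta_n)/\epsilon$ falls below $c/2$, then $D^2L_{n,J}\ge(c/2)\|\cdot\|_E^2$ at every point of each convex fiber in the stated neighbourhood, and the loss is strongly convex along every segment.

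\textbf{Main obstacle.} We expect the hard step to be the base-law derivative of $\mathcal A_{s,P}$ together with the $D^2P_\theta$ term in the normal–normal block. Both carry $s^{-2}$ from two normal displacements, and we must show they are absorbed by $\|w\|_E^2$ (which contains $s^{-2}\|b\|^2$) at the cost of only one factor $\mathfrak r_\theta/\epsilon$ rather than $\epsilon^{-2}$. Our first attempt will mirror the second-order Taylor expansion $F_l$ of Lemma~\ref{lem:r25-law-remainder}, differentiated once in $y$. The goal is to verify that each normal direction brings $z\cdot v$ with $v=b/\epsilon$ bounded, so that only $|v|\le\rho$ and its $C^1$ variation enter.
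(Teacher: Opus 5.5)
Your second-order expansion is sound. It is algebraically the paper's identity $D^2L_{n,J}[w,w]=2\int p_n|U|^2+2\int(p_ns_\theta-\nabla p_n)\cdot V$ with $U=Ds_\theta[w]$ and $V=D^2s_\theta[w,w]$: your base-law-derivative term equals $2\int(p_n-p_\theta)|U|^2-4\int B_w(p_ns_\theta-\nabla p_n)\cdot U$. Your empirical and strong-convexity steps are essentially fine; the $R_n(s_\theta-s_*)$ cross term reuses the population bound discussed next.

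\textbf{The gap is the population estimate.} The residual $\nabla\{K_s(P^*-P_\theta)/p_\theta\}=(p_*/p_\theta)(s_*-s_\theta)$ is \emph{not} uniformly of size $C\mathfrak r_\theta/\epsilon$. A displacement $b$ moves the score by about $b/s=b/\epsilon^2$ in the normal direction; this is exactly the $s^{-1}\|b\|_p$ you yourself obtain from $T_N$ and from Lemma~\ref{lem:r25-law-remainder}. So the normal component of the residual is of order $\|e\|_\infty/\epsilon+\|b\|_\infty/\epsilon^2\lesssim\mathfrak r_\theta/\epsilon^2$. The hypothesis $\|b\|_\infty\le\rho\epsilon$ only converts this into $\rho/\epsilon$, which is not small.

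Pairing this residual with second-order objects bounded merely by $C\|w\|_E^2$ gives only $C\mathfrak r_\theta\epsilon^{-2}\|w\|_E^2$. That is too weak. In the application $\mathfrak r_\theta\lesssim r^{\rm pil}_{n,J}=O(h^{\eta-1}\log n)$ and $\epsilon\asymp h$, so this bound vanishes only for $\eta>3$, while the lemma has to serve all $\eta>2$.

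\textbf{Missing idea: tangent/normal bookkeeping at the posterior anchor $x_0$, on both sides of the pairing.}

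On the residual side you need the improved tangential bound $|P_{T_{x_0}}(s_\theta-s_*)|\le C\mathfrak r_\theta\epsilon^{-1}e^{C|z_0|}$. This is where $\|b\|_{C^1}$ genuinely enters. Integrate by parts in the source against a field $X$ with $D\Phi(x_0)X(x_0)=\xi$. Then $A_0=\xi-D\Phi X$ vanishes at $x_0$, and every $b$-dependent term costs only $\|b\|_{C^1}/\epsilon$.

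On the other side, whatever meets the normal residual of size $\mathfrak r_\theta/\epsilon^2$ must carry a spare factor $\epsilon$.
\begin{itemize}
\item For the mixed and normal--normal blocks this is automatic: $\|V_{DN}\|+\|V_{NN}\|\le C\epsilon\|w\|_E^2$ in anchor-weighted $L^1$. So the block you single out as the main obstacle is not the hard one.
\item The delicate block is density--density, $V_{DD}=-2(Qu)\nabla Qu$. In your decomposition it sits inside the base-law-derivative term, and it is only $O(\|a\|_{H^1}^2)$ in full. You need the anchor-normal bound $\|P_{N_{x_0}}V_{DD}\|\le C\epsilon\|a\|_{H^1}^2$. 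It follows from $\nabla Qu=\epsilon^{-2}\operatorname{Cov}_Q(u,\Phi(x))$, because the normal chord at $x_0$ is quadratic in source distance.
\end{itemize}

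With these two ingredients, each piece contributes $C\mathfrak r_\theta\epsilon^{-1}\|w\|_E^2$, which is the stated bound:
\begin{itemize}
\item the tangential residual is paired with the full $V$ bounds;
\item the normal residual is paired with the $\epsilon$-improved normal bounds;
\item the weight change $|p_*/p_\theta-1|\le C(\|e\|_\infty+\|b\|_\infty/\epsilon)$ in the $|U|^2$ term is absorbed by $\|U\|^2\le C\|w\|_E^2$.
\end{itemize}
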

\begin{proof}
Put $U=Ds_\theta[w]$, $V=D^2s_\theta[w,w]$. Exact differentiation gives
\[
 D^2L_{n,J}[w,w]=2\int p_n|U|^2
                   +2\int(p_ns_\theta-\nabla p_n)\cdot V.
\]
For a posterior anchor $x_0$, define weighted norms by integrating
$p_\theta(y)Q_y[e^{c|z_0|}|F(y,x_0)|^k]$ with
$z_0=(y-\Phi(x_0))/\epsilon$. Replica bounds allow all fixed weights.
For $w=(a,v)$, put $u=a/q$ and $k=z\cdot v/\epsilon$. The exact
second log derivative is
\[
 D^2\log p_\theta=-(Qu)^2+2\operatorname{Cov}_Q(u,k)
                       +\operatorname{Var}_Q(k)-Q|v|^2/s.
\]
Its spatial gradient splits into $V_{DD},V_{DN},V_{NN}$, with
\begin{align*}
 \|U\|_{2,c}&\le C(\|a\|_{H^1}+\epsilon^{-2}\|v\|_2),\qquad
 \|V_{DD}\|_{1,c}\le C\|a\|_{H^1}^2,\\
 \|V_{DN}\|_{1,c}&\le C\epsilon^{-1}\|a\|_{H^1}\|v\|_2,
 \qquad \|V_{NN}\|_{1,c}\le C\epsilon^{-3}\|v\|_2^2.
\end{align*}
For the mixed term differentiate the covariance as in the previous
lemma; both terms have one $u$ increment and no derivative of $v$.
For the normal-squared term the variance derivative and
$\nabla Q|v|^2/s$ both contain two $L^2$ factors and cost
$\epsilon^{-3}$. For $V_{DD}=-2(Qu)\nabla Qu$ there is the extra
anchor-normal bound
\[
 \|P_{N_{x_0}}V_{DD}\|_{1,c}\le C\epsilon\|a\|_{H^1}^2.
\]
Indeed $\nabla Qu=\epsilon^{-2}\operatorname{Cov}_Q(u,\Phi(x))$;
the normal chord at $x_0$ is quadratic in source distance. Its
$\epsilon^2$, together with the $\epsilon$ from a centered $u$
increment, gives the extra $\epsilon$ after division by $\epsilon^2$.
Jensen followed by Cauchy--Schwarz in the full replica integral proves
the weighted bound. The projection is at the posterior anchor, not
at an unintegrated nearest point.

Matching-anchor ratios give
$|p_*/p_\theta-1|\le C(\|e\|_\infty+\|b\|_\infty/\epsilon)e^{C|z_0|}$.
For $R=s_\theta-s_*$, its tangent and normal components satisfy
\begin{align*}
 |P_{T_{x_0}}R|&\le C\mathfrak{r}_\theta\epsilon^{-1}e^{C|z_0|},\\
 |P_{N_{x_0}}R|&\le C(\|e\|_\infty/\epsilon
                          +\|b\|_\infty/\epsilon^2)e^{C|z_0|}.
\end{align*}
To verify the improved tangent bound choose a fixed-atlas source
field $X$ with $D\Phi(x_0)X(x_0)=\xi$ for a unit tangent $\xi$.
Let $A_0=\xi-D\Phi X$, $A_b=A_0-DbX$, $R_0=y-\Phi$,
$R_b=R_0-b$, and $c_X=\operatorname{div}(qX)/q$.
Source integration by parts gives
$\xi\cdot\nabla\log p_b=Q_b[c_X-A_b\cdot R_b/s]$.
The difference in integrands is
\[
 -A_b\cdot R_b/s+A_0\cdot R_0/s
 =[A_0\cdot b+(DbX)\cdot R_0-(DbX)\cdot b]/s.
\]
Since $A_0(x_0)=0$ and $A_0$ is Lipschitz, the first two terms cost
$C\|b\|_{C^1}/\epsilon$ times a residual weight; the last costs
$C\rho\|Db\|_\infty/\epsilon$. The change of posterior is controlled
by the exact likelihood-ratio path, whose logarithmic derivative is
$(R_0-lb)\cdot b/s$. It costs $C\|b\|_\infty/\epsilon$.
A density change is a posterior covariance costing
$C\|e\|_\infty/\epsilon$, without differentiating $e$.
The ordinary full-score comparison gives the normal bound.

Insert the posterior anchor into the exact Hessian integral.
Changing its first weight from $p_\theta$ to $p_*$ costs
$C\mathfrak{r}_\theta\epsilon^{-1}\|w\|_E^2$.
The tangent residual pairs with the preceding full $V$ bounds.
The larger normal residual, of size $C\mathfrak{r}_\theta/\epsilon^2$, pairs
with the extra normal $V_{DD}$ bound and the mixed and normal-squared
bounds, each at most $C\epsilon\|w\|_E^2$.
Thus the population Hessian difference is
$C\mathfrak{r}_\theta\epsilon^{-1}\|w\|_E^2$, with all density ratios retained
in a larger fixed radial weight.
Finally,
\[
 p_ns_\theta-\nabla p_n
    =p_*[(1+R_n)(s_\theta-s_*)-\nabla R_n].
\]
On~\eqref{eq:r25-noise-event}, the first Hessian term changes by
$C\delta_n\|w\|_E^2$, the last gradient term by
$C\delta_n\epsilon^{-1}\|w\|_E^2$, and the product with the
population residual by $C\delta_n\mathfrak{r}_\theta\epsilon^{-1}\|w\|_E^2$.
This proves~\eqref{eq:r25-hessian}; coercivity follows from
Lemma~\ref{lem:vs-finite-p-inverse}.
\end{proof}
\subsection{Pilot localization and coordinate error bounds}
\label{sec:pilot-verification}

This subsection verifies the localization and error bounds used in
Appendix~\ref{sec:mlsn-proof}.
We construct a neighborhood using independent pilot samples, establish
interior empirical and population roots, and derive uniform coordinate estimates.

Split the data into three comparable parts
$I_{\rm den},I_{\rm nor},I_{\rm fit}$.
$I_{\rm den}$ selects a reference law chart and estimates its intrinsic
density, $I_{\rm nor}$ estimates the normal displacement in that chart, and
$I_{\rm fit}$ is reserved for the joint RePU-coordinate fit.  We refer to the
first two parts as the preliminary, or pilot, samples.  Their output determines
the localized coefficient domain; conditioning on this output leaves the
fitting observations independent for the empirical-process analysis in
Appendix~\ref{sec:weighted-local-estimation}. Put
$\widehat P_{\rm den}=|I_{\rm den}|^{-1}\sum_{i\in I_{\rm den}}
\delta_{Y_0^i}$, and select, with fixed tie breaking,
\begin{equation}\label{eq:vs-chart-selector}
 \widehat m\in\arg\min_{1\le m\le M_0}
 W_2^2(\widehat P_{\rm den},\bar P_m).
\end{equation}
Let $\mathcal E_n^{\rm chart}:=
\{W_2(\widehat P_{\rm den},P^*)\le a_0\}$.  The selector and the triangle
inequality give $W_2(P^*,\bar P_{\widehat m})\le3a_0$ there; compact support,
a finite-partition approximation, and McDiarmid's inequality yield
$\mathbb P\{(\mathcal E_n^{\rm chart})^c\}\le Cn^{-2}$ uniformly.

\begin{lemma}[Observable pilot localization]
\label{lem:vs-observable-pilot}
Under Assumptions~\ref{G1}--\ref{G2} and the three-way split, fix
$\tau_J=c_t2^{-2J}$ and $\beta=(\eta+2)/2$. For sufficiently large
$J,n$ in the theorem regime, use the accuracy condition
\begin{equation}\label{eq:vs-pilot-sample-condition}
 Q_J\log n\le c_0 n
\end{equation}
for fixed $c_0$.  Let $\varepsilon$ be the fixed common graph-tube radius
in Proposition~\ref{prop:canonical-law-sieve}, chosen within the unique
projection tubes of the finite reference supports, and set
\[
 \mathcal T_m:=\{y:\operatorname{dist}(y,\bar{\mathcal M}_m)
                         <\varepsilon\}.
\]
Fix a point $x_m^0\in\bar{\mathcal M}_m$ for each branch.  For every
pilot observation define the Borel maps
\[
 (X_{i,m},V_{i,m}):=
 \begin{cases}
  (\Pi_{\bar{\mathcal M}_m}(Y_0^i),
       Y_0^i-\Pi_{\bar{\mathcal M}_m}(Y_0^i)),&Y_0^i\in\mathcal T_m,\\
  (x_m^0,0),&Y_0^i\notin\mathcal T_m.
 \end{cases}
\]
Use the smallest index in the chart-selector argmin.  The observable
safety event for the selected branch is
\begin{equation}\label{eq:pm-observable-safety}
 \mathcal S_n:=
 \bigcap_{i\in I_{\rm den}\cup I_{\rm nor}}
       \{Y_0^i\in\mathcal T_{\widehat m}\}.
\end{equation}
Both preliminary samples and the fit sample are taken nonempty.  An
empty subsample, if allowed at small sample sizes, invokes $h_{\rm ref}$
before forming any empirical average.  All remaining definitions are
made for every realization, including $\mathcal S_n^c$.
With
$K_{J,m}$ the wavelet projection kernel and
$\{\Psi_{\lambda,m}^{\rm an},\Psi_{\lambda,m}^{\rm sy}\}$ the stable dual
normal frame, define
\begin{align}
 \widehat f_{J,m}^{\rm raw}(x)
 &=|I_{\rm den}|^{-1}\sum_{i\in I_{\rm den}}K_{J,m}(x,X_{i,m}),
 \label{eq:vs-pilot-density}\\
 \widehat\theta_{\lambda,m}^{\rm pil}
 &=|I_{\rm nor}|^{-1}\sum_{i\in I_{\rm nor}}
 \frac{\langle V_{i,m},\Psi_{\lambda,m}^{\rm an}(X_{i,m})\rangle}
 {\widetilde f_{J,m}(X_{i,m})},\qquad
 \widetilde v_{J,m}=\sum_{|\lambda|\le J}
 \widehat\theta_{\lambda,m}^{\rm pil}\Psi_{\lambda,m}^{\rm sy}.
 \label{eq:vs-pilot-normal}
\end{align}
Here $\widetilde f_{J,m}$ is the clipped-to-$[m_f/2,2M_f]$ and normalized
version of $\widehat f_{J,m}^{\rm raw}$.  Put
$\widetilde\zeta_{J,m}=(\widetilde f_{J,m}-\bar f_m,\widetilde v_{J,m})$,
$\|(r,v)\|_{\mathcal C,m}=\|r\|_\infty+\|v\|_{C^1}$, and
\begin{align}
 r_{n,J}^{\rm pil}
 &:=A_{\rm pil}\left\{2^{-J(\eta-1)}
 +2^J\sqrt{Q_J(1+J)\log n/n}\right\},
 \label{eq:vs-pilot-radius-supp}\\
 \widetilde\Theta_{n,J,m}^{\rm loc}
 &:=\{\zeta\in\Theta_{J,m}:
 \|\zeta-\widetilde\zeta_{J,m}\|_{\mathcal C,m}\le r_{n,J}^{\rm pil}\}.
 \label{eq:vs-pilot-local-set}
\end{align}
The set in~\eqref{eq:vs-pilot-local-set} is the unwarped localized set
and is allowed to be empty; it is not replaced by a singleton inside
this definition.  The jointly trained estimator instead uses the joint
domain and the observable fallback specified in Appendix~\ref{sec:repu-class}.
For a constant
$A_{\rm pil}$ depending only on the regularity bounds, there is, for
each fixed truth, an event $\mathcal E_{n,J}^{\rm pil}$ in the
sigma-field of the first two samples with
$\mathbb P\{(\mathcal E_{n,J}^{\rm pil})^c\}\le Cn^{-2}$.
Take this event to be the intersection of the branchwise pilot-accuracy
events over all admissible branches; the fixed finite number of branches
preserves the probability bound.
This is an analysis event, not a truth-independent membership test.
Write $\mathcal G_{n,J}^{\rm pil}:=
\mathcal E_n^{\rm chart}\cap\mathcal E_{n,J}^{\rm pil}$.
On this event the selected branch is admissible, $\mathcal S_n$ holds,
and
\begin{align}
 \|\widetilde\zeta_{J,m}-\zeta_m^*\|_{\mathcal C,m}
       &\le\tfrac14 r_{n,J}^{\rm pil},\label{eq:vs-pilot-bound}\\
 \frac{r_{n,J}^{\rm pil}}{\sqrt{\tau_J}}
       &\le C h^{\eta-2}(1+\log n)\longrightarrow0,
       \quad h=2^{-J}\asymp n^{-1/(2\eta+d)}.
       \label{eq:vs-pilot-shrinking}
\end{align}
Every feasible output is within $5r_{n,J}^{\rm pil}/4$ of the truth
in reference $C^0\times C^1$ norm on this event. The smaller retained
root neighborhoods and their three separate margins are verified in
Lemma~\ref{lem:pm-newton-margins}. None of these analysis events is
used as a membership test by the estimator.
\end{lemma}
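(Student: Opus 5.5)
The plan is to build $\mathcal E_{n,J}^{\rm pil}$ branchwise, for each admissible $m\in\mathcal I_*(P^*)$, as an intersection of three events. The first is that all pilot observations fall in the tube $\mathcal T_m$. The second is a sup-norm concentration event for the projected density estimator. The third is a $C^1$ concentration event for the normal-coefficient estimator. Intersecting over the fixed finite set of branches and with $\mathcal E_n^{\rm chart}$ then gives $\mathcal G_{n,J}^{\rm pil}$ with failure probability $Cn^{-2}$.

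The first event is deterministic. By Proposition~\ref{prop:canonical-law-sieve}(ii), for admissible $m$ the truth lies, with a uniform margin, inside the outer reference chart, so $\|v^*_m\|_{C^1}\le\varepsilon/8$. Hence $\operatorname{supp}P^*\subset\mathcal T_m$, every $Y_0^i$ lies in $\mathcal T_m$ almost surely, and $\mathcal S_n$ holds whenever $\widehat m$ is admissible. On $\mathcal E_n^{\rm chart}$ this is the case, because the selector gives $W_2(P^*,\bar P_{\widehat m})\le 3a_0$. On this event $X_{i,m}=\Pi_{\bar{\mathcal M}_m}(Y_0^i)$ are i.i.d.\ draws from the reference-density law $(f_m^*)\bar\nu_m:=(\bar f_m+r_m^*)\bar\nu_m$, and $V_{i,m}=v_m^*(X_{i,m})$ exactly. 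The normal displacement is therefore observed noiselessly as a function of $X_{i,m}$.

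For the density part, I would decompose $\widehat f^{\rm raw}_{J,m}-f^*_m=(\Pi_J f^*_m-f^*_m)+(\widehat f^{\rm raw}_{J,m}-\Pi_J f^*_m)$. The bias is bounded in sup norm by $C2^{-J\eta}$ using the Jackson bound \eqref{eq:vs-wavelet-properties} together with the kernel bounds of Lemma~\ref{lem:r25-cutoff}. The variance term is a centered empirical process in $x$. The kernel envelope $K_{J,m}(x,\cdot)$ is at most $C2^{Jd}$, its variance is $C2^{Jd}$, and its Lipschitz constant in $x$ is $C2^{J(d+1)}$. Bernstein's inequality on a polynomial grid, combined with a union bound, then gives $\|\cdot\|_\infty\le C\sqrt{Q_J\log n/n}$ with probability $1-Cn^{-2}$; here \eqref{eq:vs-pilot-sample-condition} keeps the Bernstein linear term dominated. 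On this event $\widehat f^{\rm raw}$ already lies in $[m_f/2,2M_f]$ for large $n$, so clipping is inactive. Renormalization costs at most the same order, since it is controlled by $|\int\widehat f^{\rm raw}-1|$.

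For the normal coefficients, write $\widehat\theta^{\rm pil}_\lambda$ as an empirical mean whose population mean, with $f^*_m$ in place of $\widetilde f$, equals $\langle v^*_m,\Psi^{\rm an}_\lambda\rangle$. Replacing $f^*_m$ by $\widetilde f_{J,m}$ introduces an error of relative size $\|\widetilde f-f^*\|_\infty$. Since $\widetilde f$ is built from the independent sample $I_{\rm den}$, we may condition on it. The $C^1$ norm of $\widetilde v_J-v^*_m$ then splits into three parts. The Jackson tail is $C2^{-J\eta}$. The density-plug-in term is at most $C2^{J}\|\widetilde f-f^*\|_\infty$ times the $C^1$ size of the synthesized projection. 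The stochastic term, by Bernstein per coefficient, a union bound over $Q_J$ indices and $J$ levels (hence the factor $1+J$), and the Bernstein-type bound $\|\sum_{|\lambda|\le J}c_\lambda\Psi^{\rm sy}_\lambda\|_{C^1}\lesssim 2^{J}\sqrt{Q_J}\max|c_\lambda|\,2^{-Jd/2}\cdots$, is of order $2^J\sqrt{Q_J(1+J)\log n/n}$. I expect this step to be the main obstacle: the $C^1$ synthesis loses a derivative, and the correct dyadic summation is needed to land exactly on the stated radius instead of picking up an extra $2^{Jd/2}$. I would handle it level by level, using the level-$j$ Bernstein inequality with $|\mathcal K_j|\asymp2^{jd}$ and sup-over-$k$ coefficient concentration, and then summing the geometric series dominated by $j=J$.

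Collecting the three pieces and choosing $A_{\rm pil}$ large gives \eqref{eq:vs-pilot-bound}; the bias $2^{-J\eta}$ is absorbed into $2^{-J(\eta-1)}$. The shrinking claim \eqref{eq:vs-pilot-shrinking} is arithmetic. Dividing by $\sqrt{\tau_J}\asymp h$ gives $h^{\eta-2}+h^{-2}\sqrt{h^{-d}\log^2 n/n}$, and with $h\asymp n^{-1/(2\eta+d)}$ the second term equals $h^{\eta-2}\log n$, which tends to $0$ because $\eta>2$. Finally, every feasible output lies within $r^{\rm pil}$ of $\widetilde\zeta$ by \eqref{eq:mlsn-profile-domain}, so the triangle inequality places it within $5r^{\rm pil}/4$ of the truth.
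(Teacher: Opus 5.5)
Your proposal is correct and follows essentially the same route as the paper's proof. The shared steps are: branchwise accuracy events intersected with the chart event; the tube argument from $\|v_m^*\|_{C^1}\le\varepsilon/8$; Bernstein plus a polynomial net for the density in sup norm; conditioning on $I_{\rm den}$ for the normal part, with the plug-in bias controlled through $\|\Pi_h^{\rm nor}u\|_{C^1}\le Ch^{-1}\|u\|_\infty$; and the triangle inequality with the same arithmetic for \eqref{eq:vs-pilot-shrinking}.

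The only difference is in the normal stochastic term. The paper rewrites $\widetilde v_{J,m}$ as the kernel estimator $|I_{\rm nor}|^{-1}\sum_i K_h^{\rm nor}(x,X_i)V_i/\widetilde f(X_i)$ and applies Bernstein to its values and first derivatives on a net, which gives $h^{-1}\sqrt{\log n/(nh^{d})}$ directly and avoids the synthesis step you flagged. Your coefficient-wise version also works, provided you use the level-$j$ bound $C2^{j(1+d/2)}\max_k|c_{jk}|$, which follows from bounded overlap of the primal atoms. Summing over levels gives $2^J\sqrt{Q_J\log n/n}$, so the factor $\sqrt{1+J}$ in the radius is only slack: a union bound over $Q_J\le n$ coefficients costs just $\log n$.
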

\begin{proof}
Fix an admissible branch, without conditioning on the selected index.
Its projected clean observations have density $f_m^*$, uniformly
positive and $C^\eta$, and $V_{i,m}=v_m^*(X_{i,m})$ with
$v_m^*\in C^{\eta+1}$. From Lemma~\ref{lem:r25-cutoff}, for
evaluation derivatives $a=0,1,2$ the scalar and bundle kernels satisfy
\[
 \sup_x\int|D_x^aK_h(x,z)|^2dz\le Ch^{-d-2a},\qquad
 \sup_{x,z}|D_x^aK_h(x,z)|\le Ch^{-d-a}.
\]
Jackson gives density sup-norm and normal $C^1$ bias $Ch^\eta$;
also $\|\Pi_h^{\rm nor}u\|_{C^1}\le Ch^{-1}\|u\|_\infty$.
For each fixed $A>0$, scalar Bernstein and a polynomial-size net
of mesh $hn^{-L}$ in the fixed reference atlas give
\[
 \|\widehat f^{\rm raw}-\Pi_hf^*\|_\infty\le C_A t_n,
 \qquad t_n=\sqrt{\frac{\log n}{nh^d}}+\frac{\log n}{nh^d},
\]
except with probability $C_An^{-A}$. The first evaluation derivative
controls the net error. Clipping cannot increase the pointwise error
to a truth in the clipping interval; normalization changes it by at
most a constant times that error, since the true integral is one and
the clipped integral is bounded away from zero. Therefore
\begin{equation}\label{eq:r25-pilot-density-error}
 \|\widetilde f-f^*\|_\infty\le C_A(h^\eta+t_n).
\end{equation}

Condition on the whole density sample $I_{\rm den}$. The independent
normal sample gives exactly
\[
 \widetilde v(x)=|I_{\rm nor}|^{-1}\sum_{i\in I_{\rm nor}}
       K_h^{\rm nor}(x,X_i)V_i/\widetilde f(X_i),
 \qquad \mathbb E(\widetilde v\mid I_{\rm den})
       =\Pi_h^{\rm nor}(v^*f^*/\widetilde f).
\]
Apply Bernstein to its components and first evaluation derivatives;
the second derivative controls the net error. Bounded $V_i$ and
$1/\widetilde f$ preserve the variance and envelope orders, giving
$\|\widetilde v-\mathbb E(\widetilde v\mid I_{\rm den})\|_{C^1}
\le C_Ah^{-1}t_n$. Its conditional bias is at most
\[
 \|\Pi_h^{\rm nor}v^*-v^*\|_{C^1}
 +\|\Pi_h^{\rm nor}[v^*(f^*/\widetilde f-1)]\|_{C^1}
 \le Ch^\eta+Ch^{-1}\|\widetilde f-f^*\|_\infty.
\]
No derivative of the clipped density is required. Consequently,
\begin{equation}\label{eq:r25-pilot-error}
 \|\widetilde f-f^*\|_\infty+\|\widetilde v-v^*\|_{C^1}
       \le C_A(h^{\eta-1}+h^{-1}t_n).
\end{equation}
The condition $nh^d/\log n\to\infty$ absorbs the linear Bernstein
term. The prescribed $r_{n,J}^{\rm pil}$ dominates four times this
bound for a sufficiently large fixed $A_{\rm pil}$. At the theorem
resolution, division by $\sqrt{\tau_J}$ gives
$Ch^{\eta-2}(1+\log n)\to0$ for every fixed $\eta>2$.
There is no requirement that $r_{n,J}^{\rm pil}/\tau_J\to0$.

Intersect the accuracy events over the fixed finite admissible branch
family before selecting $\widehat m$. The selector and density estimate
use the same $I_{\rm den}$; no independence between them is assumed.
The normal sample is independent of that whole sample, and the fitting
sample is independent of both. Intersecting also with the chart event
preserves failure probability $Cn^{-2}$. On an admissible branch
$\|v_m^*\|_{C^1}\le\varepsilon/8$, so all clean pilot observations
lie in its projection tube and $\mathcal S_n$ holds. The totalized
projections, clipping, normalization and coefficient operations are
Borel. The joint-domain and fallback rules in Appendix~\ref{sec:repu-class} complete
the measurable estimator on every data realization.
\end{proof}
\subsubsection{Interior empirical and population roots}
\label{sec:repu-roots}
We verify the three feasibility margins and construct the empirical
and population roots needed to apply Theorem~\ref{thm:local-erm-transfer}.

\begin{lemma}[Separate margins and interior profile roots]
\label{lem:pm-newton-margins}
Fix $\eta>2$, $d>2$, $h\asymp n^{-1/(2\eta+d)}$ and
$s=c_t h^2$. Choose one finite $p>d$ such that
\begin{equation}\label{eq:r25-p-choice}
 \sigma=d/p\in(0,\min\{1,\eta-\beta\}),\qquad
 \beta=(\eta+2)/2.
\end{equation}
For sufficiently small fixed $c_t$ and sufficiently large $n$, on the
pilot and empirical good events every prescribed output fiber contains
an interior empirical score root. It is that fiber's unique global
output minimizer, and every joint global ERM uses this root in its
selected hidden fiber.
\end{lemma}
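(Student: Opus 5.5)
The plan is a quantitative Newton--Kantorovich argument on each output fiber, run in affine reference coordinates. Fix a good pilot realization, an admissible branch and a hidden parameter $\thetah$. We start from the retained comparator $\theta^0$ of Lemma~\ref{lem:mlsn-architecture}. By \eqref{eq:r25-comparator-energy}--\eqref{eq:mlsn-profile-population-bias} it satisfies $\|\theta^0-\theta^*\|_{E_p}\le C_ph^{\eta-1}$ and $\|r^0-r^*\|_\infty+\|v^0-v^*\|_{C^1}\le Ch^\eta$. We then look for the empirical root in a retained ball $\{\theta^0+w:\|w\|_{E_p}\le\varrho_n\}$. The radius $\varrho_n$ is chosen so that, after Sobolev embedding with $p>d$ and the choice \eqref{eq:r25-p-choice}, three conditions hold. (a) The $C^0\times C^1$ deviation stays below $r_{n,J}^{\rm pil}/4$, so that together with \eqref{eq:vs-pilot-bound} the ball lies strictly inside the pilot constraint in \eqref{eq:mlsn-profile-domain}. (b) The normalized coefficients stay below $A/2$, which follows from $K_*<A/8$ and Bernstein inequalities at the embedding order $\sigma<\eta-\beta$. (c) The coarse positivity and tube margins, $\lambda/4$ and $\varepsilon/4$, lose at most half their size. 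These are the ``three separate margins''. Each is a deterministic consequence of the $E_p$ radius, using $h^{-\sigma}\|\cdot\|_{W^{1,p}}$-type embeddings and $s^{-1}=c_t^{-1}h^{-2}$ for the normal block.

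Next, we bound the gradient residual at $\theta^0$ in the retained dual norm $E_{p'}^*$. By \eqref{eq:r25-exact-gradient}, $DL_{n,J}(\theta^0)[w]=\mathcal A_{\theta^0}(P_{\theta^0}-P^*,DP_{\theta^0}[w])+\mathcal A_{\theta^0}(P^*-P_n,DP_{\theta^0}[w])$. We split the population part into the linear law direction of $\theta^0-\theta^*$ plus the exact geometric remainder. The linear part is controlled by the first-law forward bounds of Lemma~\ref{lem:r25-gaussian-forward} and the chart transition bounds of Lemma~\ref{lem:vs-chart-transition}, giving $C\|\theta^0-\theta^*\|_{E_p}$. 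The remainder is controlled by Lemma~\ref{lem:r25-law-remainder}, giving $C(\|u\|_\infty+\|b\|_\infty/\epsilon)s^{-1}\|b\|_p$, which is of smaller order since $h^\eta/\epsilon\to0$. The noise part is at most $C_p\delta_n/\epsilon$ on \eqref{eq:r25-noise-event}, by \eqref{eq:r25-noise-finite-p}. With $h\asymp n^{-1/(2\eta+d)}$ we have $\delta_n/\epsilon\lesssim h^{\eta-1}\sqrt{\log n}$, and both contributions are $o(r_{n,J}^{\rm pil})$ after embedding.

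Then we invoke the inverse and curvature results. Lemma~\ref{lem:vs-finite-p-inverse} gives $\|I_\theta^{-1}F\|_{E_p}\le C_p\|F\|_{E^*_{p'}}$ uniformly in $J$, the branch and $\thetah$. Lemma~\ref{lem:mlsn-uniform-hessian} shows that the empirical Hessian differs from $I_\theta$ by $C(\mathfrak r_\theta/\epsilon+\delta_n/\epsilon)$. Throughout the ball, $\mathfrak r_\theta\lesssim r_{n,J}^{\rm pil}$, and \eqref{eq:vs-pilot-shrinking} gives $r^{\rm pil}_{n,J}/\epsilon\to0$, so this prefactor vanishes. The main obstacle is transferring this Hessian perturbation, which is stated in the $E$-energy norm, to an $E_p\to E_{p'}^*$ bound needed for the Neumann series. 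I would first try to redo the estimate in Lemma~\ref{lem:mlsn-uniform-hessian} with Hölder at $(p,p')$ in place of Cauchy--Schwarz, as was done in the Gram comparison inside Lemma~\ref{lem:vs-finite-p-inverse}, keeping the same weighted replica bounds. If that works, the simplified Newton map $w\mapsto w-I_{\theta^0}^{-1}DL_{n,J}(\theta^0+w)$ is a contraction on the ball, and Brouwer or the contraction principle produces an interior root.

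Finally, for uniqueness, the feasible fiber is convex and, by Lemma~\ref{lem:vs-observable-pilot}, lies within $5r^{\rm pil}_{n,J}/4$ of the truth in $C^0\times C^1$. Hence the same curvature bound \eqref{eq:r25-hessian} holds on the whole fiber with positive lower bound $c\|w\|_E^2$, so the loss is strictly convex there. The interior root is therefore the unique global minimizer of that fiber. Any joint global ERM minimizes within its selected hidden fiber, so it must use this root. All constants are uniform over $\thetah$ in the compact box, which gives the statement on every prescribed fiber simultaneously.
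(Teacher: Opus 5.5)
Your margin bookkeeping is correct: coefficient boxes via Morrey/Bernstein with $\sigma<\eta-\beta$, the pilot ball, and the input-coarse margins. The residual bound at $\theta^0$ and the uniqueness argument by whole-fiber strict convexity also match the paper. The gap is in the existence step, the one you flag yourself. Whether you use contraction or Brouwer, the map $w\mapsto w-I_{\theta^0}^{-1}DL_{n,J}(\theta^0+w)$ sends an $E_p$ ball into itself only under a further estimate. Writing $DL_{n,J}(\theta^0+w)=DL_{n,J}(\theta^0)+\int_0^1D^2L_{n,J}(\theta^0+tw)[w,\cdot]\,dt$, you need the operator bound $\|(D^2L_{n,J}(\theta^0+tw)-I_{\theta^0})[w,\cdot]\|_{E_{p'}^*}=o(1)\|w\|_{E_p}$ uniformly on the ball.

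Lemma~\ref{lem:mlsn-uniform-hessian} does not supply this, for two reasons. It controls only the quadratic form in the energy norm $E$. It also compares $D^2L_{n,J}(\theta)$ with $I_\theta$ at the same point, not with $I_{\theta^0}$. So besides redoing it with H\"older at $(p,p')$, you would also need an $E_p\to E_{p'}^*$ Lipschitz bound for $\theta\mapsto I_\theta$, which you do not mention and the paper never proves. Retreating to $p=2$ is not an option, since the margins need $p>d$. As written, existence of the root rests on an unverified conditional.

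The missing idea is that existence needs no Hessian perturbation: freeze the information at the current candidate rather than at $\theta^0$.
\begin{itemize}
\item For $\theta=\theta^0+c$, expand the law at $\theta$ itself: $P^*-P_\theta=DP_\theta[\delta]+R_\theta(\delta)$. Here $\delta=-r^0-c$ and $r^0=\theta^0-\theta^*$ is the full-field comparator error.
\item Because \eqref{eq:r25-exact-gradient} is exactly linear in the law difference through $\mathcal A_\theta$, the retained score equation becomes
\[
 c=-I_\theta^{-1}I_\theta(r^0,\cdot)+I_\theta^{-1}\mathcal A_\theta\bigl(R_\theta(\delta),DP_\theta[\cdot]\bigr)+I_\theta^{-1}\mathcal A_\theta(P_n-P^*,DP_\theta[\cdot]).
\]
Its right side is exactly $c-I_\theta^{-1}DL_{n,J}(\theta)$, so its fixed points are score roots.
\item Lemma~\ref{lem:vs-finite-p-inverse} inverts $I_\theta$ uniformly over all candidates.
\end{itemize}

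Mapping the $E_p$ ball of radius $C_0\{h^{\eta-1}+\delta_n/\epsilon\}$ into itself then needs only size bounds, not Lipschitz bounds:
\begin{itemize}
\item the first term is $O(h^{\eta-1})$ by the forward bounds;
\item the second is $C(\|\delta^{\rm den}\|_\infty+\|\delta^{\rm nor}\|_\infty/\epsilon)\|\delta\|_{E_p}$ with vanishing prefactor, by Lemma~\ref{lem:r25-law-remainder};
\item the third is $C_p\delta_n/\epsilon$ by \eqref{eq:r25-noise-finite-p}.
\end{itemize}
Continuity follows from the finite-dimensional law integrals and the uniformly invertible $I_\theta$, and Brouwer gives the interior root. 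The energy-norm Hessian lemma is then needed only where you already use it, for strict convexity and uniqueness on the whole fiber.
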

\begin{proof}
For example, $\sigma=\min\{1/2,(\eta-2)/4\}$ satisfies
\eqref{eq:r25-p-choice}. This is an analysis exponent, not a model
parameter. At this resolution, $\epsilon=\sqrt{c_t}h$ satisfies
$n^{-B_0}\le\epsilon\le\epsilon_0$ for any fixed $B_0>1/(2\eta+d)$
and all sufficiently large $n$, so Lemma~\ref{lem:r25-noise} applies.
Uniformly on the good event,
\[
 r_{n,J}^{\rm pil}=O(h^{\eta-1}\log n),\qquad
 \delta_n=O(h^\eta\sqrt{\log n}+h^{2\eta}\log n).
\]
The three smallness conditions used below are exactly
\begin{equation}\label{eq:r25-three-smallness}
 \frac{r_{n,J}^{\rm pil}}{\epsilon}\to0,\qquad
 \frac{\delta_n}{h^2}\to0,\qquad
 h^{\eta-\beta-\sigma}\sqrt{\log n}\to0.
\end{equation}
Their positive powers are $\eta-2$, $\eta-2$ and
$(\eta-2)/2-\sigma$, respectively. There is no change at $\eta=3$.

Fix one hidden configuration and its comparator $\theta^0$.
Let $r^0=\theta^0-\theta^*$ denote its two full-field errors, and
consider retained corrections $c$ in the closed $E_p$ ball of radius
\[
 R=C_0\{h^{\eta-1}+\delta_n/\epsilon\}.
\]
Morrey and the same-bank coefficient characterization show that the
level-$j$ density coefficient, divided by its box width, is at most
$C2^{j(\beta-1+\sigma)}R$. For a normal coefficient the $L^{p'}$
norm of its dual atom gives $Cs2^{j(\beta+1+\sigma)}R$.
The maxima of both ratios are
$O(h^{\eta-\beta-\sigma}\sqrt{\log n})=o(1)$.
Since the comparator uses less than $A/8$, the entire correction ball
is strictly inside the original independent coefficient boxes.
There is no free mass-pivot coefficient.

For the pilot constraint, Sobolev and retained Bernstein give
\[
 \|c^{\rm den}\|_\infty\le CR,\qquad
 \|c^{\rm nor}\|_{C^1}\le Cs h^{-1-\sigma}R.
\]
Both are $o(r_{n,J}^{\rm pil})$ at this resolution. The comparator's
$C^0\times C^1$ error is $O(h^\eta)$ and the pilot error is at most
$r_{n,J}^{\rm pil}/4$. The correction ball is thus strictly inside
the prescribed pilot ball. These are not yet the coarse margins.

For those margins, let $\varphi_0$ be the original unit-mass pivot,
$\mu_i=\int\psi_i$ and $\varphi_i=\psi_i-\mu_i\varphi_0$.
For mean-zero input $g$, the corrected input-coarse truncation is
exactly
\begin{equation}\label{eq:r25-input-coarse}
 \sum_{i\ne0,\,j_i\le L}\langle g,\widetilde\psi_i\rangle\varphi_i
       =Q_{\rm mass}\Pi_{\le L}^{\rm full}g,
 \qquad Q_{\rm mass}g=g-\varphi_0\int g.
\end{equation}
Apply $Q_{\rm mass}$ to each full-bank atom to see this identity;
it annihilates the pivot. In particular a discarded fine atom's pivot
contribution is discarded too, as required by input-index truncation.
The actual density coarse correction is
$U^{\rm den}Q_{\rm mass}\Pi_{\le L}^{\rm full}
(U^{\rm den})^{-1}c^{\rm den}$.
Since $L$ is fixed, this finite-rank operator has a uniform
$L^p\to C^0$ bound. The corresponding normal input-coarse operator
has a uniform $L^p\to C^1$ bound. Hence
\[
 \|c^{\rm den}_{\le L}\|_{C^0}\le C_LR,\qquad
 \|c^{\rm nor}_{\le L}\|_{C^1}\le C_LsR.
\]
Both tend to zero, preserving the original $\lambda/4$ positivity
and $\varepsilon/4$ tube margins for all sufficiently large $n$.
This verifies separately all three original constraints.

For $\theta=\theta^0+c$ the exact law expansion at that candidate is
\[
 P^*-P_\theta=DP_\theta[-r^0-c]+R_\theta(-r^0-c).
\]
Equation~\eqref{eq:r25-exact-gradient} is therefore equivalent to
the following continuous fixed-point equation in the retained space:
\begin{align}
 c={}&-I_\theta^{-1}I_\theta(r^0,\cdot)
    +I_\theta^{-1}\mathcal A_\theta
             (R_\theta(-r^0-c),DP_\theta[\cdot])\notag\\
   &+I_\theta^{-1}\mathcal A_\theta(P_n-P^*,DP_\theta[\cdot]).
   \label{eq:r25-brouwer}
\end{align}
The first term is $O(h^{\eta-1})$ by the comparator and first-law
forward bounds followed by the retained inverse. For
$\delta=-r^0-c$, Lemma~\ref{lem:r25-law-remainder} bounds the
second by
\[
 C(\|\delta^{\rm den}\|_\infty
                +\|\delta^{\rm nor}\|_\infty/\epsilon)\|\delta\|_{E_p}.
\]
Its prefactor tends to zero on the entire ball by the preceding
estimates. The last term is $C_p\delta_n/\epsilon$ by
\eqref{eq:r25-noise-finite-p}. With fixed slack in $C_0$, the right
side maps the ball into itself for large $n$. Its continuity follows
from the finite law integrals and the uniformly invertible information
matrix. Brouwer gives a root strictly inside all original constraints.

Every feasible candidate is within $Cr_{n,J}^{\rm pil}$ of the truth
in $C^0\times C^1$. Lemmas~\ref{lem:vs-finite-p-inverse}
and~\ref{lem:mlsn-uniform-hessian}, with
\eqref{eq:r25-three-smallness}, give Hessian at least $c\|w\|_E^2$
on the entire convex output fiber. The constructed root is therefore
its unique global output minimizer. Every joint global minimizer
minimizes over its own output fiber; no stationarity, convexity or
identifiability in hidden parameters is required. The construction
also proves nonemptiness on the good event without changing the
observable fallback.
\end{proof}

\begin{lemma}[Interior population output roots]
\label{lem:r26-population-root}
Under the resolution and smallness conditions of
Lemma~\ref{lem:pm-newton-margins}, for every good pilot realization
and every hidden configuration the original output fiber has a unique
population minimizer $\theta_{\rm o}^\dagger(\thetah)$.
It is an interior score root, is measurable in the pilot and hidden
parameters, and its synthesized fields satisfy
\begin{equation}\label{eq:r26-population-energy}
 \|(r^\dagger_{\thetah}-r^*,v^\dagger_{\thetah}-v^*)\|_E
 \le Ch^{\eta-1}
\end{equation}
uniformly over these choices.
\end{lemma}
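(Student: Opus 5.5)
The plan mirrors the empirical-root argument of Lemma~\ref{lem:pm-newton-margins}, with the empirical measure $P_n$ replaced by $P^*$. The noise term then disappears, so the radius can be taken as $R=C_0h^{\eta-1}$.

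\emph{Step 1 (existence by Brouwer).} Fix a good pilot realization and a hidden configuration $\thetah$. Take the retained comparator $\theta^0$ of Lemma~\ref{lem:mlsn-architecture}, with full-field error $r^0$ satisfying \eqref{eq:r25-comparator-energy}--\eqref{eq:mlsn-profile-population-bias}. Consider retained corrections $c$ in the closed $E_p$ ball of radius $R=C_0h^{\eta-1}$, with $p$ chosen as in \eqref{eq:r25-p-choice}.

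The margin computations of Lemma~\ref{lem:pm-newton-margins} apply verbatim, and in fact with a smaller radius. They give three facts for this ball:
\begin{itemize}
\item it lies strictly inside the coefficient boxes;
\item it lies strictly inside the pilot ball $\|S\thetao-\widetilde\zeta\|_{\mathcal C}\le r^{\rm pil}_{n,J}$, using \eqref{eq:vs-pilot-bound};
\item it preserves the coarse $\lambda/4$ and $\varepsilon/4$ margins.
\end{itemize}
The population score equation is $DR_\tau(\theta)[w]=\mathcal A_\theta(P_\theta-P^*,DP_\theta[w])=0$, obtained as the population analogue of \eqref{eq:r25-exact-gradient}. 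Expanding the law exactly gives the fixed-point map \eqref{eq:r25-brouwer} without its last term. The first term is $O(h^{\eta-1})$ by the first-law forward bounds of Lemma~\ref{lem:r25-gaussian-forward} and the retained inverse of Lemma~\ref{lem:vs-finite-p-inverse}. The remainder term is controlled by Lemma~\ref{lem:r25-law-remainder} with a vanishing prefactor, because $r^{\rm pil}_{n,J}/\epsilon\to0$ by \eqref{eq:r25-three-smallness}. The map therefore sends the ball into itself, and by continuity Brouwer gives an interior score root.

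\emph{Step 2 (uniqueness and minimality).} Every feasible point of the fiber lies within $Cr^{\rm pil}_{n,J}$ of the truth in $C^0\times C^1$. Lemma~\ref{lem:mlsn-uniform-hessian}, read with $P_n$ replaced by $P^*$, takes $\delta_n=0$. It gives $|D^2R_\tau[w,w]-I_\theta(w,w)|\le C(\mathfrak r_\theta/\epsilon)\|w\|_E^2=o(\|w\|_E^2)$. Coercivity of $I_\theta$ from \eqref{eq:r25-inverse} then makes $R_\tau$ strongly convex on the whole convex compact fiber. Hence the interior root is the unique global minimizer $\theta_{\rm o}^\dagger(\thetah)$.

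\emph{Step 3 (measurability).} Measurability in the pilot data and in $\thetah$ follows from uniqueness. The objective is jointly continuous, and the fiber is a compact-valued measurable correspondence, as shown in Appendix~\ref{sec:repu-class}. The measurable maximum theorem therefore yields a measurable argmin correspondence, and since it is singleton-valued it is a measurable function.

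\emph{Step 4 (energy bound).} The fields satisfy $r^\dagger-r^*=(r^0-r^*)+c$ with $\|c\|_{E_p}\le R$, and $p>2$. On the compact reference, $\|\cdot\|_E\le C\|\cdot\|_{E_p}$, which combined with \eqref{eq:r25-comparator-energy} gives \eqref{eq:r26-population-energy}. All constants are uniform, because every ingredient lemma is uniform over branches, hidden parameters and good pilots.

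The main obstacle is the self-mapping estimate in Step 1, specifically showing $\|I_\theta^{-1}I_\theta(r^0,\cdot)\|_{E_p}\le Ch^{\eta-1}$. The difficulty is that $r^0$ is a nonretained full-field error, so the forward bounds must be applied in the unrestricted direction and then restricted to retained tests in the $E_{p'}^*$ dual norm. I would try \eqref{eq:r25-density-forward} and \eqref{eq:r25-normal-forward} directly with Hölder, using $\|r^0\|_{E_p}\le C_ph^{\eta-1}$.
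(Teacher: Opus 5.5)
Your proposal is correct and follows the paper's proof essentially step for step. The common steps are the Brouwer argument on the $E_p$ ball of radius $C_0h^{\eta-1}$ for the fixed-point map \eqref{eq:r25-brouwer} with $P_n=P^*$, and uniqueness from the population version of Lemma~\ref{lem:mlsn-uniform-hessian} (error $Cr^{\rm pil}_{n,J}\epsilon^{-1}\|w\|_E^2$) combined with the coercivity in Lemma~\ref{lem:vs-finite-p-inverse}. They also include the measurable-argmin argument of Appendix~\ref{sec:repu-class} and $\|\cdot\|_E\le C\|\cdot\|_{E_p}$ for $p>2$.

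The only minor difference is in making the remainder prefactor vanish. You use containment in the pilot ball together with $r^{\rm pil}_{n,J}/\epsilon\to0$, whereas the paper bounds it directly from the comparator and retained Bernstein ($\|\delta^{\rm nor}\|_\infty\le Ch^{\eta+1}+Csh^{-\sigma}R_0=o(\epsilon)$). Both work. Your planned H\"older treatment of the nonretained linear term $I_\theta(r^0,\cdot)$ is exactly what the forward bounds of Lemma~\ref{lem:r25-gaussian-forward} allow, since they do not require retained directions.
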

\begin{proof}
Use $p,\sigma$ from~\eqref{eq:r25-p-choice} and the comparator
$\theta^0$ of Lemma~\ref{lem:mlsn-architecture}, with full-field error
$r^0$ satisfying $\|r^0\|_{E_p}\le C_ph^{\eta-1}$.
In the fixed-point equation~\eqref{eq:r25-brouwer}, set $P_n=P^*$
and use the correction radius $R_0=C_0h^{\eta-1}$.
The three feasibility checks in Lemma~\ref{lem:pm-newton-margins}
remain valid at this smaller radius: the coefficient-box ratio is
$O(h^{\eta-\beta-\sigma})=o(1)$, and the pilot-ball and input-coarse
margins remain strict.

The empirical term vanishes. The first-law forward bound and
Lemma~\ref{lem:vs-finite-p-inverse} bound the linear term by
$Ch^{\eta-1}$. For $\theta=\theta^0+c$ and $\delta=-r^0-c$,
the comparator and retained Bernstein bounds give
\[
 \|\delta^{\rm den}\|_\infty=o(1),\qquad
 \|\delta^{\rm nor}\|_\infty
 \le Ch^{\eta+1}+Cs h^{-\sigma}R_0=o(\epsilon).
\]
Lemma~\ref{lem:r25-law-remainder} and the retained inverse therefore
bound the nonlinear term by $o(R_0)$.
For sufficiently large fixed $C_0$, the continuous map preserves the
closed correction ball, so Brouwer gives an interior root. Its full-field $E_p$ error is $O(h^{\eta-1})$,
which implies~\eqref{eq:r26-population-energy} since $p>2$.

The population Hessian calculation in Lemma~\ref{lem:mlsn-uniform-hessian}
has error at most $Cr_{n,J}^{\rm pil}\epsilon^{-1}\|w\|_E^2$.
Since $r_{n,J}^{\rm pil}/\epsilon\to0$,
Lemma~\ref{lem:vs-finite-p-inverse} gives strict convexity on the
entire output fiber, hence uniqueness of the global population minimizer.
The measurable-argmin argument in Appendix~\ref{sec:repu-class}
gives measurability in the pilot and hidden parameters.
Only the good pilot realization is used; no empirical event is required.
\end{proof}

\subsubsection{Uniform weak-coordinate estimates}
We combine energy and weak-density estimates to control both fits
relative to the truth and the quadratic term in Appendix~\ref{sec:mlsn-proof}.

\begin{lemma}[Uniform population and empirical coordinate bounds]
\label{lem:mlsn-consequences}
Under the resolution and smallness conditions of
Lemma~\ref{lem:pm-newton-margins}, fix a good pilot realization $z$.
Uniformly over its hidden configurations,
\begin{equation}\label{eq:r26-population-weak}
 \mathcal W(r^\dagger_{\thetah}-r^*,v^\dagger_{\thetah}-v^*)
 \le Ch^{\eta+1}.
\end{equation}
On the empirical good event $\mathcal G:=\mathcal E_n$, the empirical
output minimizers satisfy
\begin{equation}\label{eq:r26-empirical-weak}
 \mathcal W(\widehat r_{\thetah}-r^*,\widehat v_{\thetah}-v^*)
 \le C\{h^{\eta+1}+h^2W_n+\|Z_n\|_{(H^3)^*}\},
\end{equation}
where $W_n$ and the fixed-truth reference process $Z_n$ are from
Lemmas~\ref{lem:r25-noise} and~\ref{lem:vs-fixed-root-hessian}.
For any fitting-data-measurable hidden selection, with its corresponding
empirical output minimizer, this implies
\begin{equation}\label{eq:r26-coordinate-second-moment}
 \mathbb E_{\rm fit}\!\left[
 \mathbf1_{\mathcal G}
 \|\widehat\theta_{\rm o}
     -\theta_{\rm o}^\dagger(\widehat\theta_{\rm h})\|_{\bm W_J}^2
 \mid z\right]
 \le C\{h^{2\eta+2}+n^{-1}h^{2-d}\}.
\end{equation}
The constants are uniform over good pilots and admissible branches.
\end{lemma}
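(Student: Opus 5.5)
The plan is an Aubin--Nitsche duality argument on the first-law score equation. The strong (energy) bounds are already available: Lemma~\ref{lem:r26-population-root} gives $\|(r^\dagger_{\thetah}-r^*,v^\dagger_{\thetah}-v^*)\|_E\le Ch^{\eta-1}$. The task is to upgrade these to the weak norm $\mathcal W=H^{-1}\oplus L^2$ at the cost of two powers of $h$. The normal component is easy, since the definition of $E$ gives $\|v\|_2\le s\|(r,v)\|_E=c_th^2\|(r,v)\|_E$. At the population root this yields $\|v^\dagger_{\thetah}-v^*\|_2\le Ch^{\eta+1}$ immediately.

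\textbf{Density component by duality.} For the density component, write $\delta=\theta^*-\theta^\dagger$ in full fields. Then $P^*-P_{\theta^\dagger}=DP_{\theta^\dagger}[\delta]+R_{\theta^\dagger}(\delta)$, with the remainder $R_{\theta^\dagger}(\delta)$ as in Lemma~\ref{lem:r25-law-remainder}.
\begin{enumerate}
\item Fix $t\in H^1$ with $\|t\|_{H^1}\le1$. Solve the weighted elliptic problem $-\operatorname{div}(2q\nabla\phi)=t$, normalised so that $\int q_{\rm ref}\phi\,d\nu_0=0$. Elliptic regularity, using the uniform $W^{2,\infty}$ candidate density, gives $\|\phi\|_{H^3}\le C$.
\item Then $\langle r^\dagger-r^*,t\rangle=2\int q\nabla u\cdot\nabla\phi$, where $u$ is the relative density error.
\item By \eqref{eq:r25-weak-comparison}, this equals $\mathcal A_P(uP,\phi P)$ up to an error $Cs\|u\|_{H^1}\|\phi\|_{H^3}\le Ch^{\eta+1}$.
\item Add the normal and remainder pieces. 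By \eqref{eq:r25-linear-weak} and \eqref{eq:r25-remainder-weak} they cost $C\|b\|_2+C(\|ub\|_2+s^{-1/2}\||b|^2\|_2)$, which is again $O(h^{\eta+1})$ from the energy bound and the sup-norm smallness of $b/\epsilon$.
\item This reduces matters to $\mathcal A(P^*-P_{\theta^\dagger},\phi P)$. Lemma~\ref{lem:r25-product} supplies a retained transported direction $z_h$ with $\|q\phi-z_h\|_{H^1}\le Ch^2$.
\item The population score-root identity gives $\mathcal A(P^*-P_{\theta^\dagger},DP[z_h])=0$. What remains is bounded by the forward estimates times $\|\delta\|_E$ times $h^2$, i.e.\ $Ch^{\eta+1}$. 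This proves \eqref{eq:r26-population-weak}, uniformly over $\thetah$, because every constant used is uniform over the hidden family (Lemma~\ref{lem:mlsn-architecture}).
\end{enumerate}

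\textbf{Empirical minimizers.} For the empirical minimizer, the score identity acquires the term $\mathcal A_\theta(P_n-P^*,DP_\theta[\,\cdot\,])$.
\begin{enumerate}
\item First sharpen the energy bound. Coercivity $I_\theta(w,w)\ge c\|w\|_E^2$ (Lemma~\ref{lem:vs-finite-p-inverse}), tested against the error itself together with \eqref{eq:r25-noise-forward}, should give $\|\widehat\delta\|_E\le C(h^{\eta-1}+W_n)$ on $\mathcal E_n$. This replaces the cruder $\delta_n/\epsilon$ radius used in Lemma~\ref{lem:pm-newton-margins}.
\item The normal component then satisfies $\|\widehat v-v^*\|_2\le C(h^{\eta+1}+h^2W_n)$.
\item In the duality step for the density, the extra term is $N_\theta[z_h]$. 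Split it into three pieces:
\begin{itemize}
\item $N_\theta[z_h]-N_*[z_h]$, bounded by \eqref{eq:r25-empirical-comparison} as $C\delta_n\|\widehat\delta\|_E$. This is $O(h^{2\eta-1}\sqrt{\log n}+\delta_nW_n)=o(h^{\eta+1}+h^2W_n)$, since $\eta>2$ and $\delta_n\le h^2$.
\item $N_*[q\phi-z_h]$, bounded by \eqref{eq:r25-noise-nonretained} as $Ch^2W_n$.
\item $N_*[q\phi]$. By \eqref{eq:r25-adjoint-identity} this is exactly $Z_n(q\phi)$, hence at most $C\|Z_n\|_{(H^3)^*}$.
\end{itemize}
Together these give \eqref{eq:r26-empirical-weak}.
\end{enumerate}

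\textbf{Second-moment bound.} For \eqref{eq:r26-coordinate-second-moment}, proceed in four steps.
\begin{enumerate}
\item Use the Riesz equivalence $\|\cdot\|_{\bm W_J}\asymp\mathcal W$ on $\mathcal V_J$ (Lemma~\ref{lem:normal-density-chart} and Lemma~\ref{lem:mlsn-architecture}).
\item Apply the triangle inequality through the truth, combining \eqref{eq:r26-population-weak} and \eqref{eq:r26-empirical-weak}. Since the right-hand sides involve only the fixed-truth quantities $W_n$ and $Z_n$, they hold simultaneously for every hidden configuration, hence at any data-selected $\widehat\theta_{\rm h}$.
\item Square the resulting bound.
\item Take expectations using \eqref{eq:r25-integrated-noise} and \eqref{eq:r25-weak-noise}. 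With $\epsilon\asymp h$ these give $h^4\,\mathbb EW_n^2\le Cn^{-1}h^{4-d-2}=Cn^{-1}h^{2-d}$ and $\mathbb E\|Z_n\|_{(H^3)^*}^2\le Cn^{-1}h^{2-d}$.
\end{enumerate}

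\textbf{Main obstacle.} I expect the duality step to be the hardest part. The test $\phi$ must be regular enough ($H^3$) for the weak Gaussian comparisons, but the candidate coefficients $q$ are controlled only in $W^{2,\infty}$. In addition, the retained approximation $z_h$ must live in the actual transported mean-zero bank.

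I would first try to obtain $H^3$ regularity from $C^{1,1}$ coefficients via local difference quotients, which should be enough in divergence form. The mean-zero compatibility is handled by the normalisation of $\phi$ and the coarse pivot in Lemma~\ref{lem:r25-product}.

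A secondary delicate point is the sharpened energy bound by $W_n$ rather than $\delta_n/\epsilon$. Without it, the normal $L^2$ term would carry a logarithm and spoil the rate.
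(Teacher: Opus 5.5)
Your proposal is correct and follows the paper's proof essentially step for step. The shared ingredients are the sharpened energy bound $C(h^{\eta-1}+W_n)$ via coercivity and \eqref{eq:r25-noise-forward}, duality through a weighted elliptic problem with $H^3$ regularity from $W^{2,\infty}$ coefficients, the retained approximation of Lemma~\ref{lem:r25-product} combined with the score-root identity, the comparison, linear-support and remainder estimates of Lemma~\ref{lem:r25-weak-gaussian}, the empirical comparison \eqref{eq:r25-empirical-comparison} plus the fixed-truth process $Z_n$, and finally the $\bm W_J$--$\mathcal W$ equivalence with \eqref{eq:r25-integrated-noise} and \eqref{eq:r25-weak-noise}.

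Only small repairs are needed:
\begin{enumerate}
\item Take the dual right-hand side to be $q\bigl(t-\int qt\,d\nu\bigr)$, as the paper does. Then the problem is solvable, and the pairing returns $\int(r^\dagger_{\theta_{\rm h}}-r^*)t$ rather than the relative error paired with $t$.
\item Keep the empirical tests relative, so that the last piece is $Z_n(\phi)$ rather than $Z_n(q\phi)$. The latter would need a third derivative of the candidate density.
\item In the energy step, test the score equation against the retained increment $\theta^0-\widehat\theta$, since the full error $\theta^*-\widehat\theta$ is not retained. Alternatively, use whole-fiber convexity along the segment to the comparator, as the paper does.
\end{enumerate}
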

\begin{proof}
Fix the whole good pilot realization, hence its branch and output domain.
The fitting observations remain independent with law $P^*$.
All pointwise estimates below are uniform over the hidden box, so they
apply also at the hidden configuration selected using these observations.

\paragraph*{Energy error.}
Strong convexity on the output fiber, the feasible comparator, and
\eqref{eq:r25-noise-forward} give
\begin{equation}\label{eq:r25-energy-error}
 \|\widehat\theta-\theta^*\|_E
       \le C(h^{\eta-1}+W_n)\quad\hbox{on }\mathcal G.
\end{equation}
Indeed the deterministic comparator gradient has dual energy norm
$O(h^{\eta-1})$ by its exact law expansion and
\eqref{eq:r25-law-remainder}; its centered empirical gradient is at
most $CW_n$. Apply the convex variational inequality at the minimizer
and curvature along the segment to the comparator. This argument also
works at a boundary minimizer and does not presume interiority to
obtain the energy bound.

\paragraph*{The weak density equation.}
Write $e=q_*^{\rm ref}-q_{\widehat\theta}^{\rm ref}$ and
$b=\Phi_* -\Phi_{\widehat\theta}$ on the selected fixed reference.
Then $\int e=0$, and $\|b\|_2\le sE_*$, where
$E_*:=\|\widehat\theta-\theta^*\|_E$.
For a reference $H^1$ test $g$, transfer it to the fitted physical
manifold and solve
\[
 -2\operatorname{div}_{\mathcal M}(q\nabla_{\mathcal M}t)
       =q\left(g-\int qg\,d\nu_{\mathcal M}\right),
 \qquad \int qt\,d\nu_{\mathcal M}=0.
\]
Here $q$ is the physical fitted density. In reference coordinates the
coefficient matrix is $q_{\rm ref}G^{-1}$, uniformly elliptic and
$W^{2,\infty}$. Elliptic regularity gives
$\|t_{\rm ref}\|_{H^3}\le C\|g_{\rm ref}\|_{H^1}$.
The absolute reference density direction is $q_{\rm ref}t_{\rm ref}$,
not the relative test alone. Lemma~\ref{lem:r25-product} supplies a
retained mean-zero $w_h$ such that
\[
 a:=q_{\rm ref}t_{\rm ref}-w_h,
 \qquad \|a\|_{H^1}\le Ch^2\|t_{\rm ref}\|_{H^3},\qquad \int a=0.
\]
Interiority from Lemma~\ref{lem:pm-newton-margins} permits testing the
empirical score equation with $w_h$. Replacing it by the full
absolute density direction costs at most
$Ch^2(E_*+W_n)\|t_{\rm ref}\|_{H^3}$.
For the empirical part this follows from the explicitly nonretained
forward bound~\eqref{eq:r25-noise-nonretained}; for the deterministic
part use the first-law forward bound and the exact law remainder.
No inverse or coercivity is applied to $a$.

Expand $P^*-P_{\widehat\theta}$ at the fitted candidate.
The density part of~\eqref{eq:r25-weak-comparison}, followed by the
dual equation, is $\int e g_{\rm ref}\,d\nu_0$ with error
$CsE_*\|t_{\rm ref}\|_{H^3}$.
The linear support contribution costs
$C\|b\|_2\|t_{\rm ref}\|_{H^3}$ by~\eqref{eq:r25-linear-weak}.
The exact geometric weak remainder costs
\[
 C(\|e/q_{\rm ref}\|_\infty+\|b\|_\infty/\epsilon)
                      \|b\|_2\|t_{\rm ref}\|_{H^3},
\]
which is bounded by the same support term on $\mathcal G$.
The empirical comparison~\eqref{eq:r25-empirical-comparison} separates
the fitted candidate from the fixed truth at cost
$C\delta_nE_*\|t_{\rm ref}\|_{H^1}$.
The remaining process is $Z_n(t_{\rm ref})$ from
\eqref{eq:r25-weak-noise}. Its dual norm permits this data-dependent
test. Taking the supremum over $\|g_{\rm ref}\|_{H^1}\le1$ gives
\begin{equation}\label{eq:r25-weak-error}
 \|e\|_{H^{-1}}+\|b\|_2
 \le C\{h^2(E_*+W_n)+\delta_nE_*+\|Z_n\|_{(H^3)^*}\}.
\end{equation}
Since $\delta_n/h^2\to0$, the middle term is absorbed into the first.
Together with~\eqref{eq:r25-energy-error}, this proves
\eqref{eq:r26-empirical-weak}.

\paragraph*{Population counterpart.}
At the population minimizer supplied by
Lemma~\ref{lem:r26-population-root}, use the population score equation
in the same retained test direction. The preceding weak-density
calculation has $W_n=0$, $\delta_n=0$, and $Z_n=0$, and gives
\[
 \mathcal W(r^\dagger_{\thetah}-r^*,v^\dagger_{\thetah}-v^*)
 \le Ch^2\|(r^\dagger_{\thetah}-r^*,v^\dagger_{\thetah}-v^*)\|_E.
\]
Equation~\eqref{eq:r26-population-energy} proves
\eqref{eq:r26-population-weak}. This step uses the population root,
not the projection comparator as a substitute for that minimizer.

\paragraph*{Second moment in the output coordinates.}
Synthesis is linear in the output coefficients. The two-sided norm
equivalence in Lemma~\ref{lem:normal-density-chart} and the uniform
pullback bounds in Lemma~\ref{lem:mlsn-architecture} give, in each fiber,
\[
 \|a\|_{\bm W_J}\asymp\mathcal W(S_{J,m,\thetah}a).
\]
The triangle inequality in the reference-field norm, followed by
\eqref{eq:r26-population-weak} and~\eqref{eq:r26-empirical-weak}, yields
\[
 \|\widehat\theta_{\rm o}(\thetah)
       -\theta_{\rm o}^\dagger(\thetah)\|_{\bm W_J}
 \le C\{h^{\eta+1}+h^2W_n+\|Z_n\|_{(H^3)^*}\}
 \quad\text{on }\mathcal G.
\]
This is a coefficient-to-field norm equivalence, not an inverse
Wasserstein inequality. Equations~\eqref{eq:r25-integrated-noise}
and~\eqref{eq:r25-weak-noise} give
\[
 \mathbb E_{\rm fit}[W_n^2\mid z]\le Cn^{-1}h^{-d-2},
 \qquad
 \mathbb E_{\rm fit}[\|Z_n\|_{(H^3)^*}^2\mid z]
 \le Cn^{-1}h^{2-d}.
\]
The pilot is independent of the fitting observations, and a finite
reference union changes only constants. Squaring the pointwise bound
at the selected hidden configuration proves
\eqref{eq:r26-coordinate-second-moment}. We insert the good-event
indicator into this expectation rather than condition on that event.
Neither a fixed-hidden trace identity at a random hidden selection nor
an expectation bound obtained from the logarithmic supremum envelope
is used.
\end{proof}

\subsection{Minimax lower bound}
\label{sec:minimax-lower-bound}
We establish the matching lower bound on a fixed-support subclass.
An Assouad construction gives the density lower bound, and prescribed-Jacobian
reconstruction embeds this family in the transport class.

\begin{proposition}[Fixed-support minimax transfer]
\label{prop:vs-minimax-transfer}
Under Assumptions~\ref{G1}--\ref{G2}, let $d>2$.  Then
$\inf_{\widetilde P_n}\sup_{h^*\in\mathcal H_*}\mathbb E
W_2(P_{h^*},\widetilde P_n)\gtrsim n^{-(\eta+1)/(2\eta+d)}$, where the
infimum is over all measurable estimators based on $n$ observations.
\end{proposition}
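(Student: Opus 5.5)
The plan is to reduce the statement to a fixed-support density estimation lower bound in $W_2$ and then embed the hard family into $\mathcal H_*$ using the prescribed-Jacobian reconstruction of Lemma~\ref{lem:quantitative-prescribed-jacobian}.

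\textbf{Step 1: hypercube of densities.} Take the strictly interior reference $h_0$ from Assumption~\ref{G2}, with smooth image $\mathcal M_0=h_0(\mathcal U)$ and positive $C^{\eta+1}$ density $f_0$ with respect to $\nu_0=\mathcal H^d|_{\mathcal M_0}$. Choose a bandwidth $\varepsilon\asymp n^{-1/(2\eta+d)}$ and about $m\asymp\varepsilon^{-d}$ disjoint geodesic balls on $\mathcal M_0$. In each ball place a mean-zero bump $\psi_k=\varepsilon^{\eta}\psi((x-x_k)/\varepsilon)$. For $\omega\in\{0,1\}^m$ set
\[
f_\omega=f_0+\gamma\sum_k\omega_k\psi_k ,
\]
with $\gamma$ a small constant. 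Then $\|f_\omega-f_0\|_{C^\eta}\lesssim\gamma$, the densities stay bounded below, and the per-coordinate Hellinger (or $\chi^2$) separation is $\lesssim\varepsilon^{2\eta+d}\asymp n^{-1}$.

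\textbf{Step 2: Assouad in $W_2$.} Since $W_2\ge W_1$ on the compact support, I would lower bound $W_1$ by Kantorovich duality. Use the test function $g_\omega=\sum_k(2\omega_k-1)\phi_k$, where the $\phi_k$ are $1$-Lipschitz bumps of height $\asymp\varepsilon$ supported in the balls and chosen with $\int\phi_k\psi_k\,d\nu_0\gtrsim\varepsilon^{\eta+1+d}$. This gives
\[
W_1(P_\omega,P_{\omega'})\gtrsim\varepsilon^{\eta+1+d}\,\rho_H(\omega,\omega').
\]
A mixed triangle inequality, with $W_1$ applied to the nearest hypercube point of any estimator, lets Assouad's lemma yield
\[
\inf_{\widetilde P_n}\max_\omega\mathbb EW_1\gtrsim m\,\varepsilon^{\eta+1+d}\asymp\varepsilon^{\eta+1}=n^{-(\eta+1)/(2\eta+d)} .
\]
This is the Niles-Weed--Berthet construction, and I would cite it for the bookkeeping.

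\textbf{Step 3: realization in $\mathcal H_*$.} This is the step I expect to be the main obstacle. We need embeddings $h_\omega$ with $(h_\omega)_\#\pi=f_\omega\nu_0$ and uniform $C^{\eta+1}$ bounds, while staying inside the interior margin of \eqref{eq:uniform-interiority-bounds}. The pulled-back density $(h_0)^*(f_\omega\nu_0)$ is a $C^\eta$ perturbation of size $\gamma$ of the density $\pi$ with respect to the Riemannian volume. The local clause \eqref{eq:quantitative-prescribed-jacobian-local} with $s=\eta$ produces $R_\omega$ with
\[
(R_\omega)_\#\pi=(h_0)^*(f_\omega\nu_0)\,\mathrm{vol},\qquad \|R_\omega-I\|_{C^{\eta+1}}\lesssim\gamma .
\]
Since $\pi$ is only $C^\eta$, I would apply the lemma to the reference density $\pi$ rather than to a smoother base; the lemma requires $f_0\in C^{s+1}$, so if $\pi\notin C^{\eta+1}$ I would instead conjugate through the smooth base volume $\mu$, as in the proof of Proposition~\ref{prop:canonical-law-sieve}, and check that the composed map still has $C^{\eta+1}$ norm controlled by the perturbation size. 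Then set $h_\omega=h_0\circ R_\omega$. Its image is $\mathcal M_0$, so the reach is unchanged, and its $C^{\eta+1}$ norm, $C^{\beta+1}$ norm and Jacobian singular values move by $O(\gamma)$. Choosing $\gamma$ below the common margin $\delta_{\rm reg}$ places every $h_\omega$ in $\mathcal H_*$.

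The supremum over $\mathcal H_*$ dominates the maximum over the $h_\omega$, which gives the claim. Observations from $P_{h_\omega}$ are exactly draws from $f_\omega\nu_0$, so Step 2 applies verbatim.
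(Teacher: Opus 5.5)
Your Steps 1 and 2 are the same Assouad construction the paper uses, and they are fine: $C^\eta$-sized bumps on the fixed support $\mathcal M_0$ at scale $n^{-1/(2\eta+d)}$, per-flip KL/Hellinger of order $n^{-1}$, and localized Kantorovich--Rubinstein tests giving a Hamming-separated $W_1$ bound.

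The gap is in Step 3, where you hedged. The local clause \eqref{eq:quantitative-prescribed-jacobian-local} requires the reference density to be $C^{s+1}=C^{\eta+1}$. On the source side that reference density is $\pi$, which Assumption~\ref{G1} places only in $C^\eta$. Conjugating through $A_\pi$ does not repair this, since $A_\pi^{\pm1}$ are only $C^{\eta+1}$. Indeed, $DR_\omega-I$ contains the difference $DA_\pi^{-1}\circ(Q_\omega\circ A_\pi)-DA_\pi^{-1}\circ A_\pi$, which is a small displacement of a merely $C^\eta$ function. Such a displacement is not small in $C^\eta$, because composition with a near-identity map is not continuous at the identity in the same H\"older norm.

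This is not an artifact of the conjugation. If $R_\#\pi=\pi_\omega$ and $\|R-I\|_{C^{\eta+1}}=O(\gamma)$, the Jacobian equation $\pi_\omega(R)\det DR=\pi$ forces $\|\pi_\omega\circ R-\pi_\omega\|_{C^\eta}=O(\gamma)$. That typically fails for a density that is only $C^\eta$. The same defect undermines your claim that the $C^{\eta+1}$ norm of $h_0\circ R_\omega$ moves by $O(\gamma)$, because $h_0$ itself is only $C^{\eta+1}$. So the $C^{\eta+1}$ part of the membership $h_\omega\in\mathcal H_*$ is not established.

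The missing idea is to reconstruct on the image side. This is exactly why Assumption~\ref{G2} requires $\mathcal M_0$ to be smooth and $f_0$ to be positive and $C^{\eta+1}$; you record this in Step 1 but never use it. The repaired Step 3 runs as follows.
\begin{enumerate}
\item Put $V_0=\nu_0(\mathcal M_0)$ and let $\mu=\nu_0/V_0$, a smooth probability volume on $\mathcal M_0$.
\item Apply the local clause on $\mathcal M_0$ to $V_0f_\omega$ about $V_0f_0$. This gives $S_\omega:\mathcal M_0\to\mathcal M_0$ with $(S_\omega)_\#P_{h_0}=f_\omega\nu_0$ and $\|S_\omega-I\|_{C^{\eta+1}}\le C\|f_\omega-f_0\|_{C^\eta}\le C\gamma$.
\item Set $h_\omega=S_\omega\circ h_0$, so that $P_{h_\omega}=f_\omega\nu_0$.
\item Write $h_\omega-h_0=(S_\omega-I)\circ h_0$. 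This depends linearly on $S_\omega-I$, through right composition with the fixed $C^{\eta+1}$ map $h_0$, so $\|h_\omega-h_0\|_{C^{\eta+1}}\le C\gamma$.
\item The image is unchanged, hence so is the reach. For small $\gamma$, every $h_\omega$ therefore lies in the interior neighbourhood of $h_0$ inside $\mathcal H_*$.
\end{enumerate}
In your notation this is $R_\omega=h_0^{-1}\circ S_\omega\circ h_0$. The smallness has to be measured for $h_0\circ R_\omega=S_\omega\circ h_0$, not through $\|R_\omega-I\|_{C^{\eta+1}}$. This is the paper's argument, and with this replacement your proof goes through.
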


\begin{proof}
Choose the strictly interior reference $h_0\in\mathcal H_*$ from
Assumption~\ref{G2}, and write $P_{h_0}=f_0\nu_0$ on
$\mathcal M_0=h_0(\mathcal U)$.  In one coordinate cube choose
$m_j\asymp2^{jd}$ disjoint $L^2$-normalized smooth wavelets $\psi_{jk}$,
supported in a smaller cube, and put
\[
 a_j=c2^{-j(\eta+d/2)},\qquad
 f_\omega=f_0+a_j\sum_{k=1}^{m_j}\omega_k\psi_{jk},\qquad
 P_\omega=f_\omega\nu_0,\qquad
 \omega\in\{0,1\}^{m_j}.
\]
The wavelets have zero $\nu_0$ integral.  For a fixed sufficiently small $c$, their
disjoint supports give uniformly $f_\omega\ge f_0/2$ and
$\|f_\omega\|_{C^\eta}\le C$.  If $\omega$ and $\omega^{(k)}$ differ in one
bit, the common positive background and the bi-Lipschitz chart give
\begin{equation}\label{eq:vs-lower-neighbor-information}
 H^2(P_\omega,P_{\omega^{(k)}})
 +{\rm KL}(P_\omega,P_{\omega^{(k)}})\le Ca_j^2.
\end{equation}
Localized Kantorovich--Rubinstein test functions with slope one on each cell,
extended by McShane outside the cube, also give
\begin{equation}\label{eq:vs-lower-W1-hamming}
 W_2(P_\omega,P_{\omega'})\ge W_1(P_\omega,P_{\omega'})
 \ge c a_j2^{-j(d/2+1)}d_H(\omega,\omega').
\end{equation}
Choose $j$ so that $2^{j(2\eta+d)}\asymp n$.  Then
$na_j^2\asymp1$, and Assouad's lemma applied to
\eqref{eq:vs-lower-neighbor-information}--
\eqref{eq:vs-lower-W1-hamming} yields
$m_ja_j2^{-j(d/2+1)}\asymp
2^{-j(\eta+1)}\asymp n^{-(\eta+1)/(2\eta+d)}$.

{Put $V_0=\nu_0(\mathcal M_0)$ and $\mu=\nu_0/V_0$, a smooth probability volume.
Apply the local part of Lemma~\ref{lem:quantitative-prescribed-jacobian}
to $V_0f$ about the positive $C^{\eta+1}$ density $V_0f_0$.
It supplies $S_f\in C^{\eta+1}$ with $(S_f)_\#P_{h_0}=f\nu_0$ and $S_{f_0}=I$.}
Then $h_f=S_f\circ h_0$ obeys $P_{h_f}=f\nu_0$ and, by
\eqref{eq:quantitative-prescribed-jacobian-local},
$\|h_f-h_0\|_{C^{\eta+1}}\le C\|f-f_0\|_{C^\eta}$.  The preceding
$C^\eta$ perturbation bound makes this at most $Cc$ uniformly.  Choose $c$
below the fixed interiority margin of $h_0$; then every $h_f$ belongs to
$\mathcal H_*$.  Thus the explicit Assouad family is a submodel of the
theorem's parameter class and transfers the lower bound.
\end{proof}

\subsection{Finite-anchor specialization}
\label{sec:finite-anchor-specialization}
This section studies a fixed-proposal, anchor-averaged criterion separately
from the exact-loss result of Theorem~\ref{thm:acc}.
We bound the loss perturbation, use output-fiber curvature to control
Wasserstein error, and identify a sufficient anchor budget.

\begin{lemma}[Fresh-anchor perturbation on a dominated local class]
\label{lem:vs-anchor-rate}
Condition on an admissible independent-pilot realization. Fix one hidden
configuration and a nonempty compact convex subdomain
$\mathcal B_{n,J}^{\rm anc}$ of its prescribed output fiber. Suppose
it contains the ideal empirical root except on an event of probability
$Cn^{-2}$, and both criteria have fixed Borel minimizer selections.
Let $\widehat\theta_J^{\rm anc}$ minimize the exact criterion on this
subdomain, and let $\widetilde\theta_{J,K}^{\rm anc}$ minimize
\[
 \widetilde L_{n,J,K}(\theta)=\frac1n\sum_{i=1}^n(1-t_J)
       \mathbb E_{Z,\rm anc}
       \ell_K(\alpha_{t_J}X_i+\sigma_{t_J}Z,X_i;h_\theta).
\]
Set $\Delta_{J,K}^{\rm anc}=\sup_{\theta\in\mathcal B_{n,J}^{\rm anc}}
|\widetilde L_{n,J,K}(\theta)-L_{n,J}(\theta)|$ and
$\delta_{J,K}^{\rm anc}=(\mathbb E\Delta_{J,K}^{\rm anc})^{1/2}$.
To express the quantity from Appendix~\ref{sec:mcapprox} in additive-noise
coordinates, define
$D_{2,h}^{\rm add}(y):=D_{2,h}(\alpha_{t_J}y)$, with any query-dependent
proposal also evaluated at the VP query $\alpha_{t_J}y$.
If, for $Y\sim p_{\tau_J}^*$,
\begin{equation}\label{eq:vs-anchor-domination}
 \mathbb E\sup_{\theta\in\mathcal B_{n,J}^{\rm anc}}
                         D_{2,h_\theta}^{\rm add}(Y)\le C_D\tau_J^{-d/2},
\end{equation}
then, in the resolution regime of Lemma~\ref{lem:mlsn-consequences},
\begin{align}
 \mathbb E\Delta_{J,K}^{\rm anc}&\le\frac C{K\tau_J^{d/2+2}},
 \qquad \delta_{J,K}^{\rm anc}\le\frac C{\sqrt K\tau_J^{d/4+1}},
       \label{eq:vs-anchor-rate}\\
 \mathbb EW_2(P^*,P_{h_{\widetilde\theta_{J,K}^{\rm anc}}})
 &\le C\left\{2^{-J(\eta+1)}+(\mathfrak B_J/n)^{1/2}
          +\frac1{\sqrt K\tau_J^{d/4+1}}+n^{-2}\right\}.
       \label{eq:vs-anchor-w2}
\end{align}
At $2^{J_n}\asymp n^{1/(2\eta+d)}$, the sufficient budget
$K_n\gtrsim n^{1+6/(2\eta+d)}$ makes the added term no larger than
the displayed minimax rate.
\end{lemma}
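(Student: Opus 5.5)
The argument has three steps: bound the loss perturbation uniformly over the subdomain, turn that into a coefficient bound through output-fiber curvature, and feed the result into Theorem~\ref{thm:local-erm-transfer}.

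\emph{Step 1: the anchor rate \eqref{eq:vs-anchor-rate}.} Apply the pointwise anchor-averaged bound \eqref{eq:anchor-averaged-loss-pointwise} of Lemma~\ref{lem:anchor-averaged-loss} for each $\theta$ and each corrupted query, rescaled to additive-noise units. In VP coordinates the prefactor is $\alpha_t^2/\sigma_t^4$. Multiplying by $(1-t_J)$ and writing $\tau=t/(1-t)$ turns this into an additive-noise factor of order $\tau_J^{-2}$, uniformly in $\theta$. The common support bound $B$ comes from the deterministic compact support of all feasible laws in Appendix~\ref{sec:repu-class}. This gives
\[
\Delta^{\rm anc}_{J,K}\le \frac{C}{K\tau_J^2}\,\frac1n\sum_{i=1}^n\mathbb E_Z\sup_{\theta}D^{\rm add}_{2,h_\theta}(X_i+\sqrt{\tau_J}Z).
\]
Taking expectations, $X_i+\sqrt{\tau_J}Z\sim p^*_{\tau_J}$, so the domination hypothesis \eqref{eq:vs-anchor-domination} gives $\mathbb E\Delta\le C/(K\tau_J^{d/2+2})$. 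Taking the square root gives the bound on $\delta^{\rm anc}_{J,K}$.

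\emph{Step 2: from loss perturbation to coefficient error.} On the good event, Lemma~\ref{lem:mlsn-uniform-hessian} together with Lemma~\ref{lem:vs-finite-p-inverse} makes $L_{n,J}$ strongly convex on the whole output fiber in the energy norm, hence on the convex subdomain. The exact minimizer on the subdomain equals the ideal empirical root $\widehat\theta$, except on an event of probability $Cn^{-2}$. Near-minimality of $\widetilde\theta$ gives
\[
L_{n,J}(\widetilde\theta)-L_{n,J}(\widehat\theta)\le 2\Delta,
\]
so quadratic growth yields $c\|\widetilde\theta-\widehat\theta\|_E^2\le 2\Delta$. To reach $W_2$ I need the weaker norm $\bm W_J$. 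The energy norm dominates $\mathcal W$ up to constants, since $\|r\|_{H^{-1}}\le\|r\|_{H^1}$ and $\|v\|_2\le s\cdot s^{-1}\|v\|_2$. Hence $\|\widetilde\theta-\widehat\theta\|_{\bm W_J}\le C\Delta^{1/2}$. Combining Lemma~\ref{lem:normal-density-chart} with the chart comparison \eqref{eq:general-chart-w2} gives
\[
W_2(P_{h_{\widetilde\theta}},P_{h_{\widehat\theta}})\le C\Delta^{1/2}.
\]
By Jensen, its expectation is at most $C\delta^{\rm anc}_{J,K}$.

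\emph{Step 3: assembly and budget.} By the triangle inequality,
\[
W_2(P^*,P_{h_{\widetilde\theta}})\le W_2(P^*,P_{h_{\widehat\theta}})+W_2(P_{h_{\widehat\theta}},P_{h_{\widetilde\theta}}).
\]
The first term is controlled by the fixed-hidden version of the conditional argument in Appendix~\ref{sec:mlsn-proof}. Using the trace interpretation $n^{-1}\operatorname{tr}(\bm A\bm G)$, bounded by the variance budget $\mathfrak B_J$, it gives $2^{-J(\eta+1)}+(\mathfrak B_J/n)^{1/2}+n^{-2}$. Exceptional events contribute $Bn^{-2}$ by bounded support. For the budget, $\tau_J\asymp 2^{-2J}\asymp n^{-2/(2\eta+d)}$. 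Requiring $K^{-1/2}\tau_J^{-(d/4+1)}\le n^{-(\eta+1)/(2\eta+d)}$ gives
\[
K\gtrsim n^{(2\eta+2+d/2+2)/(2\eta+d)}=n^{1+(6-d/2)/(2\eta+d)},
\]
which is implied by $K\gtrsim n^{1+6/(2\eta+d)}$.

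The main obstacle is Step 2. Strong convexity holds in the energy norm $E$, but the quadratic-growth argument loses a factor: it controls $\|\cdot\|_{\bm W_J}$ only through $\|\cdot\|_E$, which is comparable to $\Delta^{1/2}$ rather than something finer. I first need to check that the energy-to-$\mathcal W$ domination holds on retained directions with uniform constants. I also need to check that the $\delta^{\rm anc}$ term in \eqref{eq:vs-anchor-w2} does not require a further $h^2$ gain. If it did, I would instead rerun the weak-density argument of Lemma~\ref{lem:mlsn-consequences} with the anchor perturbation added as an extra forcing term.
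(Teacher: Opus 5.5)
Your three steps follow the paper's proof. Step~1 averages Lemma~\ref{lem:anchor-averaged-loss} over corruption, using $(1-t_J)\alpha_{t_J}^2/\sigma_{t_J}^4=\tau_J^{-2}$ and the domination hypothesis. Step~2 combines $L_{n,J}(\widetilde\theta)\le L_{n,J}(\widehat\theta)+2\Delta$ with whole-fiber curvature and the crude bound $\|a\|_{H^{-1}}+\|b\|_2\le C\|(a,b)\|_E$. Step~3 is the triangle inequality with the fixed-hidden ideal bound. Your worry about needing a further $h^2$ gain is unfounded: the added term in \eqref{eq:vs-anchor-w2} is exactly your bound on $\delta^{\rm anc}_{J,K}$, and the paper uses the same energy-norm route. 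The energy-to-$\mathcal W$ domination you wanted to check holds for all fields once $s\le1$.

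Two points need correcting.

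\textbf{Budget arithmetic.} Since $\tau_{J_n}\asymp n^{-2/(2\eta+d)}$, squaring $K^{-1/2}\tau_{J_n}^{-(d/4+1)}\le n^{-(\eta+1)/(2\eta+d)}$ gives $K\gtrsim n^{(2\eta+2+d+4)/(2\eta+d)}=n^{1+6/(2\eta+d)}$. You dropped the factor $2$ in the exponent of $\tau$. Your simplification of your own exponent is also off: $(2\eta+d/2+4)/(2\eta+d)=1+(4-d/2)/(2\eta+d)$, not $1+(6-d/2)/(2\eta+d)$. The stated budget is exactly the threshold, with no $d/2$ slack, though the lemma's conclusion stands.

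\textbf{The ideal term.} Do not justify it through the trace identity $n^{-1}\operatorname{tr}(\bm A\bm G)$. That identity needs a deterministic $\bm A_\tau$, whereas the matrix in \eqref{eq:r26-path-matrix} is built from the empirical path Hessian. The paper also proves no bound of such a trace by $\mathfrak B_J$. Instead, cite the fixed-hidden version of \eqref{eq:r25-final-bound}, which rests on the coordinate second moment \eqref{eq:r26-coordinate-second-moment}. Then observe that $\mathfrak B_J\asymp h_J^{2-d}$ for $d>2$, so $n^{-1/2}h_J^{1-d/2}\asymp(\mathfrak B_J/n)^{1/2}$. This identification is what the paper uses.
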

\begin{proof}
Lemma~\ref{lem:anchor-averaged-loss}, averaged over corruption, gives
\[
 \mathbb E\Delta_{J,K}^{\rm anc}\le\frac C{K\tau_J^2}
 \mathbb E_{Y\sim p_{\tau_J}^*}
       \sup_{\theta\in\mathcal B_{n,J}^{\rm anc}}D_{2,h_\theta}^{\rm add}(Y).
\]
Here $\sigma_{t_J}^2=\alpha_{t_J}^2\tau_J$ and $\alpha_{t_J}$ is
bounded away from zero. This proves~\eqref{eq:vs-anchor-rate}.
Optimality gives $L_{n,J}(\widetilde\theta)
\le L_{n,J}(\widehat\theta)+2\Delta_{J,K}^{\rm anc}$.
On the uniform good event, whole-fiber curvature from
Lemma~\ref{lem:mlsn-uniform-hessian} and the convex variational
inequality give
$\|\widetilde\theta-\widehat\theta\|_E
\le C\sqrt{\Delta_{J,K}^{\rm anc}}$.
The local transport bound, or
$\|a\|_{H^{-1}}+\|b\|_2\le C\|(a,b)\|_E$, then controls
the Wasserstein difference by the same quantity.
The fixed-hidden version of~\eqref{eq:r25-final-bound} applies when
the subdomain contains its ideal root. Since
$\mathfrak B_J\asymp h^{2-d}$ for $d>2$, its ideal error and
Cauchy--Schwarz prove~\eqref{eq:vs-anchor-w2}. Bounded support handles
all failed events. Finally $\tau_{J_n}\asymp n^{-2/(2\eta+d)}$
gives the stated budget. The unchanged $O(n^{-2})$ pilot-failure term
may be added when removing the conditioning.
\end{proof}

\endgroup

\newcommand{\ip}[2]{\left\langle #1,#2 \right\rangle}

\FloatBarrier
\bigskip
\setlength{\bibsep}{0pt plus 0.3ex}
\bibliography{references}
\end{document}